\newcommand{\Unif}{{\mathrm{Unif}}}
\def\vxi{{\bm{\xi}}}
\def\gopt{\pi^\mathsf{G}}
\def\pap{\pi^\mathsf{A}}
\def\pmap{\pi^\mathsf{MA}}
\def\psp{\pi^\mathsf{S}}
\def\pmsp{\pi^\mathsf{MS}}
\DeclareMathOperator{\val}{\sV}
\DeclareMathOperator{\info}{\sI}
\def\rval{\sV^{(\lambda)}}
\DeclareMathOperator{\ind}{\mathbbm{1}}
\def\eqref#1{equation~\ref{#1}}
\def\1{\bm{1}}
\def\rmM{{\mathbf{M}}}
\def\vzero{{\bm{0}}}
\def\vtheta{{\bm{\theta}}}
\def\ve{{\bm{e}}}
\def\vs{{\bm{s}}}
\def\vu{{\bm{u}}}
\def\vv{{\bm{v}}}
\def\vx{{\bm{x}}}
\def\vy{{\bm{y}}}
\def\mA{{\bm{A}}}
\def\mB{{\bm{B}}}
\def\mF{{\bm{F}}}
\def\mG{{\bm{G}}}
\def\mI{{\bm{I}}}
\def\mJ{{\bm{J}}}
\def\mM{{\bm{M}}}
\def\mN{{\bm{N}}}
\def\mP{{\bm{P}}}
\def\mU{{\bm{U}}}
\def\mV{{\bm{V}}}
\def\mW{{\bm{W}}}
\def\mX{{\bm{X}}}
\def\mY{{\bm{Y}}}
\def\mLambda{{\bm{\Lambda}}}
\def\mSigma{{\bm{\Sigma}}}
\DeclareMathAlphabet{\mathsfit}{\encodingdefault}{\sfdefault}{m}{sl}
\SetMathAlphabet{\mathsfit}{bold}{\encodingdefault}{\sfdefault}{bx}{n}
\def\gA{{\mathcal{A}}}
\def\gB{{\mathcal{B}}}
\def\gC{{\mathcal{C}}}
\def\gD{{\mathcal{D}}}
\def\gF{{\mathcal{F}}}
\def\gK{{\mathcal{K}}}
\def\gM{{\mathcal{M}}}
\def\gN{{\mathcal{N}}}
\def\gS{{\mathcal{S}}}
\def\gT{{\mathcal{T}}}
\def\gV{{\mathcal{V}}}
\def\sB{{\mathbb{B}}}
\def\sI{{\mathbb{I}}}
\def\sR{{\mathbb{R}}}
\def\sS{{\mathbb{S}}}
\def\sV{{\mathbb{V}}}
\def\sZ{{\mathbb{Z}}}
\newcommand{\E}{\mathbb{E}}
\newcommand{\R}{\mathbb{R}}
\newcommand{\softmax}{\mathrm{softmax}}
\DeclareMathOperator*{\argmax}{arg\,max}
\setlist[enumerate, 1]{label=(\alph*)}
\setlist[enumerate]{nosep}
\theoremstyle{plain}
\newtheorem{thm}{Theorem}
\newtheorem{lem}{Lemma}
\newtheorem{fact}{Fact}
\theoremstyle{definition}
\newtheorem{defn}{Definition}
\theoremstyle{remark}
\def\namedlabel#1#2{\begingroup
	#2%
	\def\@currentlabel{#2}%
	\phantomsection\label{#1}\endgroup
}
\newcommand{\BatchedLinUCB}{{\sc BatchLinUCB}\xspace}
\newcommand{\BatchedLinUCBKW}{{\sc BatchLinUCB-KW}\xspace}
\newcommand{\BatchedLinUCBDG}{{\sc BatchLinUCB-DG}\xspace}
\newcommand{\SupLinUCB}{{\sc RarelySwitch-SupLinUCB}\xspace}
\newcommand{\defeq}{\stackrel{\mathrm{def}}{=}}
\renewcommand{\eqref}[1]{\hyperref[#1]{(\ref{#1})}}
\newcommand{\appref}[1]{\hyperref[#1]{Appendix~\ref{#1}}}
\renewcommand{\E}{\mathop{\mathbb{E}}}
\def\blfootnote{\xdef\@thefnmark{}\@footnotetext}
\title{Linear Bandits with Limited Adaptivity and \\ Learning Distributional Optimal Design\blfootnote{Author names are listed in alphabetical order.}}
\author{
Yufei Ruan\thanks{Department of Industrial \& Enterprise Systems Engineering, University of Illinois at Urbana-Champaign. Email: \texttt{yufeir3@illinois.edu}.}
\and  Jiaqi Yang\thanks{Institute for Interdisciplinary Information Sciences, Tsinghua University. Work done while visiting the University of Illinois at Urbana-Champaign.  Email: \texttt{yangjq17@gmail.com}.} 
\and Yuan Zhou\thanks{Department of Industrial \& Enterprise Systems Engineering,
University of Illinois at Urbana-Champaign. Email: \texttt{yuanz@illinois.edu}.}
}
\begin{document}

\maketitle

\thispagestyle{empty}
\begin{abstract}
Motivated by practical needs such as large-scale learning, we study the impact of adaptivity constraints to linear contextual bandits, a central problem in online learning and decision making. We consider two popular limited adaptivity models in literature: batch learning and rare policy switches. We show that, when the context vectors are adversarially chosen in $d$-dimensional linear contextual bandits, the learner needs $O(d \log d \log T)$ policy switches to achieve the minimax-optimal regret, and this is optimal up to $\mathrm{poly}(\log d, \log \log T)$ factors; for stochastic context vectors, even in the more restricted batch learning model, only $O(\log \log T)$ batches are needed to achieve the optimal regret. Together with the known results in literature, our results present a complete picture about the adaptivity constraints in linear contextual bandits. Along the way, we propose the \emph{distributional optimal design}, a natural extension of the optimal experiment design, and provide a both statistically and computationally efficient learning algorithm for the problem, which may be of independent interest.
\end{abstract}

\newpage

\thispagestyle{empty}

\tableofcontents
\newpage
\setcounter{page}{1}

\section{Introduction}

Online learning and decision making is a fundamental research direction in machine learning where the learner conducts sequential interactions, once per time step, with the environment in order to learn the optimal policies and maximize the total reward. To achieve optimal learning performance, the learner must seek a balance between exploration and exploitation, which is usually done by adaptively selecting actions based on all historical observations. However, full adaptivity at a per-time-step scale significantly sacrifices parallelism and hinders the large-scale deployment of learning algorithms. To facilitate scalable learning, it is worthwhile to study the following question: 
\begin{quote}
\emph{What is the minimum amount of adaptivity needed to achieve optimal performance in online learning and decision making?}
\end{quote}

In this paper, we address the above question through studying the impact of two popular types of  adaptivity constraints to the linear contextual bandits, a central problem in online learning literature. We prove tight adaptivity-regret trade-offs for two natural settings of the problem. Along the way, we make a new connection to optimal experiment design: we propose the natural \emph{distributional optimal design} problem, prove the existence of parametric forms for the optimal design, and present sample-efficient algorithms to learn the parameters. Our proposed framework contributes a novel learning component to the classical field of experiment design in statistics, and may be of independent interest.

\paragraph{Linear Contextual Bandits.} The linear contextual bandits (or linear bandits for short), also known as ``associative reinforcement learning'' \citep{abe1999associative,auer2002using}, are a generalization of the ordinary multi-armed bandits. While also encapsulating the fundamental dilemma of ``exploration vs.~exploitation'' in online learning and decision making, linear contextual bandits highlight the guidance of contextual information for decisions, enabling personalized treatments and recommendations in real-world applications such as clinical trial, recommendation systems, and advertisement selection.

In a bandit game, there are $T$ time steps in total. At each time step $t \in [T]$, the learner has to make a decision among $K$ candidate actions (a.k.a.~arms in bandit literature). While in ordinary multi-armed bandits, the mean rewards of the actions have to be completely independent from each other, linear bandits allow a linear model for the mean rewards. More specifically, at time step $t$, each action $i \in [K]$ is associated with a $d$-dimensional context vector $\vx_{ti}$ (a.k.a., the feature vector), and the context vectors are presented to the learner. The expected reward for the $i$-th action is $\vtheta^\top \vx_{ti}$, where $\vtheta \in \sR^d$ is hidden from the learner. The goal is to gradually learn $\vtheta$ and maximize the cumulative expected reward, or equivalently, minimize the expected \emph{regret} (i.e., the difference between the received rewards and the rewards of the best actions in hindsight, as later defined in \eqref{eq:regret}). For example, in clinical trial, the candidate actions correspond to the $K$ involved treatments. At time step $t$, an individual patient arrives with the context vectors $\{\vx_{ti}\}_{i=1}^{k}$  characterizing his/her response to the candidate treatments, and the recovery probability given treatment $i$ is modeled by the linear function $\vtheta^\top \vx_{ti}$, which corresponds to the expected reward in linear bandits.

There are two natural settings of the linear bandits: adversarial and stochastic contexts. The first setting is harder for the learner, as the context vectors are chosen by an adversary and the learner has to minimize the regret in the worst case. In the second setting, in contrast, the sets of context vectors are independently drawn from an unknown distribution $\gD$ (while correlation may still exist among the contexts during the same time step), and the learner aims at minimizing the expected regret over $\gD$. Note that in the clinical trial example, the individual patients can often be viewed as independent samples from the population which is characterized by $\gD$.

\paragraph{Limited Adaptivity Models: Batch Learning and Rare Policy Switches.} We consider two popular models of adaptivity constraints. The first model is batch learning, where the time steps are grouped into pre-defined batches. Within a batch, the same (possibly randomized) policy is used to select actions for all data and the rewards are observed only at the end of the batch. The amount of adaptivity is measured by the number of batches, which is expected to be as small as possible. A notable example is designing clinical trials, where each phase (batch) of the trial involves simultaneously applying medical treatments to a batch of patients. The outcomes are observed at the end of the phase, and may be used for designing experiments in future phases. Finding the correct number and sizes of the batches may achieve optimal efficiency for the trial by creating sufficient intra-batch parallelism while still providing sufficient adaptivity at the inter-batch scale. 

The other model is learning with rare policy switches, where the amount of adaptivity is measured by the number of times allowed for the learner to change the action-selection policy. For the same amount of adaptivity measure, this model can be viewed as a relaxation of the batch learning model, because the learner in the batch learning model can only change the policy at the pre-defined time steps. 

Both of the above models are closely connected to parallel learning, as we will discuss at the end of \autoref{sec:our-contribution}. We also note that another natural limited adaptivity model is ``batch learning with adaptive grid'' \citep{gao2019batched}. This model allows the learner to adaptively decide the size of a batch at the beginning of the batch, which is a more relaxed constraint than batch learning with pre-defined batches (a.k.a., the static grid model) but more restricted than the rare policy switch model, given the same amount of adaptivity measure.\footnote{Indeed, in the adaptive grid model, the time for a policy switch has to be decided when the previous policy switch happens, while in the rare policy switch model, the learner can freely switch the policy, as long as the total number of switches is limited.}  Simple arguments will show that the bounds for the adaptive grid model are the same as the static grid model in both linear bandit settings. Therefore, for succinct exposition, we omit further discussions about the adaptive grid model.\footnote{A simple argument will prove the $\Omega(\sqrt{T})$ batch lower bound for achieving the asymptotically minimax-optimal regret for the adaptive grid model with adversarial contexts, and the rest bounds can be derived by direct corollaries of this work and the existing results in \citep{gao2019batched,han2020sequential}.}

\paragraph{Optimal Experiment Design.} Optimal experiment design seeks to minimize the estimation variances of parameters via intelligently choosing queries to the given set of data points. Among the multiple optimization criteria, the one most related to linear bandits is the G-optimality criterion which seeks to minimize the maximum estimation variance among the given data points. More precisely, given a set of data points $X \subseteq \sR^d$ that spans the full dimension, the goal is to find a distribution $\gK$ supported on $X$, such that $\max_{\vx \in X} \vx^\top (\E_{\vy \sim \gK} \vy\vy^\top)^{-1} \vx$ is minimized. Here, $\info(\gK) = \E_{\vy \sim \gK} \vy\vy^\top$ is the \emph{information matrix} of the design $\gK$, and $\vx^\top \info(\gK)^{-1} \vx$ is the variance of the estimate for data point $\vx$. 
The General Equivalence Theorem of \citet{kiefer1960equivalence} implies that there always exists a design $\gK$ such that $\max_{\vx \in X} \vx^\top \info(\gK)^{-1} \vx \leq d$ and such designs have been used for linear bandits with fixed candidate action set (see Chapter 22 of \citep{lattimore2020bandit}, and \citep{esfandiari2019batched}). However, to the best of our knowledge, traditional optimal design does not address the problem when the candidate action set $X$ is stochastic. In this work, motivated by the algorithmic needs from batch linear bandits, we address this problem and develop a framework named \emph{distributional optimal design} that runs at the core of our algorithm. We will introduce this framework in the next subsection.

\subsection{Our Contributions} \label{sec:our-contribution}

Adaptivity constraints in online learning and decision making have attracted much attention recently. It has been shown that multi-armed bandits only need $O(\log \log T)$ batches to achieve asymptotically minimax-optimal regret \citep{perchet2016batched,gao2019batched}. For linear contextual bandits with adversarial contexts, when $\ln K \geq \Omega(d)$, \citet{abbasi2011improved} showed an optimal-regret algorithm with $O(d \log T)$ policy switches. In contrast, for the batch model, \citet{han2020sequential} recently showed that as many as $\Omega(\sqrt{T})$ batches are needed to achieve the optimal regret bound, implying that batch learning is significantly more restrictive than policy switch constraints for adversarial contexts. 

In light of these partial results, quite a few questions are intriguing and remain to be explored -- What makes the adaptivity requirements of linear contextual bandits fundamentally different from multi-armed bandits? What is the limitation for algorithms with rare policy switches, or in other words, can we extend the algorithm by \cite{abbasi2011improved} to the full parameter range of $K$, and further improve the number of policy switches to $O(\log \log T)$?  Do linear bandits with stochastic contexts require substantially less adaptivity than the adversarial setting? We address these questions and summarize our answers as follows.
\setlist[description]{font=\normalfont}
\begin{description}
\item[\namedlabel{contrib:1}{(C1)}] (Contribution~\#1, informal statements of \autoref{thm:suplinucb} and \autoref{thm:lb}) For linear bandits with adversarial contexts, we show that $d \log T$ (up to $\mathrm{poly}(\log d, \log \log T)$ factors) is the tight amount of policy switches needed to achieve the minimax-optimal regret. To this end, we first extend the algorithm by \cite{abbasi2011improved} to the case where $\ln K \leq o(d)$. Our algorithm achieves the asymptotically minimax-optimal regret with $O(d \log d \log T)$ policy switches. We then prove that our algorithm and the one by \cite{abbasi2011improved} achieve the near-optimal policy switch vs.~regret trade-off. In particular, $\Omega(d\log T / \log (d \log T))$ policy switches are needed to achieve any $\sqrt{T}$-type regret.
\item[\namedlabel{contrib:2}{(C2)}] (Contribution~\#2, an informal statement of \autoref{thm:blinucbdg}) For linear bandits with stochastic contexts, even in the more restricted batch learning model, it is possible to achieve the asymptotically minimax-optimal regret using only $O(\log \log T)$ batches. Our algorithm can be easily adapted to use $M$ batches and achieve $\sqrt{d\log K} T^{\frac{1}{2(1 - 2^{-M})}} \times \mathrm{poly}\log T$ regret, for any $M$. 
\end{description}
\begin{table}[h]
\centering
\vspace{-1em}
\caption{Amount of adaptivity needed in various models and settings for linear bandits.}
\begin{tabular}{|p{1.9cm}|l|l|}
    \hline
         &  \makecell[c]{Batch Learning Model}    &  \makecell[c]{Rare Policy Switch Model} \\
    \hline
    \multirow{2}{*}{\parbox{1.9cm}{Adversarial Contexts}}  &   UB: $O(\sqrt{dT})$ \citep{han2020sequential} &   \makecell[l]{UB: $O(d \log T)$ for $\ln K \geq \Omega(d)$ \citep{abbasi2011improved}\\ ~~~~~~$O(d \log d \log T)$ for $\ln K \leq o(d)$ (by \ref{contrib:1})} \\
    &  LB:  $\Omega(\sqrt{T})$ \citep{han2020sequential}  &   LB: $\Omega(\frac{d \log T}{\log (d \log T)})$ (by \ref{contrib:1})   \\
    \hline
    \multirow{2}{*}{\parbox{1.9cm}{Stochastic Contexts}}   & UB: $O(\log \log T)$ (by \ref{contrib:2})   &   UB: $O(\log \log T)$ (implied by \ref{contrib:2})   \\
    &  LB: $\Omega(\log \log T)$ \citep{gao2019batched}\footnotemark &       LB: $\Omega(\log \log T)$ \citep{simchi2019phase}\footnotemark\\
    \hline
\end{tabular}
\label{tab:results}
\end{table}
\addtocounter{footnote}{-1}
\footnotetext{Implied by the lower bound for multi-armed bandits.}
\addtocounter{footnote}{1}
\footnotetext{Implied by the lower bound for  multi-armed bandits with rare policy switches. Note that the lower bound by \cite{simchi2019phase} is for deterministic action-selection policies, and becomes $\Omega(K \log \log T)$. A simple adaptation of their argument will prove the $\Omega(\log \log T)$ policy switch lower bound for randomized action-selection policies in multi-armed bandits, and imply the same lower bound for linear bandits.}

Together with the known results in literature, we are able to present an almost complete picture about the adaptivity constraints for linear bandits in \autoref{tab:results}. Most interestingly, compared to ordinary multi-armed bandits, linear bandits exhibit a richer set of adaptivity requirements, and strong separations among different models and settings. We also find that adversarially chosen context vectors are the main source of difficulty for reducing adaptivity requirements.

\paragraph{Comparison of \ref{contrib:2} and \citep{han2020sequential}.}  Compared to \ref{contrib:1}, our result in \ref{contrib:2} requires substantially more technical effort and is also the main motivation for us to develop the framework of distributional optimal design (which will be elaborated soon). We note that \citet{han2020sequential} also studied batch learning for linear bandits with stochastic contexts and showed an algorithm with $O(\log \log T)$ batches. However, their results are for a special case of the problem with the following assumptions: the context vectors are drawn from a Gaussian distribution, the ratio between the maximum and minimum eigenvalues of the Gaussian co-variance matrix should be $O(1)$, and the number of candidate actions $K$ cannot be greater than a polynomial of $d$. The design and analysis of their algorithm crucially rely on these three assumptions and it seems not obvious that their result can be directly extended to the general context set distribution. Indeed, their algorithm can safely choose the action to maximize the estimated mean reward, thanks to the isotropic Gaussian assumption ensuring sufficient exploration towards other directions. In contrast, without these assumptions, much effort in our algorithm is spent on the careful design of the exploration policy using many candidate actions, which motivates the problem of distributional optimal design.

\paragraph{Distributional Optimal Design.} As mentioned above, to facilitate the algorithm for stochastic contexts, we have to extend the traditional experiment design results to the regime where the set $X$ of contexts/data points is stochastic. Suppose that $X$ follows the distribution $\gD$, the goal of our proposed \emph{distributional optimal design} problem is to find a sample policy $\pi$ that maps any set $X$ to a probability distribution supported on $X$, so as to minimize the \emph{distributional G-variation}, defined as 
$\E_{X \sim \gD} \max_{\vx \sim X} \vx^\top \info_{\gD}(\pi)^{-1} \vx$, where $\info_{\gD}(\pi) = \E_{X \sim \gD} \E_{\vy \sim \pi(X)} \vy \vy^\top$ is the information matrix of sample policy $\pi$ over $\gD$.\footnote{For simplicity of presentation, we assume that the vectors in the sets of $\gD$ span the full dimension, so that there always exists a sample policy with invertible information matrix. Please refer to \autoref{defn:distributional-g-opt-design} for the general definition.} Note that the traditional G-optimal design is the special case of our problem when $\gD$ is deterministic, which was used in the algorithm for linear bandits with fixed candidate action sets (see, e.g., Chapter 22 of \cite{lattimore2020bandit}). In contrast, the stochasticity of $X\sim \gD$ in our problem arises due to the stochastic context in linear bandits.

The first natural question about our proposed problem is on the existence of a good sample policy. Regarding this, we prove the following result.
\begin{description}
\item[\namedlabel{contrib:3}{(C3)}] (Contribution~\#3, an informal statement of \autoref{thm:ms-policy}) For any $\gD$, there exists a sample policy $\pi$ such that the distributional G-variance is bounded by $O(d \log d)$.\footnote{This bound can be improved to $O(d)$ with additional techniques, which will be included in the full version of the paper.} Moreover, we can construct such a policy from the class of so-called \emph{mixed-softmax policies}, which admits a succinct description using $O(d^3 \log d)$ real-valued parameters. 
\end{description}
Since $\gD$ is not known beforehand in linear bandits, we have to learn a good sample policy $\pi$ via finite samples from $\gD$. Since even the input of $\pi$ lie in a continuous space with $dK$ dimensions, proving the existence of the succinct parametric form of $\pi$ in \ref{contrib:3} is a good news to learning. However, we find that directly constructing a policy based on the uniform distribution over empirical samples does not generalize to the true distribution $\gD$. We will come up with a more careful learning procedure to achieve the following goal.
\begin{description}
\item[\namedlabel{contrib:4}{(C4)}] (Contribution~\#4, an informal statement of \autoref{thm:learn-design}) For any $\gD$, we design an algorithm to learn a good mixed-softmax policy $\pi$ using only $\mathrm{poly}(d)$ independent samples from $\gD$.\footnote{More precisely, the good policy here is defined by the \emph{distributional G-deviation}. Please refer to \autoref{thm:learn-design} for more details.}
\end{description}
We remark that the introduction of the distribution $\gD$ brings a unique learning challenge to optimal experiment design. It is hopeful that our results and the future study on other criteria in distributional optimal design may lead to broader applications in machine learning and statistics.

\paragraph{Implications for Collaborative and Concurrent Learning.} The idea of letting multiple learning agents learn in parallel so as to save overall running time has been studied a lot recently in online learning and decision making, which is also the main motivation of this study (as mentioned in the very beginning of the paper). Below we discuss the implications of our algorithmic results for a few parallel learning models.

The first implication is for the \emph{collaborative learning with limited interaction} model, which was recently  studied for pure exploration (i.e., top arm(s) identification) in multi-armed bandits \citep{hillel2013distributed,tao2019collaborative,karpov2020collaborative}. In this model, there are $\mathfrak{K}$ learning agents, and the learning process is partitioned into rounds of pre-defined time intervals. During each round (which is also referred to as the \emph{communication round}), each of the $\mathfrak{K}$ agents learns individually like in the centralized model -- image that there is a global buffer of the context vectors, and the agents repeatedly draw a set of context vectors from the buffer and make corresponding decisions. Each play of an arm takes one time step, and the agents may choose to skip a few time steps without playing. The agents can only communicate at the end of each round. The collective regret is defined to be the sum of the regret incurred by each agent. Suppose there are $T$ sets of context vectors in the global buffer, the goal is to finish the game in $O(\lceil T/\mathfrak{K}\rceil)$ time (i.e., achieving the \emph{full speedup}), while minimizing the collective regret and the number of communication rounds $R$. 

Observe that a batch learning algorithm with $M$ batches can be easily transformed to a collaborative algorithm with $R = M$ communication rounds, where in each round $i$, each agent uses the policy for the $i$-th batch to play for $\lfloor \gT_i/\mathfrak{K}\rfloor$ or $\lceil \gT_i/\mathfrak{K}\rceil$ times, where $\gT_i$ is the size of the $i$-th batch. The total running time for collaborative learning is at most $T /\mathfrak{K} + M$, achieving the full speedup when $M \cdot \mathfrak{K} \leq O(T)$. Therefore, when $\mathfrak{K} \leq O(T/\log\log T)$, our algorithmic result \ref{contrib:2} implies a collaborative algorithm for stochastic-context linear bandits with full speedup and minimax-optimal collective regret, using only $O(\log \log T)$ communication rounds.

The second implication is for the \emph{concurrent learning} model which was recently studied in \citep{guo2015concurrent,bai2019provably,zhang2020almost}. In this model, there is no limit on the number of communication rounds and the $\mathfrak{K}$ learning agents may communicate at the end of every time step. By a simple reduction described in \citep{bai2019provably}, any algorithm with at most $M$ policy switches can be transformed to a $\mathfrak{K}$-agent concurrent learning algorithm with full speedup, and the collective regret is at most $M \cdot \mathfrak{K}$ plus the original regret bound. Therefore, our algorithmic result in \ref{contrib:1} implies a concurrent learning algorithm for adversarial-context linear bandits with full speedup and minimax-optimal collective regret, as long as $\mathfrak{K} \leq O(\sqrt{(T \log K)/d})$.

\subsection{Additional Related Works}

The linear contextual bandit problem is a central question in online learning and decision making, and its regret minimization task has been studied during the past decades \citep{auer2002using, abe2003reinforcement, dani2008stochastic, rusmevichientong2010linearly, chu2011contextual, abbasi2011improved, li2019nearly}. The minimax-optimal regret is proved to be $\sqrt{d T \min\{\log K, d\}}$ up to $\mathrm{poly} \log T$ factors, which is also the target regret for our algorithms with limited adaptivity. When the candidate action set is fixed, the task of identifying the best action has also been studied \citep{soare2014best, tao2018best, xu2018fully}, and many of these works borrow the idea of G-optimal design.

Batch regret minimization for multi-armed bandits was introduced by \citet{perchet2016batched} with 2 arms, and the $K$-arm general setting was recently studied by \citet{gao2019batched}. \citet{simchi2019phase} studied the $K$-arm setting with the rare policy switch constraint and achieved comparable results. For batch linear bandits, \citet{esfandiari2019batched} and \citet{han2020sequential} recently studied the problem with aforementioned additional assumptions. For batch stochastic contextual bandits, \citet{simchi2020bypassing} recently proposed an algorithm with $O(\log \log T)$ batches to achieve the minimax-optimal regret. We note that another usage of batch learning (mainly in reinforcement learning) refers to learning from a fixed set of a priori-known samples with no adaptivity allowed, which is very different from the definition in our work.

For the rare policy switch model, \cite{abbasi2011improved} showed a rarely switching algorithm for linear bandits. Rare policy switch constraints have also been studied for a broader class of online learning and decision making problems, such as multinomial logit bandits \citep{dong2020multi} and Q-learning \citep{bai2019provably}. 

Under the broader definition of adaptivity constraints including batch learning and learning with low switching cost (which might not exactly align with the models defined in this work), many other online learning problems are studied, such as adversarial multi-armed bandits \citep{cesa2013online, dekel2014bandits}, the best (multiple-)arm identification problem \citep{jun2016top,agarwal2017learning}, and convex optimization \citep{duchi2018minimax}. 

The optimal design of experiments is a fundamental problem in statistics, with various optimality criteria proposed and many statistical models studied (see, e.g., \citep{pukelsheim2006optimal,atkinson2007optimum}). When the sample budget is finite, finding the exact solutions to certain optimality criteria is NP-Hard \citep{welch1982algorithmic, ccivril2009selecting, summa2014largest}, thus a sequence of recent works have studied approximation algorithms for the problem \citep{wang2017computationally, singh2018approximate, madan2019combinatorial, nikolov2019proportional, allen2020near}. However, to the best of our knowledge, all previous works have considered the fixed set of all possible experiments. In contrast, we propose and study the distributional optimal design problem where the set of candidate experiments might be stochastic.

\section{Technical Overview}

\subsection{Batch Algorithms for Stochastic Contexts}

As the main technical contribution, we first describe the techniques developed in \autoref{sec:blinucb}, \autoref{sec:dist-opt-design} and \autoref{sec:learn-design} for proving our algorithmic result \ref{contrib:2}. Along the way, the proof techniques for \ref{contrib:3} and \ref{contrib:4} are also explained. In \autoref{sec:blinucbdg}, we combine all these technical components and prove the main theorem.

\paragraph{The Batch Elimination Framework.} All our algorithms are elimination-based: at each time step, the confidence intervals are estimated for each candidate action, and the actions whose confidence intervals completely fall below those of other actions are eliminated. All survived actions are likely to be the optimal one, and the learner has to design an intelligent sample policy $\pi$ to select the action from the survived set. In such a way, the incurred regret can be bounded by the order of the length of the longest confidence interval in the survived set. 

We note that this elimination-based approach is not new: it is adopted by the batch algorithms for multi-armed bandit (e.g., \citep{gao2019batched}) as well as the recent batch algorithm for linear bandits with fixed action set \citep{esfandiari2019batched}. However, thanks to the simple structures of the two problems, during each batch, both of their algorithms are able to construct confidence intervals for survived actions with a \emph{uniform} length, so that the regret can be relatively more easily bounded. Indeed, although the algorithm by \citet{han2020sequential} does not explicitly eliminate actions, their analysis relies on the uniform estimation confidence for the actions (which requires the isotropic Gaussian assumption for context vectors). In contrast, we have to deal with confidence intervals with wildly different lengths because of the inherent non-uniformity of the probability mass assigned to each context direction in the general distribution $\gD$. 

To deal with such non-uniformity, in \autoref{sec:blinucb}, we provide an analysis framework to relate the regret bound to the distributional G-variation of $\pi$ over $\gD$, as introduced in \autoref{sec:our-contribution}. In particular, we show that if we let $\pi(X) = \gopt(X)$, which returns the G-optimal design of the input context set $X$ (regardless of $\gD$), its distributional G-variation can be bounded by $d^2$ (for all $\gD$), leading to $O(d \sqrt{T \log K}) \times \mathrm{poly} \log T$ regret with $O(\log \log T)$ batches. This regret is $\sqrt{d}$ times greater than the minimax-optimal target. To achieve optimality, we need to improve the distributional G-variation to $O(d)$ (up to logarithmic factors), which requires to optimize $\pi$ specifically according to $\gD$.

\paragraph{Existence of Distributional Optimal Design and its Parametric Form.} In \autoref{sec:dist-opt-design}, we show that, given $\gD$, there exists a sample policy $\pi$ whose distributional G-variation is $O(d \log d)$. Our proof is constructive and the algorithm involves an innovative application of the rarely switching linear bandit algorithm \citep{abbasi2011improved}. We consider a long enough sequence of independent samples from $\gD$: $X_1, X_2, \dots, X_N$, and sequentially feed the context vector sets to the rarely switching algorithm. Instead of minimizing the regret (as the reward is undefined), the rarely switching algorithm selects the context vector $\vx$ that maximizes the variance according to the \emph{delayed information matrix}, and updates the total information matrix by adding $\vx \vx^\top$ to it. 

Borrowing the regret analysis techniques in linear bandits literature, and together with an adapted form of the celebrated Elliptical Potential Lemma, we are able to prove that, with the proper configuration of the initial information matrix, the average maximum confidence interval length throughout the $N$ time steps is $O(d\log d)$. Moreover, the rarely switching trick makes sure that the delayed information matrix switches for at most $O(d \log d)$ times. This allows us to extract $O(d \log d)$ (deterministic) sample policies $\{\pi_j\}$ from the execution trajectory of the algorithm, each of which chooses the variance maximizer according to a delayed information matrix in the trajectory. We also associate each $\pi_j$ with a probability mass $p_j$, which is proportional to the number of time steps when the corresponding delayed information matrix is used in the trajectory. We can then construct a so-called \emph{mixed-argmax policy} $\pi$ as follows: with probability $1/2$, $\pi$ acts the same as $\gopt$; otherwise, $\pi$ acts the same as $\pi_j$ with probability $p_j$.

We are then able to prove that the distributional G-variance of $\pi$ over $\gD$ is $O(d \log d)$. This is done mainly by showing that $\info_{\gD}(\pi)$ is comparable to the final information matrix in the trajectory, so that the distributional G-variance of $\pi$ can be bounded by the empirical average of the maximum confidence interval lengths. To lower bound $\info_\gD(\pi)$ using the total information matrix in the trajectory, while the portion corresponding to the larger switching window (i.e., greater $p_j$) in the trajectory can be directly compared, the smaller switching window will be handled by the $\gopt$ component in $\pi$. We note that the $\gopt$ component is also crucial to configuring the ``proper'' initial information matrix in the rarely switching algorithm. 

We finally observe that $\pi$ can be characterized by $O(d^3 \log d)$ parameters, because each $\pi_j$ is parameterized by a $d \times d$ information matrix. Since the $\argmax$ operator could be very sensitive to noise when the top input elements are close, to facilitate learning, we will also work on the \emph{mixed-softmax} policy where each $\pi_j$ uses the $\softmax$ operator instead.

\paragraph{{\sc CoreLearning} for Distributional Optimal Design.} It is tempting to build the natural learning algorithm that computes the distributional optimal design from the empirical samples, with the hope that the Lipschitz-continuity property of the softmax policies provides a small covering of the policy space, which leads to uniform concentration results, and finally prove that the learned policy generalizes to the true distribution $\gD$. However, in \autoref{sec:learn-design}, we construct an example to show that such an approach requires much higher sample complexity than we can afford.

To enable sample-efficient learning, we propose a new algorithm, {\sc CoreLearning}, that first identifies a \emph{core} set, which is a subset of the empirical samples, and then computes a mixed-softmax policy from the core. To identify the core, we develop a novel procedure to iteratively prune away the sets that contain less explored directions among the empirical samples, so that the set of the remaining samples at the end of the procedure becomes the core. Via a volumetric argument, we show that the directions in the core can be sufficient explored even if \emph{only} using the sets in the core, and the core is still overwhelmingly large. Both properties are crucially used in the {\sc CoreLearning} algorithm.

The high-level idea behind {\sc CoreLearning} is that, on one hand, we can prove fast uniform concentration for the information matrix if all directions are sufficiently explored, so that the directions spanned by the core can be handled. On the other hand, the directions not included in the core are infrequent in $\gD$ (because the core is large enough), and can be dealt with by the $\gopt$ component in the mixed-softmax policy. 

Much technical effort is devoted to the analysis of {\sc CoreLearning} because (1) it seems not quite obvious whether a core with the desired properties even exists, and (2) a careful analysis is needed when combining the analysis for sufficiently explored directions and infrequent directions, since the (possible) directions of the context vectors are continuous, and the boundary between the two types of directions may not be always clear. Please refer to \autoref{sec:learn-design} for more detailed explanation.

\subsection{Policy Switch Bounds for Adversarial Contexts}

\paragraph{The Algorithm with Rare Policy Switches.} 
We first recall that \citet{abbasi2011improved} proposed a determinant-based doubling trick that only updates the policy when the determinant of the associated information matrix doubles. When applying to the {\sc OFUL} algorithm in \citep{abbasi2011improved}, the doubling trick leads to $O(d \log T)$ policy switches. However, due to technical difficulty, the state-of-the-art analysis for the {\sc OFUL} algorithm shows the asymptotically  minimax-optimal regret only for $\ln K \geq \Omega(d)$. While it is still an open question whether a simple adaptation of {\sc OFUL} (such as {\sc LinUCB} proposed in \citep{chu2011contextual}) also achieves  the asymptotically  minimax-optimal regret for $\ln K \leq o(d)$, the only known technique in literature to achieve the optimality is via building a more sophisticated ``super algorithm'' based on the idea of {\sc LinUCB} (e.g., {\sc SupLinUCB} \citep{chu2011contextual} and {\sc SupLinRel} \citep{auer2002using}). There are $\Theta(\log T)$ information matrices maintained in these super algorithms, and therefore a na\"ive application of the determinant-based doubling trick to these super algorithms leads to $O(d \log^2 T)$ policy switches. 

To improve the number of policy switches, we adopt a simple combination of {\sc OFUL} and {\sc SupLinUCB}, so that only $O(\log d)$ information matrices are maintained, leading to $O(d \log d \log T)$ policy switches. Please refer to \autoref{sec:rarely-switch-suplinucb} for more detailed explanation.


\paragraph{The Lower Bound.} In \autoref{sec:lb}, we prove that to achieve any $\sqrt{T}$-type regret in the adversarial context setting, the algorithm has to switch the policy for at least $\Omega(d \log T / \log (d \log T))$ times. We first observe that the classical hard instances for linear bandits in \citep{dani2008stochastic} cannot work for our goal since their context vector set does not change with time and therefore their instances can be solved by our algorithm for stochastic contexts using $O(\log \log T)$ batches. Instead, we divide the $T$ time steps into stages of consecutive time periods, and design different context vectors for different stages. We will design a class of specially structured hard instances, where the hidden vector $\vtheta$ delicately matches with the context vectors in each instance. We then lower bound the average-case regret over the class for any rarely switching learner, which implies the worst-case regret lower bound.  

At a higher level, our construction is more similar to the recent work by \citet{li2019nearly}. However, the difference is that, in their construction, the regret that can be incurred by the worst learner is no more than $\sqrt{T}$ (up to polynomial factors in $d$ and $\log T$). In contrast, in our task, we need to show that the learner could easily incur $T^{1/2 + \Omega(1)}$ regret when using few policy switches. To achieve this, we need to design a class of hidden vectors $\vtheta$ and context vectors $\{\vx_{ti}\}$ so that the mean rewards of the candidate actions are much more separated from each other, while we still have to make sure that a rarely switching learner cannot learn enough information. 

\section{Preliminaries}

\paragraph{Notations.} Throughout the paper, we denote $[N] \defeq \{1, 2, \dots, N\}$ for any integer $N$. We define $\log x \defeq \log_2 x$ and $\ln x \defeq \log_e x$. We use $\ind[\cdot]$ to denote the indicator variable for a given event (i.e., the value of the variable is $1$ if the event happens, and $0$ otherwise). We use $\norm{\cdot}$ to denote the 2-norm of matrices and vectors. Matrix and vector variables are displayed in bold letters. For any discrete set $X$, we use $\triangle_X$ to denote the set of all probability distributions supported on $X$.

\paragraph{Linear Contextual Bandits.} There is a hidden vector $\vtheta$ ($\norm{\vtheta} \leq 1$). For a given time horizon $T$, the context vectors $\{\{\vx_{ti}\}_{i=1}^K\}_{t=1}^T$ are drawn from the product distribution $\mathcal{D}_1 \otimes \mathcal{D}_2 \otimes \dots \otimes \mathcal{D}_T$, where $\mathcal{D}_t$ is the distribution for the context vectors at time step $t$. We assume $\norm{\vx_{ti}} \leq 1$ for all $i$ and $t$ almost surely. Before the game starts, the learner only knows $T$.

At each time step of the game $t = 1, 2,\dots, T$, the learner has to first decide a policy $\chi_t$ that maps any set of context vectors $X$ to a distribution in $\triangle_X$. The learner then observes $X_t = \{\vx_{ti}\}_{i=1}^K$, samples an action $i_t$ from $\chi_t(X_t)$,\footnote{When clear from the context, we interchangeably use the arm indices and their corresponding context vectors.} plays arm $i_t$, and finally receives the reward  $r_t = \vtheta^\top \vx_{t, i_t} + \varepsilon_{t}$, where $\varepsilon_{t}$ is an independent sub-Gaussian noise with variance proxy at most $1$.

The goal of the learner is to minimize the expected regret
\begin{align}
R^T \defeq \E\left[\sum_{t = 1}^T \max_{i \in [K]} \vx_{ti}^\top \vtheta - \vx_{t,i_t}^\top \vtheta\right], \label{eq:regret}
\end{align}
where the expectation is taken over $\mathcal{D}_1 \otimes \mathcal{D}_2 \otimes \dots \otimes \mathcal{D}_T$, the noises, and the internal randomness of the learner. In our algorithmic results, we also prove \emph{$(1-\delta)$-high probability expected regret}, which is defined as $\sup_{A} \E\left[\ind[A] \cdot \sum_{t = 1}^T \max_{i \in [K]} \vx_{ti}^\top \vtheta - \vx_{t,i_t}^\top \vtheta\right]$ where the supremum is taken over all events $A$ such that $\Pr[A] \geq 1 - \delta$. In this definition, setting $\delta = O(1/T)$ recovers the usual expected regret up to an additive error of $O(1)$.

\paragraph{Settings of Adversarial and Stochastic Contexts.} In the setting of adversarial contexts, there are no additional constraints for the distributions $\{\gD_t\}$. Note that this corresponds to the \emph{oblivious adversary} in bandit literature, meaning that the adversary has to choose all context vectors beforehand. In contrast, the stronger \emph{non-oblivious adversary} may adaptively choose context vectors for any time step according to all game history before that time. Since we only prove lower bounds for the adversarial context setting in this work, dealing with a weaker adversary actually means a stronger lower bound result.

In the setting of stochastic contexts, we have the additional assumption that $\gD = \gD_1 = \dots = \gD_T$. However, correlation may still exist among the contexts at the same time step.

\paragraph{Models for Limited Adaptivity.} In the batch learning model, the learner has to first choose a grid  $\gT = \{\gT_0, \gT_1, \dots, \gT_M\}$  where $1 = \gT_0 < \gT_1 < \gT_2 < \cdots < \gT_{M-1} < \gT_M = T$. For any $i \in [M]$, the $i$-th batch consists of the time steps $t = \gT_{i-1} + 1, \dots, \gT_i$. During the $i$-th batch, the learner must choose the policy $\chi^{(i)}$ at the beginning of the batch, and the same policy will be used throughout the batch. The amount of adaptivity is measured by $M$, the number of batches.

In the rare policy switch model, a policy switch occurs at time step $t > 1$ if $\chi_t \neq \chi_{t-1}$, and there is always a policy switch at time step $1$. The amount of adaptivity is measured by the number of policy switches. 

As mentioned before, the goal for the learner is to achieve the target minimax-optimal regret $\sqrt{d T \min\{\log K, d\}}$ (up to $\mathrm{poly}\log T$ factors) with as little adaptivity (measured in each specific model) as possible. We also remark that the rare policy switch model is a relaxation of the batch learning model, because the learner can decide whether to change the policy at any time step. Therefore, the amount of adaptivity needed in the rare policy switch model is always less than or equal to the batch model.

\section{Batch Elimination Framework and the G-Optimal Design}
\label{sec:blinucb}

\begin{algorithm}[t]
\caption{\BatchedLinUCB}
\label{algo:linucb}
$M = \lceil \log \log T \rceil, \alpha \gets   10\sqrt{\ln \frac{2dKT}{\delta}}$, $\gT = \{\gT_1, \gT_2, \dots, \gT_M\}, \gT_0 = 0, \gT_M = T, \forall i \in [M - 1]: \gT_i = T^{1 - 2^{-i}}$\;
\For{$k \gets 1, 2, \dots, M$}{
 $\lambda \gets 16 \ln (2dT/\delta), \mLambda_{k} \gets \lambda \mI, \vxi_{k} \gets \vzero$\;
\For{$t \gets\gT_{k - 1} + 1, \gT_{k - 1} + 2, \dots, \gT_k$}{
$A_{t}^{(0)} \gets [K], \hat r_{ti}^{(0)} \gets 0, \omega_{ti}^{(0)} \gets 1$\;
\For(\Comment*[f]{Eliminate}){$\kappa \gets 1, 2, \dots, k - 1$ \label{line:batchedlinucb-6}}{
$\forall i \in A_{t}^{(\kappa-1)} : \hat r_{ti}^{(\kappa)} \gets  \vx_{t i}^\top \hat \vtheta_{\kappa}, \omega_{t i}^{(\kappa)} \gets \alpha \sqrt{\vx_{t i}^\top \mLambda_{\kappa}^{-1} \vx_{t i}}$\;
$ A_{t }^{(\kappa)} \gets \{i \in  A_{t }^{(\kappa-1)} \mid \hat r_{ti}^{(\kappa)} +   \omega_{ti}^{(\kappa)} \ge \hat r_{tj}^{(\kappa)} -  \omega_{tj}^{(\kappa)},  \forall j \in  A_{t }^{(\kappa-1)}\}$\; \label{line:batchedlinucb-8}
}
$A_t \gets A_{t}^{(k - 1)}$\; 
play arm $i_t \sim \Unif(A_{t})$, and receive reward $r_t$\;\label{line:batchedlinucb-10}
    $\vx_t \gets \vx_{t, i_t}, \mLambda_{k} \gets \mLambda_{k} + \vx_{t} \vx_{t}^\top, \vxi_{k} \gets \vxi_{k} + r_t \vx_t$\;
}
$\hat \vtheta_{k} \gets \mLambda_{k}^{-1} \vxi_k$\;
}
\end{algorithm}

As a warm-up, in this section, we first present  \BatchedLinUCB (\autoref{algo:linucb}) to illustrate the batch elimination framework for the linear bandit problem with stochastic contexts. Later in \autoref{sec:blinucbkw}, we will introduce the G-optimal experiment design and show how it helps to reduce the regret bound of the algorithm. While the regret bound in \autoref{thm:BatchedLinUCBKW} is improved, it still has an extra $\sqrt{d}$ factor compared to the optimal minimax regret bound (without adaptivity constraints). The quest for optimal regret will be addressed in the later sections.

We now introduce our first algorithm. \BatchedLinUCB (\autoref{algo:linucb}) uses $M = O(\log \log T)$ batches and a pre-defined static grid $\gT = \{\gT_1, \gT_2, \dots, \gT_M\}$. For each batch $k$, \BatchedLinUCB keeps an estimate $\hat{\vtheta}_k$ for the hidden vector $\vtheta$, which is learned using the samples obtained in the batch. To decide an arm during any time $t$ in the $k$-th batch, the algorithm first performs an elimination procedure that is based on the estimate $\hat{\vtheta}_{\kappa}$ and the corresponding confidence region for each previous batch $\kappa \in \{1, 2, \dots, k-1\}$. Let $A_t$ be the set of survived arms after the elimination. The algorithm then plays a uniformly random arm from $A_t$. The following theorem upper bounds the regret of \BatchedLinUCB.

\begin{thm}\label{thm:BatchedLinUCB}
With probability at least $(1 - \delta)$, the expected regret of \BatchedLinUCB is 
\begin{align*}
R^T_{\text{\BatchedLinUCB}} \leq O(\sqrt{dKT \log (dKT/\delta)} \times  \log \log T).
\end{align*}
\end{thm}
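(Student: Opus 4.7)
The proof proceeds by the standard batch-elimination template: confirm validity of the confidence intervals, show the optimal arm always survives, bound the per-step regret by the tightest available confidence width, and finally sum via Cauchy--Schwarz and a trace argument. First I would observe that the policy used throughout batch $k$ is completely determined at the start of the batch (it depends only on $\{\hat\vtheta_\kappa,\mLambda_\kappa\}_{\kappa<k}$), so the samples $(\vx_s,r_s)$ for $s$ in batch $k$ are i.i.d.\ conditional on the preceding history. Expanding $\hat\vtheta_k-\vtheta=-\lambda\mLambda_k^{-1}\vtheta+\mLambda_k^{-1}\sum_s\varepsilon_s\vx_s$, bounding the bias by $\sqrt{\lambda}\|\vx\|_{\mLambda_k^{-1}}$ and the noise term via sub-Gaussian concentration (its conditional variance proxy is $\|\vx\|_{\mLambda_k^{-1}}^2$), and then union-bounding over $(t,i,k)\in[T]\times[K]\times[M]$ gives the good event $\bigl\{|\vx_{ti}^\top(\hat\vtheta_k-\vtheta)|\le\omega_{ti}^{(k)}\ \forall\,t,i,k\bigr\}$ with probability at least $1-\delta$, using the specific choices $\alpha=\Theta(\sqrt{\ln(dKT/\delta)})$ and $\lambda=\Theta(\ln(dT/\delta))$ from the algorithm.

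Under this good event, an induction on $\kappa$ shows $i^*_t\in A_t^{(\kappa)}$: the upper-confidence $\hat r_{t,i^*_t}^{(\kappa)}+\omega_{t,i^*_t}^{(\kappa)}$ is at least $\vx_{t,i^*_t}^\top\vtheta$, hence at least the lower-confidence of any $j\in A_t^{(\kappa-1)}$. For any $i\in A_t$ and any $\kappa\le k-1$, chaining the survival inequality $\hat r_{ti}^{(\kappa)}+\omega_{ti}^{(\kappa)}\ge\hat r_{t,i^*_t}^{(\kappa)}-\omega_{t,i^*_t}^{(\kappa)}$ with the two confidence bounds yields the per-step regret estimate $\vx_{t,i^*_t}^\top\vtheta-\vx_{ti}^\top\vtheta\le 2\bigl(\omega_{t,i^*_t}^{(\kappa)}+\omega_{ti}^{(\kappa)}\bigr)\le 4\max_{j\in A_t}\omega_{tj}^{(\kappa)}$. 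I would pick $\kappa=k-1$ (the largest preceding batch, hence the sharpest widths) and apply Cauchy--Schwarz together with $(\max_j(\cdot))^2\le\sum_j(\cdot)^2$ to obtain, for $k\ge 2$,
\[
R_k\;\le\;4\alpha\sqrt{(\gT_k-\gT_{k-1})\,\sum_{t\in\text{batch }k}\sum_{j=1}^K\vx_{tj}^\top\mLambda_{k-1}^{-1}\vx_{tj}}.
\]
Since the contexts in batch $k$ are i.i.d.\ from $\gD$ and independent of $\mLambda_{k-1}$, the inner double sum concentrates around $(\gT_k-\gT_{k-1})\,\mathrm{tr}(\mLambda_{k-1}^{-1}M)$ with $M:=\E_{X\sim\gD}[\sum_j\vx_j\vx_j^\top]$; batch~$1$ is handled by the trivial bound $2\gT_1=2\sqrt{T}$.

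The crux is the estimate $\mathrm{tr}(\mLambda_{k-1}^{-1}M)=O\bigl(Kd/(\gT_{k-1}-\gT_{k-2})\bigr)$. Because $i_s\sim\Unif(A_s)$ with $|A_s|\le K$, we have $\E[\vx_s\vx_s^\top\mid X_s,A_s]\succeq \tfrac{1}{K}\sum_{i\in A_s}\vx_{si}\vx_{si}^\top$, and a matrix Chernoff/Bernstein bound applied to the i.i.d.\ plays in batch $k-1$ yields $\mLambda_{k-1}\succeq\lambda\mI+\Omega\!\bigl((\gT_{k-1}-\gT_{k-2})/K\bigr)\cdot M$ with high probability; eigen-decomposing $M$ then gives the desired trace estimate. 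The main obstacle is the adaptive elimination, which can collapse $A_s$ onto a small subset of arms and leave $\mLambda_{k-1}$ rank-deficient along the eliminated directions; however, those directions can only appear in arms that are themselves eliminated in batch $k$, so they cannot enter the regret, and the $\lambda\mI$ regularizer absorbs any residual contribution. Substituting the trace estimate gives $R_k=O\!\bigl(\alpha(\gT_k-\gT_{k-1})\sqrt{Kd/(\gT_{k-1}-\gT_{k-2})}\bigr)$. The geometric grid $\gT_k=T^{1-2^{-k}}$ is tuned so that $(\gT_k-\gT_{k-1})^2/(\gT_{k-1}-\gT_{k-2})=O(T)$, producing $R_k=O(\sqrt{dKT\log(dKT/\delta)})$ per batch; summing over the $M=\lceil\log\log T\rceil$ batches and adding the $O(\sqrt{T})$ from batch~$1$ yields the claimed bound.
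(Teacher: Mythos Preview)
Your overall template --- good event, survival of $i^*_t$, per-step regret $\le 4\max_{j\in A_t}\omega^{(k-1)}_{tj}$, then Cauchy--Schwarz and a trace bound --- is exactly what the paper does. The gap is in the step you flag yourself. After Cauchy--Schwarz you pass from $\max_{j\in A_t}$ to $\sum_{j=1}^K$ and then claim the sum concentrates around $(\gT_k-\gT_{k-1})\,\Tr(\mLambda_{k-1}^{-1}M)$ with $M=\E_{X\sim\gD}\sum_j\vx_j\vx_j^\top$, the \emph{full}-arm second-moment matrix. But your own matrix-Chernoff premise, $\E[\vx_s\vx_s^\top\mid X_s,A_s]\succcurlyeq K^{-1}\sum_{i\in A_s}\vx_{si}\vx_{si}^\top$, only yields $\mLambda_{k-1}\succcurlyeq \lambda\mI+\Omega\bigl((\gT_{k-1}-\gT_{k-2})/K\bigr)M'$ with the \emph{survived}-arm matrix $M'=\E\sum_{i\in A_s}\vx_{si}\vx_{si}^\top$, not $M$. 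The discrepancy $M-M'$ is not absorbed by $\lambda\mI$: an arm eliminated already at stage $\kappa=1$ can have $\vx_j^\top\mLambda_{k-1}^{-1}\vx_j$ of order $\lambda^{-1}=\Theta(1/\log T)$, so $\Tr\bigl(\mLambda_{k-1}^{-1}(M-M')\bigr)$ can be as large as $K/\lambda$, and after multiplication by $(\gT_k-\gT_{k-1})$ this destroys the bound in the last batch.

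The fix is the one the paper uses and that your handwave is reaching for: never enlarge the sum to $[K]$. Bound $\max_{j\in A_t}$ by $\sum_{j\in A_t}$ and observe that, for a fresh context $X_t\sim\gD$, the survived set in batch $k$ (elimination with $\kappa=1,\dots,k-1$) is a \emph{subset} of the survived set that batch $k-1$ produces on the same context (elimination with $\kappa=1,\dots,k-2$). Hence the quantity to control is $\Tr(\mLambda_{k-1}^{-1}M')$ with the \emph{same} $M'$ that appears in the lower bound for $\mLambda_{k-1}$, and the trace argument then gives the clean $O(Kd/\gT_{k-1})$. In the paper's language this is phrased as working with the conditional distribution $\gD_{k-1}$ of survived arm-sets and using that $X\sim\gD_k$ can be coupled to $X'\sim\gD_{k-1}$ with $X\subseteq X'$; once numerator and denominator use the same distribution, the remainder of your computation goes through unchanged.
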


To prove \autoref{thm:BatchedLinUCB}, we first introduce the following lemma that constructs the confidence intervals of the estimated rewards. 

\begin{lem} \label{lem:conf} Fix any batch $k$, for each time step $t$ in batch $k$, with probability at least $(1 - \delta/T^2)$, for all $\kappa \in \{1, 2, \dots, k-1\}$ and all $i \in A_t$, we have that
\begin{align*}
    \abs{\vx_{ti}^\top \hat\vtheta_{\kappa} - \vx_{ti}^\top \vtheta} \le \omega_{ti}^{(\kappa)}.
\end{align*}
\end{lem}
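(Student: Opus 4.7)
The plan is to open up $\hat{\vtheta}_\kappa = \mLambda_\kappa^{-1} \vxi_\kappa$, split the estimation error into a deterministic regularization-bias term and a stochastic noise term, and then exploit the fact that in the batch-with-stochastic-context setup, the future context $\vx_{ti}$ is independent of the noises accumulated during batch $\kappa < k$. Concretely, I would first substitute $r_s = \vx_s^\top \vtheta + \varepsilon_s$ into $\vxi_\kappa = \sum_{s \in \text{batch } \kappa} r_s \vx_s$, and use $\sum_s \vx_s\vx_s^\top = \mLambda_\kappa - \lambda \mI$ to write
\begin{align*}
\vx_{ti}^\top(\hat{\vtheta}_\kappa - \vtheta) \;=\; -\lambda\, \vx_{ti}^\top \mLambda_\kappa^{-1} \vtheta \;+\; \vx_{ti}^\top \mLambda_\kappa^{-1} \sum_{s \in \text{batch } \kappa}\!\! \varepsilon_s \vx_s.
\end{align*}

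The first (bias) term is handled deterministically: Cauchy--Schwarz in the $\mLambda_\kappa^{-1}$-inner product gives $|\lambda\, \vx_{ti}^\top \mLambda_\kappa^{-1} \vtheta| \le \lambda \|\vx_{ti}\|_{\mLambda_\kappa^{-1}} \|\vtheta\|_{\mLambda_\kappa^{-1}}$, and since $\mLambda_\kappa \succeq \lambda \mI$ and $\|\vtheta\|\le 1$, we have $\|\vtheta\|_{\mLambda_\kappa^{-1}} \le 1/\sqrt{\lambda}$, yielding the clean bound $\sqrt{\lambda}\,\|\vx_{ti}\|_{\mLambda_\kappa^{-1}}$.

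For the noise term, the key observation is that since $t$ lies in batch $k > \kappa$, the random set $X_t$ (and hence every $\vx_{ti}$) is drawn from $\gD_t$ and is independent of $\{\varepsilon_s\}_{s \in \text{batch }\kappa}$; moreover, each $\vx_s = \vx_{s,i_s}$ is determined by $X_s$ and the policy's internal randomness (both drawn before the noise $\varepsilon_s$ is realized), so $\vx_s \perp \varepsilon_s$ as well. Conditioning on $\vx_{ti}$ and $\{\vx_s\}_{s \in \text{batch }\kappa}$, I can therefore view $\vx_{ti}^\top \mLambda_\kappa^{-1} \sum_s \varepsilon_s \vx_s = \sum_s \varepsilon_s \bigl(\vx_{ti}^\top \mLambda_\kappa^{-1} \vx_s\bigr)$ as a weighted sum of independent $1$-sub-Gaussian variables with variance proxy
\begin{align*}
\sum_s \bigl(\vx_{ti}^\top \mLambda_\kappa^{-1} \vx_s\bigr)^2 \;=\; \vx_{ti}^\top \mLambda_\kappa^{-1}(\mLambda_\kappa - \lambda \mI)\mLambda_\kappa^{-1} \vx_{ti} \;\le\; \vx_{ti}^\top \mLambda_\kappa^{-1} \vx_{ti},
\end{align*}
so a standard sub-Gaussian tail bound gives $|\vx_{ti}^\top \mLambda_\kappa^{-1} \sum_s \varepsilon_s \vx_s| \le \sqrt{2\ln(2/\delta')}\cdot \|\vx_{ti}\|_{\mLambda_\kappa^{-1}}$ with probability at least $1-\delta'$.

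Finally, I would apply a union bound over the at most $k-1 \le M \le \lceil \log\log T\rceil$ previous batches and the $K$ indices in $[K]$, picking $\delta' = \delta/(T^2 M K)$, and then verify that with $\lambda = 16\ln(2dT/\delta)$ and $\alpha = 10\sqrt{\ln(2dKT/\delta)}$ one indeed has $\sqrt{\lambda} + \sqrt{2\ln(2/\delta')} \le \alpha$, which collapses both contributions into $\omega_{ti}^{(\kappa)} = \alpha\sqrt{\vx_{ti}^\top \mLambda_\kappa^{-1}\vx_{ti}}$. The main (and essentially only) subtle point is the independence argument: it is what allows the bound to avoid the extra $\sqrt{d\log T}$ factor that the self-normalized martingale concentration would introduce, and this independence is precisely what the stochastic-context batch structure buys us (it would fail for adversarial contexts or within a single batch).
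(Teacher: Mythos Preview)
Your proposal is correct and follows essentially the same approach as the paper's proof. The paper packages the decomposition into bias and noise terms, the Cauchy--Schwarz bound on the bias, and the sub-Gaussian tail bound on the noise term into a standalone linear-regression lemma (with fixed design vectors and independent noise), and then obtains \autoref{lem:conf} by conditioning on the contexts and taking a union bound over $\kappa$ and $i$---exactly as you outline.
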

The proof of \autoref{lem:conf} can be found in many papers in linear bandit literature (e.g., \citep{chu2011contextual, li2019nearly}), and is included in \appref{app:analysis-lr} for completeness.

We now start proving \autoref{thm:BatchedLinUCB}.

Fix any batch $k$ such that $k \geq 2$, when conditioned on the first $(k-1)$ batches, we let $\gD_k$ be the distribution of the survived candidate arms $X = \{\vx_{ti} : i \in A_t\}$ at any time $t$ during the $k$-th batch. We also let $\gD_0 = \{\vx_{ti}\}$ be the distribution of all candidate arms at any time $t$.

Suppose that the desired event in \autoref{lem:conf} happens for every time step during the $k$-th batch (which happens with probability at least $(1 - \delta \gT_k/T^2)$ by a union bound), it is straightforward to verify that for each time $t$ during the $k$-th batch, the optimal arm is not eliminated by the elimination procedure (\autoref{line:batchedlinucb-6} to \autoref{line:batchedlinucb-8}) in \BatchedLinUCB. In other words, we have that $i^*_t \defeq \argmax_{i\in [K]} \vx_{ti}^\top \vtheta \in A_t$ for each time step $t$ in the $k$-th batch. Therefore, we can now upper bound the expected regret incurred during batch $k$ as 
\begin{align}
R_k & = \E \sum_{t \text{~in batch $k$}} (\max_{i \in [K]} \vx_{ti}^\top \vtheta - \vx_{t, i_t}^\top \vtheta) 
 \leq  \E \sum_{t \text{~in batch $k$}} (\vx_{t,i^*_t}^\top \hat{\vtheta}_{k-1} - \vx_{t,i_t}^\top \hat{\vtheta}_{k-1} + \omega_{t, i^*_t}^{(k-1)} + \omega_{t, i_t}^{(k-1)}) \label{eq:thm-blinucb-10}\\
& \leq  \E \sum_{t \text{~in batch $k$}}  2 \cdot (\omega_{t, i^*_t}^{(k-1)} + \omega_{t, i_t}^{(k-1)}) \leq 4 \E \sum_{t \text{~in batch $k$}} \max_{i \in A_t} \omega_{ti}^{(k-1)} , \label{eq:thm-blinucb-20}
\end{align}
where \eqref{eq:thm-blinucb-10} is due to the successful events of \autoref{lem:conf}, the both inequalities in \eqref{eq:thm-blinucb-20} are due to the elimination process and that $i^*_t \in A_t$. By the definition of $\omega_{ti}^{(k-1)}$ and the definition of $\gD_k$, we further have that
\begin{align} 
R_k\leq 4\alpha \E\sum_{t \text{~in batch $k$}} \max_{i \in A_t} \sqrt{\vx_{ti}^\top \mLambda_{k-1}^{-1} \vx_{ti}}  \leq 4\alpha \times \sum_{t \text{~in batch $k$}} \E_{X \sim \gD_{k}}\max_{\vx \in X} \sqrt{\vx^\top \mLambda_{k-1}^{-1} \vx}.  \label{eq:thm-blinucb-30}
\end{align}
We finally observe that $X \sim \gD_k$ can be sampled by drawing an $X' \sim \gD_{k-1}$ and performing an elimination process using $\hat\vtheta_{k-1}$ as well as the corresponding confidence region for $X'$. We note that $X \subseteq X'$. Therefore, continuing with \eqref{eq:thm-blinucb-30}, we have that
\begin{align}
R_k \leq 4\alpha \times \sum_{t \text{~in batch $k$}} \E_{X \sim \gD_{k-1}}\max_{\vx \in X} \sqrt{\vx^\top \mLambda_{k-1}^{-1} \vx} = 4\alpha \gT_k \times \E_{X \sim \gD_{k-1}}\max_{\vx \in X} \sqrt{\vx^\top \mLambda_{k-1}^{-1} \vx}. \label{eq:thm-blinucb-40}
\end{align}

Now the goal is to upper bound $\E_{X \sim \gD_{k-1}}\max_{\vx \in X} \sqrt{\vx^\top \mLambda_{k-1}^{-1} \vx}$. The following lemma is a direct application of  \autoref{lem:conconcut}  in \appref{app:concentration}.

\begin{lem} \label{lem:exploration} For each batch $k$ ($k < M$), with probability $(1 - \delta/T^2)$, we have that 
\begin{align}
    \mLambda_k \succcurlyeq \frac{\gT_{k}}{16} \left(\frac{ \ln T}{ \gT_{k}} \mI + \E_{X \sim \gD_{k}} \E_{\vx \sim \Unif(X)}[\vx \vx^\top]\right). \label{eq:lem-exploration}
\end{align}
\end{lem}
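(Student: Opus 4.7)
The plan is to apply a matrix Chernoff-type concentration inequality (the cited \autoref{lem:conconcut} from \appref{app:concentration}) to the sum of rank-one outer products that accumulate in $\mLambda_k$ during the $k$-th batch, after conditioning on the history of the first $k-1$ batches.

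First, I condition on the randomness of batches $1, \ldots, k-1$. This conditioning fixes the estimates $\hat\vtheta_1, \ldots, \hat\vtheta_{k-1}$ and the confidence radii, hence the deterministic elimination map that produces $A_t$ from $X_t$, and hence the conditional distribution $\gD_k$ of the survived arm set at each time $t$ in batch $k$. Under this conditioning, for each time step $t$ in batch $k$, $X_t \sim \gD_0$ is independent of everything so far, $A_t$ is a deterministic function of $X_t$, and $\vx_t \sim \Unif(A_t)$ is drawn by fresh internal randomness. Consequently the played vectors $\{\vx_t\}_{t \text{ in batch } k}$ are i.i.d.\ with the common distribution $\E_{X \sim \gD_k} \Unif(X)$, each satisfying $\vx_t \vx_t^\top \preccurlyeq \mI$ since $\|\vx_t\| \le 1$. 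Let $\mSigma_k \defeq \E_{X \sim \gD_k}\E_{\vx \sim \Unif(X)}[\vx\vx^\top]$ denote their common per-step information, so that $\E\!\left[\sum_{t \text{ in batch } k} \vx_t \vx_t^\top\right] = \gT_k \mSigma_k$.

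Second, I apply matrix Chernoff to $\mS \defeq \sum_{t \text{ in batch } k} \vx_t \vx_t^\top$. A standard one-sided bound (which is exactly what \autoref{lem:conconcut} is set up to deliver) gives, with probability at least $1 - \delta/T^2$, a PSD lower tail of the form
$$
\mS \;\succcurlyeq\; \tfrac{1}{2}\,\gT_k\,\mSigma_k \;-\; C\ln(dT/\delta)\,\mI
$$
for a universal constant $C$ that the lemma is presumably calibrated so that $C \le 16 - 1/16$, allowing the additive slack to be absorbed into the regularization parameter $\lambda = 16\ln(2dT/\delta)$.

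Third, I combine this with the deterministic ridge $\lambda\mI$ built into $\mLambda_k$. Adding $\lambda\mI$ on both sides and using $\lambda - C\ln(dT/\delta) \ge \tfrac{1}{16}\ln T$, I obtain
$$
\mLambda_k \;=\; \lambda\mI + \mS \;\succcurlyeq\; \tfrac{1}{16}\ln T\,\mI + \tfrac{\gT_k}{16}\,\mSigma_k \;=\; \tfrac{\gT_k}{16}\!\left(\tfrac{\ln T}{\gT_k}\mI + \mSigma_k\right),
$$
which is the desired statement. Unconditioning restores the probability bound on the original probability space.

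The main care is in the conditioning argument: I need the summands to be i.i.d.\ after fixing the policy for batch $k$, which is why I condition on the entire history of batches $1,\ldots,k-1$ (including the noise, the context draws, and the internal randomness up to that point) rather than merely on $\gD_k$ itself. Once this is done cleanly, the rest is a bookkeeping check that the matrix Chernoff additive error $C\ln(dT/\delta)\mI$ is dominated by the regularization $\lambda\mI$ with enough slack left over to produce the $\tfrac{\ln T}{16}\mI$ term in the lemma. I anticipate no substantive obstacle, since the heavy lifting is encapsulated in \autoref{lem:conconcut}.
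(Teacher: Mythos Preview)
Your proposal is correct and matches the paper's approach exactly: the paper simply states that the lemma is ``a direct application of \autoref{lem:conconcut}'' after the same conditioning you describe. One minor slip to fix in your write-up: the number of samples in batch $k$ is $\gT_k - \gT_{k-1}$, not $\gT_k$; since $\gT_{k-1} \le \gT_k/2$ for $k < M$ (as $\gT_k/\gT_{k-1} = T^{2^{-k}} \ge 2$), this only costs a factor of two and is what produces the constant $16$ rather than $8$ in the conclusion.
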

Assuming that \eqref{eq:lem-exploration} holds for batch $(k-1)$, we have that 
\begin{align*}
& \quad \E_{X \sim \gD_{k-1}}\max_{\vx \in X} \sqrt{\vx^\top \mLambda_{k-1}^{-1} \vx}  \leq \E_{X \sim \gD_{k-1}}\sum_{\vx \in X} \sqrt{\vx^\top \mLambda_{k-1}^{-1} \vx} \\
&\leq   \frac{4}{\sqrt{\gT_{k-1}}} \sqrt{ \E_{X \sim \gD_{k-1}}\sum_{\vx \in X}\vx^\top \left(\frac{ \ln T}{ \gT_{k-1}} \cdot \mI + \E_{Y \sim \gD_{k-1}} |Y|^{-1} \cdot \sum_{\vy \in Y} \vy \vy^\top\right)^{-1} \vx} \\
&\leq   \frac{4}{\sqrt{\gT_{k-1}}}\sqrt{ \Tr \left( \left(\frac{ \ln T}{ \gT_{k-1}} \cdot \mI + \E_{Y \sim \gD_{k-1}} K^{-1} \cdot \sum_{\vy \in Y} \vy \vy^\top\right)^{-1}  \E_{X \sim \gD_{k-1}}\sum_{\vx \in X} \vx \vx^\top \right)} \\
&\leq 4 \sqrt{dK/\gT_{k-1}}.
\end{align*}
Together with \eqref{eq:thm-blinucb-40}, and collecting the probabilities, we have that with probability at least $(1 - \delta \gT_k/T^2 - \delta/T^2)$, the expected regret incurred during batch $k$ ($k \geq 2$) is
\begin{align}
R_k \leq 16\alpha \gT_k \cdot  \sqrt{dK/\gT_{k-1}} \leq 16\alpha \sqrt{dKT}. \label{eq:thm-blinucb-100}
\end{align}
Note that \eqref{eq:thm-blinucb-100} also holds for $k=1$ almost surely, because $\gT_1 \leq \sqrt{dT}$ and the maximum regret incurred per time step is at most $1$.

Finally, summing up the expected regret incurred across all batches and collecting the probabilities, we have that, with probability at least $(1 - \delta)$, the expected regret is bounded by 
\begin{align}
R^T \leq M \times 16\alpha  \sqrt{dKT} = O(\sqrt{dKT \log (dKT/\delta)} \times  \log \log T). \notag 
\end{align}
This concludes the proof of \autoref{thm:BatchedLinUCB}.

\subsection{Improved Regret via the G-Optimal Design}\label{sec:blinucbkw}

In this subsection, we show how a simple application of the G-optimal design can help to replace the $K$ factor in \autoref{thm:BatchedLinUCB} by (the usually smaller quantity) $d$. To achieve this, we first introduce the following lemma on G-optimal design, which is a direct corollary of the General Equivalence Theorem of \citet{kiefer1960equivalence}.
\begin{lem} For any subset $X \subseteq \R^d$, there exists a distribution $\gK_X$ supported on $X$, such that for any $\varepsilon > 0$, it holds that 
\begin{align}
\max_{\vx \in X} \vx^\top \left(\varepsilon \mI + \E_{\vy \sim \gK_X} \vy \vy^\top \right)^{-1} \vx \leq d . \label{eq:thm-KW}
\end{align}
Furthermore, if $X$ is a discrete set with finite cardinality, one can find a distribution such that the right-hand side of \eqref{eq:thm-KW} is relaxed to $2d$ in time  $\mathrm{poly}(\abs{X})$.
\end{lem}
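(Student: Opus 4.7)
The plan is to derive the lemma from the classical Kiefer-Wolfowitz equivalence theorem, which for any compact set $X \subseteq \R^d$ that spans $\R^d$ produces a probability measure $\gK^*$ on $X$ (the $D$-optimal, equivalently $G$-optimal, design) with $\info(\gK^*) \succ 0$ and $\max_{\vx \in X} \vx^\top \info(\gK^*)^{-1} \vx = d$, where $\info(\gK) \defeq \E_{\vy \sim \gK} \vy\vy^\top$. In the non-degenerate case $\Span(X) = \R^d$, the $\varepsilon \mI$ regularizer only helps: since $\varepsilon \mI + \info(\gK^*) \succeq \info(\gK^*) \succ 0$, Loewner monotonicity of matrix inversion gives $(\varepsilon \mI + \info(\gK^*))^{-1} \preceq \info(\gK^*)^{-1}$, so setting $\gK_X \defeq \gK^*$ yields \eqref{eq:thm-KW}.

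For the degenerate case, let $V \defeq \Span(X)$ have dimension $d' < d$. I would apply Kiefer-Wolfowitz inside $V$ to obtain a distribution $\gK_X$ on $X$ whose information matrix, viewed as an operator on $V$, is invertible and satisfies the bound with $d'$ in place of $d$. Decomposing $\R^d = V \oplus V^\perp$, the matrix $\varepsilon \mI + \info(\gK_X)$ is block-diagonal with blocks $\varepsilon \mI_V + \info(\gK_X)|_V$ on $V$ and $\varepsilon \mI_{V^\perp}$ on $V^\perp$. Since every $\vx \in X$ lies in $V$, the quadratic form reduces to $\vx^\top (\varepsilon \mI_V + \info(\gK_X)|_V)^{-1} \vx$, which by Loewner monotonicity is at most $\vx^\top (\info(\gK_X)|_V)^{-1} \vx \leq d' \leq d$.

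For the polynomial-time computational claim with finite $X$, I would invoke a Frank-Wolfe style iterative scheme: start from a distribution $\gK$ with full-rank $\info(\gK)$ (built, e.g., from any basis of $\Span(X)$ that lies in $X$), and while some $\vx \in X$ has $\vx^\top \info(\gK)^{-1} \vx > 2d$, mix $\gK$ with a point mass on that $\vx$ at the step size that maximally increases $\log \det \info(\gK)$. A standard potential argument tracking $\log \det \info(\gK)$, which is bounded above by a polynomial in $d$ and below by an initialization-dependent quantity, shows that the procedure terminates in $\mathrm{poly}(\abs{X}, d)$ iterations, each of which is implementable in $\mathrm{poly}(\abs{X}, d)$ time.

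The chief subtlety across all three pieces is the degenerate-dimension case: the $\varepsilon \mI$ regularizer is what makes $\vx^\top (\varepsilon \mI + \info(\gK))^{-1} \vx$ well-defined when $\info(\gK)$ is singular, and the block-diagonal decomposition is the mechanism that preserves a clean bound of $d$ on the right-hand side, rather than something that degrades as $\varepsilon \to 0$. Everything else reduces to invoking standard facts about $D$-optimal design, so the proof will be short.
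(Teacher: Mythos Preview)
Your proposal is correct and aligns with the paper's approach: the paper does not give a proof but simply states the lemma as ``a direct corollary of the General Equivalence Theorem of \citet{kiefer1960equivalence},'' and you are doing exactly that, deriving the bound from Kiefer--Wolfowitz and handling the $\varepsilon \mI$ regularizer via Loewner monotonicity. Your treatment of the degenerate-span case and the Frank--Wolfe sketch for the computational claim are more detailed than anything the paper provides, but they are standard and sound.
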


We now describe the new \BatchedLinUCBKW algorithm. It is almost the same as \BatchedLinUCB, while the only difference is that at \autoref{line:batchedlinucb-10} of \autoref{algo:linucb}, letting $X = \{\vx_{t i} : i \in A_t\}$, we compute a distribution $\gK_X$ satisfying \eqref{eq:thm-KW} (up to the factor $2$ relaxation) and randomly select the action
\begin{align}
    i_t \sim \gopt(X) \defeq \gK_X.  \label{eq:def-gopt}
\end{align}
For completeness, a full description of \BatchedLinUCBKW is provided in \appref{app:blinucbkw}. 

We now prove the expected regret of \BatchedLinUCBKW as follows.
\begin{thm}\label{thm:BatchedLinUCBKW}
With probability at least $(1-\delta)$, the expected regret of \BatchedLinUCBKW is
\begin{align}
R^T_{\text{\BatchedLinUCBKW}} \leq O(d\sqrt{T \log (dKT/\delta)} \times  \log \log T). \notag
\end{align}
\end{thm}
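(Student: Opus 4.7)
The plan is to follow the proof of \autoref{thm:BatchedLinUCB} essentially verbatim up through \eqref{eq:thm-blinucb-40}, since \autoref{lem:conf}, the survival of $i_t^\ast$ in $A_t$, and the elimination-based derivation of $R_k \le 4\alpha\, \omega$-type bounds are completely independent of which distribution is used to select $i_t$ from $A_t$. This step yields, for every batch $k \ge 2$,
\begin{align*}
R_k \le 4\alpha\, \gT_k \cdot \E_{X \sim \gD_{k-1}} \max_{\vx \in X} \sqrt{\vx^\top \mLambda_{k-1}^{-1} \vx}.
\end{align*}
The first place the analysis needs adapting is \autoref{lem:exploration}: since $i_t$ is now sampled from $\gopt(A_t)$, a direct application of the same matrix concentration argument (\autoref{lem:conconcut} in the appendix) gives, with probability $1 - \delta/T^2$, $\mLambda_k \succcurlyeq \frac{\gT_k}{16}\bigl(\frac{\ln T}{\gT_k}\mI + \E_{X \sim \gD_k}\E_{\vy \sim \gopt(X)}\vy\vy^\top\bigr)$; no other ingredient in that step uses the uniformity of $\Unif(A_t)$.

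The main new technical step is to bound $\E_{X \sim \gD_{k-1}} \max_{\vx \in X} \sqrt{\vx^\top \mM^{-1}\vx}$ with $\mM \defeq \frac{\ln T}{\gT_{k-1}}\mI + \E_{Y \sim \gD_{k-1}}\E_{\vy \sim \gopt(Y)}\vy\vy^\top$, and here I would use \eqref{eq:thm-KW} in Löwner-order form rather than pointwise. Setting $A_X \defeq \frac{\ln T}{\gT_{k-1}}\mI + \E_{\vy \sim \gopt(X)}\vy\vy^\top$, the defining property $\max_{\vx \in X} \vx^\top A_X^{-1}\vx \le 2d$ is equivalent (by Cauchy--Schwarz applied to $A_X^{1/2}$, $A_X^{-1/2}$) to $\vx\vx^\top \preccurlyeq 2d\, A_X$ for every $\vx \in X$. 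Combining this with the trace identity $\vx^\top \mM^{-1}\vx = \mathrm{Tr}(\mM^{-1}\vx\vx^\top)$ and the positivity of $\mM^{-1}$ gives $\max_{\vx \in X} \vx^\top \mM^{-1}\vx \le 2d\, \mathrm{Tr}(\mM^{-1} A_X)$. Taking $X$-expectation and noting $\E_X A_X = \mM$, we obtain
\begin{align*}
\E_{X \sim \gD_{k-1}} \max_{\vx \in X} \vx^\top \mM^{-1}\vx \le 2d\,\mathrm{Tr}(\mM^{-1}\mM) = 2d^2,
\end{align*}
and Jensen's inequality for the concave $\sqrt{\cdot}$ upgrades this to $\E_X \max_{\vx \in X}\sqrt{\vx^\top \mM^{-1}\vx}\le \sqrt{2}\,d$.

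Plugging back, $R_k \le O(\alpha d\, \gT_k/\sqrt{\gT_{k-1}}) = O(\alpha d \sqrt{T})$ for each $k \ge 2$ (using $\gT_i = T^{1-2^{-i}}$), while $R_1 \le \gT_1 \le \sqrt{T}$ trivially because rewards are bounded. Summing over the $M = O(\log\log T)$ batches and union-bounding the failure probabilities of \autoref{lem:conf} and the modified exploration lemma at total level $\delta$ yields the claimed $O(d\sqrt{T \log(dKT/\delta)}\log\log T)$ bound. The main obstacle in this proof is precisely the transition from the pointwise G-optimality guarantee (which only relates $\vx$ to its ``own'' information matrix $A_X$) to a bound against the averaged matrix $\mM = \E_X A_X$; the Löwner-order rewriting together with the linearity of trace is what replaces the $\sqrt{K}$ factor appearing in \autoref{thm:BatchedLinUCB} by the desired $\sqrt{d}$, and this is also exactly what motivates the later, more refined \emph{distributional} G-optimal design developed to shave off the remaining $\sqrt{d}$.
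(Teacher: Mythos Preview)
Your proposal is correct and follows essentially the same approach as the paper: reuse the elimination analysis through \eqref{eq:thm-blinucb-40}, replace $\Unif$ by $\gopt$ in the exploration lemma, then convert the G-optimality bound into the L\"owner inequality $\vx\vx^\top \preccurlyeq 2d\,A_X$ (this is exactly \autoref{lem:unif} in the paper) and pass through the trace and Jensen. The only cosmetic difference is ordering --- the paper applies Jensen first and then bounds $\Tr\!\bigl(\mLambda_{k-1}^{-1}\,\E_X\vx^*(X)\vx^*(X)^\top\bigr)$ via $\vx^*(X)\vx^*(X)^\top \preccurlyeq 2d\,\E_{\vy\sim\gopt(X)}\vy\vy^\top$, whereas you bound $\max_{\vx\in X}\vx^\top \mM^{-1}\vx$ pointwise and then take expectation and Jensen --- but the key idea and the resulting $O(d^2)$ variation bound are identical.
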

We now prove \autoref{thm:BatchedLinUCBKW}. Note that the analysis for \BatchedLinUCB also applies to \BatchedLinUCBKW up to \eqref{eq:thm-blinucb-40}. Thus, we will focus on bounding $\E_{X \sim \gD_{k-1}}\max_{\vx \in X} \sqrt{\vx^\top \mLambda_{k-1}^{-1} \vx}$ while keeping in mind that $\mLambda_{k-1}^{-1}$ is a different quantity due to $\gopt$.

Similarly to \autoref{lem:exploration}, for each batch $k$ ($k < M$), with probability $(1 - \delta/T^2)$, we have that 
\begin{align}
    \mLambda_k \succcurlyeq \frac{\gT_{k}}{16} \left(\frac{ \ln T}{ \gT_{k}} \mI + \E_{X \sim \gD_{k}} \E_{\vx \sim \gopt(X)}[\vx \vx^\top]\right). \label{eq:lem-exploration-KW}
\end{align}
Assuming that \eqref{eq:lem-exploration-KW} holds for batch $(k-1)$, letting $\vx^*(X) = \argmax_{\vx \in X} \vx^\top \mLambda_{k-1}^{-1} \vx$, we have that
\begin{align}
&  \E_{X \sim \gD_{k-1}}\max_{\vx \in X} \sqrt{\vx^\top \mLambda_{k-1}^{-1} \vx}  = \E_{X \sim \gD_{k-1}} \sqrt{(\vx^*(X))^\top \mLambda_{k-1}^{-1} \vx^*(X)} \nonumber \\
&\qquad \qquad \leq  \sqrt{\E_{X \sim \gD_{k-1}} (\vx^*(X))^\top \mLambda_{k-1}^{-1} \vx^*(X)} = \sqrt{\Tr(\mLambda_{k-1}^{-1} \E_{X \sim \gD_{k-1}}  \vx^*(X) (\vx^*(X))^\top)},
 \label{eq:blinucbkw-1800}
\end{align}
where the inequality is by Jensen's inequality. By \autoref{lem:unif} and \eqref{eq:thm-KW} (up to the factor $2$ relaxation), we have that 
\begin{align}
 \vx^*(X) (\vx^*(X))^\top \preccurlyeq 2d \times \E_{\vy \sim \gopt(X)} \vy\vy^\top . \label{eq:blinucbkw-1900}
\end{align}
Combining \eqref{eq:blinucbkw-1800} and  \eqref{eq:blinucbkw-1900}, we have that
\begin{align}
\E_{X \sim \gD_{k-1}}\max_{\vx \in X} \sqrt{\vx^\top \mLambda_{k-1}^{-1} \vx} \leq \sqrt{2d \times \Tr(\mLambda_{k-1}^{-1} \E_{X \sim \gD_{k-1}}  \E_{\vy \sim \gopt(X)} \vy\vy^\top )} \leq 4\sqrt{2} d/ \sqrt{\gT_{k-1}},\label{eq:blinucbkw-2000}
\end{align}
where the last inequality is due to \eqref{eq:lem-exploration-KW}. Combining \eqref{eq:blinucbkw-2000} and \eqref{eq:thm-blinucb-40}, we have that with probability at least  $(1 - \delta \gT_k/T^2 - \delta/T^2)$, the expected regret incurred during batch $k$ ($k \geq 2$) is
\begin{align*}
R_k \leq 4\alpha \gT_k \cdot  4\sqrt{2}d / \sqrt{\gT_{k-1}} \leq 16\sqrt{2} \alpha d\sqrt{T}. 
\end{align*}
Using the similar argument as the analysis for \autoref{algo:linucb}, we have that with probability at least $(1-\delta)$, the expected regret of \BatchedLinUCBKW is at most 
\begin{align*}
R^T \leq O(d\sqrt{T \log (dKT/\delta)} \times \log \log T),
\end{align*}
proving \autoref{thm:BatchedLinUCBKW}.

\section{Distributional G-Optimal Design: Existence \& Parametric Forms} \label{sec:dist-opt-design}

We now work towards removing the extra $\sqrt{d}$ factor in the regret of \autoref{thm:BatchedLinUCBKW}, so as to achieve the optimal $\sqrt{dT}$-type regret. The high level idea is to use a difference sample policy other than uniform sampling over all (survived) candidate arms or the G-optimal-design-based $\gopt$.

Given a sample policy $\pi$ that maps any set of arms ($X \subseteq \R^d$) to a distribution in $\triangle_X$, we will be interested in its performance, defined as follows.
\begin{defn}[$\lambda$-distributional G-variation and information matrix] \label{defn:distributional-g-opt-design} For any distribution $\gD$ of the set of arms $X \subseteq \R^d$ and any sample policy $\pi$, we define the \emph{$\lambda$-distributional G-variation}, or \emph{$\lambda$-variation} for short ($\lambda > 0$), of $\pi$ over $\gD$ as 
\begin{align*}
    \val_\gD^{(\lambda)}(\pi)\defeq \E_{X \sim \gD}\max_{\vx \in X}  \vx^\top  \left(\lambda \mI + \info_\gD(\pi)\right)^{-1} \vx, 
\end{align*}
where we define the \emph{information matrix} by
\begin{align*}
  \info_\gD(\pi) \defeq \E_{X \sim \gD} \info_X(\pi), \qquad \text{where}~ \info_X(\pi) \defeq \E_{\vx \sim \pi(X)} \vx \vx^\top . 
\end{align*}
Since $\val_\gD^{(\lambda)}$ is non-increasing as $\lambda$ grows, when the limit exists, we also define
\begin{align}
\val_\gD^{(0)}(\pi) \defeq \lim_{\lambda \to 0^+} \val_\gD^{(\lambda)}(\pi), \label{val-gopt}
\end{align}
and set $\val_\gD^{(0)}(\pi) = + \infty$ otherwise.
\end{defn}
Indeed, the arguments in \autoref{sec:blinucb} imply the following lemma.
\begin{lem}\label{lem:variation-unif-optg}
For any distribution $\gD$ on the context vectors of the $K$ arms, we have that 
\begin{align}
\val_\gD^{(0)}(\Unif) \leq O(dK), ~~\text{and}~~ \val_\gD^{(0)}(\gopt) \leq O(d^2) . \label{eq:util-unif-gopt}
\end{align}
\end{lem}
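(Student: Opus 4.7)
Both bounds follow by re-examining the trace manipulations that already appear in the analyses of \BatchedLinUCB and \BatchedLinUCBKW, but now applied directly to the (regularized) information matrix $\lambda\mI+\info_\gD(\pi)$ and with the limit $\lambda\to 0^+$ taken at the end. The two inequalities in \eqref{eq:util-unif-gopt} differ only in which ``per-set'' domination one uses: a crude sum bound for $\Unif$, and the General Equivalence Theorem for $\gopt$. I will prove both for arbitrary $\lambda>0$ and then send $\lambda\to 0^+$; monotonicity of $\val_\gD^{(\lambda)}$ in $\lambda$ (which is built into \autoref{defn:distributional-g-opt-design}) guarantees the limits exist with the same upper bounds.

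\textbf{Bound for $\Unif$.} Fix $\lambda>0$ and abbreviate $\mM_\lambda\defeq\lambda\mI+\info_\gD(\Unif)$. The key step is to upper bound the maximum by a sum over $X$ and then use linearity and the cyclic property of trace:
\begin{align*}
\val_\gD^{(\lambda)}(\Unif)
&=\E_{X\sim\gD}\max_{\vx\in X}\vx^\top\mM_\lambda^{-1}\vx
\le \E_{X\sim\gD}\sum_{\vx\in X}\vx^\top\mM_\lambda^{-1}\vx\\
&= \Tr\!\left(\mM_\lambda^{-1}\,\E_{X\sim\gD}\sum_{\vx\in X}\vx\vx^\top\right)
= K\cdot\Tr\!\left(\mM_\lambda^{-1}\,\info_\gD(\Unif)\right)
\le K\cdot\Tr(\mI)=Kd,
\end{align*}
where the penultimate equality uses $\E_{X\sim\gD}\sum_{\vx\in X}\vx\vx^\top=K\cdot\info_\gD(\Unif)$. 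Taking $\lambda\to 0^+$ yields $\val_\gD^{(0)}(\Unif)\le Kd=O(dK)$.

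\textbf{Bound for $\gopt$.} Again fix $\lambda>0$ and set $\mN_\lambda\defeq\lambda\mI+\info_\gD(\gopt)$, and let $\vx^*(X)\defeq\argmax_{\vx\in X}\vx^\top\mN_\lambda^{-1}\vx$. Following the same cyclic-trace identity as in \eqref{eq:blinucbkw-1800},
\begin{align*}
\val_\gD^{(\lambda)}(\gopt)=\Tr\!\left(\mN_\lambda^{-1}\,\E_{X\sim\gD}\,\vx^*(X)\vx^*(X)^\top\right).
\end{align*}
By the General Equivalence Theorem of \citet{kiefer1960equivalence}, the $\gopt$ design on each set $X$ satisfies $\max_{\vx\in X}\vx^\top\info_X(\gopt)^{-1}\vx\le 2d$, which (by \autoref{lem:unif}, exactly as used in deriving \eqref{eq:blinucbkw-1900}) is equivalent to the operator inequality $\vx^*(X)\vx^*(X)^\top\preccurlyeq 2d\cdot\info_X(\gopt)$ holding pointwise in $X$. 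Taking expectation over $X\sim\gD$ preserves the Löwner order and gives
\begin{align*}
\E_{X\sim\gD}\,\vx^*(X)\vx^*(X)^\top\preccurlyeq 2d\cdot \E_{X\sim\gD}\info_X(\gopt)=2d\cdot\info_\gD(\gopt)\preccurlyeq 2d\cdot\mN_\lambda.
\end{align*}
Plugging back and using monotonicity of trace under the Löwner order,
\begin{align*}
\val_\gD^{(\lambda)}(\gopt)\le 2d\cdot\Tr(\mN_\lambda^{-1}\mN_\lambda)=2d\cdot d=2d^2.
\end{align*}
Sending $\lambda\to 0^+$ gives $\val_\gD^{(0)}(\gopt)\le 2d^2=O(d^2)$.

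\textbf{Expected obstacle.} Nothing here is deep: the two estimates are essentially repackagings of computations already performed in \autoref{sec:blinucb}. The only subtlety is ensuring that the limit in \eqref{val-gopt} is indeed finite so the $\lambda=0$ statement is meaningful; this is automatic in both cases because the operator inequalities above imply $\E_{X\sim\gD}\vx^*(X)\vx^*(X)^\top$ lies in the range of $\info_\gD(\pi)$, so the $\lambda\to 0^+$ limits of the traces are finite and equal the claimed bounds.
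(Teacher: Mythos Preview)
Your proof is correct and follows essentially the same approach the paper intends: the paper does not give a separate proof but states that the lemma follows from the arguments in \autoref{sec:blinucb}, and you have faithfully extracted exactly those trace/L\"owner manipulations (the sum bound for $\Unif$ mirroring the computation after \eqref{eq:thm-blinucb-40}, and the $\gopt$ bound mirroring \eqref{eq:blinucbkw-1800}--\eqref{eq:blinucbkw-2000} via \autoref{lem:unif}). One cosmetic remark: your equality $\E_{X\sim\gD}\sum_{\vx\in X}\vx\vx^\top = K\cdot\info_\gD(\Unif)$ relies on $|X|=K$; when $|X|\le K$ this becomes the inequality $\preccurlyeq$, which still gives the desired bound.
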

In light of \autoref{lem:variation-unif-optg}, the question whether the regret of our algorithms can be improved to $O(\sqrt{dT \mathrm{poly}\log (KT/\delta)})$ boils down to whether one can find a sample policy $\pi$ such that the bounds in \eqref{eq:util-unif-gopt} are improved to $O(d) \times \mathrm{poly} \log d$. In this section, we will show that such policies not only exist, but also admit a succinct parametric form so that we can later study how to efficiently learn the relevant parameters. 

To better explain our results, we first define the following class of parameterized sample policies.
\begin{defn}[Argmax and mixed-argmax policies]  Suppose we are given a positive semi-definite matrix $\mV \succcurlyeq \vzero$. We define the associated \emph{argmax policy} by
\begin{align*}
    \pap_\mV(X) = \argmax_{\vx \in X} \vx^\top \mV \vx,
\end{align*}
where in the $\argmax$ operator, ties are broken in a deterministic manner.

In this subsection, we use $\gopt$ to denote a \emph{fixed} policy with respect to \eqref{eq:def-gopt} and satisfying \eqref{eq:thm-KW} (up to the factor $2$ relaxation). Suppose we are given a set $\gV = \{(p_i, \mV_i)\}_{i = 1}^n$ such that $p_i \ge 0$ and $p_1 + \cdots + p_n = 1$. We define the associated \emph{mixed-argmax policy} by
\begin{align*}
\pmap_\gV(X) = \begin{cases}
\gopt(X), & \text{with probability } 1/2, \\
\pap_{\mV_i}(X), & \text{with probability } p_i / 2.
\end{cases}
\end{align*}
\end{defn}

The following theorem states that for any $\gD$, there exists a good mixed-argmax policy with only $O(d \log d)$ argmax policies in the mixture.\footnote{Note that although the theorem only works for the uniform distribution over a multi-set, since the properties to be proved in the theorem statement do not truly depend on $\Gamma$, the theorem can be generalized to any distribution via a simple discretization argument.}
\begin{thm}\label{thm:argmax-policy}
Fix any distribution $\gD = \Unif(S)$ where $S = \{X_1, X_2, \dots, X_{\Gamma}\}$ (which may be a multi-set) and any $\lambda \in (0, 1)$. There exists a mixed-argmax policy with parameters $\gV = \{(p_i, \mV_i)\}_{i = 1}^n$ such that
\begin{enumerate}
    \item $n \leq 4 d \log d$;
    \item for all $i \in [n]$, $p_i \geq 1/d^3$ and $ d^{-1} \mI \preccurlyeq \mV_i \preccurlyeq \lambda^{-1} \mI $;
    \item  $\val_\gD^{(\lambda)}(\pmap_{\gV}) \leq O(d \log d)$.
\end{enumerate}
\end{thm}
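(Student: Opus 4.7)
The plan is a constructive proof that mimics the analysis of rarely switching linear bandit algorithms. I would draw $N$ iid samples $X_1, \ldots, X_N \sim \gD$ with $N$ polynomially large in $d$ and $1/\lambda$, and run the following rarely switching variance-maximization procedure on them. Initialize $\mLambda_0 = c \mI$ for a carefully chosen $c = c(N, \lambda, d)$ (roughly $c \sim N\lambda/d$, padded by a $N/\mathrm{poly}(d)$ term for very small $\lambda$), and maintain a delayed information matrix $\tilde\mLambda$ that is refreshed to the current $\mLambda_t$ only when $\det(\mLambda_t) \geq 2 \det(\tilde\mLambda)$. At each step $t$, pick $\vx_t = \argmax_{\vx \in X_t} \vx^\top \tilde\mLambda_{t-1}^{-1} \vx$ and update $\mLambda_t \gets \mLambda_{t-1} + \vx_t \vx_t^\top$.

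I would then extract the mixed-argmax policy from the trajectory as follows. Let $W_1, \ldots, W_n$ be the switching windows and $\mU_j$ the delayed matrix used in window $W_j$. Define $\mV_j$ as a rescaled copy of $\mU_j^{-1}$ whose spectrum lies in $[d^{-1}, \lambda^{-1}]$, and set $p_j = |W_j|/N$. The choice of $c$ ensures that the condition number of every $\mU_j$ is at most $O(d/\lambda)$, so such a rescaling exists and (b) holds; it also bounds $\det(\mLambda_N)/\det(\mLambda_0) \leq d^{O(d)}$, and the determinant-doubling trick then yields $n \leq 4 d \log d$, giving (a). Any window shorter than $N/d^3$ is merged into the $\gopt$ component of the mixture so that all retained $p_j \geq 1/d^3$.

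For (c) the key expansion is $\info_\gD(\pmap_\gV) = \frac{1}{2}\info_\gD(\gopt) + \frac{1}{2}\sum_j p_j \info_\gD(\pap_{\mV_j})$. Inside each long-enough window the policy $\pap_{\mV_j}$ is applied to iid samples from $\gD$, so by a matrix Chernoff inequality $\sum_{t \in W_j} \vx_t \vx_t^\top \succcurlyeq \Omega(|W_j|) \cdot \info_\gD(\pap_{\mV_j})$ with high probability, giving
\[
\lambda \mI + \info_\gD(\pmap_\gV) \succcurlyeq \frac{\Omega(1)}{N}(\mLambda_N - \mLambda_0) + \frac{1}{2}\info_\gD(\gopt) + \lambda \mI .
\]
Inverting, $\val_\gD^{(\lambda)}(\pmap_\gV)$ is bounded by $O(1/N) \cdot \E_{X \sim \gD}\max_{\vx \in X} \vx^\top \mLambda_N^{-1} \vx$ plus terms that the $\gopt$ component absorbs. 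The choice rule for $\vx_t$ lets the empirical average $(1/N)\sum_t \max_{\vx \in X_t}\vx^\top \tilde\mLambda_{t-1}^{-1}\vx$ control the desired $\gD$-expectation (by concentration of a bounded iid average), and an adapted Elliptical Potential Lemma, exploiting that $\tilde\mLambda \preccurlyeq \mLambda_t \preccurlyeq 2\tilde\mLambda$ in the determinant sense and that $\tilde\mLambda$ is refreshed only $O(d \log d)$ times, then bounds that empirical average by $O(d \log d / N)$. Combining the two halves yields $\val^{(\lambda)}_\gD(\pmap_\gV) \leq O(d \log d)$.

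The hard part will be the simultaneous balancing of three sources of slack: the matrix Chernoff concentration within each window (requiring $|W_j| \gtrsim N/d^3$), the delayed-versus-current comparison needed to apply the Elliptical Potential Lemma, and the role of $\gopt$ in absorbing directions that are either pushed into merged short windows or under-explored by the argmax policies. The scaling $c$ and the rescalings of $\mU_j^{-1}$ must be tuned so that all constants fit simultaneously, and in particular so that the window count stays $O(d\log d)$ independently of $\lambda$ even as the spectral constraint on $\mV_j$ tightens with small $\lambda$.
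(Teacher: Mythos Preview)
Your overall strategy matches the paper's: run a rarely-switching variance-maximization procedure, extract the delayed information matrices as the argmax-policy parameters, weight them by window length, and let the $\gopt$ half of the mixture absorb both the short windows and the under-explored directions. There are, however, two substantive differences from the paper's argument, and the second one is a genuine gap.

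First, the paper does not initialize with a scalar $c\mI$. It sets $\mU_0 = \lambda N\Gamma\,\mI + \tfrac{N\Gamma}{2}\info_\gD(\gopt)$, i.e.\ it preloads the $\gopt$ information matrix into the regularizer. Because every selected vector satisfies $\vx_t\vx_t^\top \preccurlyeq 2d\,\info_{X_t}(\gopt)$ by Kiefer--Wolfowitz, this immediately yields $\mU_{N\Gamma} \preccurlyeq 4d\,\mU_0$ and hence $n \le 4d\log d$ \emph{independently of $\lambda$}, with no balancing of constants needed. Your scalar choice $c \sim N\lambda/d$ (padded for small $\lambda$) can in principle be tuned so that (a) and (b) hold simultaneously, but the argument is more delicate and does not deliver the clean constant.

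Second, and more seriously, you draw $X_1,\dots,X_N$ i.i.d.\ from $\gD$ and then invoke a matrix Chernoff bound within each window to compare $\sum_{t\in W_j}\vx_t\vx_t^\top$ with $\info_\gD(\pap_{\mV_j})$. The direction you wrote is actually the wrong one for the displayed conclusion: to get $\info_\gD(\pmap_\gV)\succcurlyeq \tfrac{\Omega(1)}{N}(\mLambda_N-\mLambda_0)+\cdots$ you need the population covariance to \emph{dominate} the empirical one, i.e.\ $\tfrac{1}{|W_j|}\sum_{t\in W_j}\vx_t\vx_t^\top \preccurlyeq O(1)\cdot\info_\gD(\pap_{\mV_j})$. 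Even with the inequality corrected, a multiplicative matrix-Chernoff upper bound of this form requires the window length to exceed $\max_{X}\,\pap_{\mV_j}(X)^\top\Sigma_j^{-1}\pap_{\mV_j}(X)$ where $\Sigma_j=\info_\gD(\pap_{\mV_j})$, and this leverage can be arbitrarily large whenever $\gD$ places tiny mass on some context set spanning a new direction (the needed length scales like the inverse of that mass, which is not controlled by $d$ or $\lambda$). Your stated budget $|W_j|\ge N/d^3$ with $N=\mathrm{poly}(d,1/\lambda)$ is insufficient for such instances. The paper sidesteps this entirely: since $\gD=\Unif(S)$ with $|S|=\Gamma$, it runs the procedure on the \emph{deterministic} periodic sequence $X_1,\dots,X_\Gamma,X_1,\dots,X_\Gamma,\dots$ of length $N\Gamma$. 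Any window $\tau_i$ of length at least $\Gamma$ then contains each $X_j$ at most $\lceil|\tau_i|/\Gamma\rceil\le 2|\tau_i|/\Gamma$ times, so $\sum_{t\in\tau_i}\vx_t\vx_t^\top \preccurlyeq 2|\tau_i|\,\info_\gD(\pap_{\mW_i^{-1}})$ holds \emph{deterministically}, with no concentration at all and no dependence on the spectrum of $\info_\gD(\pap_{\mW_i^{-1}})$. Replacing your i.i.d.\ sampling by this deterministic cycling over $S$ would close the gap and also remove the stopping-time complications inherent in your random windows.
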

\begin{proof}
We will assume $\Gamma > \lambda^{-1}$ without loss of generality, as the properties to be proved do not depend of $\Gamma$ and $S$ is a multi-set so that we can always duplicate the elements by finitely many times.

We prove the theorem constructively. We consider \autoref{algo:rarelysw}, which is very similar to the linear bandits algorithms in literature. For $N = \Theta(d^2 \log d)$, the algorithm creates $\Gamma N$ times steps, which includes $N$ blocks, each of which contains $\Gamma$ consecutive time steps. In each block, the $\Gamma$ sets of arms $X_1, \dots, X_\Gamma$ are sequentially presented. The algorithm then simulates the linear bandit algorithms, where at each time step, the arm with the maximum variance (according to the information matrix $\mW_n$) is selected. Inspired by the rarely switching algorithm for linear bandits \citep{abbasi2011improved}, the information matrix $\mW_n$ is only updated when its determinant doubles. This significantly reduces the number of updates and is crucial to upper bounding the number of individual argmax policies in the returned mixed-argmax policy. We refer to the consecutive time steps between two neighboring updates as a \emph{stage}.  Each of the information matrices in a stage corresponds to an individual argmax policy in the returned policy, and the corresponding probability weight is proportional to the length of the stage. The only exception is that we discard the stages that contain less than $\Gamma$ time steps (i.e., the ones that are shorter than a block).

\begin{algorithm}[t]
\caption{Algorithm for Computing a Distributional G-Optimal Design}
\label{algo:rarelysw}
\KwIn{A context set sequence $X_1, \dots, X_\Gamma$}
\KwOut{A mixed-argmax policy $\pi$}
$N \gets 2 d^2 \log d, \forall (i, j) \in [N] \times [\Gamma]: X_{(i-1)\Gamma + j} \gets X_j$\;
$\mU_0 \gets \lambda N\Gamma \mI + \frac{N}{2}\sum_{i = 1}^\Gamma \E_{\vx \sim \gopt(X_i)}[\vx \vx^\top] \succcurlyeq \mI,  n \gets 1, \tau_n \gets \emptyset, \mW_n = \mU_0$\;
\For{$t \gets 1,2, \dots, N\Gamma$}{
$\tau_n \gets \tau_n \cup \{t\}$\;
$\vx_t \gets \pap_{\mW_n^{-1}}(X_t) = \argmax_{\vx \in X_t} \vx^\top \mW_{n}^{-1} \vx$ \label{line:alg-rarelysw-5}\Comment*[r]{Ties are broken in a deterministic manner.}
$\mU_{t} \gets \mU_{t-  1} + \vx_t \vx_t^\top$\; \label{line:alg-rarelysw-6}
\If{$\det \mU_t > 2\det \mW_n$}{
$n \gets n + 1$,
$\tau_n \gets \emptyset$,
$\mW_n \gets \mU_t$\;
}
}
\textbf{for} all $i \in [n]$, \lIf{$|\tau_i| < \Gamma$}{$\tau_i \gets \emptyset$}\label{line:rarelysw-9}

\textbf{for} all $i \in [n]$, set $p_i = \abs{\tau_i} / \sum_j \abs{\tau_j}$\;

\KwRet{$\{(p_i, N \Gamma \mW_i^{-1}) : i \in [n] \text{~and~} p_i > 0\}$}
\end{algorithm}

\paragraph{Proof of Item (a).} Note that \begin{align}
    \mU_{N\Gamma} =  \mU_0 + \sum_{t=1}^{N\Gamma} \vx_t \vx_t^\top = \lambda N\Gamma \mI + \frac{N\Gamma}{2} \info_\gD(\gopt) + \sum_{t=1}^{N\Gamma} \vx_t \vx_t^\top. \label{eq:thm-ap-2700}
\end{align}
By \autoref{lem:unif} and \eqref{eq:thm-KW} (up to the factor $2$ relaxation), for all $t$, we have that 
\begin{align}
    \vx_t \vx_t^\top \preccurlyeq 2d \times  \E_{\vy \in \gopt(X_t)} \vy\vy^\top.  \label{eq:thm-ap-2800}
\end{align}
Combining \eqref{eq:thm-ap-2700} and \eqref{eq:thm-ap-2800}, we have that
\begin{align*}
 \mU_{N\Gamma} \preccurlyeq \lambda N\Gamma \mI + (1/2 + 2d) N\Gamma \times \info_\gD(\gopt) \preccurlyeq 4d \mU_0.
\end{align*}
Therefore, we have 
\begin{align} 
\det \mU_{N\Gamma} \leq \det (4d \mU_{0}) = d^{4d} \det\mU_0,\label{eq:thm-ap-3000}
\end{align}
and $n \leq \log (d^{4d}) = 4d \log d$.

\paragraph{Proof of Item (b).} Because we discard the stages whose lengths are less than $\Gamma$, for $p_i > 0$, we have that 
\begin{align*}
    p_i \geq \frac{\Gamma}{N\Gamma} \geq \frac{1}{d^3}
\end{align*}
for large enough $d$.

For each $\mW_i$, we have $\mW_i \succcurlyeq \mU_0 \succcurlyeq \lambda N \Gamma \mI$, and $\mW_i \preccurlyeq 3 N\Gamma \mI$. Since $\mV_i = N \Gamma \mW_i^{-1}$, we have that $ d^{-1}  \mI \preccurlyeq \mV_i \preccurlyeq \lambda^{-1} \mI$.

\paragraph{Proof of Item (c).}
We finally upper bound the $\lambda$-variation of the returned policy $\pi = \pmap_{\gV}$. Note that
\begin{align}
\val^{(\lambda)}_\gD(\pi) &= \E_{X \sim \gD}[\max_{\vx \in X}  \vx^\top (\lambda\mI + \E_{X \sim \gD} \E_{\vx \sim \pi(X)} \vx \vx^\top)^{-1} \vx] \notag \\ 
&= \sum_{t = 1}^{N\Gamma} \max_{\vx \in X_t}  \vx^\top (N\Gamma (\lambda \mI + \info_\gD(\pi)))^{-1} \vx \notag \\ 
&= \sum_{i = 1}^n \sum_{t \in \tau_i} \max_{\vx \in X_t}  \vx^\top (N\Gamma (\lambda \mI + \info_\gD(\pi)))^{-1} \vx  + \sum_{t \in \gB} \max_{\vx \in X_t}  \vx^\top (N\Gamma (\lambda \mI +\info_\gD(\pi)))^{-1} \vx, \label{eq:thm-ap-3300}
\end{align}
where we let $\gB$ be the set of time steps that are discarded in \autoref{line:rarelysw-9} of \autoref{algo:rarelysw}.

It remains to show that both terms are $O(d \log d)$. For the second term, we have 
\begin{align}
    \sum_{t \in \gB} \max_{\vx \in X_t}  \vx^\top (N\Gamma (\lambda\mI +\info_\gD(\pi)))^{-1} \vx &= \frac{1}{N\Gamma} \sum_{t \in \gB} \max_{\vx \in X_t}  \vx^\top (\lambda \mI + \info_\gD(\pi))^{-1} \vx  \nonumber \\
    &\le \frac{2}{N\Gamma} \sum_{t \in \gB} \max_{\vx \in X_t}  \vx^\top (\lambda\mI + \info_\gD(\gopt))^{-1} \vx, \label{eq:thm-ap-3100}
\end{align}
where the inequality is because by definition of a mixed-argmax policy, with probability $1/2$, $\gopt$ is invoked, and therefore
\begin{align}
    \info_\gD(\pi) = \E_{X \sim \gD, \vx \sim \pi(X)}\vx \vx^\top \succcurlyeq \E_{X \sim \gD} \frac{1}{2} \times\E_{\vx \sim \gopt(X)} \vx \vx^\top. \notag 
\end{align}
Continuing with \eqref{eq:thm-ap-3100}, since $\gB$ contains at most $n$ stages that are shorter than a block, therefore, we have that
\begin{align}
  &\frac{2}{N\Gamma} \sum_{t \in \gB} \max_{\vx \in X_t}  \vx^\top (\lambda\mI + \info_\gD(\gopt))^{-1} \vx \leq \frac{2}{N\Gamma} \times n \times \sum_{t=1}^{\Gamma}    \max_{\vx \in X_t}  \vx^\top (\lambda\mI + \info_\gD(\gopt))^{-1} \vx \nonumber\\
 &\qquad = \frac{2n}{N} \E_{X \sim \gD}  \max_{\vx \in X_t}  \vx^\top (\lambda\mI + \info_\gD(\gopt))^{-1} \vx
  = \frac{2n}{N} \val_{\gD}^{(\lambda)}(\gopt) \leq \frac{2n}{N} \times O(d^2) \leq O(d \log d), 
\end{align}
where the second inequality is due to \eqref{val-gopt}, \eqref{eq:util-unif-gopt}, and the monotonicity of $\val_{\gD}^{(\lambda)}$.

For the first term in \eqref{eq:thm-ap-3300}, we claim that 
\begin{align}
 \info_\gD(\pi) \succcurlyeq  \frac{1}{4N\Gamma} \sum_{t = 1}^{N\Gamma} \vx_t \vx_t^\top, \label{eq:thm-ap-4000}
\end{align}
which will be established at the end of this proof. Once we have \eqref{eq:thm-ap-4000}, also noting that $ \info_\gD(\pi) \succcurlyeq (1/2) \info_\gD(\gopt)$ because of the $1/2$ portion of $\gopt$ in the definition of the mixed-argmax policy, we get that
\begin{align}
     \lambda \mI + \info_\gD(\pi) &\succcurlyeq \lambda \mI  + \frac{1}{2}( \frac{1}{2}  \info_\gD(\gopt) + \frac{1}{4N\Gamma} \sum_{t = 1}^{N\Gamma} \vx_t \vx_t^\top)
     \succcurlyeq \frac{1}{8 N\Gamma} \mU_{N\Gamma} \succcurlyeq \frac{1}{8 N\Gamma} \mW_n.   \label{eq:thm-ap-4200}
\end{align}

Therefore, 
\begin{align}
    &  \sum_{i = 1}^n \sum_{t \in \tau_i} \max_{\vx \in X_t}  \vx^\top (N\Gamma (\lambda \mI + \info_\gD(\pi)))^{-1} \vx  \le  8 \sum_{i = 1}^n \sum_{t \in \tau_i} \max_{\vx \in X_t}\vx^\top \mW_n^{-1} \vx \notag \\
    & \qquad \qquad \qquad \qquad \le 8 \sum_{i = 1}^n \sum_{t \in \tau_i} \max_{\vx \in X_t}\vx^\top  \mW_i^{-1} \vx 
    = 8 \sum_{i = 1}^n \sum_{t \in \tau_i} \vx_t^\top  \mW_i^{-1}
    \vx_t  \notag \\ 
    & \qquad \qquad \qquad \qquad  \le 16 \sum_{i = 1}^n  \sum_{t \in \tau_i} \vx_t^\top \mU_t^{-1} \vx_t \le 16 \sum_{t = 1}^{N\Gamma} \vx_t^\top \mU_t^{-1} \vx_t  \label{eq:thm-ap-4500} \\ 
     & \qquad \qquad \qquad \qquad \le 32 \ln \frac{\det \mU_{N\Gamma}}{\det \mU_0}  \le  O(d \log d).   \label{eq:thm-ap-4600}
\end{align}
where the first inequality in \eqref{eq:thm-ap-4500} is by \autoref{lem:comp}, the first inequality in \eqref{eq:thm-ap-4600} is by the elliptical potential lemma (\autoref{lem:ellip}),\footnote{This is a generalized version and we invoke the lemma by letting $\mX_t$ in the lemma statement be $\vx_t \vx_t^\top$ and letting $\mLambda_t$ in the lemma statement be $\mU_t$. Note that $\mLambda_0 = \mU_0 \succcurlyeq \mI$ so that $\Tr(\mX_t \mLambda_0^{-1}) \leq 1$ is satisfied.} and the second inequality in \eqref{eq:thm-ap-4600} is due to \eqref{eq:thm-ap-3000}.

It remains to establish \eqref{eq:thm-ap-4000}. Note that
\begin{align}
    \info_\gD(\pi) 
    &=  \frac{1}{2} \info_\gD(\gopt) +  \frac{1}{2} \sum_{i = 1}^n \frac{\abs{\tau_i}}{\abs{\tau_1} + \cdots + \abs{\tau_n}} \E_{X \sim \gD} \info_X(\pap_{\mW_i^{-1}})   \nonumber\\
    &\succcurlyeq  \frac{1}{2} \info_\gD(\gopt) +  \frac{1}{2} \sum_{i = 1}^n \frac{\abs{\tau_i}}{\abs{\tau_1} + \cdots + \abs{\tau_n}} \frac{1}{2\abs{\tau_i}}\sum_{t \in \tau_i} \vx_t \vx_t^\top  \nonumber \\ 
    &=  \frac{1}{2} \info_\gD(\gopt) + \frac{1}{4} \frac{1}{N\Gamma - \abs{\gB}} \sum_{i = 1}^n \sum_{t \in \tau_i} \vx_t \vx_t^\top. \label{eq:thm-ap-4800}
\end{align}
By \eqref{eq:thm-ap-2800}, we have
\begin{align}
\info_\gD(\gopt) = \frac{1}{n\Gamma} \sum_{t=1}^{\Gamma} n \times \E_{\vx \sim \gopt(X_t)} \vx \vx^\top \succcurlyeq \frac{1}{n\Gamma} \sum_{t\in \gB} \frac{1}{2d} \vx_t \vx_t^\top. \notag
\end{align}
Therefore, continuing with \eqref{eq:thm-ap-4800}, we have that
\begin{align}
    \info_\gD(\pi) &
    \succcurlyeq \frac{1}{2 n d\Gamma} \sum_{t\in \gB}  \vx_t \vx_t^\top +  \frac{1}{4 N \Gamma} \sum_{i = 1}^n \sum_{t \in \tau_i} \vx_t \vx_t^\top \nonumber \\
    &\succcurlyeq \frac{1}{4 N \Gamma} \sum_{t\in \gB}  \vx_t \vx_t^\top +  \frac{1}{4 N \Gamma} \sum_{i = 1}^n \sum_{t \in \tau_i} \vx_t \vx_t^\top = \frac{1}{4 N\Gamma} \sum_{t = 1}^{N \Gamma} \vx_t \vx_t^\top,  \label{eq:thm-ap-5100}
\end{align}
which concludes the proof of the theorem.
\end{proof}

\subsection{The Mixed-Softmax Policies with More Robustness}

To make the sample policy learnable, instead of the mixed-argmax policies, we will deal with the more robust mixed-softmax policies. To define this class of policies, we first define the softmax function as a distribution such that
\begin{align}
\softmax_\alpha(s_1, \dots, s_k) = i \quad \text{with probability} \quad \frac{s_i^{\alpha}}{s_1^\alpha + \cdots + s_k^\alpha}, \notag
\end{align}
where we assume that $s_i \geq 0$ for all $i \in [k]$.

It is easy to check the following fact.
\begin{fact}\label{fact:softmax}
Suppose $\alpha \geq \log k$, then 
\begin{align}
\E_{i \sim \softmax_\alpha(s_1, \dots, s_k)} [s_i] \geq \frac14 \times \max\{s_1, \dots, s_k\}. \notag
\end{align}
\end{fact}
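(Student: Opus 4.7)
The plan is to separate the contributions to the expectation into ``heavy'' coordinates (those comparable to the maximum) and ``light'' coordinates (those much smaller), and then use the assumption $\alpha \geq \log k$ to argue that the cumulative weight of the light coordinates in the denominator is dominated by a single heavy term. Let $s^* \defeq \max_j s_j$, and assume $s^* > 0$ (otherwise the statement is vacuous). Define the threshold partition
\begin{align*}
A \defeq \{i \in [k] : s_i \geq s^*/2\}, \qquad B \defeq [k] \setminus A.
\end{align*}
Note that the argmax index lies in $A$, so $\sum_{i \in A} s_i^\alpha \geq (s^*)^\alpha$.

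Next I would control the denominator mass on $B$. For each $i \in B$, $s_i^\alpha < (s^*/2)^\alpha = 2^{-\alpha} (s^*)^\alpha$, so summing over at most $k$ indices and invoking $\alpha \geq \log k$ (equivalently $2^{-\alpha} \leq 1/k$) gives
\begin{align*}
\sum_{i \in B} s_i^\alpha \;<\; k \cdot 2^{-\alpha} (s^*)^\alpha \;\leq\; (s^*)^\alpha \;\leq\; \sum_{i \in A} s_i^\alpha.
\end{align*}
Hence the total denominator $\sum_{i} s_i^\alpha$ is at most twice the mass on $A$.

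For the numerator of $\E_{i \sim \softmax_\alpha}[s_i] = \big(\sum_i s_i^{\alpha+1}\big)/\big(\sum_i s_i^\alpha\big)$, I discard the $B$ terms (which are nonnegative) and use $s_i \geq s^*/2$ for $i \in A$ to get $\sum_{i \in A} s_i^{\alpha+1} \geq (s^*/2) \sum_{i \in A} s_i^\alpha$. Combining with the previous denominator bound yields
\begin{align*}
\E_{i \sim \softmax_\alpha(s_1, \dots, s_k)}[s_i] \;\geq\; \frac{(s^*/2) \sum_{i \in A} s_i^\alpha}{2 \sum_{i \in A} s_i^\alpha} \;=\; \frac{s^*}{4},
\end{align*}
which is the desired inequality. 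There is no real obstacle here; the only subtlety is to make sure the threshold $s^*/2$ is chosen so that $B$'s mass is bounded by a single $(s^*)^\alpha$ term, which is exactly why the hypothesis $\alpha \geq \log k$ is tight for this proof strategy.
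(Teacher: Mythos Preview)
Your proof is correct and follows essentially the same approach as the paper: both partition the indices at the threshold $s^*/2$ and use $\alpha \geq \log k$ to show the light coordinates contribute at most as much mass as the single maximizer. The only cosmetic difference is that the paper phrases the conclusion probabilistically (showing $\Pr[s_i \geq s^*/2] \geq 1/2$ and then multiplying), whereas you work directly with the ratio $\sum_i s_i^{\alpha+1}/\sum_i s_i^{\alpha}$; the underlying computation is identical.
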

\begin{proof}
Let $i^*$ be an index that maximizes $s_{i}$. Note that for all $j$ such that $s_j \leq (1/2) \times s_{i^*}$, the probability mass that softmax put for $j$ is at most $(1/k)$ of that for $i^*$. Therefore, 
\begin{align}
    \Pr_{i \sim \softmax_\alpha(s_1, \dots, s_k)} [s_i \geq \frac12 \times  s_{i^*}] \geq \frac{1}{2}, \notag
\end{align}
and the fact follows.
\end{proof}

We now define the class of mixed-softmax policies.
\begin{defn}[Softmax and mixed-softmax policies] \label{defn:sp} Fix $\alpha = \log K$ (where $K$ is the number of arms per time step). Suppose we are given a positive semi-definite matrix $\mM \succcurlyeq \vzero$. We define the \emph{softmax policy} \begin{align*}
    \psp_\mM(X) = \vx_i, \qquad  \text{where} \quad X = \{\vx_1, \dots, \vx_k\}, k \leq K, \text{~and~} i \sim \softmax_\alpha(\vx_1^\top \mM \vx_1, \dots, \vx_k^\top \mM \vx_k). 
\end{align*}

Suppose we are given a set $\gM = \{(p_i, \mM_i)\}_{i = 1}^n$ such that $p_i \geq 0$ and $p_1 + \cdots + p_n = 1$. We define the \emph{mixed-softmax policy}
\begin{align*}
\pmsp_\gM(X) = \begin{cases}
\gopt(X), & \text{with probability } 1/2, \\
\psp_{\mM_i}(X), & \text{with probability } p_i / 2.
\end{cases}
\end{align*}
\end{defn}

Similarly to \autoref{thm:argmax-policy}, we prove the following theorem on the existence of good mixed-softmax policies.
\begin{thm}\label{thm:ms-policy}
Fix any distribution $\gD = \Unif(S)$ where $S = \{X_1, X_2, \dots, X_{\Gamma}\}$ (which may be a multi-set) and any $\lambda \in (0, 1)$. There exists a mixed-softmax policy $\pmsp_\gM$ with parameters $\gM = \{(p_i, \mM_i)\}_{i = 1}^n$ such that
\begin{enumerate}
    \item $n \leq 4 d \log d$;
    \item for all $i \in [n]$, $p_i \geq 1/d^3$ and $ d^{-1} \mI \preccurlyeq \mM_i \preccurlyeq \lambda^{-1} \mI $;
    \item  $\val_\gD^{(\lambda)}(\pmsp_{\gM}) \leq O(d \log d)$.
\end{enumerate}
\end{thm}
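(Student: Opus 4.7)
The plan is to adapt the proof of \autoref{thm:argmax-policy} to the mixed-softmax setting via one structural change to \autoref{algo:rarelysw}: in \autoref{line:alg-rarelysw-5}, replace the argmax selection $\vx_t \gets \pap_{\mW_n^{-1}}(X_t)$ by the deterministic ``softmax-expectation'' update $\bm{\Delta}_t \gets \E_{\vx \sim \psp_{\mW_n^{-1}}(X_t)}[\vx \vx^\top]$, and in \autoref{line:alg-rarelysw-6} replace the rank-one increment $\vx_t \vx_t^\top$ by $\bm{\Delta}_t$. The output is then $\gM = \{(p_i, N\Gamma \mW_i^{-1}) : i \in [n],\, p_i > 0\}$ with the same stage probabilities as before. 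A key enabling observation is that $\softmax_\alpha$ is invariant under positive rescaling of its inputs, so $\psp_{N\Gamma \mW_i^{-1}} = \psp_{\mW_i^{-1}}$, which means the sampling distributions used inside the modified algorithm coincide with the softmax components of $\pmsp_\gM$.

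Items (a) and (b) transfer essentially verbatim. Since $\bm{\Delta}_t$ is a convex combination of outer products $\vy \vy^\top$ with $\vy \in X_t$, and each such term satisfies $\vy \vy^\top \preccurlyeq 2d \cdot \E_{\vz \sim \gopt(X_t)}[\vz \vz^\top]$ by \autoref{lem:unif} and the $2d$-relaxed G-optimal design bound \eqref{eq:thm-KW}, $\bm{\Delta}_t$ is dominated by the same quantity. Hence the determinant calculation \eqref{eq:thm-ap-2700}--\eqref{eq:thm-ap-3000} gives $\det \mU_{N\Gamma} \leq d^{4d} \det \mU_0$ and thus $n \leq 4 d \log d$, and the operator-norm bounds $\lambda N\Gamma \mI \preccurlyeq \mW_i \preccurlyeq 3 N\Gamma \mI$ yield both the weight lower bound $p_i \geq 1/d^3$ and the eigenvalue condition $d^{-1}\mI \preccurlyeq \mM_i \preccurlyeq \lambda^{-1} \mI$. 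For item (c), the PSD argument \eqref{eq:thm-ap-4800}--\eqref{eq:thm-ap-5100} carries through verbatim after substituting $\vx_t \vx_t^\top \mapsto \bm{\Delta}_t$, delivering $\info_{\gD}(\pmsp_\gM) \succcurlyeq \frac{1}{4 N\Gamma} \sum_{t = 1}^{N\Gamma} \bm{\Delta}_t$ and therefore $\lambda \mI + \info_\gD(\pmsp_\gM) \succcurlyeq \frac{1}{8 N\Gamma} \mU_{N\Gamma} \succcurlyeq \frac{1}{8 N\Gamma} \mW_n$ as in \eqref{eq:thm-ap-4200}. The $\gB$-term in the split \eqref{eq:thm-ap-3300} is handled exactly as in \eqref{eq:thm-ap-3100}, using only the inequality $\info_\gD(\pmsp_\gM) \succcurlyeq \frac{1}{2} \info_\gD(\gopt)$.

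The one genuinely new step is to replace the argmax identity $\max_{\vx \in X_t} \vx^\top \mW_i^{-1} \vx = \vx_t^\top \mW_i^{-1} \vx_t$ used in \eqref{eq:thm-ap-4500}. Here I would invoke \autoref{fact:softmax} (applicable since $\alpha = \log K \geq \log|X_t|$) to obtain the softmax substitute $\max_{\vx \in X_t} \vx^\top \mW_i^{-1} \vx \leq 4 \, \E_{\vy \sim \psp_{\mW_i^{-1}}(X_t)}[\vy^\top \mW_i^{-1} \vy] = 4 \Tr(\mW_i^{-1} \bm{\Delta}_t)$. Combined with $\mU_t \preccurlyeq 2 \mW_i$ for $t \in \tau_i$ (the doubling rule), the $\bigcup_i \tau_i$ portion of the variation reduces to $\sum_t \Tr(\mU_t^{-1} \bm{\Delta}_t)$, which \autoref{lem:ellip} (applied with $\mX_t = \bm{\Delta}_t$ and $\mLambda_t = \mU_t$) bounds by $O(\ln(\det \mU_{N\Gamma}/\det\mU_0)) = O(d \log d)$; the hypothesis $\Tr(\bm{\Delta}_t \mU_0^{-1}) \leq 1$ follows from $\mU_0 \succcurlyeq \mI$ and $\Tr(\bm{\Delta}_t) \leq \max_{\vy \in X_t} \|\vy\|^2 \leq 1$. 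The main obstacle I anticipate is not conceptual but bookkeeping: verifying that the footnoted matrix-valued generalization of \autoref{lem:ellip} truly admits PSD increments $\bm{\Delta}_t$ of arbitrary rank (rather than just rank-one $\vx_t \vx_t^\top$), and ensuring that the accumulated constant factors introduced by the softmax inequality, the PSD lower bound on $\info_\gD(\pmsp_\gM)$, and the doubling rule all absorb cleanly into the claimed $O(d \log d)$ bound on $\val_\gD^{(\lambda)}(\pmsp_\gM)$.
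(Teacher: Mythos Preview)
Your proposal is correct and matches the paper's own proof essentially step for step: the paper makes exactly the substitution $\mX_t \gets \E_{\vx \sim \psp_{\mW_n^{-1}}(X_t)}[\vx\vx^\top]$ in \autoref{line:alg-rarelysw-5}--\ref{line:alg-rarelysw-6}, transfers Items~(a) and~(b) verbatim, uses \autoref{fact:softmax} to pass from $\max_{\vx\in X_t}\vx^\top\mW_i^{-1}\vx$ to $\Tr(\mW_i^{-1}\mX_t)$, and finishes with the generalized elliptical potential lemma (\autoref{lem:ellip}), whose statement is indeed formulated for arbitrary PSD increments with $\Tr(\mX_t)\le 1$ and $\Tr(\mX_t\mLambda_0^{-1})\le 1$. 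Your explicit observation that $\psp_{N\Gamma\mW_i^{-1}}=\psp_{\mW_i^{-1}}$ by scale invariance of $\softmax_\alpha$ is a point the paper leaves implicit but relies on.
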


The proof of \autoref{thm:ms-policy} is very similar to that of \autoref{thm:argmax-policy}. Here we only point out the differences as follows.

First, at \autoref{line:alg-rarelysw-5} of \autoref{algo:rarelysw}, we let $\mX_t \gets \E_{\vx \sim \psp_{W_n^{-1}}(X_t)} \vx \vx^\top$, and at Line~\ref{line:alg-rarelysw-6}, we let $\mU_{t} \gets \mU_{t-  1} + \mX_t$. Note that $\Tr(\mX_t) \leq 1$. Let $\gM$ be the output of the algorithm.

The proof of Items (a) and (b) remains the same except for the occurrences of $\vx_t \vx_t^\top$ are replaced by $\mX_t$ in \eqref{eq:thm-ap-2700} and \eqref{eq:thm-ap-2800}.

For the proof of Item (c), let $\pi = \pmsp_{\gM}$, we still get \eqref{eq:thm-ap-3300}, and the second term of \eqref{eq:thm-ap-3300} is bounded by the same way. For the first term,  replacing $\vx_t\vx_t^\top$ by $\mX_t$ in \eqref{eq:thm-ap-4000} (and its proof from \eqref{eq:thm-ap-4800} to \eqref{eq:thm-ap-5100}), we still get \eqref{eq:thm-ap-4200}. Therefore, 
\begin{align}
    &  \sum_{i = 1}^n \sum_{t \in \tau_i} \max_{\vx \in X_t}  \vx^\top ( N\Gamma (\lambda \mI + \info_\gD(\pi)))^{-1} \vx  \le  8 \sum_{i = 1}^n \sum_{t \in \tau_i} \max_{\vx \in X_t}\vx^\top \mW_n^{-1} \vx \nonumber  \\ 
    & \quad \le 8 \sum_{i = 1}^n \sum_{t \in \tau_i} \max_{\vx \in X_t}\vx^\top  \mW_i^{-1} \vx 
    \leq 32 \sum_{i = 1}^n \sum_{t \in \tau_i} \E_{\vx \sim \pmsp_{\mW_i^{-1}}(X_t)}\vx^\top  \mW_i^{-1}
    \vx      = 32 \sum_{i = 1}^n \sum_{t \in \tau_i} \Tr(\mW_i^{-1} \mX_t),  \label{eq:thm-sp-5700} 
\end{align}
where the third inequality in \eqref{eq:thm-sp-5700}  is by \autoref{fact:softmax}. Again, by \autoref{lem:comp} and the elliptical potential lemma (\autoref{lem:ellip}), we have that
\begin{align}
   & 32 \sum_{i = 1}^n \sum_{t \in \tau_i}  \Tr(\mW_i^{-1} \mX_t) \leq 64 \sum_{i = 1}^n \sum_{t \in \tau_i} \Tr(\mU_t^{-1} \mX_t) \nonumber  \\
    & \qquad \qquad \qquad \qquad \qquad\leq 64 \sum_{t = 1}^{N\Gamma}  \Tr(\mU_t^{-1} \mX_t) \leq 128 \ln  \frac{\det \mU_{N\Gamma}}{\det \mU_0}  \le  O(d \log d). \notag
\end{align}
Combining the bounds on both terms of \eqref{eq:thm-ap-3300}, we prove the theorem.

\section{Learning the Distributional G-Optimal Design} \label{sec:learn-design}

In this section, we present an algorithm to learn a good mixed-softmax policy using only $\mathrm{poly}(d) \log \delta^{-1}$ samples with success probability at least $(1 - \delta)$.

\paragraph{The Natural Idea and its Counterexample.} The most natural idea is to first draw $\gamma$ independent samples $X_1, \dots, X_\gamma \sim \gD$ and form an empirical distribution $\gS = \Unif\{X_1, \dots, X_\gamma\}$, learn a good policy $\pi$ for $\gS$ according to \autoref{thm:ms-policy}, and hope that $\pi$ also works well for $\gD$ (i.e., $\pi$ generalizes to the true distribution). Unfortunately, such an approach is unlikely to work. Below we illustrate an example where, even when the number of samples $\gamma$ is very large, a good policy for $\gS$ still fails to generalize to $\gD$ with significant probability.

Let $\{\ve_i\}_{i=1}^d$ be the set of canonical basis, and $\varepsilon > 0$ be a parameter to be determined later. Let $Y_1 = \{\ve_1\}$ and $Y_i = \{\sqrt{1-\varepsilon^2} \ve_i + \varepsilon \ve_1, \ve_i\}$ for $i \in \{2, 3, \dots, d\}$. Consider $\gD$ supported on $\{Y_1, \dots, Y_d\}$ the probability mass for $Y_1$  is $1/(d\gamma)$ and the probability for $Y_i$ ($i \geq 2$) is $q = (1 - 1/(d\gamma))/(d-1)$. If we make $\gamma$ independent samples $X_1, \dots, X_\gamma \sim \gD$, with probability $\Omega(1/d)$, we will see $Y_1$ once among the samples, and the probability mass of $Y_1$ in $\gS$ becomes $1/\gamma$, which is $d$ times its true probability mass. Due to this discrepancy, we will show that a good sample policy for the empirical distribution $\gS$ does not work as well on true distribution $\gD$.

We consider the sample policy $\pi$ such that $\pi(X) = \ve_i$ when $X = Y_i$. When the event above happens, we have that $\info_\gS(\pi)=  \mathrm{diag}(1/\gamma, p_2, \dots, p_d)$ where $p_i$ is the probability mass for $Y_i$ in $\gS$ (for $i \geq 2$). When $\varepsilon = \sqrt{d/\gamma}$, we can verify that $\pi$ is a good policy for the empirical distribution $\gS$ since 
\begin{align}
\val_\gS^{(0)}(\pi) = \E_{X \sim \gS} \max_{\vx \in X} \vx^\top \info_\gS(\pi)^{-1} \vx =  \frac1\gamma \cdot \gamma + \sum_{i=2}^d p_i \cdot \max\{\varepsilon^2 \gamma + (1 - \varepsilon^2) \cdot \frac{1}{p_i}, \frac{1}{p_i}\} \leq O(d). \notag 
\end{align}

However, for the true distribution $\gD$, we have that $\info_\gD(\pi)=  \mathrm{diag}(1/(d\gamma), q, \dots, q)$, and for any $\lambda \in [0, 1/(d\gamma))$, it holds that
\begin{align}
& \val_\gD^{(\lambda)}(\pi) = \E_{X \sim \gD} \max_{\vx \in X} \vx^\top (\lambda \mI + \info_\gD(\pi))^{-1} \vx \notag \\
&\quad =  \frac{1}{d\gamma} \cdot \frac{1}{\lambda + 1/(d\gamma)} + (1 - \frac{1}{d\gamma}) \cdot \max\{\varepsilon^2 \cdot \frac{1}{\lambda + 1/(d\gamma)} + (1-\varepsilon^2) \cdot \frac1{\lambda + q}, \frac1{\lambda +q}\} \geq \Omega(d^2). \notag 
\end{align}

Note that in this example, the only constraint for $\gamma$ is that $1/(d\gamma) > \lambda \Leftrightarrow \gamma < 1/(d\lambda)$. Therefore, we have illustrated that, even when $\gamma$ is greater than an arbitrary polynomial of $d$,  with probability $\Omega(1/d)$, a good policy for the empirical distribution $\gS$ does not generalize to the true distribution $\gD$.\footnote{Although in our later algorithm, we only learn a policy with small \emph{$\lambda$-deviation} as defined in \eqref{eq:def-root-variation}, however, one can also verify that the $\lambda$-deviation of $\pi$ over $\gD$ in this counterexample is also high.} By adding more dimensions, we can even strengthen this counterexample so that the failure probability becomes $(1 - o(1))$. Using similar tricks, we can also show that a good mixed-softmax policy does not generalize well.

\paragraph{Our Algorithm: {\sc CoreLearning}.} The key message from the counterexample above is that if a context direction in $\sR^d$ appears with tiny probability in $\gD$, a limited amount of samples might greatly change its probability in the empirical distribution $\gS$, and fail the generalization argument. To address this issue, the idea of our new algorithm is to prune these infrequent context directions, learn a mixed-softmax policy over the remaining ``core'' directions, and finally argue that the infrequent directions can be properly handled by the $\gopt$ component in the mixed-softmax policy.

In light of this idea, we propose {\sc CoreLearning} (\autoref{algo:learn-design}). In this algorithm, instead of directly learning the policy from the whole set of samples, we first find a large enough core set $C$ at \autoref{loc:learn-2}, and then learn the mixed-softmax policy only using the samples in $C$. The key property of the core is specified by \eqref{eq:core1}, which is a technical realization of our pruning idea. The property requires that every direction in $C$ should be well explored by the $\gopt$ policy and \emph{only} the context vectors within $C$. To see how the core set helps to resolve the issue in our counterexample, we note that the infrequent set $Y_1$ is the main trouble-maker. However, even if $Y_1$ happens to appear among the samples $\{X_1, \dots, X_\gamma\}$, it will not be included in the core since its corresponding variation $\max_{\vy \in Y_1}\vy^\top (\lambda \mI + \info_{\Unif(C)}(\gopt))^{-1} \vy \geq (\lambda + 1/\gamma)^{-1} > d^c$ when $\lambda$ is sufficiently small and $\gamma \gg d^c$. Therefore, {\sc CoreLearning} will learn a sample policy with $Y_1$ pruned away, and void our counterexample.

While the core set property \eqref{eq:core1} is much desirable, even whether such a core set with cardinality constraint \eqref{eq:core2} exists is not obvious. In \autoref{sec:core}, we prove \autoref{lem:core} to show its existence, and provide an efficient algorithm {\sc CoreIdentification} to find one. The analysis of \autoref{algo:learn-design} also relies on a few uniform concentration inequalities (\autoref{lem:uniform-con-1} and \autoref{lem:uniform-con-2}) which are proved later in \autoref{sec:uniform-con}.

\begin{algorithm}[t]
\caption{{\sc CoreLearning} for the Distributional G-Optimal Design}
\label{algo:learn-design}
\KwIn{$\lambda \in (\exp(-d), 1)$, and $S = \{X_1, \dots, X_\gamma\}$}
\KwOut{A mixed-softmax policy $\pi$}
Set constant $c = 6$\;
Find a core $C \subseteq S = \{X_1, \dots, X_\gamma\}$ (using {\sc CoreIdentification} (\autoref{algo:core}), see \autoref{lem:core}) such that \label{loc:learn-2} 
\begin{align}
   & \max_{X \in C} \max_{\vx \in X} \{\vx^\top (\lambda \mI + \info_{\Unif(C)}(\gopt))^{-1} \vx\} \leq d^{c}, \qquad\qquad\qquad\qquad \label{eq:core1}\\
\text{and}\qquad\qquad\qquad\qquad & \qquad \qquad   \frac{\abs{C}}{\gamma} \geq 1 - O(d^{3-c} \log \lambda^{-1}), \label{eq:core2}
\end{align}
which is at least $1/2$ for sufficiently large $d$\;
Compute the mixed-softmax policy $\pi$ for the samples in $C$ (according to \autoref{thm:ms-policy}) and return $\pi$\;
\end{algorithm}

For now, assuming the lemmas introduced above, we prove the following main theorem of this section (the guarantee for \autoref{algo:learn-design}).

\begin{thm}\label{thm:learn-design}
Suppose that $\lambda \in (\exp(-d), 1)$. Let $X_1, \dots, X_\gamma \sim \gD$ be {\it i.i.d.}\ drawn from the distribution $\gD$. Let $\pi$ be the returned policy of \autoref{algo:learn-design}. We have that
\begin{align}
\Pr[\widetilde{\val}_\gD^{(\lambda)}(\pi) \leq O(\sqrt{d\log d})] &\geq   1 - \exp(O(d^4 \log^2 d) - \gamma d^{-2c}  \cdot 2^{-16}) \notag \\
&=  1 - \exp(O(d^4 \log^2 d) - \gamma d^{-12}  \cdot 2^{-16}), \notag 
\end{align}
where we define the \emph{$\lambda$-deviation} of $\pi$ over $\gD$ by
\begin{align}
\widetilde{\val}_\gD^{(\lambda)}(\pi) \defeq \E_{X \sim \gD} \sqrt{\max_{\vx \in X} \{\vx^\top (\lambda \mI + \info_\gD(\pi))^{-1} \vx\}}. \label{eq:def-root-variation}
\end{align}
\end{thm}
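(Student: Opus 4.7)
The plan is to combine three ingredients: the core $C$ supplied by \autoref{lem:core}, the existence of a good mixed-softmax policy for $\Unif(C)$ given by \autoref{thm:ms-policy}, and uniform concentration of the information matrix over the class of mixed-softmax policies (the forthcoming \autoref{lem:uniform-con-1} and \autoref{lem:uniform-con-2}). First, condition on the event -- which holds with probability at least $1/2$ by \autoref{lem:core} -- that the core $C$ satisfies \eqref{eq:core1} and \eqref{eq:core2}. Applying \autoref{thm:ms-policy} to $\Unif(C)$ produces the mixed-softmax policy $\pi = \pmsp_\gM$ with $n \leq 4d\log d$ components whose matrices satisfy $d^{-1}\mI \preccurlyeq \mM_i \preccurlyeq \lambda^{-1}\mI$, and with $\val_{\Unif(C)}^{(\lambda)}(\pi) \leq O(d\log d)$. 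The total parameter count of the policy class is $O(d^3\log d)$, and these are the objects the uniform concentration will range over.

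Second, I would use uniform concentration to transfer the information matrix from $\Unif(C)$ to $\gD$. Discretizing the policy class with resolution $\mathrm{poly}(1/d)$ in each parameter yields a cover of size $\exp(O(d^4\log^2 d))$; the Lipschitz dependence of the softmax quantity $\vx^\top \mM \vx$ on $\mM$, together with the spectrum bound $d^{-1}\mI \preccurlyeq \mM_i \preccurlyeq \lambda^{-1}\mI$ and the assumption $\lambda \geq \exp(-d)$, keeps this covering entropy polynomial. Matrix Chernoff at each cover point and a union bound then give, with probability at least $1 - \exp(O(d^4\log^2 d) - \gamma d^{-2c} 2^{-16})$, the multiplicative closeness $\info_\gD(\pi') \succcurlyeq \Omega(\info_{\Unif(S)}(\pi'))$ simultaneously for every $\pi'$ in the class. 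Combining with the deterministic inequality $\info_{\Unif(S)}(\pi) \succcurlyeq (|C|/\gamma)\,\info_{\Unif(C)}(\pi) \succcurlyeq \tfrac{1}{2}\info_{\Unif(C)}(\pi)$, this yields the key PSD bound
\begin{align*}
\lambda\mI + \info_\gD(\pi) \succcurlyeq \Omega(1)\cdot\bigl(\lambda\mI + \info_{\Unif(C)}(\pi)\bigr).
\end{align*}

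Third, I would split the expectation defining $\widetilde{\val}_\gD^{(\lambda)}(\pi)$ according to the ``core-like'' event $E_C = \{X : \max_{\vx \in X}\vx^\top(\lambda\mI + \info_{\Unif(C)}(\gopt))^{-1}\vx \leq d^c\}$. By \eqref{eq:core1}, $C \subseteq E_C$, so Chernoff on the indicator $\1[X \notin E_C]$ delivers $\Pr_{X \sim \gD}[X \notin E_C] \leq O(d^{3-c}\log\lambda^{-1}) \leq O(d^{-2})$ for $c=6$ (using $\log\lambda^{-1} \leq d$). On $E_C$, Jensen's inequality (concavity of $\sqrt{\cdot}$), the PSD bound from Step~2, and \autoref{lem:uniform-con-2} to exchange the $\gD$-expectation for an average over $\Unif(C)$, collectively reduce this contribution to $O(\sqrt{\val_{\Unif(C)}^{(\lambda)}(\pi)}) = O(\sqrt{d\log d})$. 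On the complement, use the mixture structure $\info_\gD(\pi) \succcurlyeq \tfrac{1}{2}\info_\gD(\gopt)$ to write $\vx^\top(\lambda\mI + \info_\gD(\pi))^{-1}\vx \leq 2\vx^\top(2\lambda\mI + \info_\gD(\gopt))^{-1}\vx$, then apply Cauchy--Schwarz:
\begin{align*}
\E_{X \sim \gD}\bigl[\sqrt{\textstyle\max_{\vx \in X}\vx^\top(\lambda\mI + \info_\gD(\pi))^{-1}\vx}\,\cdot\,\1[X \notin E_C]\bigr] \leq \sqrt{2\val_\gD^{(2\lambda)}(\gopt)\cdot\Pr[E_C^c]} \leq \sqrt{O(d^2)\cdot O(d^{-2})} = O(1),
\end{align*}
where $\val_\gD^{(2\lambda)}(\gopt) \leq O(d^2)$ comes from \autoref{lem:variation-unif-optg}. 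Summing the two contributions yields $\widetilde{\val}_\gD^{(\lambda)}(\pi) \leq O(\sqrt{d\log d})$.

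The hard part will be the uniform concentration in Step~2: controlling the covering entropy of the mixed-softmax class at $O(d^4\log^2 d)$ requires carefully exploiting both the Lipschitz property of softmax and the spectrum bound, and is precisely where the assumption $\lambda \geq \exp(-d)$ is consumed. Additionally, justifying $\Pr_{X \sim \gD}[X \notin E_C] \leq O(d^{-2})$ must contend with the fact that $C$ is chosen adaptively from the samples, so $E_C$ is itself data-dependent; one likely needs to first relate $\info_{\Unif(C)}(\gopt)$ to a population-level quantity (again via the same concentration machinery) before the Chernoff bound on the indicator can be applied cleanly.
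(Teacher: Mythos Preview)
Your high-level architecture---core identification, uniform concentration of the information matrix, and a split into ``well-explored'' versus ``infrequent'' directions---matches the paper's four-step proof. The gap is exactly the one you flag at the end: the splitting event $E_C$ and the matrix $\info_{\Unif(C)}(\pi)$ both depend on the core $C$, which is a function of the sample, so neither the Chernoff bound on $\ind[X \notin E_C]$ nor the exchange of $\E_\gD$ for $\E_{\Unif(C)}$ in the ``On $E_C$'' half can be applied to a single fixed event or function. You identify this but do not resolve it, and it is not a cosmetic issue: a direct Chernoff on a data-dependent indicator is simply invalid.

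The paper's fix is to replace every hard indicator by a \emph{mollifier} $\varphi_\beta(y)$, a continuous tent function that equals $1$ for $y \leq \beta$ and $0$ for $y \geq 2\beta$. Crucially, the argument $y = \max_{\vx \in X}\vx^\top \mW^{-1}\vx$ is parametrized by a matrix $\mW$ ranging over a \emph{fixed, data-independent} compact set $\mathfrak{W} = \{\mW : \lambda\mI \preccurlyeq \mW \preccurlyeq (1+\lambda)\mI\}$. The three quantities requiring concentration---the scalar functions $f(\mW)$ and $g(\mW)$ of \autoref{lem:uniform-con-1} and the matrix-valued $\mF(\mW,\gM)$ of \autoref{lem:uniform-con-2}---are all Lipschitz in $\mW$ (and in the policy parameters $\gM$), so a single uniform concentration over a cover of $\mathfrak{W} \times \Pi$ suffices. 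Once the uniform event holds, one simply instantiates $\mW$ at the data-dependent values $\lambda\mI + \info_\gS(\pi)$ and $\lambda\mI + \info_\gC(\pi)$; the mollifier at scale $2d^c$ evaluates to $1$ on every $X_i \in C$ by the core property \eqref{eq:core1}, which is how your event $E_C$ enters without ever being named as a random set. This mollifier-over-a-fixed-$\mW$-range device is the missing technical idea in your sketch; your suggestion to ``first relate $\info_{\Unif(C)}(\gopt)$ to a population-level quantity'' would itself require the same machinery, since that matrix is equally data-dependent.

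One minor correction: the core $C$ satisfying \eqref{eq:core1}--\eqref{eq:core2} is produced \emph{deterministically} by \autoref{lem:core} for every input $S$; the phrase ``at least $1/2$'' in \autoref{algo:learn-design} refers to the ratio $|C|/\gamma$, not to a success probability.
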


Note that we are only able to provide the upper bound for $\widetilde{\val}_\gD^{(\lambda)}(\pi)$ instead of $\val_\gD^{(\lambda)}(\pi)$. However, this is still enough for our linear bandit application.

We now prove \autoref{thm:learn-design}. For notation convenience, we define $\gS \defeq \Unif(S)$, $\gC \defeq \Unif(C)$, and we define the mollifier 
\begin{align}
    \varphi_\beta(x) \defeq \begin{cases}
    1, & \text{when~} x \le \beta, \\
    \frac{2 \beta - x}{\beta} , & \text{when~} \beta \le x \le 2 \beta, \\ 
    0, & \text{when~} x > 2 \beta. 
    \end{cases} \notag 
\end{align}
which is a continuous surrogate of the indicator function $\ind[x \leq \beta]$.

We now condition on the successful events of the uniform convergence lemmas (\autoref{lem:uniform-con-1} and \autoref{lem:uniform-con-2}), which, by a union bound, happens with probability 
\[
1 - \exp(O(d^3 \log d \log (d \lambda^{-1})) - \gamma d^{-2c}  \cdot 2^{-16}) \geq 1 - \exp(O(d^4 \log^2 d) - \gamma d^{-2c}  \cdot 2^{-16}).
\]
Then, the proof of \autoref{thm:learn-design} consists of the following four steps.

\paragraph{Step \uppercase\expandafter{\romannumeral 1}: Lower Bounding the Information Matrix.}  The goal of this step is to establish \eqref{eq:sketch-1}. Let $\mU = \lambda \mI + \info_{\gC}(\pi)$. Note that $\lambda \mI \preccurlyeq \mU \preccurlyeq (1 + \lambda) \mI$. By the successful event in \eqref{eq:core-eve1} of \autoref{lem:uniform-con-2} (letting $\mW = \mU$), we have that
\begin{align}
\E_{X \sim \gD} \varphi_{2d^c} (\max_{\vx \in X}\{\vx^\top \mU^{-1} \vx\}) \cdot \info_X(\pi) \succcurlyeq \frac{1}{\gamma}\sum_{i=1}^\gamma \varphi_{2d^c}(\max_{\vx \in X_i} \{\vx^\top \mU^{-1} \vx\}) \cdot \info_{X_i}(\pi) -\frac14 \mU. \label{eq:sketch-1a}
\end{align}

Since $\info_{\gD}(\pi) \succcurlyeq \E_{X \sim \gD} \varphi_{2d^c} (\max_{\vx \in X}\{\vx^\top \mU^{-1} \vx\}) \cdot \info_X(\pi)$ and $\frac{1}{\gamma}\sum_{i=1}^\gamma \varphi_{2d^c}(\max_{\vx \in X_i} \{\vx^\top \mU^{-1} \vx\}) \cdot \info_{X_i}(\pi) \succcurlyeq \frac12 \info_\gC(\pi)$, \eqref{eq:sketch-1a} implies  that
\begin{align}
\lambda \mI + \info_{\gD}(\pi)& \succcurlyeq \lambda \mI + \frac12 \info_\gC(\pi) - \frac14 \mU \succcurlyeq \frac14 (\lambda \mI + \info_\gC (\pi)) \notag \\
& \succcurlyeq \frac14 \left(\lambda \mI + \info_\gS (\pi) - \frac1\gamma \sum_{X_i \in S \setminus C} \info_{X_i}(\pi)\right)  \succcurlyeq \frac14 \left(\lambda \mI + \info_\gS (\pi) - \frac{d}{\gamma} \sum_{X_i \in S \setminus C} \info_{X_i}(\gopt)\right) \notag \\
& \succcurlyeq \frac18 (\lambda \mI + \info_\gS (\pi)),
\label{eq:sketch-1}
\end{align}
where the last inequality is for $c \geq 6$.

\paragraph{Step \uppercase\expandafter{\romannumeral 2}: Upper Bounding the Variation in the ``Core Directions''.}   Let $\mW = \lambda \mI + \info_{\gS}(\pi) \succcurlyeq \frac12 (\lambda \mI + \info_{\gC}(\pi) )$. The goal of this step is to establish \eqref{eq:sketch-2}. By the successful event in \eqref{eq:core-eve2} of \autoref{lem:uniform-con-1}, we have that
\begin{align}
& \E_{X \sim \gD} \varphi_{4d^c}(\max_{\vx \in X}\{\vx^\top \mW^{-1} \vx\}) \cdot \sqrt{ \max_{\vx \in X}\{\vx^\top \mW^{-1} \vx\}} \notag \\
& \qquad\qquad\qquad\qquad\qquad\qquad \leq d + \frac1\gamma \sum_{i=1}^{\gamma} \varphi_{4d^c}(\max_{\vx \in X_i}\{\vx^\top \mW^{-1} \vx\}) \cdot \sqrt{\max_{\vx \in X_i}\{\vx^\top \mW^{-1} \vx\} }. \notag
\end{align}
This implies that
\begin{align}
\E_{X \sim \gD} \varphi_{4d^c}(\max_{\vx \in X}\{\vx^\top \mW^{-1} \vx\}) \cdot \sqrt{\max_{\vx \in X}\{\vx^\top \mW^{-1} \vx\}} \leq d +  \frac1\gamma \sum_{i=1}^{\gamma}  \sqrt{\max_{\vx \in X_i}\{\vx^\top \mW^{-1} \vx\}} . \label{eq:sketch-2aa}
\end{align}
Let $\zeta = 1 - \abs{C}/\abs{S} = 1 - \abs{C}/\gamma \leq O(d^{3-c} \log (1/\lambda))$. Note that 
\begin{align}
 &\quad \frac1\gamma \sum_{i=1}^{\gamma}  \sqrt{\max_{\vx \in X_i}\{\vx^\top \mW^{-1} \vx\}} \nonumber \\
 &\le \frac1\gamma \sum_{X_i \in C} \sqrt{\max_{\vx \in X_i}\{2 \vx^\top (\lambda \mI + \info_{\gC}(\pi))^{-1} \vx\}} + \frac1\gamma \sum_{X_i \in S \setminus C} \sqrt{\max_{\vx \in X_i}\{\vx^\top (\lambda \mI + (\zeta / 2) \info_{\Unif(S \setminus C)}(\gopt))^{-1} \vx\}}. \label{eq:sketch-2a}
\end{align} 
For the first term in \eqref{eq:sketch-2a}, by the guarantee of \autoref{thm:ms-policy}, we have 
\begin{align}
    \frac1\gamma \sum_{X_i \in C} \sqrt{\max_{\vx \in X_i}\{2 \vx^\top (\lambda \mI + \info_{\gC}(\pi))^{-1} \vx\}} \le \sqrt{2 \rval_\gC(\pi)} \le O( \sqrt{d \log d} ). \label{eq:sketch-2b}
\end{align}
For the second term in \eqref{eq:sketch-2a}, by the variation bound for $\gopt$ (\autoref{lem:variation-unif-optg}), we have that
\begin{align}
&\frac1\gamma \sum_{X_i \in S \setminus C} \sqrt{\max_{\vx \in X_i}\{\vx^\top (\lambda \mI + (\zeta / 2) \info_{\Unif(S \setminus C)}(\gopt))^{-1} \vx\}}  \nonumber \\
& \qquad \leq \zeta \sqrt{\frac{1}{\zeta\gamma}  \sum_{X_i \in S \setminus C} {\max_{\vx \in X_i}\{\vx^\top (\lambda \mI + (\zeta / 2) \info_{\Unif(S \setminus C)}(\gopt))^{-1} \vx\}}} \leq O(\sqrt{\zeta d^2}) \leq O(\sqrt{d}),
\label{eq:sketch-2c}
\end{align}
where the first inequality is Jensen and the last inequality is for $c \geq 5$.

Combining \eqref{eq:sketch-2aa}, \eqref{eq:sketch-2a}, \eqref{eq:sketch-2b}, \eqref{eq:sketch-2c}, we have that
\begin{align}
    \E_{X \sim \gD} \varphi_{4d^c}(\max_{\vx \in X}\{\vx^\top \mW^{-1} \vx\}) \cdot \sqrt{\max_{\vx \in X}\{\vx^\top \mW^{-1} \vx\}} \leq O(\sqrt{d \log d}). \label{eq:sketch-2}
\end{align}

\paragraph{Step \uppercase\expandafter{\romannumeral 3}: Upper Bounding the Variation  in the ``Infrequent Directions''.} The goal of this step is to establish \eqref{eq:sketch-3}. By the successful event in \eqref{eq:core-eve3} of \autoref{lem:uniform-con-1}, we have that
\begin{align}
\E_{X \sim \gD}\varphi_{4d^c}(\max_{\vx \in X}\{\vx^\top \mW^{-1} \vx\}) \geq -d^{-1} + \frac1\gamma \sum_{i=1}^{\gamma}\varphi_{4d^c}(\max_{\vx \in X_i}\{\vx^\top \mW^{-1} \vx\}). \notag 
\end{align}
This implies that
\begin{align}
\E_{X \sim \gD}\varphi_{4d^c}(\max_{\vx \in X}\{\vx^\top \mW^{-1} \vx\}) & \geq - d^{-1} + \frac1\gamma \sum_{i=1}^{\gamma} \varphi_{2d^c}(\max_{\vx \in X_i}\{\vx^\top \mU^{-1} \vx\}) \notag \\
&\geq 1 - d^{-1} - O(d^{3-c} \log (1/\lambda)) \geq  1- O(d^{-1}), \notag
\end{align}
where the last inequality is for $c \geq 5$.
Let $\tau_X = 1 -\varphi_{4d^c}(\max_{\vx \in X}\{\vx^\top \mW^{-1} \vx\})$. We have that $\E_{X \sim \gD} \tau_X \leq O(d^{-1})$. Note that, 
\begin{align}
&\quad \E_{X \sim \gD} \tau_X \sqrt{\max_{\vx \in X}\{\vx^\top \mW^{-1} \vx\}} = \E_{X \sim \gD} \tau_X \sqrt{\max_{\vx \in X}\{2 \vx^\top (\lambda \mI +  \E_{X \sim \gD} \tau_X \info_X(\gopt))^{-1} \vx\}} \notag \\
&= \E_{X \sim \gD} \sqrt{\tau_X} \cdot \sqrt{\tau_X} \sqrt{\frac{1}{\E_X \tau_X} \cdot \max_{\vx \in X}\{2 \vx^\top (\frac{\lambda}{\E_X \tau_X} \mI +  \E_{X \sim \gD} \frac{\tau_X}{\E_X \tau_X} \info_X(\gopt))^{-1} \vx\}} \notag \\
&\leq \sqrt{\E_{X \sim \gD} \tau_X} \cdot \sqrt{ \E_{X \sim \gD} \frac{\tau_X}{\E_X \tau_X} \cdot \max_{\vx \in X}\{2 \vx^\top (\frac{\lambda}{\E_X \tau_X} \mI +  \E_{X \sim \gD} \frac{\tau_X}{\E_X \tau_X} \info_X(\gopt))^{-1} \vx\}} \label{eq:sketch-3a}\\
&\leq \sqrt{O(d^{-1})} \cdot \sqrt{O(d^2)} = O(\sqrt{d}). \label{eq:sketch-3b}
\end{align}
Here, \eqref{eq:sketch-3a} is due to Cauchy-Schwarz and the first inequality in \eqref{eq:sketch-3b} is by the variation bound for $\gopt$ (\autoref{lem:variation-unif-optg}). Altogether, we have that
\begin{align}
    \E_{X \sim \gD}(1 -\varphi_{4d^c}(\max_{\vx \in X}\{\vx^\top \mW^{-1} \vx\})) \cdot \max_{\vx \in X}\{\vx^\top \mW^{-1} \vx\}\leq O(\sqrt{d}). \label{eq:sketch-3}
\end{align} 

\paragraph{Step \uppercase\expandafter{\romannumeral 4}: Putting Things Together.} Combining \eqref{eq:sketch-2} and \eqref{eq:sketch-3}, we have
\begin{align}
\E_{X \sim \gD}  \sqrt{\max_{\vx \in X}\{\vx^\top \mW^{-1} \vx\}} \leq O(\sqrt{d \log d}). \notag 
\end{align}
By the definition of $\mW$, and together with \eqref{eq:sketch-1}, we have that
\begin{align}
\widetilde{\val}_\gD^{(\lambda)}(\pi) = \E_{X \sim \gD}  \sqrt{\max_{\vx \in X}\{\vx^\top (\lambda \mI +\info_{\gD}(\pi))^{-1} \vx\}} \leq O(\sqrt{d \log d}), \notag
\end{align}
proving \autoref{thm:learn-design}.

\subsection{Finding the Core} \label{sec:core}
We now present our algorithm ({\sc CoreIdentification}, \autoref{algo:core}) to find the core, and prove the following lemma on its guarantee.

\begin{algorithm}[t]
\caption{{\sc CoreIdentification}}
\label{algo:core}
\KwIn{$\lambda \in (0, 1)$, and $S = \{X_1, \dots, X_\gamma\}$}
\KwOut{A core set $C \subseteq S$}
$C_1 = S$\;
\For{$\xi = 1, 2, 3, \dots$}{
\lIf{$C_\xi$ satisfies \eqref{eq:core3}}{%
\KwRet{$C_\xi$}\label{loc:coreret}%
}\lElse{%
$C_{\xi + 1} = \{X_i \in C_\xi : \max\limits_{\vx \in X_i} \vx^\top (\lambda \mI + \frac{1}{\gamma}\sum_{X_i \in C_\xi}\info_{X_i}(\gopt))^{-1} \vx \le (1/2)d^c\}$\label{loc:coreelse}%
}
}
\end{algorithm}

\begin{lem} \label{lem:core} Let $S = \{X_1, \dots, X_\gamma\}$ be a sequence/multi-set of context sets. \autoref{algo:core} finds a core set $C \subseteq S$ in $O(d \log \lambda^{-1})$ iterations that satisfies \eqref{eq:core2} and 
\begin{align}
    \max_{X_i \in C} \max_{\vx \in X_i} \vx^\top (\lambda \mI + \frac{1}{\gamma}\sum_{X_i \in C}\info_{X_i}(\gopt))^{-1} \vx \le d^c. \label{eq:core3}
\end{align}
\end{lem}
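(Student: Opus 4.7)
The property \eqref{eq:core3} is immediate from the exit test at Line~\ref{loc:coreret}, so the plan is to bound (i) the number $T$ of iterations and (ii) the total count of removed sets. Throughout I will write $\mA_\xi = \lambda\mI + \tfrac{1}{\gamma}\sum_{X_i \in C_\xi}\info_{X_i}(\gopt)$, which is monotone nonincreasing in $\xi$, lies between $\lambda\mI$ and $(1+\lambda)\mI$, and therefore satisfies $\log\det\mA_1 - \log\det\mA_\xi \leq d\log(2/\lambda)$ for every $\xi$.

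For (i), the idea is to use $\Phi_\xi = \log\det\mA_\xi$ as a potential and argue that whenever iteration $\xi+1$ fails the termination test, $\Phi_\xi - \Phi_{\xi+1} \geq \log 2$. The key observation is a \emph{doubling property}: the failure of the termination test at iteration $\xi+1$ produces some $X_j \in C_{\xi+1}$ and $\vx^* \in X_j$ with $\vx^{*\top}\mA_{\xi+1}^{-1}\vx^* > d^c$; but $X_j$ survived iteration $\xi$'s pruning, so $\vx^{*\top}\mA_{\xi}^{-1}\vx^* \leq (1/2)d^c$, forcing a factor of at least two between the two quadratic forms. Writing $\mN = \mA_\xi^{-1/2}(\mA_\xi - \mA_{\xi+1})\mA_\xi^{-1/2}$ (PSD with spectrum in $[0,1)$) and $\vu = \mA_\xi^{-1/2}\vx^*$, the doubling becomes $\vu^\top(\mI-\mN)^{-1}\vu \geq 2\|\vu\|^2$. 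Spectrally decomposing $\mN = \sum_j \mu_j\vv_j\vv_j^\top$ reduces this to $\sum_j (\vu^\top\vv_j)^2 \tfrac{2\mu_j - 1}{1-\mu_j} \geq 0$, which (since $\vu\neq 0$) forces some $\mu_j \geq 1/2$, and hence $\Phi_\xi - \Phi_{\xi+1} = -\log\det(\mI - \mN) \geq -\log(1/2) = \log 2$. Telescoping over $\xi = 1,\dots,T-2$ and comparing with the global bound $\Phi_1 - \Phi_{T-1} \leq d\log(2/\lambda)$ then yields $T = O(d\log\lambda^{-1})$.

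For (ii), I would bound the per-iteration removal count $n_\xi = |C_\xi| - |C_{\xi+1}|$ by a trace argument. Each removed $X_i$ contains some $\vx^*$ with $\vx^{*\top}\mA_\xi^{-1}\vx^* > d^c/2$, and \autoref{lem:unif} combined with \eqref{eq:thm-KW} gives $\vx^*{\vx^*}^\top \preccurlyeq 2d\cdot\info_{X_i}(\gopt)$; hence $\mathrm{Tr}(\info_{X_i}(\gopt)\mA_\xi^{-1}) > d^{c-1}/4$. Summing over the removed sets and comparing with
\[
\tfrac{1}{\gamma}\,\mathrm{Tr}\!\left(\sum_{X_i \in C_\xi}\info_{X_i}(\gopt)\,\mA_\xi^{-1}\right) = \mathrm{Tr}\!\bigl((\mA_\xi - \lambda\mI)\,\mA_\xi^{-1}\bigr) \leq d
\]
yields $n_\xi < 4\gamma d^{2-c}$. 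Combined with (i), the total number of removed sets is at most $T \cdot 4\gamma d^{2-c} = O(\gamma d^{3-c}\log\lambda^{-1})$, which is exactly \eqref{eq:core2}.

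The chief obstacle is the spectral step in (i). A naive potential analysis would only yield $\Phi_\xi - \Phi_{\xi+1} \geq \mathrm{Tr}(\mN) = \Omega(n_\xi d^{c-1}/\gamma)$, which can be vanishingly small when $\gamma$ is large and therefore cannot produce a $\lambda$-only iteration bound. Converting the pruning rule into a \emph{spectral} doubling statement about $\mN$, rather than a trace statement, is what forces the per-iteration drop of the log-det potential to be $\Omega(1)$ independently of $\gamma$, and is the conceptual heart of the proof.
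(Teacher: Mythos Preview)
Your proposal is correct and matches the paper's approach: determinant doubling for the iteration count, and the G-optimal design property for the per-iteration removal bound. The spectral step you present as the conceptual heart is exactly \autoref{lem:comp}, which the paper invokes in one line to conclude $\det\mA_\xi/\det\mA_{\xi+1} \ge \vx^{*\top}\mA_{\xi+1}^{-1}\vx^*/\vx^{*\top}\mA_\xi^{-1}\vx^* > 2$; your trace argument in (ii) is likewise equivalent to the paper's Markov-inequality application of \autoref{lem:variation-unif-optg}.
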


We remark that \eqref{eq:core3} implies \eqref{eq:core1}, because $\frac{1}{\gamma}\sum_{X_i \in C}\info_{X_i}(\gopt) \preccurlyeq \info_{\Unif(C)}(\gopt)$. 

\begin{proof} 
 For any iteration $\xi$, we denote 
\begin{align}
     \mJ_\xi = (\lambda \mI +\frac{1}{\gamma}\sum_{X_i \in C_\xi}\info_{X_i}(\gopt))^{-1}. \notag 
\end{align}
We first claim that, for each $\xi$, either (a) $C_{\xi + 1}$ satisfies \eqref{eq:core3} (and thus the algorithm returns), or (b) $\det \mJ_{\xi + 1} \ge 2 \det \mJ_{\xi}$. To see this, suppose that (a) does not hold. In this case, we have that there exists $X_i \in C_{\xi + 1}$ and $\vx_i \in X_i$, such that 
\begin{align}
    \vx_i^\top \mJ_{\xi + 1} \vx > d^c. \label{eq:core-7000}
\end{align}
Since $X_i \in C_{\xi + 1}$, by \autoref{loc:coreelse} of \autoref{algo:core}, we know that 
\begin{align}
    \vx_i^\top \mJ_\xi \vx \le \frac{1}{2}d^c. \label{eq:core-7100}
\end{align}
Dividing \eqref{eq:core-7000} by \eqref{eq:core-7100}, together with \autoref{lem:comp}, we find that 
\begin{align}
   \frac{\det \mJ_{\xi + 1}}{\det \mJ_{\xi}} \ge \frac{\vx_i^\top \mJ_{\xi + 1} \vx}{\vx_i^\top \mJ_\xi \vx} > 2, \label{eq:core-7200}
\end{align}
proving the claim.

Now we prove the lemma.  First, we prove that the algorithm returns after at most $O(d \log \lambda^{-1})$ iterations. Note that $\mJ_1 \succcurlyeq (1 + \lambda)^{-1}\mI$. Furthermore, for every iteration $\xi$, we have that $\mJ_\xi \preccurlyeq \lambda^{-1} \mI$. Together with $\lambda < 1$, we have that $\det \mJ_\xi \le (2\lambda)^{-d} \det \mJ_1$. By the claim established above in \eqref{eq:core-7200}, we have that $\det \mJ_\xi \ge 2^{\xi - 1} \det \mJ_1$ so long as the algorithm does not return at iteration $\xi$. Thus we conclude that $\xi \le O(d \log \lambda^{-1})$ when the algorithm returns. 

Let $C = C_\xi$ be the returned set. We also need to show that $C$ satisfies \eqref{eq:core2}.  We claim that for each iteration $j$,
\begin{align}
    \abs{C_j \setminus C_{j + 1}} \le d^{2-c} \gamma, \label{eq:core-7300}
\end{align}
which implies \eqref{eq:core2}, because 
\begin{align}
    \frac{\abs{C}}{\gamma} = \frac{\gamma - \abs{C_1 \setminus C_\xi}}{\gamma} = 1 - \sum_{j = 1}^{\xi - 1} \frac{\abs{C_j \setminus C_{j + 1}}}{\gamma} \ge 1 - \sum_{j = 1}^{\xi - 1} d^{2-c} \ge 1 - d^{2-c} \cdot O(d \log \lambda^{-1}),  \notag 
\end{align}
where the first inequality uses \eqref{eq:core-7300} and the second inequality uses $\xi \le O(d \log \lambda^{-1})$. 

Finally, we prove \eqref{eq:core-7300}. We have that
\begin{align}
    \abs{C_j \setminus C_{j + 1}} &= \abs{\{X_i \in C_j : \max\limits_{\vx \in X_i} \vx^\top \mJ_j \vx > (1/2)d^c\}} \notag \\ 
    &= \gamma \cdot \Pr_{X_i \sim \Unif(S)}[\max_{\vx \in X_i} \{\ind[X_i \in C]\vx^\top \mJ_j \vx \}> (1/2)d^c] \notag  \\ 
    &\le \gamma \cdot 2 d^{-c} \E_{X_i \sim \Unif(S)} \max_{\vx \in X_i} \{ \ind[X_i \in C]\vx^\top \mJ_j \vx\}\le 2 \gamma d^{2-c}, \notag 
\end{align}
where the first inequality is by Markov's inequality and the second inequality uses  the variation bound for $\gopt$ (\autoref{lem:variation-unif-optg}) on the distribution $\gD = \ind[X \in C] \cdot X$, where $X \sim \Unif(S)$.
\end{proof}

\subsection{Uniform Concentration Lemmas} \label{sec:uniform-con}

Fix  $\lambda < 1$. We define the following set of positive semi-definite matrices
\begin{align}
\mathfrak{W} \defeq \{\mW \in \sR^{d \times d} \mid \lambda \mI \preccurlyeq \mW \preccurlyeq (1 + \lambda) \mI\}. \label{eq:core-psd-1}
\end{align}

Let $X_1, \dots, X_\gamma$ be a sequence of sets of context vectors with norm at most $1$. For any positive definite matrix $\mW \in \sR^{d \times d}$, we define the following functions.
\begin{align}
f(\mW) &\defeq \frac{1}{\gamma} \sum_{i = 1}^\gamma \varphi_{4d^c}(\max_{\vx \in X_i}\{\vx^\top \mW^{-1} \vx\}) \cdot \max_{\vx \in X_i}\{\vx^\top \mW^{-1} \vx\}, \notag \\
g(\mW) &\defeq  \frac1\gamma \sum_{i=1}^{\gamma}\varphi_{4d^c}(\max_{\vx \in X_i}\{\vx^\top \mW^{-1} \vx\}). \notag
\end{align}
\begin{lem} \label{lem:func-lip-1}
For any positive $\lambda < 1$, $f(\mW)$ and $g(\mW)$ are $2\lambda^{-3}$-Lipschitz (in terms of $2$-norm $\|\cdot\|$) in the range $\mathfrak{W}$.
\end{lem}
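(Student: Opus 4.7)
The plan is to decompose the Lipschitz bound into three modular ingredients and then combine them by a standard composition/averaging argument: (i) Lipschitz continuity of the quadratic form $\vx \mapsto \vx^\top \mW^{-1} \vx$ in $\mW$ on $\mathfrak{W}$ for any fixed unit vector; (ii) stability of the pointwise maximum over $X_i$; (iii) scalar Lipschitz/boundedness bounds for the mollifier $\varphi_{4d^c}$ and for the truncated identity $x \mapsto \varphi_{4d^c}(x)\cdot x$.

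For (i), the key step is the resolvent identity $\mW_1^{-1} - \mW_2^{-1} = \mW_1^{-1}(\mW_2 - \mW_1)\mW_2^{-1}$. Since every $\mW \in \mathfrak{W}$ satisfies $\mW \succcurlyeq \lambda \mI$, I would conclude $\|\mW^{-1}\| \leq \lambda^{-1}$, hence $\|\mW_1^{-1} - \mW_2^{-1}\| \leq \lambda^{-2}\|\mW_1 - \mW_2\|$; this immediately yields $|\vx^\top \mW_1^{-1}\vx - \vx^\top \mW_2^{-1}\vx| \leq \lambda^{-2}\|\mW_1 - \mW_2\|$ for all $\|\vx\| \leq 1$. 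For (ii), I would use the standard envelope argument: if every function in a family is $L$-Lipschitz in $\mW$, then so is the pointwise supremum (pick a maximizer at $\mW_1$ and bound below at $\mW_2$, then symmetrize). This promotes (i) into $\lambda^{-2}$-Lipschitz continuity of $h_i(\mW) := \max_{\vx \in X_i} \vx^\top \mW^{-1}\vx$.

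For (iii), I would note that $\varphi_\beta(\cdot)$ is piecewise linear with slopes in $\{0, -1/\beta\}$, so it is $(1/\beta)$-Lipschitz and bounded in $[0,1]$. The product $\psi_\beta(x) := \varphi_\beta(x)\cdot x$ has derivative $1$ for $x \leq \beta$, derivative $(2\beta - 2x)/\beta \in [-2, 0]$ for $x \in (\beta, 2\beta)$, and derivative $0$ for $x \geq 2\beta$; so $\psi_\beta$ is $2$-Lipschitz even though $x$ itself is unbounded. Composing with the $\lambda^{-2}$-Lipschitz function $h_i$ and averaging over $i$ then gives that $f = \frac{1}{\gamma}\sum_i \psi_{4d^c}\circ h_i$ is $2\lambda^{-2}$-Lipschitz, and $g = \frac{1}{\gamma}\sum_i \varphi_{4d^c}\circ h_i$ is $(4d^c)^{-1}\lambda^{-2}$-Lipschitz. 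Since $\lambda < 1$ implies $\lambda^{-2} \leq \lambda^{-3}$, both constants are bounded by the claimed $2\lambda^{-3}$.

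I do not anticipate a substantive obstacle; the argument is essentially a chain-rule bookkeeping exercise. The one subtlety worth flagging is the choice to analyze $\psi_{4d^c}$ as a single scalar function rather than applying the product rule to $\varphi_{4d^c}(h_i)\cdot h_i$ directly: the latter approach would yield a Lipschitz constant of order $d^c\lambda^{-2}$ through the unbounded factor $h_i$, whereas the truncation baked into $\psi_{4d^c}$ cancels the $d^c$ and gives the desired $d$-free constant $2\lambda^{-2}$.
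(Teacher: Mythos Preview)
Your argument is correct. Steps (i) and (ii) coincide with the paper's proof (the paper invokes \autoref{lem:matinvlip} for the $\lambda^{-2}$-Lipschitz bound on $\mW^{-1}$, which is exactly your resolvent computation). The difference is in (iii) for $f$: the paper applies the product rule to $\varphi_{4d^c}(h_i)\cdot h_i$, using the bound $h_i(\mW)\le\lambda^{-1}$ on $\mathfrak{W}$ (not $h_i\le 8d^c$ as you suggest), to obtain the Lipschitz constant $\lambda^{-3}+\lambda^{-2}\le 2\lambda^{-3}$. Your route, analyzing the scalar function $\psi_{4d^c}(x)=\varphi_{4d^c}(x)\,x$ directly and showing it is $2$-Lipschitz, gives the sharper constant $2\lambda^{-2}$ and, notably, does not need any a priori bound on $h_i$. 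So your side remark slightly mischaracterizes the product-rule alternative (it yields $\lambda^{-3}$, not $d^c\lambda^{-2}$), but your own argument is both valid and tighter than the paper's.
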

\begin{proof}
By \autoref{lem:matinvlip} (and that the context vectors have norm at most $1$), for any $X_i$, the function $\max_{\vx \in X_i}\{\vx^\top \mW^{-1} \vx\}$ is $\lambda^{-2}$-Lipschitz with respect to $\mW \in \mathfrak{W}$. Therefore, $\varphi_{4d^c}(\max_{\vx \in X_i}\{\vx^\top \mW^{-1} \vx\})$ is also $\lambda^{-2}$-Lipschitz with respect to $\mW$. Since $\varphi_{4d^c}(\cdot) \in (0, 1)$ and $\max_{\vx \in X_i}\{\vx^\top \mW^{-1} \vx\} \in (0, \lambda^{-1})$, we have that $\varphi_{4d^c}(\max_{\vx \in X_i}\{\vx^\top \mW^{-1} \vx\}) \cdot \max_{\vx \in X_i}\{\vx^\top \mW^{-1} \vx\}$ is $(\lambda^{-3} + \lambda^{-2})$-Lipschitz, and this proves the lemma.
\end{proof}

We now present our first uniform concentration lemma.
\begin{lem}[The first uniform concentration lemma] \label{lem:uniform-con-1} Let $X_1, \dots, X_\gamma \sim \gD$ be {\it i.i.d.}\ drawn from the distribution $\gD$. We have the following concentration properties,
\begin{align}
    \Pr[\sup_{\mW \in \mathfrak{W}} \{\E f(\mW) - f(\mW)\}\le d] &\ge 1 - \exp(O(d^2 \log (d\lambda^{-1})) - \gamma d^{2-2c}/128), \label{eq:core-eve2} \\ 
    \Pr[\sup_{\mW \in \mathfrak{W}} \{\E g(\mW) - g(\mW)\} \le d^{-1}] &\ge 1 - \exp(O(d^2 \log (d\lambda^{-1})) - \gamma d^{-2}/2). \label{eq:core-eve3}
\end{align}
\end{lem}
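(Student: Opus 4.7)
The plan is a standard $\epsilon$-net-plus-pointwise-concentration argument, where the Lipschitz control from the preceding lemma reduces everything to a union bound over a finite cover of $\mathfrak{W}$.

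First I would build an $\epsilon$-net $\mathfrak{N} \subseteq \mathfrak{W}$ in the operator norm. Because $\mathfrak{W}$ sits inside the set of symmetric $d\times d$ matrices with spectral norm at most $1+\lambda \le 2$, which is a subset of a Euclidean space of dimension $d(d+1)/2$, a standard volumetric argument gives an $\epsilon$-net of size $\abs{\mathfrak{N}} \le (C/\epsilon)^{d(d+1)/2}$ for some absolute constant $C$. Taking $\epsilon = \lambda^{3}/(8d)$, we get $\log \abs{\mathfrak{N}} = O(d^2 \log(d\lambda^{-1}))$, which matches the prefactors on the right-hand sides of \eqref{eq:core-eve2} and \eqref{eq:core-eve3}.

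Next I would apply Hoeffding's inequality pointwise. For each fixed $\mW \in \mathfrak{N}$, the summand defining $f(\mW)$ is $\varphi_{4d^c}(y_i)\cdot y_i$ with $y_i = \max_{\vx \in X_i}\{\vx^\top \mW^{-1}\vx\}$; by construction of the mollifier this quantity lies in $[0, 8d^c]$. Hoeffding then gives
\begin{align*}
\Pr\!\left[\E f(\mW) - f(\mW) > d/2\right] \le \exp\!\left(-\tfrac{2\gamma (d/2)^2}{(8d^c)^2}\right) = \exp\!\left(-\gamma d^{2-2c}/128\right).
\end{align*}
The summand defining $g(\mW)$ lies in $[0,1]$, so the same reasoning gives
\begin{align*}
\Pr\!\left[\E g(\mW) - g(\mW) > 1/(2d)\right] \le \exp\!\left(-\gamma d^{-2}/2\right).
\end{align*}
A union bound over $\mW \in \mathfrak{N}$ turns each of these into a statement that holds uniformly on the net, at the cost of the extra $\exp(O(d^2\log(d\lambda^{-1})))$ factor.

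Finally I would transfer the bound from $\mathfrak{N}$ to all of $\mathfrak{W}$ via \autoref{lem:func-lip-1}. For any $\mW \in \mathfrak{W}$ there is $\mW' \in \mathfrak{N}$ with $\norm{\mW - \mW'} \le \epsilon$, and both $f$ and $g$ are $2\lambda^{-3}$-Lipschitz on $\mathfrak{W}$; the same Lipschitz bound holds for $\E f$ and $\E g$ by Jensen. With $\epsilon = \lambda^3/(8d)$, this introduces an additional error of at most $4\lambda^{-3}\epsilon \le 1/(2d) \le d/2$, and so the extension costs at most an additive $d/2$ (resp.\ $1/(2d)$), which combined with the pointwise bound $d/2$ (resp.\ $1/(2d)$) yields the claimed $d$ (resp.\ $d^{-1}$). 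The main obstacle I expect is merely bookkeeping: one must choose $\epsilon$ small enough that the Lipschitz slack does not dominate the target error, yet large enough that the log-covering number $O(d^2\log(d\lambda^{-1}))$ is not worsened; the choice $\epsilon = \Theta(\lambda^3/d)$ balances both.
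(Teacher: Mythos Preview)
Your proposal is correct and follows essentially the same approach as the paper: pointwise Hoeffding on an $\varepsilon$-net of $\mathfrak{W}$, a union bound over the net (contributing the $O(d^2\log(d\lambda^{-1}))$ term via the covering-number bound), and then extension to all of $\mathfrak{W}$ via the Lipschitz estimate of \autoref{lem:func-lip-1}. The only cosmetic difference is that the paper uses two separate net resolutions ($\varepsilon=\lambda^3 d/4$ for $f$ and $\varepsilon=\lambda^3/(4d)$ for $g$), whereas you use a single $\varepsilon=\lambda^3/(8d)$ that handles both; your explicit accounting of the Lipschitz slack for both $f$ and $\E f$ (the factor $4\lambda^{-3}\varepsilon$) is in fact slightly more careful than the paper's write-up.
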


\begin{proof} Let $\mathfrak{V}_\varepsilon \subseteq \mathfrak{W}$ be an $\varepsilon$-covering of $\mathfrak{W}$ so that for any $\mM \in \mathfrak{W}$, there exists $\mN = \mN_\varepsilon(\mM) \in \mathfrak{V}_\varepsilon$ satisfying $\norm{\mM - \mN} \le \varepsilon$. 

For \eqref{eq:core-eve2}, we first consider a fixed matrix $\mN \in \mathfrak{V}_\varepsilon$. For $i \in [\gamma]$, let 
\begin{align}
Y_i = \varphi_{4d^c}(\max_{\vx \in X_i}\{\vx^\top \mW^{-1} \vx\}) \cdot \max_{\vx \in X_i}\{\vx^\top \mW^{-1} \vx\}. \notag 
\end{align}
Then $\{Y_i\}$ are independent and bounded as $\abs{Y_i} \le 8 d^c$ almost surely. Using \autoref{lem:hoeffding} with $\delta = \frac{d}{2}$ and $R = 8 d^c$, we have 
\begin{align}
    \Pr[\E f(\mN) - f(\mN) \le d/2] \ge 1 - 2\exp(-\frac{2 \gamma \delta^2}{R^2}) = 1 - 2\exp(-\gamma d^{2-2c}/128)). \notag 
\end{align}
Next we consider all $\mN \in \mathfrak{V}_\varepsilon$. Using a union bound, we have that
\begin{align}
\Pr[\max_{\mN \in \mathfrak{V}_\varepsilon}\{\E f(\mN) - f(\mN)\} \le d/2] \ge 1 - \abs{\mathfrak{V}_\varepsilon} \cdot 2 \exp(-\gamma d^{2-2c}/128).  \label{eq:uniform-con-1-b}
\end{align}
Finally, we choose $\varepsilon = \lambda^3 d/ 4$.  By the Lipschitzness of $f(\mW)$ in \autoref{lem:func-lip-1}, we have that 
\begin{align}
    \abs{f(\mW) - f(\mN_\varepsilon(\mW))} \le 2\lambda^{-3} \norm{\mW - \mN_{\varepsilon}(\mW)} \le  2\lambda^{-3} \varepsilon = d / 2. \notag 
\end{align}
Therefore, using \eqref{eq:uniform-con-1-b}, we have that
\begin{align}
\Pr[\sup_{\mW \in \mathfrak{W}} \{\E f(\mW) - f(\mW)\}\le d] &\ge \Pr[d/2 + \sup_{\mW \in \mathfrak{V}_\varepsilon} \{\E f(\mW) - f(\mW)\}\le d] \notag \\ 
&\ge 1 - \abs{\mathfrak{V}_\varepsilon} \cdot 2 \exp(-\gamma d^{2-2c}/128)) \notag \\ 
&\ge 1 - \exp(O(d^2 \log (d\lambda^{-1})) - \gamma d^{2-2c}/128), \notag 
\end{align}
where the last inequality uses the covering number bound in \autoref{lem:matcover}. 

For \eqref{eq:core-eve3}, we can prove it similarly as \eqref{eq:core-eve2}. The only differences are that 1) we need to apply \autoref{lem:hoeffding} with $\delta = 1/(2d)$ and $R = 1$, and 2) we need to choose $\varepsilon = \lambda^3 / (4d)$.
\end{proof}

We define the policy class $\Pi$ by 
\begin{align}
 \Pi &\defeq \{\pmsp_\gM \mid \gM = \{(p_j, \mM_j)\}_{j = 1}^n, p_j \ge 0, p_1 + \cdots + p_n = 1, \mM_j \in \mathfrak{M}, n \leq 4 d \log d\}, \notag  \\ 
   \text{where}~\mathfrak{M} &\defeq \{\mM \in \sR^{d \times d} \mid d^{-1} \mI \preccurlyeq \mM \preccurlyeq \lambda^{-1} \mI \}, \notag  
\end{align}
and we define the following (matrix-valued) function on $\mW \in \mathfrak{W}$ and $\pi = \pmsp_\gM\in \Pi$,
\begin{align}
\mF(\mW, \pmsp_\gM) = \mF(\mW, \gM) \defeq \frac{1}{\gamma}\sum_{i = 1}^\gamma \varphi_{2d^c}(\max_{\vx \in X_i} \{\vx^\top \mW^{-1} \vx\}) \cdot \mW^{-1/2} \info_{X_i}(\pmsp_\gM) \mW^{-1/2}.\label{eq:def-mF}
\end{align}

\begin{lem} \label{lem:func-mF-smooth} We claim the following smoothness properties of the function $\mF(\mW, \gM)$ on its parameters,
\begin{enumerate}
    \item $\mF(\cdot,\cdot)$ is $3\lambda^{-3}$-Lipschitz with respect to $\mW$;
    \item $\mF(\cdot,\cdot)$ is $\lambda^{-2}$-Lipschitz with respect to each $p_j$;
    \item for any two parameters 
    \begin{align}
        \gM = \{(p_j, \mM_j)\}_{j = 1}^n, \gM' = \{(p_j, \mM'_j)\}_{i = 1}^n, \quad \mathrm{such~that} ~ \max_{1 \le j \le n} \norm{\mM_j - \mM'_j} \le 1/R, \notag 
    \end{align}
    where $R \ge 100\lambda^{-1} \cdot  d \log K$, we have 
    \begin{align}
       (1-\lambda/30)\mF(\mW, \gM') \preccurlyeq \mF(\mW, \gM) \preccurlyeq (1-\lambda/30)^{-1} \mF(\mW, \gM'), \label{eq:func-mF-3}
    \end{align}
    which further implies (since $ \mF(\mW, \gM')\preccurlyeq \lambda^{-1} \mI$),
    \begin{align}
        \mF(\mW, \gM') - \frac1{30} \cdot \mI \preccurlyeq \mF(\mW, \gM) \preccurlyeq  \mF(\mW, \gM') + \frac1{20} \cdot \mI. \notag 
    \end{align}
\end{enumerate}
\end{lem}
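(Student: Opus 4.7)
My plan is to address the three claims in order. Parts (a) and (b) follow from standard matrix calculus applied to the decomposition of $\mF$, while Part (c) contains the main technical work and leverages the hypotheses $\mM_j \succcurlyeq d^{-1}\mI$ and $R \ge 100\lambda^{-1}d\log K$ in a delicate way.

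For Part (a), I will write $\mF(\mW,\gM) = \frac{1}{\gamma}\sum_i \varphi_i(\mW)\cdot \mH_i(\mW,\gM)$, where the scalar $\varphi_i(\mW) \defeq \varphi_{2d^c}(\max_{\vx \in X_i}\vx^\top \mW^{-1}\vx)$ is bounded by $1$ and is $O(\lambda^{-2}/d^c)$-Lipschitz in $\mW$ (from $|\varphi'_{2d^c}| \le 1/(2d^c)$ and \autoref{lem:matinvlip}), and $\mH_i(\mW,\gM) \defeq \mW^{-1/2}\info_{X_i}(\pmsp_\gM)\mW^{-1/2}$ is PSD with $\|\mH_i\| \le \lambda^{-1}$. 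Using the identity
\begin{align*}
\mH_i(\mW) - \mH_i(\mW') = \mW^{-1/2}\info_{X_i}(\pmsp_\gM)(\mW^{-1/2}-(\mW')^{-1/2}) + (\mW^{-1/2}-(\mW')^{-1/2})\info_{X_i}(\pmsp_\gM)(\mW')^{-1/2},
\end{align*}
together with the $O(\lambda^{-3/2})$-Lipschitzness of $\mW \mapsto \mW^{-1/2}$ on $\mathfrak{W}$ (which follows from the factorization $\mW^{-1/2}-(\mW')^{-1/2} = \mW^{-1/2}((\mW')^{1/2}-\mW^{1/2})(\mW')^{-1/2}$ and the operator Lipschitzness of $\mW \mapsto \mW^{1/2}$), yields an $O(\lambda^{-2})$ Lipschitz bound on $\mH_i$. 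The product rule then gives $\mF$ Lipschitz constant $\le 1\cdot O(\lambda^{-2}) + O(\lambda^{-2}/d^c)\cdot \lambda^{-1} \le 3\lambda^{-3}$. For Part (b), observe that $\info_X(\pmsp_\gM) = \tfrac{1}{2}\info_X(\gopt) + \tfrac{1}{2}\sum_j p_j \info_X(\psp_{\mM_j})$ is affine in each $p_j$, so $\mF$ is too, and the coefficient of $p_j$ has spectral norm at most $\tfrac{1}{2}\|\mW^{-1/2}\|^2\|\info\|\le\tfrac{1}{2}\lambda^{-1} \le \lambda^{-2}$.

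Part (c) is the key step. I will show that under $\max_j \|\mM_j - \mM'_j\| \le 1/R$ with $R \ge 100\lambda^{-1}d\log K$, for every $X = \{\vx_1,\ldots,\vx_k\}$ and every $j$,
\begin{align*}
(1-\lambda/30)\,\info_X(\psp_{\mM'_j}) \preccurlyeq \info_X(\psp_{\mM_j}) \preccurlyeq (1-\lambda/30)^{-1}\info_X(\psp_{\mM'_j}).
\end{align*}
Ignore indices with $\vx_i = 0$, as they contribute nothing to either side. For a nonzero $\vx_i$, set $s_i = \vx_i^\top\mM_j\vx_i$ and $s'_i = \vx_i^\top \mM'_j\vx_i$; the assumption $\mM_j \succcurlyeq d^{-1}\mI$ gives $s_i \ge d^{-1}\|\vx_i\|^2$, while $|s_i - s'_i| \le \|\vx_i\|^2/R$, hence $|s_i/s'_i - 1| \le d/R \le \lambda/(100\log K)$. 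Raising to the softmax power $\alpha = \log K$ yields $(s_i/s'_i)^\alpha \in [1-\lambda/50, 1+\lambda/50]$, so each normalized weight $s_i^\alpha/\sum_\ell s_\ell^\alpha$ differs from $(s'_i)^\alpha/\sum_\ell (s'_\ell)^\alpha$ by at most a $(1\pm\lambda/30)$ factor. Pointwise multiplicative closeness of the weights translates directly into PSD-multiplicative closeness of $\sum_i p_i \vx_i\vx_i^\top$, and this lifts to $\info_X(\pmsp_\gM)$ vs.\ $\info_X(\pmsp_{\gM'})$ because the $\gopt$ component and the weights $p_j$ are shared between $\gM$ and $\gM'$. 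Multiplying by $\varphi_i(\mW) \ge 0$ and conjugating by $\mW^{-1/2}$ preserves PSD ordering, giving \eqref{eq:func-mF-3}. The additive bound then follows from $\mF(\mW,\gM') \preccurlyeq \lambda^{-1}\mI$: a $\lambda/30$ multiplicative slack contributes at most $1/30 \cdot \mI$ additively.

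The main obstacle is calibrating the constants in Part (c). The softmax exponent $\alpha = \log K$ amplifies any relative error in $s_i$, while the lower bound $\mM_j \succcurlyeq d^{-1}\mI$ is precisely what converts the additive perturbation $\|\mM_j - \mM'_j\| \le 1/R$ into a $d/R$ relative perturbation of $s_i$; matching these two effects forces $R = \Omega(\lambda^{-1} d \log K)$, exactly the hypothesis. Once this scaling is identified, the remaining PSD manipulations and product-rule estimates are routine.
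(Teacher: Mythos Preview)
Your proposal is correct and follows the paper's approach: product-rule Lipschitz bounds for (a) and (b), and for (c) the key step that $\mM_j \succcurlyeq d^{-1}\mI$ converts the additive perturbation $1/R$ into a relative perturbation $d/R$, which after softmax exponentiation gives multiplicative closeness of the weights and hence of the information matrices (then conjugation by $\mW^{-1/2}$ and multiplication by $\varphi_i \ge 0$ preserve the PSD order). One minor calibration slip: the intermediate range $[1-\lambda/50,\,1+\lambda/50]$ you state for $(s_i/s'_i)^\alpha$ is slightly too loose to reach the claimed $(1\pm\lambda/30)$ factor on normalized weights (it only gives $(1\pm\lambda/25)$), but your own bound $|s_i/s'_i-1|\le \lambda/(100\log K)$ in fact yields the sharper $(s_i/s'_i)^\alpha \in [1-\lambda/100,\,1+\lambda/50]$, from which $(1-\lambda/30)$ follows exactly as in the paper.
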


\begin{proof} For item (a), we note that $\norm{\mW^{-1/2}} \le \lambda^{-1/2}$, that $\mW^{-1/2}$ is $\lambda^{-3/2}$-Lipschitz with respect to $\mW$ by \autoref{lem:matinvlip}, and that $\norm{\info_{X_i}(\pi)} \le \Tr(\info_{X_i}(\pi)) \le 1$. Also, by the proof of \autoref{lem:func-lip-1}, we have that $\varphi_{2d^c}(\max_{\vx \in X_i}\{\vx^\top \mW^{-1} \vx\})$ is  $\lambda^{-2}$-Lipschitz with respect to $\mW$, that $\varphi_{2d^c}(\cdot) \in (0, 1)$, and that $\max_{\vx \in X_i}\{\vx^\top \mW^{-1} \vx\} \in (0, \lambda^{-1})$. Therefore, we can prove this item.

For item (b), we note that $\info_{X_i}(\pmsp_\gM)$ is $1$-Lipschitz in each $p_j$ and we conclude by noting that $\norm{\mW^{-1/2}}^2 \le \lambda^{-1}$ and  $\max_{\vx \in X_i}\{\vx^\top \mW^{-1} \vx\} \in (0, \lambda^{-1})$.

For item (c), since a mixed-softmax policy is a mixture of softmax policies, in the remaining proof, we first analyze the information matrix of the softmax policies $\psp_{\mM_j}, \psp_{\mM'_j}$, and then analyze that of the mixed-softmax policies $\pmsp_\gM, \pmsp_{\gM'}$. 
Since $\norm{\mM_j - \mM'_j} \le 1/R$, we have 
\begin{align}
    \mM'_j =  \mM_j  + (\mM'_j - \mM_j) \succcurlyeq  \mM_j  -  \frac{\mI}{R} \succcurlyeq (1 - \frac{d}{R}) \mM_j, \label{eq:rounding-3500}
\end{align}
where the second inequality in \eqref{eq:rounding-3500}  uses $\mM_j \succcurlyeq d^{-1}\mI$. Similarly, we can show 
\begin{align}
    \mM'_j =  \mM_j  + (\mM'_j - \mM_j) \preccurlyeq  \mM_j  +  \frac{\mI}{R} \preccurlyeq (1 + \frac{d}{R}) \mM_j. \notag 
\end{align}

Recall that $\alpha = \ln K$. For the softmax policy $\psp_{\mM'_j}$ and any vector $\vx \in \sR^d$, we have that
\begin{align}
    (\vx^\top \mM'_j \vx)^\alpha &\ge (1 - \frac{d}{R})^\alpha (\vx^\top \mM_j \vx)^\alpha = (1 - \frac{d}{R})^{\ln K} (\vx^\top \mM_j \vx)^\alpha \nonumber  \\
    &\ge (1 - \frac{1}{100 \lambda^{-1} \cdot \ln K})^{\ln K} (\vx^\top \mM_j \vx)^\alpha \ge (1-\lambda/100)(\vx^\top \mM_j \vx)^\alpha.
    \label{eq:rounding-4500}
\end{align}
Similarly, we have 
\begin{align}
    (\vx^\top \mM'_j \vx)^\alpha &\le (1 + \frac{d}{R})^\alpha (\vx^\top \mM_j \vx)^\alpha = (1 + \frac{d}{R})^{\ln K} (\vx^\top \mM_j \vx)^\alpha \nonumber  \\
    &\le (1 + \frac{1}{100\lambda^{-1} \cdot \ln K})^{\ln K} (\vx^\top \mM_j \vx)^\alpha \le (1 + \lambda/50) (\vx^\top \mM_j \vx)^\alpha. \label{eq:rounding-5500}
\end{align}
Therefore, for any context set $X$, we have 
\begin{align}
    \Pr[\psp_{\mM'_j}(X) = \vx]& = \frac{(\vx^\top \mM'_j \vx)^\alpha}{\sum_{\vx \in X} (\vx^\top \mM'_j \vx)^\alpha} \nonumber \\
    &\ge \frac{(1 - \lambda / 100)(\vx^\top \mM_j \vx)^\alpha}{\sum_{\vx \in X} (1 + \lambda / 50) (\vx^\top \mM_j \vx)^\alpha} \geq (1 - \lambda / 30)\Pr[\psp_{\mM_j}(X) = \vx], \notag
\end{align}
where the first inequality uses \eqref{eq:rounding-4500} and \eqref{eq:rounding-5500}. As a direct corollary, for any context set $X$, we have 
\begin{align}
    \info_X(\psp_{\mM'_j}) &=  \sum_{\vx \in X} \vx \vx^\top \cdot \Pr[\psp_{\mM'_j}(X) = \vx] \nonumber \\
    &\succcurlyeq (1 - \lambda / 30)\sum_{\vx \in X} \vx \vx^\top \cdot \Pr[\psp_{\mM_j}(X) = \vx] = (1 - \lambda / 30) \info_{X}(\psp_{\mM_j}). 
    \label{eq:rounding-6000}
\end{align}
Therefore, for the mixed-softmax policy, we have 
\begin{align}
    &\info_X(\pmsp_{\gM'}) = \frac{1}{2} \info_X(\gopt) + \sum_{i = 1}^n \frac{p_j}{2} \info_X(\psp_{\mM'_j}) \nonumber \\
    &\qquad\qquad\qquad\qquad\succcurlyeq (1 - \lambda / 30)\left(\frac{1}{2} \info_X(\gopt) + \sum_{i = 1}^n \frac{p_j}{2} \info_X(\psp_{\mM_j}) \right) = (1 - \lambda / 30)\info_{X}(\pmsp_{\gM}), 
 \label{eq:rounding-6500}
\end{align}
where the inequality uses \eqref{eq:rounding-6000}. Swapping $\mM_j$ and $\mM'_j$, we find 
\begin{align}
\info_X(\pmsp_{\gM'}) \preccurlyeq (1-\lambda/30)^{-1}\info_{X}(\pmsp_{\gM}). \label{eq:rounding-7000}
\end{align}
Setting $X = X_i$ in \eqref{eq:rounding-6500} and \eqref{eq:rounding-7000}, we establish \eqref{eq:func-mF-3}.
\end{proof}

We now present our second uniform concentration lemma.

\begin{lem}[The second uniform concentration lemma]\label{lem:uniform-con-2} Fix $\lambda <1$. For any  mixed-softmax policy
$\pmsp_{\gM}$ with $\gM \in \mathfrak{M}$, and any positive semi-definite matrix $\mW \in \mathfrak{W}$ (defined in \eqref{eq:core-psd-1}), we define the random function
\begin{align}
\mG(\mW, \gM) \defeq \mF(\mW, \gM) - \E \mF(\mW, \gM), \notag
\end{align}
where the random function $\mF$ is defined in \eqref{eq:def-mF}, and the randomness is from the independent samples $X_1, \dots, X_\gamma \sim \gD$. We then have that 
\begin{align}
    \Pr[\sup_{\mW \in \mathfrak{W}} \sup_{\pi \in \Pi} \norm{\mG(\mW, \gM)} \le \frac14] &\ge 1 -  \exp(O(d^3 \log d \log (d \lambda^{-1})) - \gamma d^{-2c}  \cdot 2^{-16}), \label{eq:core-eve1}
\end{align}
\end{lem}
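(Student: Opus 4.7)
The plan is to prove the uniform concentration via a covering argument over the parameter space $\mathfrak{W} \times \Pi$, combined with a pointwise matrix Hoeffding-type inequality and the smoothness estimates already proved in \autoref{lem:func-mF-smooth}.

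First, I would establish a pointwise bound at a fixed $(\mW, \gM)$. Writing $\mZ_i \defeq \varphi_{2d^c}(\max_{\vx \in X_i}\{\vx^\top \mW^{-1} \vx\}) \cdot \mW^{-1/2}\info_{X_i}(\pmsp_\gM) \mW^{-1/2}$, we have $\mF(\mW, \gM) = \frac{1}{\gamma}\sum_i \mZ_i$ and $\mG(\mW, \gM) = \frac{1}{\gamma}\sum_i (\mZ_i - \E \mZ_i)$. The support of $\varphi_{2d^c}$ forces $\max_{\vx \in X_i} \vx^\top \mW^{-1} \vx \le 4d^c$ whenever $\mZ_i \neq \vzero$, and since $\info_{X_i}(\pmsp_\gM)$ is a convex combination of rank-one matrices $\vx \vx^\top$ with $\vx \in X_i$, conjugating by $\mW^{-1/2}$ gives the uniform bound $\norm{\mZ_i} \le 4d^c$ almost surely. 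Applying matrix Hoeffding to these $\gamma$ independent, bounded $d \times d$ Hermitian summands yields
\[
\Pr\bigl[\norm{\mG(\mW, \gM)} \ge \tfrac{1}{8}\bigr] \le 2d \cdot \exp\!\bigl(-\gamma d^{-2c}/2^{15}\bigr)
\]
for any fixed $(\mW, \gM)$, where the explicit constant $2^{15}$ (and ultimately $2^{16}$ after combining with the Lipschitz slack) is obtained by plugging in $t = 1/8$ and $R = 8d^c$.

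Next, I would build a joint $\epsilon$-net on $\mathfrak{W} \times \Pi$. By \autoref{lem:matcover}, $\mathfrak{W}$ admits an $\epsilon_\mW$-net of log-size $O(d^2 \log(d/(\lambda \epsilon_\mW)))$, and each of the $n = O(d \log d)$ factor matrices $\mM_j \in \mathfrak{M}$ admits an $\epsilon_\mM$-net of log-size $O(d^2 \log(d/(\lambda \epsilon_\mM)))$. The $n$ weights $p_j$ lying in the probability simplex contribute an additional $O(n \log(1/\epsilon_p))$ term. Choosing $\epsilon_\mW = \Theta(\lambda^3)$, $\epsilon_p = \Theta(\lambda^2 / n)$ and $\epsilon_\mM = 1/R$ with $R = \Theta(\lambda^{-1} d \log K)$ as in item (c) of \autoref{lem:func-mF-smooth}, the three parts of that lemma together guarantee that for every $(\mW, \gM) \in \mathfrak{W} \times \Pi$ there is a net point $(\mW', \gM')$ with $\norm{\mF(\mW,\gM) - \mF(\mW', \gM')} \le \tfrac{1}{16}$, where item (a) handles the $\mW$-perturbation, item (b) handles each $p_j$, and the additive form of item (c) handles the $\mM_j$'s. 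Since the same bound holds in expectation, $\mG$ transfers to the net with at most $\tfrac{1}{8}$ extra operator-norm slack. The total log-covering number is $O(d^3 \log d \cdot \log(d/\lambda))$, dominated by the $n \cdot d^2 \log(d/\lambda)$ contribution from the $\mM_j$'s.

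Finally, a union bound over the net gives
\[
\Pr\bigl[\sup_{(\mW, \gM)} \norm{\mG(\mW, \gM)} \le \tfrac{1}{4}\bigr] \ge 1 - \exp\!\bigl(O(d^3 \log d \log(d/\lambda)) - \gamma d^{-2c} \cdot 2^{-16}\bigr),
\]
which is the claimed bound. The main obstacle I anticipate is the bookkeeping surrounding item (c) of \autoref{lem:func-mF-smooth}: it provides a multiplicative (Loewner) distortion rather than a norm-Lipschitz bound in $\mM_j$, so I must chain the $n$ coordinate rounding steps using the additive reformulation $\mF(\mW, \gM') - \tfrac{1}{30} \mI \preccurlyeq \mF(\mW, \gM) \preccurlyeq \mF(\mW, \gM') + \tfrac{1}{20} \mI$ and verify that the cumulative operator-norm error over all $n = O(d \log d)$ rounding steps stays below the $\tfrac{1}{16}$ budget (shrinking each $\epsilon_\mM$ by an $O(1/n)$ factor if necessary, which only inflates the log-covering number by a $\log n$ term and is absorbed into the $O(d^3 \log d \log(d/\lambda))$ exponent).
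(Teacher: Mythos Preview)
Your proposal is correct and follows essentially the same covering-plus-matrix-concentration argument as the paper, with the same choices of net radii up to constants. Your anticipated obstacle about chaining item~(c) over the $n$ matrix coordinates is unnecessary: item~(c) of \autoref{lem:func-mF-smooth} already treats all $n$ matrices simultaneously (its hypothesis is $\max_{1\le j\le n}\norm{\mM_j-\mM_j'}\le 1/R$), so a single application yields the additive Loewner error $\preccurlyeq \tfrac{1}{20}\mI$ with no accumulation over $j$---this is exactly how the paper proceeds, and no $1/n$ shrinking of $\epsilon_\mM$ is needed.
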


\begin{proof} First, we consider fixed $\mW, \gM$. Using \autoref{lem:ourmhoeffding} with $\delta = 1/32$ and $R = 2 d^c$, we have 
\begin{align}
    \Pr[\norm{\mG(\mW, \gM)} \le \frac{1}{32}] \ge 1 - 2d \exp(- \frac{\gamma \delta^2}{8 R^2 + 4 \delta R / 3}) \ge 1 - 2d \exp(-\gamma d^{-2c}\cdot 2^{-16}). \notag 
\end{align}

Second, we define the covering. Let 
\begin{align}
    \mathfrak{V} = \gN(\mathfrak{W}, \lambda^{3} / 120, \norm{\cdot}), ~~  \mathfrak{P} = \gN([0, 1], \lambda^2 / (40 \cdot 4d\log d), \abs{\cdot}),  ~~ \mathfrak{N} = \gN(\mathfrak{M}, 1 / (100 \lambda^{-1}\cdot d \ln K), \norm{\cdot}), \notag 
\end{align}
    and let 
\begin{align}
    \mathfrak{S} = \{\pmsp_{\gM} \in \Pi \mid \gM = \{(p_j = q_j/(q_1 + \cdots + q_n), \mM_i)\}_{j = 1}^n, q_j \in \mathfrak{P}, \mM_i \in \mathfrak{N}\}. \notag 
\end{align}
We have 
\begin{align}
&\quad\Pr[\max_{\mW \in \mathfrak{V}} \max_{\pi \in \mathfrak{S}} \norm{\mF(\mW, \pi) - \E \mF(\mW, \pi)} \le \frac{1}{32}] \label{eq:core-8000} \\ 
&\ge 1 - 2d \abs{\mathfrak{V}}\abs{\mathfrak{S}} \exp(-\gamma d^{-2c} \cdot 2^{-16}) \notag \\ 
&\ge 1 - 2d \abs{\mathfrak{V}}\abs{\mathfrak{P}}^n \abs{\mathfrak{N}}^n \exp(-\gamma d^{-2c}  \cdot 2^{-16}) \notag \\ 
&\ge 1 - \exp(\log d + O(d^2 \log(d\lambda^{-1})) + n \cdot \left[ O(d \log \lambda^{-1}) +  O\left(d^2 \log(d\lambda^{-1}d \log K) \right) \right]  - \gamma d^{-2c} \cdot 2^{-16})  \notag \\ 
&\ge 1 - \exp(O(d^3 \log d \log (d \lambda^{-1})) - \gamma d^{-2c}  \cdot 2^{-16}), \notag 
\end{align}
where the last inequality uses that $\log K \le O(d)$.

Finally, we invoke the smoothness results from \autoref{lem:func-mF-smooth}. Note that when the event in \eqref{eq:core-8000} holds, by item (c) of \autoref{lem:func-mF-smooth}, we have 
\begin{align}
    \max_{\mW \in \mathfrak{V}} \max_{\{p_j = q_j / (q_1+\dots + q_n) : q_j \in \mathfrak{P}\}} \sup_{\{\mM_i \in \mathfrak{M}\}}  \norm{\mG(\mW, \pi)} \le  \frac{1}{32} + \frac{1}{20} \cdot 2 \leq \frac{1}{7}. \notag 
\end{align}
Note that the Lipschitz constant of $\mG(\cdot,\cdot)$ is at most double of that of $\mF(\cdot,\cdot)$. By item (b)  of \autoref{lem:func-mF-smooth}, we have 
\begin{align}
    \max_{\mW \in \mathfrak{V}} \sup_{\pi \in \Pi}  \norm{\mG(\mW, \pi)} \le \frac{1}{7} + 2\lambda^{-2} \cdot 4d\log d \cdot \frac{\lambda^2}{40 \cdot 4d\log d} = \frac{1}{5}. \notag 
\end{align}
Finally, by item (a) of \autoref{lem:func-mF-smooth}, we have that
\[
    \sup_{\mW \in \mathfrak{W}} \sup_{\pi \in \Pi}  \norm{\mG(\mW, \pi)} \le \frac{1}{5} + 2 \cdot 3\lambda^{-3} \cdot \frac{\lambda^{3}}{120} = \frac{1}{4}.  \qedhere
\]
\end{proof}

\section{Putting Everything Together: the Optimal Batch Algorithm}

 \label{sec:blinucbdg}

\begin{algorithm}[t]
\caption{\BatchedLinUCBDG}
\label{algo:linucbdg}
$M = \lceil \log \log T \rceil + 1, \alpha \gets 10\sqrt{ \ln \frac{2dKT}{\delta}}, \pi^0 = \gopt$, $\gT = \{\gT_1, \gT_2, \dots, \gT_M\}$, where $\gT_0 = 0$, $\gT_1 = \sqrt{T}$, $\gT_2 = 2\sqrt{T}$,  and $\gT_i = T^{1 - 2^{-(i-1)}}$ for $i \in \{3, \dots, M-1\}, \gT_M = T$\;
\For{$k \gets 1, 2, \dots, M$}{
\For{$t \gets\gT_{k - 1} + 1, \gT_{k - 1} + 2, \dots, \gT_k$}{
$A_{t}^{(0)} \gets [K], \hat r_{ti}^{(0)} \gets 0, \omega_{ti}^{(0)} \gets 1$\;
\For(\Comment*[f]{Eliminate}){$\kappa \gets 1, 2, \dots, k - 1$}{
$\forall i \in A_{t}^{(\kappa-1)} : \hat r_{ti}^{(\kappa)} \gets  \vx_{t i}^\top \hat \vtheta_{\kappa}, \omega_{t i}^{(\kappa)} \gets \alpha \sqrt{\vx_{t i}^\top \mLambda_{\kappa}^{-1} \vx_{t i}}$\;
$ A_{t }^{(\kappa)} \gets \{i \in  A_{t }^{(\kappa-1)} \mid \hat r_{ti}^{(\kappa)} +   \omega_{ti}^{(\kappa)} \ge \hat r_{tj}^{(\kappa)} -  \omega_{tj}^{(\kappa)},  \forall j \in  A_{t }^{(\kappa-1)}\}$\; 
}
$A_t \gets A_{t}^{(k - 1)}$\; 
Select $i_t$ such that $\vx_{t,i_t} \sim \pi_{k-1}(\{\vx_{t,i} : i \in A_t\})$, play arm $i_t$, and receive reward $r_t$\;
    $\vx_t \gets \vx_{t, i_t}$\;
}
Evenly divide $\{\gT_{k-1} + 1, \dots, \gT_k\}$ into two sets $\gA, \gB$\;\label{line:blinucbdg-learn-policy-pre}
 $\lambda \gets 32 \ln (2dT/\delta), \mLambda_{k} \gets \lambda \mI + \sum_{\tau \in \gA} \vx_\tau \vx_\tau^\top$, $\vxi_{k} \gets \sum_{\tau \in \gA} r_\tau \vx_\tau, \hat \vtheta_{k} \gets \mLambda_{k}^{-1} \vxi_k$\;
 \For{$\tau \in \gB$}{
    $\forall i \in A_{\tau}^{(k-1)} : \hat r_{ti}^{(k)} \gets  \vx_{t i}^\top \hat \vtheta_{k}, \omega_{t i}^{(k)} \gets \alpha \sqrt{\vx_{t i}^\top \mLambda_{k}^{-1} \vx_{t i}}$\;
    $ A_{t }^{(k)} \gets \{i \in  A_{t }^{(k-1)} \mid \hat r_{ti}^{(k)} +   \omega_{ti}^{(k)} \ge \hat r_{tj}^{(k)} -  \omega_{tj}^{(k)},  \forall j \in  A_{t }^{(k-1)}\}$\; 

 }
 Use the context sets $S = \{\{\vx_{\tau, a} \mid a \in A_\tau^{(k)}\}\}_{\tau \in \gB}$ and $\lambda = 1/T$ as the input of \autoref{algo:learn-design} and learn the sample policy $\pi_k$\; \label{line:blinucbdg-learn-policy}
}
\end{algorithm}

Our final algorithm with $O(\log \log T)$ static-grid batches and optimal minimax expected regret (up to $\mathrm{poly} \log T$ factors) is presented in \autoref{algo:linucbdg}. Compared with \BatchedLinUCB and \BatchedLinUCBKW, the main difference here is the addition of from \autoref{line:blinucbdg-learn-policy-pre} to \autoref{line:blinucbdg-learn-policy}, which not only learns the new estimate $\hat\vtheta_k$, but also the new sample policy $\pi_k$. Learning of the two objects are done through disjoint sets of samples ($\gA$ and $\gB$). This is because that $\gD_k$ depends on $\hat\vtheta_k$ (which is learned from $\gA$) and we have to make $\gB$ disjoint from $\gA$ so as to ensure elements in $S$ are independently sampled from $\gD_k$.

The following theorem bounds the expected regret of \autoref{algo:linucbdg}. 
\begin{thm} \label{thm:blinucbdg}
Assume that $T \leq \exp(d)$ and $T \geq \Omega(d^{32} \log^4 d \log^2 (\delta^{-1}))$. With probability at least $(1 - \delta)$, the expected regret of \autoref{algo:linucbdg} is bounded as 
\begin{align}
    R^T_{\text{\BatchedLinUCBDG}} \le O(\sqrt{d T \log d \log(dKT/\delta)} \times \log \log T). \notag 
\end{align}
\end{thm}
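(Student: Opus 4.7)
My plan is to execute the batch-elimination template of \autoref{thm:BatchedLinUCBKW} verbatim, only replacing the sample policy $\gopt$ used in batch $k$ with the learned distributional design $\pi_{k-1}$ that \autoref{algo:learn-design} produces at the end of batch $k-1$. I would condition on three families of high-probability events holding simultaneously across all $M$ batches: (i) the confidence intervals of \autoref{lem:conf} hold at every time step, so the optimal arm survives every elimination and the step-$t$ regret in batch $k$ is controlled by $\max_{i\in A_t}\omega_{ti}^{(k-1)}$; (ii) a matrix-Chernoff event in the spirit of \autoref{lem:exploration} holds, yielding $\mLambda_k \succcurlyeq (\gT_k/32)\bigl((\ln T/\gT_k)\mI + \info_{\gD_k}(\pi_{k-1})\bigr)$, which uses that the $|\gA_k| = (\gT_k - \gT_{k-1})/2$ samples feeding $\mLambda_k$ are i.i.d.\ draws from $\gD_k$ pulled through $\pi_{k-1}$; and (iii) each invocation of \autoref{algo:learn-design} succeeds, i.e., $\widetilde{\val}^{(1/T)}_{\gD_{k+1}}(\pi_k) \le O(\sqrt{d\log d})$. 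Event (iii) requires the $\gB_k$ samples to be i.i.d.\ from $\gD_{k+1}$, which is exactly what the $\gA_k/\gB_k$ split at \autoref{line:blinucbdg-learn-policy-pre} provides: once $\hat\vtheta_k$ is fit on $\gA_k$, eliminating with $\hat\vtheta_k$ on the disjoint half $\gB_k$ produces genuine i.i.d.\ draws from $\gD_{k+1}$. The sample-complexity demand of \autoref{thm:learn-design} becomes $|\gB_k| \ge \Omega(d^{16}\log^2 d + d^{12}\log(MT/\delta))$, which the hypothesis $T \ge \Omega(d^{32}\log^4 d \log^2 \delta^{-1})$ satisfies even for the smallest batch $k=1$.

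Chaining (i)--(iii) as in \eqref{eq:thm-blinucb-10}--\eqref{eq:thm-blinucb-40}, and using that $\gD_k$ is a refinement of $\gD_{k-1}$, the batch-$k$ regret for $k \ge 2$ reduces to
\begin{align*}
R_k \le 4\alpha\gT_k \cdot \E_{X\sim\gD_{k-1}}\max_{\vx\in X}\sqrt{\vx^\top\mLambda_{k-1}^{-1}\vx} \le O(\alpha) \cdot \frac{\gT_k}{\sqrt{\gT_{k-1}}} \cdot \widetilde{\val}^{(\ln T/\gT_{k-1})}_{\gD_{k-1}}(\pi_{k-2}),
\end{align*}
where the second inequality plugs the lower bound of (ii) into the square root and pulls out $\sqrt{\gT_{k-1}}$. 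For $k\ge 3$, event (iii) from batch $k-2$ bounds the deviation by $O(\sqrt{d\log d})$; batch $1$ contributes only $\gT_1 = \sqrt T$ trivially; batch $2$ uses $\pi_0 = \gopt$, whose $O(d^2)$ variation from \autoref{lem:variation-unif-optg} together with Jensen gives $\widetilde{\val}(\gopt) \le O(d)$, and the hypothesis $T \ge d^{32}$ absorbs the resulting $O(\alpha d T^{1/4})$ into the target. The doubly exponential grid $\gT_i = T^{1-2^{-(i-1)}}$ is tuned so that $\gT_k/\sqrt{\gT_{k-1}} = \sqrt T$ for $3 \le k \le M-1$; the truncation $\gT_M = T$ with $M = \lceil\log\log T\rceil + 1$ still keeps $\gT_M/\sqrt{\gT_{M-1}} \le 2\sqrt T$. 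Summing the resulting $O(\alpha\sqrt{dT\log d})$ across the $M = O(\log\log T)$ batches and plugging in $\alpha = O(\sqrt{\log(dKT/\delta)})$ gives the claimed regret.

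The main obstacle is the nested conditioning around event (iii). The policy $\pi_k$ depends on $\hat\vtheta_k$ (hence on $\gA_k$), on $\gB_k$, and on all earlier data, so one has to verify both that (a) the distribution $\gD_{k+1}$ with respect to which \autoref{thm:learn-design} is invoked is exactly the law of the post-elimination context set at each time step of batch $k+1$, and (b) conditional on $\hat\vtheta_k$, the $\gB_k$ samples feeding \autoref{algo:learn-design} are i.i.d.\ from this very $\gD_{k+1}$. Both follow from the product structure of the context distribution together with the measurability of $\hat\vtheta_k$ with respect to $\gA_k$, but the accompanying concentration-bookkeeping, combined with the $\mathrm{poly}(d)\log\delta^{-1}$ sample complexity of \autoref{thm:learn-design}, is precisely what forces the polynomial-in-$d$ lower bound on $T$ in the theorem's hypothesis.
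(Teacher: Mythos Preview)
Your proposal is correct and follows the same approach as the paper's proof: condition on the confidence-interval event, the matrix-concentration event for $\mLambda_{k-1}$, and the success of \autoref{thm:learn-design}, then chain them through the batch-elimination template to obtain $R_k \le O(\alpha)\cdot(\gT_k/\sqrt{\gT_{k-1}})\cdot\widetilde{\val}^{(\cdot)}_{\gD_{k-1}}(\pi_{k-2})$ for $k\ge 3$. Your indexing of which distribution $\pi_{k-2}$ is trained on (namely $\gD_{k-1}$, since the $\gB_{k-2}$ sets are eliminated with $\hat\vtheta_1,\dots,\hat\vtheta_{k-2}$) is in fact cleaner than the paper's, which has a few off-by-one slips in the displayed chain.

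The only substantive difference is your treatment of batch $2$. You bound $R_2$ via the chain with $\pi_0=\gopt$ and $\widetilde{\val}(\gopt)\le O(d)$, obtaining $O(\alpha d\,T^{1/4})$, and then argue this is absorbed by the target since $T\ge d^{32}$. The paper instead observes that batch $2$ contains only $\gT_2-\gT_1=\sqrt{T}$ time steps, so $R_1+R_2\le 2\sqrt{T}$ trivially; this is the reason the grid inserts the extra breakpoint $\gT_2=2\sqrt{T}$. Your route is correct but needlessly roundabout here.
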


Note that the assumption that $T \leq \exp(d)$ is not restrictive since otherwise we have $\log T \geq \Omega(d)$ and \BatchedLinUCBKW (\autoref{thm:BatchedLinUCBKW}) already achieves the minimax optimal regret up to $\mathrm{poly}\log T$ factors. We also note that the $K$ in the regret bound can be replaced by $\min\{K, d \log T\}$ by a simple $\varepsilon$-net argument, so that our regret bound becomes minimax-optimal for all $K$ (up to $\mathrm{poly}\log T$ factors).

We finally remark that we make no effort in optimizing the exponent in the constraint that $T \geq d^{O(1)}$. Some simple tricks may significantly reduce this exponent constant. For example, first running a revised version of \BatchedLinUCBKW till time $\sqrt{T/d}$ and then switch to \BatchedLinUCBDG would reduce the exponent to $17$. A more careful analysis in the concentration lemmas in \autoref{sec:learn-design} may further substantially optimize the constant.

We now provide the proof of \autoref{thm:blinucbdg}.
\begin{proof}[Proof of \autoref{thm:blinucbdg}.]
We adopt the notations in \autoref{sec:blinucb}. Conditioned on the batches $1, 2, \dots, k-1$, we can bound the expected regret incurred in batch $k$ similarly as \eqref{eq:thm-blinucb-40}, and have that with probability at least $(1 - \delta \gT_k/T^2)$,
\begin{align}
R_k \leq  4\alpha \gT_k \times \E_{X \sim \gD_{k-1}}\max_{\vx \in X} \sqrt{\vx^\top \mLambda_{k-1}^{-1} \vx}.  \label{eq:blinucbdg-rk}
\end{align}

Furthermore, similar to \autoref{lem:exploration}, we can show that for each batch $k$ ($k < M$), with probability $(1 - \delta / T^2)$, we have that 
\begin{align}
\mLambda_k \succcurlyeq \frac{\gT_{k}}{32} \left(\frac{ \ln T}{ \gT_{k}} \mI + \E_{X \sim \gD_{k-1}} \E_{\vx \sim \pi_{k-1}(X)}[\vx \vx^\top]\right) \succcurlyeq \frac{\gT_k}{32} \left(T^{-1} \cdot \mI + \info_{\gD_{k-1}}(\pi_{k-1})\right). \label{eq:lem-exploration-new}
\end{align}
Note that compared with \eqref{eq:lem-exploration}, \eqref{eq:lem-exploration-new} has a worse constant $32$ since $\gA$ only contains half of the samples.

For each $k < M$, note that at \autoref{line:blinucbdg-learn-policy}, $S = \{\{\vx_{\tau, a} \mid a \in A_\tau^{(k)}\}\}_{\tau \in \gB}$ contains {\it i.i.d.}~samples from $\gD_{k}$, and $|S| \geq |\gT_k - \gT_{k-1}| /2 \geq \sqrt{T} / 4$. By \autoref{thm:learn-design}, we have that with probability $(1 - \exp(O(d^4 \log^2 d) - \sqrt{T} d^{-12} \cdot 2^{-18}) \geq 1 - \delta/T^2$ (since $T \geq \Omega(d^{32} \log^4 d \log^2 (\delta^{-1}))$, it holds that
\begin{align}
\widetilde{\val}_{\gD_k}^{(1/T)}(\pi_k) \leq O(\sqrt{d\log d}). \label{eq:blinucbdg-val}
\end{align}

The expected regret incurred during batch $1$ and batch $2$ is at most $2\sqrt{T}$. For any $k\geq 3$, assuming \eqref{eq:blinucbdg-rk} holds for batch $k$, and  \eqref{eq:lem-exploration-new} and \eqref{eq:blinucbdg-val} hold for batch $(k-1)$, we have that
\begin{align}
R_k &\le 4\alpha \gT_k  \E_{X \sim \gD_{k-1}}\max_{\vx \in X} \sqrt{\vx^\top \mLambda_{k-1}^{-1} \vx}
\leq \frac{4\sqrt{32}\alpha \gT_k  }{\sqrt{\gT_{k-1}}}  \E_{X \sim \gD_{k-1}}\max_{\vx \in X} \sqrt{\vx^\top \left(T^{-1}  \mI + \info_{\gD_{k-2}}(\pi_{k-2})\right)^{-1} \vx} \nonumber  \\
&\leq 32 \alpha \sqrt{T} \cdot \E_{X \sim \gD_{k-2}}\max_{\vx \in X} \sqrt{\vx^\top \left(T^{-1}  \mI + \info_{\gD_{k-2}}(\pi_{k-2})\right)^{-1} \vx} \label{eq:blinucbdg-10} \\
&\leq 32 \alpha \sqrt{T} \cdot \widetilde{\val}^{(1/T)}_{\gD_{k-1}}(\pi_{k-1}) \leq O(\sqrt{dT \log d \log (dKT/\delta)}), \notag 
\end{align}
where \eqref{eq:blinucbdg-10} is because that $X\sim \gD_{k-1}$ can be sampled via first drawing $X' \sim \gD_{k-2}$, then performing one-step elimination on $X'$, and getting $X \subseteq X'$.

Finally, collecting the failure probabilities for all $O(\log \log T)$ batches, we prove the desired regret bound.
\end{proof}

\section{Rarely Switching Algorithm for Adversarial Contexts and $\ln K \leq o(d)$} \label{sec:rarely-switch-suplinucb}

\citet{abbasi2011improved} showed an algorithm for adversarial contexts that achieves $d\sqrt{T} \times \mathrm{poly}\log T$ regret for any $K$. The authors also propose a special doubling trick that only updates the policy when the determinant of the corresponding information matrix (i.e., $\lambda \mI + \sum_{t} \vx_{t, i_t} \vx_{t, i_t}^\top$) doubles. Using this trick, their algorithm only uses $O(d \log T)$ policy switches, while still achieving the same order of regret. However, when $\log K \ll d$, there is a gap between the regret of their algorithm and the target minimax-optimal regret $\sqrt{T \min\{d, \log K\}} \times \mathrm{poly}\log T$. In this section, we propose an algorithm to close this gap, while still maintaining a small number of policy switches.

\paragraph{The Natural Approach and its Limitation.} The most natural approach is to apply the determinant-based doubling trick to the minimax-optimal algorithms for fewer number of arms, such as {\sc SupLinUCB} \citep{chu2011contextual} and {\sc SupLinRel} \citep{auer2002using}. However, a direct implementation of such an approach would lead to $O(d \log^2 T)$ policy switches. The reason is that, to replace an $\sqrt{d}$ factor by the $\sqrt{\log K}$ factor in the algorithm by \citet{abbasi2011improved}, the state-of-the-art concentration inequalities have to crucially rely on the statistical independence between the noises and the context vectors of the played arms, which is not true in the plain {\sc LinUCB} algorithm (and the {\sc OFUL} algorithm in \citep{abbasi2011improved}). In contrast, the concentration inequality used in \citep{abbasi2011improved} does not require such strong independence, but loses a $\sqrt{d}$ factor when $K$ is small.

To ensure the independence, \citet{auer2002finite} and \citet{chu2011contextual} came up with a more sophisticated layering trick, where each time step is assigned to one of the layers. The layers form a hierarchy and the observations from (the time steps in) each layer give more and more accurate estimates for the mean rewards, as the level of the layer increases. Meanwhile, it is possible to ensure the independence between the observations and the context vectors within each layer, so that the more accurate concentration inequality (e.g., \autoref{lem:conf} in this paper) can be applied. There are $\Theta(\log T)$ layers in {\sc SupLinUCB} and {\sc SupLinRel}, where each layer maintains a separate information matrix for the estimation. Therefore, if we  directly apply the determinant-based doubling trick, there will be $O(d \log T)$ updates in each layer, leading to $O(d \log^2 T)$ policy updates in total.

\paragraph{Our Approach.} Our approach is a simple combination of both types of algorithms mentioned above. Note that the estimation accuracy of the layers in {\sc SupLinUCB} and {\sc SupLinRel}  starts from $\Omega(1)$ for the first layer, and halves as the level of the layer increases. Therefore, it takes $\Theta(\log T)$ levels to reach the sufficient accuracy of $\sqrt{d/T}$. We also note that by the detailed analysis, if the accuracy provided by a layer is $\varpi$, the regret incurred by the layer can be roughly bounded by $d/\varpi$ (up to poly-logarithmic factors).

To reduce the number of layers, in our algorithm, we introduce a special layer, namely \emph{layer $0$}, which helps to bootstrap the accuracy parameters. More precisely, at layer $0$, we use the concentration inequality by \citet{abbasi2011improved} (\autoref{lem:abbasi-ball}). Since such an inequality is not as efficient as \autoref{lem:analysis-lr}, the regret incurred by layer $0$ can only be bounded by $d^2/\varpi_0$ (up to poly-logarithmic factors), where $\varpi_0$ is the accuracy parameter for layer $0$. However, since the inequality does not rely on the strong independence assumption, instead of starting from the $\Omega(1)$ accuracy, we may directly set $\varpi_0 = d^{1.5}/\sqrt{T}$, a much smaller value, while the incurred regret is still as desired. From layer $1$, we go back to the normal layer settings as {\sc SupLinUCB} and {\sc SupLinRel}, and set $\varpi_{\kappa} = \varpi_{\kappa}/2$ for $\kappa = 1, 2,  \ldots$. Since the target accuracy is $d/\sqrt{T}$, we now only need $\kappa_0 = O(\mathrm{log}(\varpi_0 / (d/\sqrt T))) = O(\log d)$ layers to achieve the minimax-optimal regret. Together with the determinant-based doubling trick, our algorithm uses only $O(d \log d \log T)$ policy switches.

Our \SupLinUCB algorithm is formally presented in \autoref{algo:suplinucb}. Note that the key difference from {\sc SupLinUCB} is at \autoref{loc:suplinucb-1} and \autoref{loc:suplinucb-3}, where $\alpha_0$ and $\varpi_0$ are specially set. Also, at \autoref{loc:layerzero}, a special elimination rule for layer $0$ is implemented, which is different from the elimination rules for the rest of the layers at \autoref{loc:layernext}. We next formally analyze the algorithm.

\begin{algorithm}[t]
\caption{\SupLinUCB}
\label{algo:suplinucb}
$\kappa_0 \gets \lceil \log d  \rceil, \alpha_0 \gets 2 \sqrt{d \ln(2T / \delta)}, \forall \kappa \in \{1, 2, \dots, \kappa_0\}: \alpha_\kappa \gets 10\sqrt{ \ln(2dKT /\delta)}$\;\label{loc:suplinucb-1}
$\forall \kappa \in \{0, 1, 2, \dots, \kappa_0\}: \mLambda_{\kappa 0} \gets  \mI, \vxi_{\kappa 0} \gets \vzero, \zeta_\kappa \gets 0$\;
$ \varpi_0 \gets d^{1.5} / \sqrt{T}, \forall \kappa \in \{1, 2, \dots, \kappa_0\}: \varpi_\kappa \gets \varpi_{\kappa - 1} / 2$\;\label{loc:suplinucb-3}
\For{$t \gets 1, 2, \dots, T$}{
\For{$\kappa \gets 0, 1, \dots, \kappa_0$}{
$\hat{\vtheta}_\kappa  \gets \mLambda_{\kappa, \zeta_\kappa}^{-1} \vxi_{\kappa, \zeta_\kappa}, \forall i \in [K]: \hat{r}_{ti}^{(\kappa )} \gets \vx_{ti}^\top \hat{\vtheta}_\kappa, \omega_{ti}^{(\kappa )} \gets \alpha_\kappa  \sqrt{\vx_{ti}^\top \mLambda_{\kappa, \zeta_\kappa}^{-1} \vx_{ti}}$\;
}
$A_t^{(0)} \gets \{i \in [K] \mid \hat r_{ti}^{(0)} +   \omega_{ti}^{(0)} \ge \hat r_{tj}^{(0)} -  \omega_{tj}^{(0)},  \forall j \in [K]\}$\; \label{loc:layerzero}
\For{$\kappa \gets 0, 1, \dots, \kappa_0$}{
\If{$\kappa = \kappa_0$}{
 select any $i_t \in A_{t}^{(\kappa)}$ and set $\kappa_t \gets \kappa$, break\;
 }\ElseIf{$\omega_{ti}^{(\kappa)} \le \varpi_\kappa$ for all $i \in A_t^{(\kappa)}$ \label{loc:layercond}}{
 $A_t^{(\kappa + 1)} \gets \{ i\in A_t^{(\kappa)} \mid \hat r_{ti}^{(\kappa)} \ge \max_{j \in A_{t}^{(\kappa)}} \hat r_{tj}^{(\kappa)} - 2\varpi_\kappa\}$\;  \label{loc:layernext}
 }\Else{
  select $i_t \gets \argmax_{i \in A_t^{(\kappa)}} \omega_{t, i}^{(0)}$ and set $\kappa_t \gets \kappa$, break\; \label{loc:suplinucb-kappa-eq-0}
 }
}
play arm $i_t$ and receive reward $r_t$\;
$\mLambda_{\kappa t} \gets \mLambda_{\kappa, t-1} + \vx_{t, i_t} \vx_{t, i_t}^\top, \vxi_{\kappa t} \gets \vxi_{\kappa, t-1} + r_t \vx_{t, i_t}$ for $\kappa = \kappa_t$ and $\mLambda_{\kappa t} \gets \mLambda_{\kappa, t-1}, \vxi_{\kappa t} \gets \vxi_{\kappa, t-1}$ for all $\kappa \ne \kappa_t$\; \label{loc:beforeupd}
\textbf{forall} $\kappa \in \{0, 1, 2, \dots, \kappa_0\}$: \lIf{$\det \mLambda_{\kappa t} \ge C \det \mLambda_{\kappa, \zeta_\kappa}$}{$\zeta_\kappa \gets t$} \label{loc:suplinucb-upd}
}
\end{algorithm}

\begin{thm} \label{thm:suplinucb} For any $C \ge 2$, the number of policy switches made by \autoref{algo:suplinucb} is at most  $O(d \log d \log T / \log C)$; with probability $(1-2\delta)$, the expected regret of the algorithm is at most  ${O}(C \sqrt{d T} \log d \log T \log(d K T / \delta))$. 
\end{thm}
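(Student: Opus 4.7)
The plan is to prove the two claims separately, but both rely on the same layered structure and the same determinant-based doubling trick applied independently on each of the $\kappa_0 + 1 = O(\log d)$ layers. Throughout, I would use an auxiliary lemma (an adaptation of \autoref{lem:comp}) saying that whenever $\det \mLambda_t \leq C \det \mLambda_{\zeta}$, we have $\vx^\top \mLambda_\zeta^{-1} \vx \leq C \cdot \vx^\top \mLambda_t^{-1} \vx$, so the delayed widths $\omega_{ti}^{(\kappa)}$ are only a $\sqrt{C}$-factor larger than the ``up-to-date'' widths one would get without the doubling trick.

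For the policy-switch bound, I would observe that a policy switch can only occur when some $\zeta_\kappa$ changes (\autoref{loc:suplinucb-upd}). Since layer $\kappa$'s information matrix is updated only at steps with $\kappa_t = \kappa$, and each update that changes $\zeta_\kappa$ multiplies $\det \mLambda_{\kappa,\cdot}$ by at least $C$, the standard potential bound gives at most $\log_C \det \mLambda_{\kappa,T}/\det \mLambda_{\kappa,0} = O(d \log T/\log C)$ switches in layer $\kappa$. Summing across the $O(\log d)$ layers yields $O(d \log d \log T/\log C)$.

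For the regret bound, I would first establish two high-probability concentration events (each with probability $\geq 1-\delta$): (i) the self-normalized Abbasi-Yadkori ball (e.g., \autoref{lem:abbasi-ball}) for layer $0$, giving $|\vx^\top \hat\vtheta_0 - \vx^\top \vtheta| \leq \omega_{ti}^{(0)}$ with coefficient $\alpha_0 = 2\sqrt{d\ln(2T/\delta)}$, and (ii) for each layer $\kappa \geq 1$, the tighter per-arm concentration (\autoref{lem:conf}) with coefficient $\alpha_\kappa = O(\sqrt{\ln(dKT/\delta)})$, valid because the layering trick à la \cite{auer2002finite,chu2011contextual} preserves independence between the noise at times assigned to layer $\kappa$ and the context vectors stored there. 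Under these events, a standard induction on $\kappa$ shows that the optimal arm $i^\star_t$ is never eliminated at any layer. Next I would partition $[T] = \Psi_0 \cup \Psi_1 \cup \cdots \cup \Psi_{\kappa_0}$ by the layer $\kappa_t$ at which $i_t$ is chosen and bound the per-step regret: for $\kappa_t = 0$, the elimination at layer $0$ plus the maximum-width rule gives regret $\leq 4 \omega_{t,i_t}^{(0)}$; for $1 \leq \kappa_t \leq \kappa_0$, walking through the successful eliminations at layers $0,1,\dots,\kappa_t-1$ yields regret $\leq 8\varpi_{\kappa_t - 1}$ (with the convention $\varpi_{-1}=O(1)$ absorbed elsewhere).

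To finish, I would bound $|\Psi_\kappa|$ via the delayed elliptical potential lemma (\autoref{lem:ellip}): for $\kappa\geq 1$, the selected $i_t$ has $\omega_{t,i_t}^{(\kappa)}>\varpi_\kappa$, so $|\Psi_\kappa|\varpi_\kappa^2 \leq \sum_{t\in\Psi_\kappa}(\omega_{t,i_t}^{(\kappa)})^2 \leq \alpha_\kappa^2 \cdot C \cdot O(d\log T)$, giving regret contribution $O(C d \log T \log(dKT/\delta)/\varpi_\kappa) \leq O(C\sqrt{dT}\log T \log(dKT/\delta))$ since $\varpi_\kappa \geq \varpi_{\kappa_0} = \sqrt{d/T}$; for $\kappa=0$, the same argument with $\alpha_0^2=O(d\log(T/\delta))$ and threshold $\varpi_0=d^{1.5}/\sqrt{T}$ gives $\sum_{t\in\Psi_0}\omega_{t,i_t}^{(0)} \leq \sum_{t\in\Psi_0}(\omega_{t,i_t}^{(0)})^2/\varpi_0 = O(C\sqrt{dT}\log^2 T)$; for $\kappa=\kappa_0$, the trivial $|\Psi_{\kappa_0}|\leq T$ paired with $\varpi_{\kappa_0}=\sqrt{d/T}$ gives $O(\sqrt{dT})$. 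Summing the $O(\log d)$ layers produces $O(C\sqrt{dT}\log d \log T \log(dKT/\delta))$. The main obstacle I anticipate is the bootstrap at layer $0$: layer $0$ uses a \emph{looser} (self-normalized) concentration that loses a $\sqrt{d}$, so the choice $\varpi_0 = d^{1.5}/\sqrt{T}$ must be delicately set so that the $d^2/\varpi_0$ regret of layer $0$ still matches $\sqrt{dT}$, while the geometric halving $\varpi_\kappa = \varpi_{\kappa-1}/2$ reaches the target $\sqrt{d/T}$ in only $\lceil\log d\rceil$ steps rather than $\Theta(\log T)$; verifying that this bootstrap does not break the independence needed for the tight concentration in layers $\kappa\geq 1$ (since only layer $\kappa_t$ is updated at each time, à la the layering scheme) is the other careful point.
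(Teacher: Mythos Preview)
Your proposal is correct and follows essentially the same approach as the paper: the determinant-doubling count per layer for the switch bound; the two confidence events (self-normalized for layer~$0$, per-arm for layers $\kappa\ge 1$) combined via union bound; the non-elimination of the optimal arm; the partition of $[T]$ by $\kappa_t$; and the layer-size bounds via the elliptical potential lemma applied to each layer's own information matrix. The only noticeable difference is your handling of $R_0$: the paper first bounds $|\Psi_{T,0}|$ and then applies Cauchy--Schwarz to $\sum_{t\in\Psi_0}\omega_{t,i_t}^{(0)}$, whereas you use $\omega_{t,i_t}^{(0)}\le(\omega_{t,i_t}^{(0)})^2/\varpi_0$ directly; both routes give the same $O(C\sqrt{dT}\log T\log(T/\delta))$ contribution, and your version is slightly more direct.
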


\begin{proof}
We first upper bound the number of policy switches. Note that for each $\kappa$, we have that $\ln \det \mLambda_{\kappa, T} \le O(d \log T)$,  and $\ln \det \mLambda_{\kappa, 0} = 0$; therefore, $\zeta_\kappa$ is updated at \autoref{loc:suplinucb-upd} by at most $O(d \log T / \log C)$ times. Since the learning policy is completely decided by $\{\mLambda_{\kappa, \zeta_\kappa}, \vxi_{\kappa, \zeta_\kappa}\}_{\kappa=0}^{\kappa_0}$, we conclude that the policy changes by at most $(\kappa_0 + 1) \cdot O(d \log T / \log C) = O(d \log d \log T /\log C)$ times.

We next prove the regret of \autoref{algo:suplinucb}. Note that when the event specified in \autoref{lem:abbasi-ball} holds (which happens with probability at least $1 - \delta$), for any $i \in [K]$ and any time step $t \in [T]$, we have 
\begin{align*}
    \abs{\vx_{ti}^\top \hat{\vtheta}_0 - \vx_{ti}^\top \vtheta} \le \omega_{ti}^{(0)},
\end{align*}
where we use \eqref{eq:abbasi-event-2} and that $\lambda = 1$ in \autoref{algo:suplinucb}.

We define $\Psi_{t,\kappa} \defeq \{\tau \leq t \mid \kappa_\tau = \kappa\}$ to be the set of the time steps assigned to layer $\kappa$ at or before time step $t$. Similar to Lemma 14 in \citep{auer2002using} and Lemma 4 in \citep{chu2011contextual}, we claim that for each $\kappa \geq 1$ and each time $t$, conditioned on any fixed $\Psi_{t-1, \kappa}$, the corresponding noises $\{r_{\tau} - \vx_{\tau, i_\tau}^\top \vtheta \mid \tau \in \Psi_{t - 1, \kappa}\}$ are independent sub-Gaussian random variables with variance proxy $1$. This is because for $\kappa \geq 1$, $\Psi_{t - 1, \kappa}$ only depends on $\{\omega_{\tau,i}^{(\kappa')}\mid \tau < t, \kappa' \leq \kappa, i \in [K]\}$ and $\{\hat{r}_{\tau,i}^{(\kappa')}\mid \tau < t, \kappa' < \kappa, i \in [K]\}$. While $\{\omega_{\tau,i}^{(\kappa')}\mid \tau < t, \kappa' \leq \kappa, i \in [K]\}$ only depends on the context vectors which are independent from the noises, $\{\hat{r}_{\tau,i}^{(\kappa')}\mid \tau < t, \kappa' < \kappa, i \in [K]\}$ depends on the context vectors and the noises generated from time steps in $\Psi_{t-1, 0} \cup \dots \cup \Psi_{t-1,\kappa-1}$, which is disjoint from $\Psi_{t-1, \kappa}$. Thus, the procedure for generating $\Psi_{t-1,\kappa}$ does not use the noises in the time steps in $\Psi_{t-1,\kappa}$, and therefore the noises are independent sub-Gaussian random variables even when conditioned on $\Psi_{t-1,\kappa}$. Given this statistical independence property, by \autoref{lem:conf}, we have that with probability at least $1 - \delta$, for any $i \in [K]$, any time step $t \in [T]$ and any $\kappa \in \{1, 2, \dots, \kappa_0\}$, it holds that 
\begin{align*}
    \abs{\vx_{ti}^\top \hat{\vtheta}_\kappa - \vx_{ti}^\top \vtheta} \le \omega_{ti}^{(\kappa)}.
\end{align*}
Now, summarizing the discussion above, we define the desired event  
\begin{align*}
    E \defeq \{\forall i \in [K], t \in [T], \kappa \in \{0\} \cup [\kappa_0]:  \abs{\vx_{ti}^\top \hat{\vtheta}_\kappa - \vx_{ti}^\top \vtheta} \le \omega_{ti}^{(\kappa)}\}, 
\end{align*} 
and have that $\Pr[E] \ge 1 - 2 \delta$. Below we will upper bound the expected regret incurred by the algorithm when conditioned on $E$.


For each layer $\kappa \in \{0, 1, 2, \dots, \kappa_0\}$, we define the regret incurred during time steps that are assigned to layer $\kappa$ as
\begin{align*}
    R_\kappa \defeq \sum_{t =1}^T  \ind[\kappa_t = \kappa] \cdot (\max_{i \in [K]} \vx_{ti}^\top \vtheta - \vx_{t, i_t}^\top \vtheta),
\end{align*}
Since each time step will be assigned to exactly one layer $\kappa \in \{0, 1, \dots, \kappa_0\}$, the total regret is
\begin{align}
    R^T  = \sum_{\kappa = 0}^{\kappa_0} \E[R_\kappa]. \label{eq:suplinucb-reg-500}
\end{align}

We will use the following lemmas.
\begin{lem} \label{lem:layersize} We have the following bounds for the size of each layer,
\begin{align*}
    \abs{\Psi_{T,\kappa}} \le \begin{cases}
    8 C (T / d) \ln T \ln(2 T / \delta), & \kappa = 0, \\
      200 C \cdot 4^{\kappa} (T / d^2)  \ln T \ln(2 d K T / \delta), & 1 \le \kappa \le \kappa_0 - 1, \\ 
    T, & \kappa = \kappa_0. 
    \end{cases}
\end{align*}
\end{lem}
\begin{lem} \label{lem:suplinucb-layer-regret} When the event $E$ happens, for any $t \in [T], \kappa \in \{1, 2, \dots, \kappa_0\}$, it holds that
\begin{align*}
     \ind[\kappa_t = 0] \cdot (\max_{i \in [K]} \vx_{ti}^\top \vtheta - \vx_{t, i_t}^\top \vtheta)&\le 4 \omega_{t, i_t}^{(0)}, \\ 
    \ind[\kappa_t = \kappa] \cdot (\max_{i \in [K]} \vx_{ti}^\top \vtheta - \vx_{t, i_t}^\top \vtheta) &\le 8 \varpi_{\kappa}.
\end{align*}
\end{lem}

Now, in light of \eqref{eq:suplinucb-reg-500}, we upper bound each $R_\kappa$ (conditioned on $E$). For $R_0$, we have  that
\begin{align}
    R_0 &= \sum_{t  \in \Psi_{T,0}} (\max_{i \in [K]} \vx_{ti}^\top \vtheta - \vx_{t, i_t}^\top \vtheta)  \le \sum_{t  \in \Psi_{T,0}}  4 \omega^{(0)}_{t, i_t}  = \sum_{t  \in \Psi_{T,0}} 4 \alpha_0 \sqrt{\vx_{t, i_t}^\top \mLambda_{0, \zeta_0}^{-1} \vx_{t, i_t}} \label{eq:suplinucb-reg-1000} \\
    & \le \sum_{t  \in \Psi_{T,0}} 4 \alpha_0 \sqrt{C \vx_{t, i_t}^\top \mLambda_{0, t - 1}^{-1} \vx_{t, i_t}} \le  4 \alpha_0 \sqrt{C \abs{\Psi_{T,0}} \sum_{t  \in \Psi_{T,0}}  \vx_{t, i_t}^\top \mLambda_{0, t - 1}^{-1} \vx_{t, i_t}} \label{eq:suplinucb-reg-1100} \\ 
    &\le 4 \alpha_0 \sqrt{2 C d \abs{\Psi_{T,0}} \ln T} \le 16 C \sqrt{dT} \ln T \ln(2 T / \delta), \label{eq:suplinucb-reg-1200}
\end{align}
where the inequality in \eqref{eq:suplinucb-reg-1000} uses \autoref{lem:suplinucb-layer-regret}, the first inequality in \eqref{eq:suplinucb-reg-1100} is due to \autoref{lem:comp} and the update rule at \autoref{loc:suplinucb-upd}, the second inequality in \eqref{eq:suplinucb-reg-1100} uses Cauchy-Schwarz, the first inequality in \eqref{eq:suplinucb-reg-1200} uses the elliptical potential lemma (\autoref{lem:ellip}), and the second inequality in \eqref{eq:suplinucb-reg-1200} uses \autoref{lem:layersize}.

For $\kappa \in \{1, 2, \dots \kappa_0 - 1\}$, we have that
\begin{align}
    R_\kappa &= \sum_{t  \in \Psi_{T,\kappa}}  (\max_{i \in [K]} \vx_{ti}^\top \vtheta - \vx_{t, i_t}^\top \vtheta)  \leq \sum_{t  \in \Psi_{T,\kappa}}  8 \varpi_{\kappa}  = 8 \varpi_{\kappa} \abs{\Psi_{T,\kappa}} \label{eq:suplinucb-reg-2100} \\ 
    &\le 1600 C\cdot 2^{\kappa} \cdot \frac{d^{1.5}}{\sqrt T} \cdot \frac{T}{d^2} \ln T \ln(2 d KT / \delta) \le 3200 C \sqrt{dT} \ln T \ln(2 d KT / \delta),   \label{eq:suplinucb-reg-2200}
\end{align}
where the inequality in \eqref{eq:suplinucb-reg-2100} uses \autoref{lem:suplinucb-layer-regret}, the  first inequality in \eqref{eq:suplinucb-reg-2200} uses \autoref{lem:layersize} and that $\varpi_{\kappa} = 2^{-\kappa} d^{1.5} / \sqrt T$. 

For $k = \kappa_0$, we have that
\begin{align}
    R_{\kappa_0} =\sum_{t  \in \Psi_{T,\kappa_0}}  (\max_{i \in [K]} \vx_{ti}^\top \vtheta - \vx_{t, i_t}^\top \vtheta) \le 8 \varpi_{\kappa_0} \abs{\Psi_{\kappa_0}} \le 8 \varpi_{\kappa_0} T \le 8 \times \frac{d^{0.5}}{\sqrt T} \cdot T = 8 \sqrt{dT}, \label{eq:suplinucb-reg-3000}
\end{align}
where the first inequality uses \autoref{lem:suplinucb-layer-regret}, the second inequality uses \autoref{lem:layersize}, the third inequality uses that  $\varpi_{\kappa_0} = 2^{-\kappa_0} d^{1.5} / \sqrt T$ and that $2^{\kappa_0} \ge d$. 

We prove the theorem by plugging \eqref{eq:suplinucb-reg-1200}, \eqref{eq:suplinucb-reg-2200}, and \eqref{eq:suplinucb-reg-3000} back to \eqref{eq:suplinucb-reg-500}.
\end{proof}

\subsection{Proof of \autoref{lem:layersize}}
 The third bound $\abs{\Psi_{T,\kappa_0}} \le T$ is self-evident, so we only prove the first two bounds. 
By the elliptical potential lemma (\autoref{lem:ellip}), for every $\kappa \in \{0,1,2,\dots, \kappa_0\}$, we have that
\begin{align*}
\sum_{t \in \Psi_{T,\kappa}} \vx_{t, i_t}^\top \mLambda_{\kappa, t - 1}^{-1} \vx_{t, i_t}  \le 2 d \ln T, 
\end{align*}
which, together with the Cauchy-Schwarz inequality, implies that
\begin{align*}
\sum_{t \in \Psi_{T, \kappa}} \sqrt{\vx_{t, i_t}^\top \mLambda_{\kappa, t - 1}^{-1} \vx_{t, i_t}}  \le \sqrt{2 d \abs{\Psi_{T,\kappa}} \ln T}.  
\end{align*}

In the following, we use $\zeta_{\kappa,t}$ to denote the value of $\zeta_\kappa$ at \autoref{loc:beforeupd} of \autoref{algo:suplinucb} during time step $t$. Note that we have $\zeta_{\kappa,t} \le t - 1$ for every $\kappa$.

By our update rule (\autoref{loc:suplinucb-upd}), we have that $\det \mLambda_{\kappa,t - 1} \le C\det \mLambda_{\kappa,\zeta_{\kappa,t}} $ for every $t \in [T]$. Therefore, for each $t \in [T]$ and $\kappa \in \{0,1, \dots \kappa_0 - 1\}$ such that $\kappa_t = \kappa$, together with \autoref{lem:comp}, we have that
\begin{align*}
    \sqrt{C \vx_{t, i_t}^\top \mLambda_{\kappa, t - 1}^{-1} \vx_{t, i_t}} \ge \sqrt{\vx_{t, i_t}^\top \mLambda_{\kappa, \zeta_{\kappa,t}}^{-1} \vx_{t, i_t}} \ge \frac{\varpi_\kappa}{\alpha_\kappa}, 
\end{align*}
where the last inequality is by \autoref{loc:suplinucb-kappa-eq-0}. Therefore, for each $\kappa \in \{0,1, \dots \kappa_0 - 1\}$, we have that
\begin{align*}
    \sqrt{2 C d \abs{\Psi_{T, \kappa}} \ln T} \ge \abs{\Psi_{T,\kappa}} \cdot \frac{\varpi_\kappa}{\alpha_\kappa},
\end{align*}
which implies that (for $\kappa \in \{1, 2, \dots, \kappa_0 -1\}$)
\begin{align*}
    \abs{\Psi_{T,\kappa}} \le  (\frac{\alpha_{\kappa} \sqrt{2 C d \ln T}}{\varpi_\kappa})^2 \le \frac{200 C d \ln T \ln (2 d K T / \delta)}{4^{-\kappa} d^3 / T } =200 C \cdot 4^{\kappa} (T / d^2)  \ln T \ln(2 d K T / \delta), 
\end{align*}
and
\begin{align*}
    \abs{\Psi_{T,0}} \le (\frac{\alpha_0 \sqrt{2 C d \ln T}}{\varpi_0})^2 \le \frac{8 C d^2 \ln T \ln(2 T / \delta)}{d^3 / T } = 8 C (T / d) \ln T \ln(2 T / \delta). 
\end{align*}

\subsection{Proof of \autoref{lem:suplinucb-layer-regret}}

\autoref{lem:suplinucb-layer-regret} is a direct corollary of the following two lemmas, which we prove separately in this subsection. 

\begin{lem} \label{lem:layer} When the event $E$ happens, for any $t \in [T], \kappa \in \{1, 2, \dots, \kappa_0\}$, it holds that
\begin{align*}
    \ind[\kappa_t = 0] \cdot (\max_{i \in A_t^{(0)}} \vx_{ti}^\top \vtheta - \vx_{t, i_t}^\top \vtheta) &\le 4 \omega_{t, i_t}^{(0)}, \\ 
     \ind[\kappa_t = \kappa] \cdot (\max_{i \in A_t^{(\kappa)}} \vx_{ti}^\top \vtheta - \vx_{t, i_t}^\top \vtheta) &\le 8 \varpi_{\kappa}.
\end{align*}
\end{lem}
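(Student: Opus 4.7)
The plan is to handle the two claimed inequalities separately, based on a case analysis of the exit point of the inner loop that produces $\kappa_t$. In both cases the argument is a routine application of the ``confidence interval plus elimination'' pattern, only using the event $E$ together with the rule that defines $A_t^{(\kappa)}$.

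First I would dispose of the case $\kappa_t = 0$. This value of $\kappa_t$ is set only when the inner loop enters Line~\ref{loc:suplinucb-kappa-eq-0} at $\kappa=0$, so $i_t=\argmax_{i\in A_t^{(0)}}\omega_{t,i}^{(0)}$, and in particular $\omega_{t,i_t}^{(0)}\ge \omega_{t,i}^{(0)}$ for every $i\in A_t^{(0)}$. Since $A_t^{(0)}$ was built at Line~\ref{loc:layerzero} by the UCB-style rule, every surviving $i\in A_t^{(0)}$ satisfies $\hat r_{ti}^{(0)}+\omega_{ti}^{(0)}\ge \hat r_{tj}^{(0)}-\omega_{tj}^{(0)}$ for every $j\in[K]$; instantiating at $j=i_t$ yields $\hat r_{ti}^{(0)}-\hat r_{t,i_t}^{(0)}\le \omega_{ti}^{(0)}+\omega_{t,i_t}^{(0)}$. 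Combining this with the two bounds $|\vx_{ti}^\top\vtheta-\hat r_{ti}^{(0)}|\le \omega_{ti}^{(0)}$ and $|\vx_{t,i_t}^\top\vtheta-\hat r_{t,i_t}^{(0)}|\le \omega_{t,i_t}^{(0)}$ provided by $E$, I would add the three inequalities to obtain $\vx_{ti}^\top\vtheta-\vx_{t,i_t}^\top\vtheta \le 2\omega_{ti}^{(0)}+2\omega_{t,i_t}^{(0)}$, and then use the argmax choice of $i_t$ to collapse this to $4\omega_{t,i_t}^{(0)}$; taking the maximum over $i\in A_t^{(0)}$ finishes this case.

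For the case $\kappa_t=\kappa\ge 1$, the key structural observation is that the inner loop reached level $\kappa$, so at level $\kappa-1$ it must have entered the ``ElseIf'' branch at Line~\ref{loc:layercond}. Two facts follow immediately: (i) the width condition $\omega_{t,j}^{(\kappa-1)}\le \varpi_{\kappa-1}$ holds for every $j\in A_t^{(\kappa-1)}$, and (ii) the set $A_t^{(\kappa)}$ is obtained by keeping only those $j\in A_t^{(\kappa-1)}$ whose $(\kappa-1)$-st estimate is within $2\varpi_{\kappa-1}$ of $\max_{j'\in A_t^{(\kappa-1)}}\hat r_{t,j'}^{(\kappa-1)}$. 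Now fix any $i\in A_t^{(\kappa)}$; since both $i$ and $i_t$ belong to $A_t^{(\kappa)}\subseteq A_t^{(\kappa-1)}$, fact (i) and event $E$ give $|\vx_{ti}^\top\vtheta-\hat r_{ti}^{(\kappa-1)}|\le \varpi_{\kappa-1}$ and likewise for $i_t$, while fact (ii) gives $\hat r_{ti}^{(\kappa-1)}-\hat r_{t,i_t}^{(\kappa-1)}\le 2\varpi_{\kappa-1}$. Summing produces $\vx_{ti}^\top\vtheta-\vx_{t,i_t}^\top\vtheta\le 4\varpi_{\kappa-1}=8\varpi_\kappa$ by the doubling rule $\varpi_\kappa=\varpi_{\kappa-1}/2$, and taking the maximum over $i\in A_t^{(\kappa)}$ yields the claim.

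I do not anticipate any real obstacle in this argument, since the two claims are just the standard SupLinUCB-style accounting for the two elimination rules in use. The only book-keeping point worth being careful about is that layer $0$ uses the ``$\omega$-based'' UCB elimination (absolute confidence widths $\omega_{t,i}^{(0)}$ in the bound), while layers $\kappa\ge 1$ use the ``within $2\varpi$'' elimination (uniform width $\varpi_{\kappa-1}$ in the bound); the two bounds in the lemma reflect exactly this asymmetry, and the doubling $\varpi_\kappa=\varpi_{\kappa-1}/2$ is what converts the ``$4\varpi_{\kappa-1}$'' obtained from the triangle inequality into the advertised ``$8\varpi_\kappa$''.
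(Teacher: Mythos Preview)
Your approach is essentially the same as the paper's and the argument for the $\kappa\ge 1$ case is correct and in fact slightly cleaner than the paper's write-up. There is, however, a small slip in the $\kappa_t=0$ case: instantiating the rule at $j=i_t$ for a surviving $i$ gives $\hat r_{ti}^{(0)}+\omega_{ti}^{(0)}\ge \hat r_{t,i_t}^{(0)}-\omega_{t,i_t}^{(0)}$, which is a \emph{lower} bound on $\hat r_{ti}^{(0)}-\hat r_{t,i_t}^{(0)}$, not the upper bound you state. The desired inequality $\hat r_{ti}^{(0)}-\hat r_{t,i_t}^{(0)}\le \omega_{ti}^{(0)}+\omega_{t,i_t}^{(0)}$ follows instead from the fact that $i_t\in A_t^{(0)}$ (apply the membership rule to $i_t$ with $j=i$). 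With this swap the rest of your argument goes through unchanged.
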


\begin{lem} \label{lem:suplinucb-elim} When the event $E$ happens, for any $t \in [T]$, and all $\kappa$ such that $\kappa \leq \kappa_t$, we have that
\begin{align*}
      \max_{i \in [K]} \vx_{ti}^\top \vtheta =  \max_{i \in A_t^{(\kappa)}} \vx_{ti}^\top \vtheta. 
\end{align*}
\end{lem}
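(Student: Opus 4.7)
The plan is to argue by induction on $\kappa$ that the globally optimal arm $i^*_t \defeq \argmax_{i \in [K]} \vx_{ti}^\top \vtheta$ survives every elimination at layers $0, 1, \dots, \kappa_t$, which immediately gives the claim. The whole argument is conditioned on the event $E$ defined in the proof of \autoref{thm:suplinucb}, which provides the key confidence bound $\abs{\vx_{ti}^\top \hat{\vtheta}_\kappa - \vx_{ti}^\top \vtheta} \leq \omega_{ti}^{(\kappa)}$ simultaneously for all $i$, $t$, and $\kappa \in \{0,1,\dots,\kappa_0\}$.

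For the base case $\kappa = 0$, I would unfold the definition of $A_t^{(0)}$ at \autoref{loc:layerzero}. Using the confidence bound at layer $0$, for any $j \in [K]$ we have $\hat r_{t,i^*_t}^{(0)} + \omega_{t,i^*_t}^{(0)} \geq \vx_{t,i^*_t}^\top \vtheta \geq \vx_{t,j}^\top \vtheta \geq \hat r_{tj}^{(0)} - \omega_{tj}^{(0)}$, so $i^*_t$ meets the survival criterion and therefore $i^*_t \in A_t^{(0)}$, i.e., $\max_{i \in A_t^{(0)}} \vx_{ti}^\top \vtheta = \max_{i \in [K]} \vx_{ti}^\top \vtheta$.

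For the inductive step, suppose $i^*_t \in A_t^{(\kappa)}$ for some $\kappa < \kappa_t$. Because $\kappa < \kappa_t$, the algorithm did not break out of the loop at layer $\kappa$, so the branching condition at \autoref{loc:layercond} must have held, i.e., $\omega_{ti}^{(\kappa)} \leq \varpi_\kappa$ for every $i \in A_t^{(\kappa)}$. In particular $i^*_t$ and the empirical maximizer at layer $\kappa$ both enjoy a $\varpi_\kappa$-accurate reward estimate under $E$. Then for any $j \in A_t^{(\kappa)}$, $\hat r_{t,i^*_t}^{(\kappa)} \geq \vx_{t,i^*_t}^\top \vtheta - \varpi_\kappa \geq \vx_{tj}^\top \vtheta - \varpi_\kappa \geq \hat r_{tj}^{(\kappa)} - 2\varpi_\kappa$, in particular with $j$ chosen to maximize $\hat r_{tj}^{(\kappa)}$ over $A_t^{(\kappa)}$. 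Hence $i^*_t$ passes the threshold at \autoref{loc:layernext} and $i^*_t \in A_t^{(\kappa+1)}$, closing the induction.

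The only subtle point to watch is that the recurrence for $A_t^{(\kappa+1)}$ is only defined when the layer condition at \autoref{loc:layercond} fires; but this is automatic for all $\kappa$ strictly less than $\kappa_t$ by the definition of $\kappa_t$, so the induction runs unobstructed up to $\kappa = \kappa_t$. I do not anticipate any real obstacle; the argument is a routine optimism-plus-induction verification, exactly mirroring the analogous lemmas in the analyses of {\sc SupLinRel} and {\sc SupLinUCB}, with the only novelty being that layer $0$ uses the self-normalized confidence radius from \autoref{lem:abbasi-ball} rather than the layered one.
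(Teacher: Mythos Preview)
Your proposal is correct and follows essentially the same approach as the paper: an induction on $\kappa$ showing the optimal arm survives each elimination step, using the confidence bounds from event $E$ at layer $0$ and the $\varpi_\kappa$-accuracy guaranteed by the branching condition at \autoref{loc:layercond} for the subsequent layers. The paper's proof is more terse (it simply asserts each elimination preserves the maximizer ``by the elimination rule''), whereas you spell out the chain of inequalities explicitly, but the logic is identical.
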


\begin{proof}[Proof of \autoref{lem:layer}] For the inequality, assuming that $\kappa_t = 0$, we have $i_t = \argmax_{i \in A_t^{(0)}} \omega_{t i}^{(0)}$. Then we have that
\begin{align*}
    \max_{i \in A_t^{(0)}} \vx_{ti}^\top \vtheta - \vx_{t, i_t}^\top \vtheta &\le \max_{i \in A_t^{(0)}} \vx_{ti}^\top \vtheta - \min_{i \in A_t^{(0)}} \vx_{t i}^\top \vtheta \\ 
    &\le \max_{i \in A_t^{(0)}}\{\vx_{ti}^\top \hat{\vtheta}_{0} + \omega_{ti}^{(0)}\} - \min_{i \in A_t^{(0)}}\{\vx_{t i}^\top \hat{\vtheta}_0 - \omega_{ti}^{(0)}\} \\ 
    &\le  4 \max_{i \in A_t^{(0)}} \omega_{ti}^{(0)} = 4 \omega_{t, i_t}^{(0)},
\end{align*}
where the second inequality is because of event $E$ and the third inequality  follows from the elimination rule at \autoref{loc:layerzero}, and the last equality is due to \autoref{loc:suplinucb-kappa-eq-0}.

For the second inequality, assuming that $\kappa_t = \kappa \geq 1$, we have that
\begin{align*}
    \max_{i \in A_t^{(\kappa)}} \vx_{ti}^\top \vtheta - \vx_{t, i_t}^\top \vtheta &\le \max_{i \in A_t^{(\kappa-1)}}\{\vx_{ti}^\top \hat{\vtheta}_{\kappa-1} + \omega_{ti}^{(\kappa-1)}\} - \min_{i \in A_t^{(\kappa-1)}}\{\vx_{t i}^\top \hat{\vtheta}_{\kappa-1} - \omega_{ti}^{(\kappa-1)}\} \\ 
    &\le2 \max_{i \in A_t^{(\kappa-1)}} \omega_{ti}^{(\kappa-1)} + \max_{i \in A_{t}^{(\kappa-1)}} \hat r_{ti}^{(\kappa-1)}  - \min_{i \in A_t^{(\kappa-1)}} \hat r_{ti}^{(\kappa-1)} \\ 
    &\le2 \max_{i \in A_t^{(\kappa-1)}} \omega_{ti}^{(\kappa-1)} + 2 \varpi_{\kappa-1} \leq 4\varpi_{\kappa - 1} = 8 \varpi_\kappa,
\end{align*}
where the first inequality is by the event $E$, the third inequality follows from  that $\min_{i \in A_{t}^{(\kappa-1)}} \hat{r}_{ti}^{(\kappa-1)} \ge \max_{i \in A_{t}^{(\kappa-1)}} \hat r_{ti}^{(\kappa-1)} - 2 \varpi_{\kappa-1}$ as implied by \autoref{loc:layernext} of \autoref{algo:suplinucb}, the  last inequality is because the condition at \autoref{loc:layercond} was met at  iteration $\kappa - 1$ (since otherwise the loop should have terminated at  iteration $\kappa - 1$).
\end{proof}

\begin{proof}[Proof of \autoref{lem:suplinucb-elim}] For any time step $t \in [T]$, note that when the event $E$ holds, by the elimination rule at \autoref{loc:layerzero}, we have that
\begin{align*}
\max_{i \in [K]} \vx_{ti}^\top \vtheta =  \max_{i \in A_t^{(0)}} \vx_{ti}^\top \vtheta.
\end{align*}
Also, for each $\kappa < \kappa_t$, by the elimination rule at \autoref{loc:layernext}, we have that
\begin{align*}
 \max_{i \in A_t^{(\kappa)}} \vx_{ti}^\top \vtheta =  \max_{i \in A_t^{(\kappa+1)}} \vx_{ti}^\top \vtheta,
\end{align*}
Applying the equality iteratively for $\kappa = 0, 1, 2, \dots, \kappa_t - 1$, and we prove the lemma.
\end{proof}

\section{Lower Bounds for Adversarial Contexts} \label{sec:lb}

In this section, we prove the following lower bound for the number of policy switches in the adversarial context setting. 

\begin{thm}\label{thm:lb}
Let $K = 2$, for any even number of dimensions $d \geq 2$, and $T$ greater than a sufficiently large constant times $d$, suppose the  expected number of policy switches made by the learner is at most $M$ ($20d \leq M \leq (d \ln T)/ 48$), then there exists a bandit instance such that the learner's expected regret on the instance is at least $\sqrt{dT} \times \frac{1}{32d}\left(\frac{2T}{d}\right)^{1/(16M/d + 2)}$.
\end{thm}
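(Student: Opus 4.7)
The plan is to invoke Yao's minimax principle: I will exhibit a distribution $\gQ$ over $d$-dimensional linear bandit instances with $K=2$ and adversarially chosen contexts such that every deterministic learner whose expected number of policy switches under $\gQ$ is at most $M$ incurs average expected regret at least the claimed quantity; the worst-case instance in the support of $\gQ$ then realizes the bound, and Yao's principle transfers this to any randomized learner with the same switch budget. Setting $L = \lfloor 8M/d \rfloor + 1$ so that $2L \le 16M/d + 2$, I partition $[T]$ into $L$ stages of geometrically increasing lengths $\tau_\ell \asymp (2T/d)^{\ell/L}$, and further subdivide each stage into $d/2$ consecutive \emph{rounds} indexed by coordinate pairs $i \in [d/2]$, each of length $\tau_\ell$; the total length is at most $T$ by a geometric sum. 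The family is parameterized by a uniformly drawn sign vector $\bm{\sigma}\in \{\pm 1\}^{d/2}$ with hidden parameter $\vtheta_{\bm\sigma} = \frac{1}{\sqrt d}\sum_{i}\sigma_i \ve_{2i}$. In round $(\ell,i)$ the two arms are given contexts $\ve_{2i-1} \pm \epsilon_\ell \ve_{2i}$ (normalized to unit norm), so their mean-reward difference is $\Theta(\sigma_i\epsilon_\ell/\sqrt d)$, and the gap schedule $\{\epsilon_\ell\}$ will be chosen as described below.

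The next step is a quantitative switch-counting argument. Call a round \emph{pure} if no policy switch occurs within it; since the total number of switches is at most $M$ and there are $Ld/2 \ge 4M$ rounds, a Markov/pigeonhole argument shows that at least three-quarters of the rounds are pure in expectation under $\gQ$. In a pure round the learner commits to one fixed sampling distribution over the two arms throughout its entire length. By orthogonality of the coordinate pairs and independence of the sub-Gaussian noise, samples from rounds involving coordinate pair $j \ne i$ provide no information about $\sigma_i$, so the KL-divergence between the two hypotheses $\sigma_i = \pm 1$ accumulated before a pure round $(\ell, i)$ is at most $\sum_{\ell' < \ell}\tau_{\ell'}\epsilon_{\ell'}^2/d$. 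By choosing $\{\epsilon_\ell\}$ so that this sum is $O(1)$ uniformly in $\ell$, a Le Cam / Pinsker argument forces the learner's fixed distribution in the pure round to place mass $\Omega(1)$ on the suboptimal arm (in expectation over $\sigma_i$), giving per-round expected regret $\Omega(\tau_\ell \epsilon_\ell / \sqrt d)$. Summing over pure rounds, the contribution of stage $L$ alone dominates and yields regret at least $\Omega((d/2)\cdot\tau_L \epsilon_L/\sqrt d) = \Omega(\sqrt{dT}\cdot(2T/d)^{1/(2L)})$ after substituting $\tau_L \asymp 2T/d$ and the optimized $\epsilon_L$; since $2L \le 16M/d+2$, the exponent $1/(2L) \ge 1/(16M/d+2)$ matches the theorem and all absolute constants are absorbed into the $1/(32d)$ prefactor.

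The main obstacle, and the essential difference from the $\sqrt T$-scale lower bound of \citet{li2019nearly}, is simultaneously achieving two opposing requirements on the gap schedule: each $\epsilon_\ell$ must be \emph{large} enough that an incorrect guess in stage $L$ alone costs super-$\sqrt T$ regret of order $\sqrt{dT}\cdot T^{\Omega(d/M)}$, yet \emph{small} enough that the cumulative KL about any single $\sigma_i$ from all earlier rounds stays $O(1)$ so that every pure-round posterior is nearly uniform. Threading this needle requires the geometric stage-length ratio $(2T/d)^{1/L}$ and the gap sequence to be co-optimized; once the right balance is found, the Pinsker bound must be combined with a union bound over the $d/2$ directions and a more refined argument that handles the event in which the learner allocates switches unevenly across rounds or directions, ensuring that the $3/4$ fraction of pure rounds also carries $\Omega(1)$ fraction of the regret mass. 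This balancing and averaging argument is the technical heart of the proof.
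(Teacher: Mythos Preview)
Your proposal has a genuine structural gap: the instance family carries only $d/2$ bits of randomness (the signs $\sigma_i$), and this is not enough to force regret beyond $\Theta(\sqrt{dT})$. To see why, impose your own KL constraint $\sum_{\ell'<\ell}\tau_{\ell'}\epsilon_{\ell'}^2/d \le C$ for all $\ell$ and try to maximize the total contribution $\sum_\ell (d/2)\tau_\ell\epsilon_\ell/\sqrt d$ from the rounds where the Le Cam argument applies. Cauchy--Schwarz gives $\sum_{\ell<L}\tau_\ell\epsilon_\ell \le \sqrt{\sum_\ell \tau_\ell}\sqrt{\sum_\ell \tau_\ell\epsilon_\ell^2} \le \sqrt{(2T/d)\cdot Cd}=\sqrt{2CT}$, so the constrained stages can contribute at most $O(\sqrt{dT})$. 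Stage $L$ is unconstrained, but precisely because you said ``the contribution of stage $L$ alone dominates,'' the learner can spend its switches there: with $M\ge 20d \gg d/2$ switches available, every one of the $d/2$ rounds in stage $L$ can be made non-pure, wiping out that contribution. No rebalancing of $\{\tau_\ell,\epsilon_\ell\}$ escapes this, because the Cauchy--Schwarz ceiling on the remaining stages is $\sqrt{dT}$. In short, a construction where the optimal arm in direction $i$ is determined by a single bit $\sigma_i$ that persists across all $L$ stages cannot yield the $T^{\Omega(d/M)}$ extra factor.

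The paper's construction avoids this by injecting \emph{fresh} randomness at every stage. For $d=2$ it uses $L=4M$ bits $\vu\in\{\pm1\}^L$, with the hidden parameter encoding all of them via a base-$\upsilon$ expansion $\psi(\vu)=\tfrac12+\sum_j u_j\upsilon^j$; crucially, the \emph{context vectors at stage $j$ depend on the instance} through $\psi_{j-1}(\vu)$, so that the optimal arm at stage $j$ flips according to the new bit $u_j$ even when $u_1,\dots,u_{j-1}$ are perfectly known. This is exactly the adversarial-context feature your fixed contexts $\ve_{2i-1}\pm\epsilon_\ell\ve_{2i}$ forgo, and it is what decouples the per-stage indistinguishability constraint from the total regret, letting every stage contribute equally a term of order $\sqrt{T}\cdot T^{1/(2L+2)}$. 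The general-$d$ case is then obtained by partitioning $[T]$ into $d/2$ intervals and running the $2$-dimensional construction independently on each coordinate pair (the opposite nesting from yours). To repair your plan you would need to enlarge the randomness to $\Theta(Ld)$ bits with one fresh bit per $(\ell,i)$ and make the contexts at round $(\ell,i)$ depend on the earlier bits for direction $i$, at which point the argument essentially becomes the paper's.
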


\autoref{thm:lb} shows that, even for $K = 2$, when $T \geq d^2$, in order to achieve $\sqrt{dT} \times \mathrm{poly}\log T$ regret, $M$ has to be $\Omega(d \log T/\log (d\log T))$. Note that on the upper bound side, our \autoref{algo:suplinucb} 
achieves $C$ times the target minimax-optimal regret (up to $\mathrm{poly} \log T$ factors) with $O((d \log d\log T) / \log C)$ policy switches, and our \autoref{thm:lb} shows that $\Omega(\frac{d\log T}{\log C + \log (d \log T)})$ policy switches are needed, almost matching the upper bound for every $C$.

To prove \autoref{thm:lb}, we first prove the lower bound in the special case of $d = 2$ in \autoref{sec:lb-2-d}. Then, in \autoref{sec:lb-general}, we prove the theorem for general $d$ using the special case as a building block.

\subsection{Lower Bound for Constant-Dimension Special Case} \label{sec:lb-2-d}

\begin{lem} \label{lem:lb} 
When $K=d=2$, for sufficiently large $T$, suppose the expected number of policy switches made by the learner is at most $M$ ($40 \leq M \leq (\ln T)/24$), then there exist a bandit instance such that the learner's expected regret on the instance is at least $T^{1/2+1/(8M+2)}/32$.
\end{lem}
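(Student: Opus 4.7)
The plan is to apply Yao's minimax principle: we construct a distribution $\mu$ over instances with $d = K = 2$ and lower bound the Bayesian regret of any learner that makes at most $M$ expected switches, then invert to obtain a worst-case instance. We partition the horizon into $L = 8M+2$ stages of geometrically growing lengths $T_s$ with common ratio $r = T^{1/(8M+2)}$, chosen so that $\sum_{s=1}^L T_s = T$; the hypothesis $M \le (\ln T)/24$ guarantees $r \ge 2$. The instance family is indexed by a bit-string $\vb \in \{-1,+1\}^L$ drawn uniformly, and the hidden vector $\vtheta(\vb)$ lies on the unit circle and depends on $\vb$ through a sequence of geometrically refining angular perturbations with magnitudes $\eta_s$ scaled so that, in stage $s$, the reward gap between the two arms has sign governed by $b_s$ and magnitude of order $\epsilon_s = T^{1/2+1/(8M+2)}/T_s$. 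In stage $s$ the two context vectors $\va_s, \vc_s$ are unit vectors carefully positioned on the circle so that $\va_s - \vc_s$ probes the angular coordinate of $\vtheta$ at the $s$-th scale.

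The lower bound then proceeds in three steps. First, because there are only $M$ switches and $L > M$ stages, the learner has no policy switch inside at least $L - M = 7M+2$ of the stages; in any such \emph{missed} stage $s$, the action distribution throughout the stage is a measurable function of the data collected strictly before stage $s$ begins. Second, a two-point Le Cam argument comparing $\vb$ with its $s$-th bit flip shows that the learner misidentifies the better arm in stage $s$ with probability at least a constant, provided the KL divergence between the observation distributions up to the start of stage $s$ is bounded; by Gaussian-noise calculations this KL is of order $\eta_s^2 \sum_{t < \tau(s)} (\vx_{t,i_t}^\top \vw_s)^2$, where $\vw_s$ is the direction of the $s$-th angular perturbation and $\tau(s)$ is the start time of stage $s$. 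Calibrating $\eta_s$ so that this KL is $O(1)$ yields error probability at least $1/4$ per missed stage, contributing at least $\tfrac{1}{4} \epsilon_s T_s = \Omega(T^{1/2+1/(8M+2)})$ to the expected regret. Third, summing over the $\Omega(M)$ missed stages and converting from Bayesian to worst-case regret via Yao's principle yields the claimed bound, with the hypothesis $M \ge 40$ absorbed into the $1/32$ constant.

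The main technical obstacle is controlling $\sum_{t < \tau(s)} (\vx_{t,i_t}^\top \vw_s)^2$ uniformly over the learner's strategy, which is what makes the $d = 2$ case nontrivial. Unlike in high dimension, $L$ unit vectors in $\mathbb{R}^2$ cannot be close to mutually orthogonal, so plays in earlier stages project nontrivially onto $\vw_s$ and threaten to leak information about $b_s$. We will arrange the sequence $\{\vw_s\}$ on a thin arc of the unit circle with angular spacing matched to the geometric schedule $r$, so that $|\vw_s^\top \vw_{s'}|$ decays geometrically in $|s - s'|$; combined with the unit-norm constraint on the played contexts, the cumulative projection onto $\vw_s$ then telescopes into $O(1)/\eta_s^2$ and yields the required KL bound. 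Calibrating this angular arc against the stage lengths $T_s$ and the target gap sequence $\eta_s$ so that all three desiderata hold simultaneously---the $O(1)$ KL bound, the per-missed-stage regret of $\Theta(T^{1/2+1/(8M+2)})$, and the total horizon constraint---is the delicate quantitative heart of the proof.
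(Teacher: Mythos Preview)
Your high-level skeleton matches the paper: build a family of instances indexed by a bit string, run a per-stage two-point/Le~Cam argument, show that in any stage with no policy switch the learner errs with constant probability, and subtract the $M$ switched stages. But the construction you sketch diverges from the paper's in ways that matter, and your resolution of the ``technical obstacle'' contains a genuine confusion.

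The paper does \emph{not} use geometric stage lengths or unit-norm contexts on an arc. It uses $L=4M$ \emph{equal-length} stages and exploits two features of the adversarial-context setting that you do not use. First, the context vectors are allowed sub-unit norms: in stage $j$ the contexts are the axis-aligned vectors $\vx_{t,1}=(z_j,0)$ and $\vx_{t,2}=(0,\tfrac32 z_j\,\psi_{j-1}(\vu))$ with $z_j=\upsilon^{-(j+1)}/\sqrt{T}$ growing geometrically in $j$. Thus the information any play in stage $j'$ reveals about the distinguishing first coordinate of $\vtheta$ is at most $z_{j'}^2$, and the KL between neighboring instances up to stage $j$ telescopes to $O(1/L)$ without any angular gymnastics. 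Second, the context vectors themselves depend on the instance through the prefix $\psi_{j-1}(\vu)$; since two instances that first differ in bit $j$ share this prefix, they share contexts through stage $j-1$, and the second arm $(0,\cdot)$ is exactly orthogonal to the distinguishing coordinate and leaks nothing. These two tricks together make the KL bound a two-line calculation rather than the delicate calibration you anticipate.

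Your proposed fix---place the perturbation directions $\vw_s$ on a thin arc so that $|\vw_s^\top \vw_{s'}|$ decays geometrically in $|s-s'|$---is geometrically incoherent: on a thin arc the $\vw_s$ are nearly parallel, so their pairwise inner products are close to $1$, not decaying. The quantity that must be small is $|\vx_{t,i_t}^\top \vw_s|$ for plays in earlier stages, not $|\vw_{s'}^\top \vw_s|$. One can arrange this by putting the \emph{context vectors} (not the $\vw_s$) near a pole so that their tangential component is small; but once you do that, you have essentially reproduced the paper's small-$z_j$ trick in polar coordinates, and the geometric stage lengths become an unnecessary complication. As written, the proposal does not yet control the KL, and the stated mechanism for doing so is wrong.
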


To prove \autoref{lem:lb}, we will construct a class of bandit problem instances $\mathfrak{B} = \{B^{(\vu)}\}$, where each instance $B^{(\vu)}$ is parameterized by $\vu \in \{\pm 1\}^{L}$ and $L = 4M$. For any fixed learner with no more than $M$ policy switches, we will show that the regret averaged over the $2^L$ instances in $\mathfrak{B}$ is large, and therefore there exists at least one instance in $\mathfrak{B}$ that is bad for the learner.

For each $B^{(\vu)}$, we assume that the noises are independent centered Gaussian with variance $1$. We also need to define the hidden vector $\vtheta^{(\vu)}$ and the context vectors $\{\vx_{t,1}^{(\vu)}, \vx_{t,2}^{(\vu)}\}_{t = 1}^{T}$ (where, in our formal definition of linear bandits, $\gD_t$ is the deterministic distribution supported on $\{\vx_{t,1}^{(\vu)}, \vx_{t,2}^{(\vu)}\}$ for every $t$). Before defining $\mathfrak{B}$, we first divide the time steps into \emph{stages}, and define a few helpful notations.

\paragraph{Stages.} We uniformly divide the $T$ time steps into $L$ stages. Let $t_j \defeq \lceil jT/L \rceil$ for all $j \in \{0, 1, 2, \dots, L\}$. The $j$-th stage consists of the time steps in the range $(t_{j-1}, t_j]$. 

\paragraph{Additional Notations.} Let $\upsilon \defeq \sqrt{T}^{-1/(L+1)}$. Note that $\upsilon \leq 1/10$ since $L \leq (\ln T) / 6$. For each $\vu = (u_1, u_2, \dots, u_L) \in \{\pm 1\}^L$ and each $j \in \{ 1, 2, \dots, L\}$, we define the map $\psi_j(\vu) \defeq 1/2 + \sum_{i=1}^{j} u_i \cdot \upsilon^i$ that sends the sequence to the decimal. We have that $\psi_j(\vu) \leq 2/3$ since $\upsilon \leq 1/10$. For convenience, we also define $\psi(\vu) \defeq \psi_L(\vu)$. For each $j$, we also define $z_j \defeq \upsilon^{-(j+1)}/\sqrt{T} \leq 1$. 

\paragraph{Bandit Instances.} We now define $B^{(\vu)}$ for each $\vu \in \{\pm 1\}^{L}$. For the hidden vector, we let $\vtheta^{(\vu)} \defeq (\psi(\vu), \frac23 )^\top$. For every stage $j$, and every time step $t$ during stage $j$, we set the context vectors by $\vx_{t, 1}^{(\vu)} \defeq (z_j, 0)^\top$ and $\vx_{t, 2}^{(\vu)} \defeq (0, \frac{3}{2} z_j \cdot \psi_{j-1}(\vu))^\top$. One can easily verify that the norms of all vectors are upper bounded by $1$. 

\medskip

We now start analyzing the constructed instances.

\paragraph{Suboptimal Action and its Regret.} Since there are only two candidate actions during each time step, we refer to the one with smaller expected reward as the \emph{suboptimal action}. The following lemma lower bounds the expected regret incurred by playing a suboptimal action.

\begin{lem}\label{lem:subopt-action}
For any instance $B^{(\vu)}$, and any time step $t$, the regret incurred by playing the suboptimal action at time step $t$ is at least $ \upsilon^{-1}/(2\sqrt{T})$.
\end{lem}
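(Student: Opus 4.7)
The plan is a direct calculation that exploits the geometric structure of the construction. The mean rewards at time $t$ during stage $j$ are $(\vtheta^{(\vu)})^\top \vx_{t,1}^{(\vu)} = \psi(\vu)\, z_j$ and $(\vtheta^{(\vu)})^\top \vx_{t,2}^{(\vu)} = \tfrac{2}{3}\cdot\tfrac{3}{2} z_j \psi_{j-1}(\vu) = \psi_{j-1}(\vu)\, z_j$, so the reward gap between the two arms is exactly
\[
z_j \cdot \bigl|\psi(\vu) - \psi_{j-1}(\vu)\bigr| \;=\; z_j \cdot \Bigl|\sum_{i=j}^{L} u_i\, \upsilon^{i}\Bigr|,
\]
regardless of which arm is suboptimal. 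Substituting the definition $z_j = \upsilon^{-(j+1)}/\sqrt{T}$ and factoring out $\upsilon^j$ turns this into $\frac{\upsilon^{-1}}{\sqrt{T}}\bigl|u_j + u_{j+1}\upsilon + \cdots + u_L\upsilon^{L-j}\bigr|$.

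The key step, and the only real content, is to show the bracketed quantity is bounded below by $1/2$. This follows from the triangle inequality together with a geometric series estimate for the tail: since each $u_i \in \{\pm 1\}$, the tail satisfies $|u_{j+1}\upsilon + \cdots + u_L\upsilon^{L-j}| \leq \upsilon + \upsilon^2 + \cdots \leq \upsilon/(1-\upsilon)$. Because $L \leq (\ln T)/6$ forces $\upsilon = T^{-1/(2(L+1))} \leq 1/10$ for large enough $T$, the tail is at most $1/9$, and the leading term $|u_j| = 1$ dominates, giving a lower bound of $1 - 1/9 = 8/9 > 1/2$. Hence the gap is at least $\upsilon^{-1}/(2\sqrt{T})$, which is exactly the claimed bound.

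I do not anticipate any real obstacle: this is essentially a one-line bookkeeping lemma whose only subtlety is keeping track of the exponents of $\upsilon$ and verifying that $\upsilon$ is small enough to kill the geometric tail. The choice of $z_j = \upsilon^{-(j+1)}/\sqrt{T}$ in the construction is precisely calibrated so that the regret-per-suboptimal-play is independent of the stage index $j$, which will be crucial in the subsequent averaging argument over the class $\mathfrak{B}$ that uses this lemma.
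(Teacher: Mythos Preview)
Your proof is correct and follows essentially the same approach as the paper: both compute the reward gap as $z_j\,|\psi(\vu)-\psi_{j-1}(\vu)|$, then lower bound $\bigl|\sum_{i=j}^{L} u_i\upsilon^{i}\bigr|$ by separating the leading $\upsilon^j$ term from the geometric tail and using $\upsilon\le 1/10$. The only cosmetic difference is that the paper writes the bound as $\upsilon^{j}-\sum_{i\ge j+1}\upsilon^{i}\ge \upsilon^{j}/2$ before multiplying by $z_j$, whereas you factor out $\upsilon^{j}$ first and bound the bracket by $8/9>1/2$.
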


\begin{proof}
Suppose that time step $t$ is in stage $j$. The  regret incurred by the suboptimal action is 
\begin{align}
\abs{(z_j,0)^\top \vtheta ^{(\vu)}-(0,\frac32 z_j \cdot \psi_{j-1}(\vu) )^\top \vtheta ^{(\vu)}} &= z_j \cdot \abs{\psi(\vu)-\psi_{j-1}(\vu)} \nonumber \\
&\ge z_j\cdot\left(\upsilon^j - \sum_{i=j+1}^{+\infty} \upsilon^i\right) \geq \frac{ z_j \upsilon^j}{2} \geq \frac{\upsilon^{-1}}{2\sqrt{T}}.\nonumber \qedhere
\end{align}
\end{proof}

\paragraph{The Regret of a Rarely Switching Learner.} For any learner who switches the policy for at most $M$ times, let $F_j$ be the event that the policy is not switched during stage $j$. Let $E_j$ be the event that the learner's policy $\chi_t$ places greater or equal to $1/2$ probability mass on the suboptimal action at time $t$, where $t = t_{j-1}+1$ is the first time step of stage $j$.  By \autoref{lem:subopt-action}, the expected regret of the learner for bandit instance $B^{(\vu)}$ can be lower bounded by
\begin{align}\label{eq:lb-regret-rarely-switch-first-step}
\E_{B^{(\vu)}}[R^T] \geq \sum_{j=1}^L \sum_{t=t_{j-1}+1}^{t_j} \Pr_{B^{(\vu)}} [E_j \cap F_j] \cdot \frac{1}{2} \cdot \frac{\upsilon^{-1}}{2\sqrt{T}} = \frac{1}{4\upsilon\sqrt{T}} \cdot (T/L) \cdot \sum_{j=1}^L \Pr_{B^{(\vu)}} [E_j \cap F_j],
\end{align}
where $\E_{B^{(\vu)}}[\cdot]$ denotes the expectation taken over the probability distribution induced by the learner and the bandit instance $B^{(\vu)}$ (and we similarly define $\Pr_{B^{(\vu)}}[\cdot]$). Let $p_j^{(\vu)} \defeq \Pr_{B^{(\vu)}} [E_j]$, continuing with \eqref{eq:lb-regret-rarely-switch-first-step}, we have 
\begin{align}
\E_{B^{(\vu)}}[R^T] \geq \frac{\sqrt{T}}{4\upsilon L} \sum_{j=1}^L \left(\Pr_{B^{(\vu)}} [E_j] - \Pr_{B^{(\vu)}} [\overline{F_j}]\right) \geq  \frac{\sqrt{T}}{4\upsilon L}\left( \sum_{j=1}^L p_j^{(\vu)} - M\right), \label{eq:lb-regret-rarely-switch-first-step-a}
\end{align}
where $\overline{F_j}$ denotes the complement event of $F_j$ and the last inequality is because that the learner can switch in at most $M$ stages (in expectation).

\paragraph{Probability of Playing a Suboptimal Action.} By the discussion above, to lower bound the regret, we need to lower bound $p_j^{(\vu)}$. We first prove the following lemma.
\begin{lem}\label{lem:lb-prob-difference-bound}
Consider any $\vu = (u_1, u_2, \dots, u_L) \in \{\pm 1\}^L$ and $\vu' = (u_1', u_2', \dots, u_L') \in \{\pm 1\}^L$. Suppose $\vu \neq \vu'$, let $j$ be the smallest index such that $u_j \neq u_j'$. For any event $E$,  we have  
\begin{align}
\abs{\Pr_{B^{(\vu)}}[E] - \Pr_{B^{(\vu')}}[E] } \leq 0.25 .
\end{align}
\end{lem}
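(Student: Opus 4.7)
The plan is to reduce the $0.25$ probability-difference bound to a total variation bound, then to a KL-divergence bound via Pinsker's inequality, and finally to bound the KL by the chain rule plus Gaussian likelihoods. Concretely I will work at the level of the learner's observed history $\mathcal{H}_{t_{j-1}}$ at the end of stage $j-1$; this is the relevant sigma-algebra, because the event $E_j$ for which this lemma is applied in the proof of \autoref{lem:lb} depends only on $\chi_{t_{j-1}+1}$, which is itself a function of $\mathcal{H}_{t_{j-1}}$.

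First I would exploit a structural fact: since $\vu$ and $\vu'$ agree on the first $j-1$ coordinates, $\psi_{s-1}(\vu) = \psi_{s-1}(\vu')$ for every $s \le j$, and therefore the context sets $X_t^{(\vu)} = X_t^{(\vu')}$ for every time step in stages $1,\dots,j-1$. Consequently, conditioned on $\mathcal{H}_{t-1}$, the action $i_t$ has the same distribution under both instances, and the contexts themselves are deterministic and equal, so in the chain-rule decomposition of KL only the reward terms contribute. Since each reward is unit-variance Gaussian, the per-step contribution equals $\tfrac12\bigl((\vtheta^{(\vu)} - \vtheta^{(\vu')})^\top \vx_{t,i_t}\bigr)^2$.

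Next I would bound the per-step mean gap. Because $\vtheta^{(\vu)} - \vtheta^{(\vu')} = (\psi(\vu)-\psi(\vu'),0)^\top$, the gap equals $\Delta\,z_s$ when arm $1$ is played in stage $s$ and $0$ when arm $2$ is played, where $\Delta \defeq |\psi(\vu)-\psi(\vu')| \le \tfrac{2\upsilon^j}{1-\upsilon} \le 3\upsilon^j$ (using $\upsilon \le 1/10$). Each stage contains $T/L$ time steps and $z_s^2 = \upsilon^{-2(s+1)}/T$, so the worst-case (arm $1$ always played) contribution of stage $s$ is at most $\tfrac{\Delta^2}{2L}\,\upsilon^{-2(s+1)}$. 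Summing over $s = 1,\dots,j-1$ gives a geometric series dominated by $\upsilon^{-2j}$, so
\[
\mathrm{KL}(P_{\vu}\,\|\,P_{\vu'}) \;\le\; \frac{\Delta^2\,\upsilon^{-2j}}{2L(1-\upsilon^2)} \;\le\; \frac{9\,\upsilon^{2j}\upsilon^{-2j}}{2L}\cdot\frac{100}{99} \;\le\; \frac{5}{L},
\]
where $P_{\vu},P_{\vu'}$ denote the distributions of $\mathcal{H}_{t_{j-1}}$ under the two instances.

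Since $M \ge 40$, we have $L = 4M \ge 160$, so $\mathrm{KL}\le 1/32$, and Pinsker gives $\mathrm{TV}(P_{\vu},P_{\vu'}) \le \sqrt{\mathrm{KL}/2} \le 1/8 \le 0.25$, which yields the claim for every event measurable with respect to $\mathcal{H}_{t_{j-1}}$. The delicate part---and the step I expect to require the most care---is the geometric summation: the bound succeeds precisely because $\Delta = O(\upsilon^j)$ cancels the largest per-step term $z_{j-1}^2 = O(\upsilon^{-2j})$ up through stage $j-1$. Had the sum been extended into stage $j$, the next term $z_j^2 = \upsilon^{-2(j+1)}$ would introduce an extra factor $\upsilon^{-2} = T^{1/(L+1)} \ge e^6$ and blow up the KL, which is why working at $\mathcal{H}_{t_{j-1}}$ is essential. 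The choice $\upsilon = T^{-1/(2(L+1))}$ is tuned exactly to balance this trade-off, and the main routine checks will be tracking the $1/(1-\upsilon^2)$ factor and verifying the Pinsker constant under $M \ge 40$.
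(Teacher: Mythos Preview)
Your proposal is correct and follows essentially the same approach as the paper: bound the KL divergence between the two history distributions at time $t_{j-1}$ via the chain rule and the Gaussian KL formula, use that $|\psi(\vu)-\psi(\vu')|=O(\upsilon^{j})$ so the geometric sum over stages $1,\dots,j-1$ collapses to $O(1/L)$, and then apply Pinsker's inequality. The paper's proof is structurally identical (it obtains $\mathrm{KL}\le 16/L \le 0.1$ versus your $5/L$), including the same restriction to events measurable with respect to the trajectory up through $t_{j-1}$ together with the learner's internal randomness.
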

\begin{proof}
Let $t = t_{j-1}$ be the last time step before stage $j$.
We will consider the sample space $\Omega_t$ that consists of the trajectories $(i_1, r_1, \dots, i_t, r_t)$ and the internal randomness source $\vs$ used by the learner. Now consider two probability distributions $D$ and $D'$ over $\Omega_t$, where $D$ is induced by the learner and the instance $B^{(\vu)}$, and $D'$ is induced by the learner and $B^{(\vu')}$. We will show that
\begin{align}\label{eq:lb-KL-single-stage-bound}
\mathrm{KL}(D \Vert D') \leq 0.1,
\end{align}
where $\mathrm{KL}(\cdot \Vert \cdot)$ denotes the Kullback--Leibler (KL) divergence between the two distributions, so that we can prove the lemma by invoking Pinsker's inequality (\autoref{lem:pinsker}).

We now prove \eqref{eq:lb-KL-single-stage-bound}. Fix any $\vs$, let $D_\vs$ be $D$ conditioned on $\vs$ and let $D'_\vs$ be $D'$ conditioned on $\vs$. Since $D$ and $D'$ share the same marginal distribution on $\vs$, to prove \eqref{eq:lb-KL-single-stage-bound}, we only need to show
\begin{align} \label{eq-lb-prob-difference-bound-1}
\mathrm{KL}(D_\vs \Vert D'_\vs) \leq 0.1.
\end{align}
Let $q(i_1, r_1, \dots, i_t, r_t)$ and $q'(i_1, r_1, \dots, i_t, r_t)$ be the probability density functions for $D$ and $D'$ respectively. We have that
\begin{align}
q(i_1, r_1, \dots, i_t, r_t) &= \prod_{\tau=1}^{t} \ind[i_\tau = i_\tau(i_1, r_1, \dots, i_{\tau-1}, r_{\tau-1}; \vs)] \cdot q^{(\vu)}(r_\tau | i_\tau),  \label{eq-lb-prob-difference-bound-10} \\
\text{and}\qquad q'(i_1, r_1, \dots, i_t, r_t) &= \prod_{\tau=1}^{t} \ind[i_\tau = i_\tau(i_1, r_1, \dots, i_{\tau-1}, r_{\tau-1}; \vs)] \cdot q^{(\vu')}(r_\tau | i_\tau),  \label{eq-lb-prob-difference-bound-20}
\end{align}
where $i_\tau(i_1, r_1, \dots, i_{\tau-1}, r_{\tau-1}; \vs)$ is the deterministic decision of the learner at time $\tau$ given the trajectory $(i_1, r_1, \dots, i_{\tau-1}, r_{\tau-1})$ and the learner's internal randomness source $\vs$, and $q^{(\vu)}(r_\tau \vert i_\tau)$ is the probability density function for the reward at time $t$, if playing action $i_\tau$ in instance $B^{(\vu)}$.

Since the second dimensions of $\vtheta^{(\vu)}$ and $\vtheta^{(\vu')}$ are the same, the difference of the mean reward at any time step in stage $j' < j$ for the same action in $B^{(\vu)}$ and $B^{(\vu')}$ is either $|\psi(\vu) z_{j'} - \psi(\vu') z_{j'}|$ (if the first action is played) or $0$ (if the second action is played, since $\psi_{j-1}(\vu) = \psi_{j-1}(\vu')$). Since the rewards are Gaussian with variance $1$, and the KL divergence between two variance-1 Gaussian variables with means $\mu_1$ and $\mu_2$ is $(\mu_1-\mu_2)^2/2$, for any $\tau \leq t$ that is in stage $j'$ and any $i_\tau \in \{1, 2\}$, we have that
\begin{align}
\mathrm{KL}\left(q^{(\vu)}(\cdot\vert i_\tau) ~\Vert~ q^{(\vu')}(\cdot\vert i_\tau)\right)  \leq z_{j'}^2 \cdot \frac{ (\psi(\vu) - \psi(\vu'))^2}{2} \leq z_{j'}^2 \cdot \frac{(4 \upsilon^{j})^2}{2} = 8 z_{j'}^2 \upsilon^{2j}, \label{eq-lb-prob-difference-bound-30}
\end{align}
where the last inequality is because that $j$ is the first index where $\vu$ and $\vu'$ differ and that $\upsilon \leq 1/10$. 
By \eqref{eq-lb-prob-difference-bound-10}, \eqref{eq-lb-prob-difference-bound-20}, \eqref{eq-lb-prob-difference-bound-30}, and the Chain Rule for KL divergence, we have that
\begin{align*}
\mathrm{KL}(D_\vs \Vert D_{\vs'}) \leq \sum_{j'=1}^{j-1} \sum_{\tau=t_{j'-1}+1}^{t_{j'}}  8 z_{j'}^2 \upsilon^{2(j+1)} & =  \sum_{j'=1}^{j-1} \sum_{\tau=t_{j'-1}+1}^{t_{j'}}  8\cdot \frac{\upsilon^{-2j'-2}}{T} \cdot \upsilon^{2j}\\
&\leq \frac{8(T/L)}{T} \cdot 2 \upsilon^{-2j} \cdot \upsilon^{2j} = \frac{16}{L} \leq 0.1,
\end{align*}
proving \eqref{eq-lb-prob-difference-bound-1}.
\end{proof}

We now bound $p_j^{(\vu)}$ by the following lemma.
\begin{lem}\label{lem:lb-paired-lowerbound}
Consider any $\vu = (u_1, u_2, \dots, u_L) \in \{\pm 1\}^L$ and $\vu' = (u_1', u_2', \dots, u_L') \in \{\pm 1\}^L$. Suppose $\vu \neq \vu'$, let $j$ be the smallest index such that $u_j \neq u_j'$. Then we have $p_j^{(\vu)}+p_j^{(\vu')}\ge 0.75$.
\end{lem}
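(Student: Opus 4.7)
}

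The plan is to reduce the lemma to \autoref{lem:lb-prob-difference-bound} by identifying $E_j$ in $B^{(\vu)}$ and $B^{(\vu')}$ with (essentially) complementary half-spaces of the learner's action-distribution at time $t_{j-1}+1$. First I would observe two simple structural facts about the construction. Fact 1: for any $\vw \in \{\pm 1\}^L$, at stage $j$ in $B^{(\vw)}$, arm~1 has mean reward $z_j\psi(\vw)$ while arm~2 has mean reward $z_j\psi_{j-1}(\vw)$; since $\upsilon \le 1/10$, the sign of $\psi(\vw) - \psi_{j-1}(\vw) = \sum_{i\ge j} w_i\upsilon^i$ is determined by $w_j$, so arm~1 is optimal iff $w_j = +1$. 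Because $u_j \neq u_j'$, the suboptimal arms in the two instances at stage $j$ are different. Fact 2: the context vectors at time $t_{j-1}+1$ depend only on $\psi_{j-1}$, which coincides for $\vu$ and $\vu'$. Consequently, the randomized policy $\chi_{t_{j-1}+1}$ produces the same distribution over $\{1,2\}$ as a function of the trajectory $(i_1,r_1,\dots,i_{t_{j-1}},r_{t_{j-1}})$ and the learner's internal randomness $\vs$.

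Let $p_1$ denote the probability that $\chi_{t_{j-1}+1}$ assigns to arm~1; by Fact~2, $p_1$ is a deterministic function on the common sample space $\Omega_{t_{j-1}}$. Assume without loss of generality $u_j = +1$ and $u_j' = -1$, so arm~2 is suboptimal in $B^{(\vu)}$ and arm~1 is suboptimal in $B^{(\vu')}$. Define the event $A \defeq \{p_1 \le 1/2\} \subseteq \Omega_{t_{j-1}}$. Then by the definition of $E_j$,
\begin{align*}
E_j \text{ in } B^{(\vu)} \;=\; A,\qquad E_j \text{ in } B^{(\vu')} \;=\; \{p_1 \ge 1/2\} \;\supseteq\; \overline{A}.
\end{align*}
Therefore
\begin{align*}
p_j^{(\vu)} + p_j^{(\vu')} \;\ge\; \Pr_{B^{(\vu)}}[A] + \Pr_{B^{(\vu')}}[\overline{A}] \;=\; 1 - \bigl(\Pr_{B^{(\vu')}}[A] - \Pr_{B^{(\vu)}}[A]\bigr) \;\ge\; 1 - \bigl|\Pr_{B^{(\vu)}}[A] - \Pr_{B^{(\vu')}}[A]\bigr|.
\end{align*}

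Finally, because $A$ is measurable with respect to the trajectory and internal randomness up to time $t_{j-1}$, I can invoke \autoref{lem:lb-prob-difference-bound} to get $|\Pr_{B^{(\vu)}}[A] - \Pr_{B^{(\vu')}}[A]| \le 0.25$, yielding $p_j^{(\vu)} + p_j^{(\vu')} \ge 0.75$. The symmetric case $u_j = -1, u_j' = +1$ is identical with the roles of the arms swapped. The only step requiring mild care is confirming that \autoref{lem:lb-prob-difference-bound} truly applies to $A$: since the contexts through stage $j-1$ are identical in both instances (they depend only on $\psi_{j'-1}$ for $j' < j$, hence only on $u_1,\ldots,u_{j-2}$), the sample spaces $\Omega_{t_{j-1}}$ coincide and the KL bound from the proof of \autoref{lem:lb-prob-difference-bound} is immediately available. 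No further calculation is needed.
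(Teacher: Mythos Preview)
Your proposal is correct and follows essentially the same route as the paper: identify the suboptimal arm in $B^{(\vu)}$ with the optimal arm in $B^{(\vu')}$ at the first step of stage $j$ (since $\psi_{j-1}(\vu)=\psi_{j-1}(\vu')$ makes the contexts coincide), pair the two $E_j$'s as complementary events modulo the tie $p_1=1/2$, and apply \autoref{lem:lb-prob-difference-bound}. If anything, you are slightly more careful than the paper about the tie case (writing $E_j$ in $B^{(\vu')} \supseteq \overline{A}$ rather than equality) and about why the event $A$ lives on $\Omega_{t_{j-1}}$, but the argument is the same.
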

\begin{proof}
Since $j$ is the smallest index such that $u_j \neq u_j'$, by our construction, exactly one of $\psi_j(\vu)$ and $\psi_j(\vu')$ is greater than $\psi_{j-1}(\vu)$, which means, at stage $j$, any action that is suboptimal for instance $B^{(\vu)}$ is optimal for instance $B^{(\vu')}$, and vice versa. Let $t = t_{j-1}+1$ be the first time step in stage $j$. Let $E$ be the event that the learner's policy $\chi_t$ for time step $t$ assigns at least $1/2$ probability mass to the suboptimal action for $B^{(\vu)}$. Since $B^{(\vu)}$ and $B^{(\vu')}$ share the same context vector set at time step $t$ (because $\psi_{j-1}(\vu) = \psi_{j-1}(\vu')$), the complement event $\overline{E}$ is that $\chi_t$ assigns at least $1/2$ probability mass to the suboptimal action for $B^{(\vu')}$. Invoking \autoref{lem:lb-prob-difference-bound}, we have that
\[
p_t^{(\vu)}+p_t^{(\vu')} = \Pr_{B^{(\vu)}}[E] + \Pr_{B^{(\vu')}}[\overline{E}] = 1 + \Pr_{B^{(\vu)}}[E] - \Pr_{B^{(\vu')}}[E] \geq 0.75.    \qedhere
\]
\end{proof}

\paragraph{Putting Things Together and the Average Case Analysis.} Note that 
\begin{align}
 &\frac{1}{2^L} \sum_{\vu \in  \{\pm 1\}^L}  \sum_{j=1}^{L} p_j^{(\vu)} = \frac{1}{2^L}   \sum_{j=1}^L \sum_{\vu \in  \{\pm 1\}^L} p_j^{(\vu)} \nonumber \\
 &\qquad\qquad\qquad\qquad\qquad\qquad =  \frac{1}{2^L}   \sum_{j=1}^L \sum_{\vu \in  \{\pm 1\}^L} \frac{p_j^{(\vu)} + p_j^{(\vu^{\oplus j})}}{2} \geq \frac{1}{2^L} \sum_{j=1}^{L} 2^L \cdot \frac{0.75}{2} = 0.375 L,  \label{eq:lb-avg-case}
\end{align}
where $\vu^{\oplus j}$ is the $\pm 1$ sequence derived by flipping the sign of the $j$-th element of $\vu$, and the inequality is due to \autoref{lem:lb-paired-lowerbound}. Therefore, there exists $\vu^*$ such that $\sum_{j=1}^{L} p_j^{(\vu^*)} \geq 0.375 L$. Together with \eqref{eq:lb-regret-rarely-switch-first-step-a}, we have
\begin{align}
\E_{B^{(\vu^*)}}[R^T] \geq \frac{\sqrt{T}}{4 \upsilon L} \times (0.375L - M) = \frac{\sqrt{T}}{4 \upsilon \cdot 4M} \times (0.375 \cdot 4M - M) = \frac{1}{32} \cdot T^{1/2 + 1/(8M+2)},
\end{align}
proving \autoref{lem:lb}.

\subsection{Proof of \autoref{thm:lb}  for General Dimensions} \label{sec:lb-general}
We equally divide the $T$ time steps into $d/2$  intervals. We construct the class of bandit instances, $\widetilde{\mathfrak{B}}$ from the class $\mathfrak{B}$ constructed in the proof of \autoref{lem:lb} as follows. For each $\ell \in \{1, 2, \dots, d/2\}$, we choose a bandit instance $B_\ell$ from $\mathfrak{B}$, and construct the $d$-dimensional instance $\widetilde{B}$. The hidden vector $\vtheta$ of $\widetilde{B}$ is derived by concatenating the hidden vectors of the $d/2$ smaller instances. During the $\ell$-th interval of time, we use the context vectors in $B_\ell$ in order: for each time step in the $\ell$-th interval, we put the $2$-dimensional context vectors in the corresponding time step in $B_\ell$ at the $(2\ell - 1)$-th and $2\ell$-th entries, while filling other entries with $0$. $\widetilde{\mathfrak{B}}$ will consist of all possible instances that can be constructed in this way, and we have $\abs{\widetilde{\mathfrak{B}}} = \abs{\mathfrak{B}}^{d/2}$. 

By our construction, the rewards from different time intervals are completely independent. 
Since the length of an interval is $T/(d/2)$, if we let $L = 4M/(d/2)$ and $\upsilon = \sqrt{T/(d/2)}^{-1/(L+1)}$, we can prove similarly as \autoref{lem:lb}  that
\begin{align}
\frac{1}{\abs{\widetilde{\mathfrak{B}}}} \sum_{\widetilde{B} \in \widetilde{\mathfrak{B}}}  \E_{\widetilde{B}}[R_\ell] \geq \frac{\sqrt{T/(d/2)}}{4\upsilon L} \times \left(0.375L - \frac{1}{\abs{\widetilde{\mathfrak{B}}}} \sum_{\widetilde{B} \in \widetilde{\mathfrak{B}}}  M_{\widetilde{B},\ell}\right), \nonumber
\end{align}
where $R_\ell$ is the regret incurred during the $\ell$-th interval, and $M_{\widetilde{B},\ell}$ is the expected number of policy switches made during the $\ell$-th interval, when given instance $\widetilde{B}$.
Therefore, we have that 
\begin{align}
  \frac{1}{\abs{\widetilde{\mathfrak{B}}}} \sum_{\widetilde{B} \in \widetilde{\mathfrak{B}}}  \E_{\widetilde{B}}[R^T]  & = \sum_{\ell=1}^{d/2} \frac{1}{\abs{\widetilde{\mathfrak{B}}}} \sum_{\widetilde{B} \in \widetilde{\mathfrak{B}}}  \E_{\widetilde{B}}[R_\ell] \geq \frac{ \sqrt{T/(d/2)}}{4\upsilon L} \times \left(\frac{3dL}{16} -\frac{1}{\abs{\widetilde{\mathfrak{B}}}}\sum_{\widetilde{B} \in \widetilde{\mathfrak{B}}} \sum_{\ell=1}^{d/2} M_{\widetilde{B},\ell}\right) \nonumber \\
  & \geq \frac{ \sqrt{T/(d/2)}}{4\upsilon L} \times \left(\frac{3dL}{16} - M\right) \geq \frac{\sqrt{T/(d/2)}}{4\upsilon \cdot 4M} \times \frac{M}{2} \geq \frac{\sqrt{T/d}}{32} \times \left(\frac{2T}{d}\right)^{1/(16M/d + 2)} , \nonumber
\end{align}
which means that there exists at least one instance from $\widetilde{\mathfrak{B}}$ such that the learner incurs at least $\sqrt{dT} \times \frac{1}{32d}\left(\frac{2T}{d}\right)^{1/(16M/d + 2)}$ expected regret, proving the theorem.

\section*{Acknowledgments}

We thank Yanjun Han, Zhengyuan Zhou, and Zhengqing Zhou for their valuable comments.


\begin{thebibliography}{45}
\providecommand{\natexlab}[1]{#1}
\providecommand{\url}[1]{\texttt{#1}}
\expandafter\ifx\csname urlstyle\endcsname\relax
  \providecommand{\doi}[1]{doi: #1}\else
  \providecommand{\doi}{doi: \begingroup \urlstyle{rm}\Url}\fi

\bibitem[Abbasi-yadkori et~al.(2011)Abbasi-yadkori, P\'{a}l, and
  Szepesv\'{a}ri]{abbasi2011improved}
Yasin Abbasi-yadkori, D\'{a}vid P\'{a}l, and Csaba Szepesv\'{a}ri.
\newblock Improved algorithms for linear stochastic bandits.
\newblock In J.~Shawe-Taylor, R.~Zemel, P.~Bartlett, F.~Pereira, and K.~Q.
  Weinberger, editors, \emph{Advances in Neural Information Processing
  Systems}, volume~24, pages 2312--2320. Curran Associates, Inc., 2011.

\bibitem[Abe and Long(1999)]{abe1999associative}
Naoki Abe and Philip~M. Long.
\newblock Associative reinforcement learning using linear probabilistic
  concepts.
\newblock In \emph{Proceedings of the Sixteenth International Conference on
  Machine Learning}, ICML '99, page 3–11, San Francisco, CA, USA, 1999.
  Morgan Kaufmann Publishers Inc.
\newblock ISBN 1558606122.

\bibitem[Abe et~al.(2003)Abe, Biermann, and Long]{abe2003reinforcement}
Naoki Abe, Alan~W. Biermann, and Philip~M. Long.
\newblock Reinforcement learning with immediate rewards and linear hypotheses.
\newblock \emph{Algorithmica}, 37\penalty0 (4):\penalty0 263–293, December
  2003.
\newblock ISSN 0178-4617.
\newblock \doi{10.1007/s00453-003-1038-1}.

\bibitem[Agarwal et~al.(2017)Agarwal, Agarwal, Assadi, and
  Khanna]{agarwal2017learning}
Arpit Agarwal, Shivani Agarwal, Sepehr Assadi, and Sanjeev Khanna.
\newblock Learning with limited rounds of adaptivity: Coin tossing, multi-armed
  bandits, and ranking from pairwise comparisons.
\newblock In Satyen Kale and Ohad Shamir, editors, \emph{Proceedings of the
  2017 Conference on Learning Theory}, volume~65 of \emph{Proceedings of
  Machine Learning Research}, pages 39--75, Amsterdam, Netherlands, 07--10 Jul
  2017. PMLR.

\bibitem[Allen-Zhu et~al.(2020)Allen-Zhu, Li, Singh, and Wang]{allen2020near}
Zeyuan Allen-Zhu, Yuanzhi Li, Aarti Singh, and Yining Wang.
\newblock Near-optimal discrete optimization for experimental design: A regret
  minimization approach.
\newblock \emph{Mathematical Programming}, pages 1--40, 2020.

\bibitem[Atkinson et~al.(2007)Atkinson, Donev, and Tobias]{atkinson2007optimum}
Anthony Atkinson, Alexander Donev, and Randall Tobias.
\newblock \emph{Optimum experimental designs, with SAS}, volume~34.
\newblock Oxford University Press, 2007.

\bibitem[Auer(2003)]{auer2002using}
Peter Auer.
\newblock Using confidence bounds for exploitation-exploration trade-offs.
\newblock \emph{Journal of Machine Learning Research}, 3:\penalty0 397–422,
  March 2003.
\newblock ISSN 1532-4435.

\bibitem[Auer et~al.(2002)Auer, Cesa-Bianchi, and Fischer]{auer2002finite}
Peter Auer, Nicolo Cesa-Bianchi, and Paul Fischer.
\newblock Finite-time analysis of the multiarmed bandit problem.
\newblock \emph{Machine learning}, 47\penalty0 (2-3):\penalty0 235--256, 2002.

\bibitem[Bai et~al.(2019)Bai, Xie, Jiang, and Wang]{bai2019provably}
Yu~Bai, Tengyang Xie, Nan Jiang, and Yu-Xiang Wang.
\newblock Provably efficient q-learning with low switching cost.
\newblock In H.~Wallach, H.~Larochelle, A.~Beygelzimer, F.~d\textquotesingle
  Alch\'{e}-Buc, E.~Fox, and R.~Garnett, editors, \emph{Advances in Neural
  Information Processing Systems}, volume~32, pages 8004--8013. Curran
  Associates, Inc., 2019.

\bibitem[Cesa-Bianchi et~al.(2013)Cesa-Bianchi, Dekel, and
  Shamir]{cesa2013online}
Nicol\`{o} Cesa-Bianchi, Ofer Dekel, and Ohad Shamir.
\newblock Online learning with switching costs and other adaptive adversaries.
\newblock In C.~J.~C. Burges, L.~Bottou, M.~Welling, Z.~Ghahramani, and K.~Q.
  Weinberger, editors, \emph{Advances in Neural Information Processing
  Systems}, volume~26, pages 1160--1168. Curran Associates, Inc., 2013.

\bibitem[Chu et~al.(2011)Chu, Li, Reyzin, and Schapire]{chu2011contextual}
Wei Chu, Lihong Li, Lev Reyzin, and Robert Schapire.
\newblock Contextual bandits with linear payoff functions.
\newblock In Geoffrey Gordon, David Dunson, and Miroslav Dudík, editors,
  \emph{Proceedings of the Fourteenth International Conference on Artificial
  Intelligence and Statistics}, volume~15 of \emph{Proceedings of Machine
  Learning Research}, pages 208--214, Fort Lauderdale, FL, USA, 11--13 Apr
  2011. JMLR Workshop and Conference Proceedings.

\bibitem[{\c{C}}ivril and Magdon-Ismail(2009)]{ccivril2009selecting}
Ali {\c{C}}ivril and Malik Magdon-Ismail.
\newblock On selecting a maximum volume sub-matrix of a matrix and related
  problems.
\newblock \emph{Theoretical Computer Science}, 410\penalty0 (47-49):\penalty0
  4801--4811, 2009.

\bibitem[Dani et~al.(2008)Dani, Hayes, and Kakade]{dani2008stochastic}
Varsha Dani, Thomas~P Hayes, and Sham~M Kakade.
\newblock Stochastic linear optimization under bandit feedback.
\newblock In \emph{Conference on Learning Theory}, 2008.

\bibitem[Dekel et~al.(2014)Dekel, Ding, Koren, and Peres]{dekel2014bandits}
Ofer Dekel, Jian Ding, Tomer Koren, and Yuval Peres.
\newblock Bandits with switching costs: T2/3 regret.
\newblock In \emph{Proceedings of the Forty-Sixth Annual ACM Symposium on
  Theory of Computing}, STOC '14, page 459–467, New York, NY, USA, 2014.
  Association for Computing Machinery.
\newblock ISBN 9781450327107.
\newblock \doi{10.1145/2591796.2591868}.

\bibitem[Dong et~al.(2020)Dong, Li, Zhang, and Zhou]{dong2020multi}
Kefan Dong, Yingkai Li, Qin Zhang, and Yuan Zhou.
\newblock Multinomial logit bandit with low switching cost.
\newblock In Hal~Daumé III and Aarti Singh, editors, \emph{Proceedings of the
  37th International Conference on Machine Learning}, volume 119 of
  \emph{Proceedings of Machine Learning Research}, pages 2607--2615. PMLR,
  13--18 Jul 2020.

\bibitem[Duchi et~al.(2018)Duchi, Ruan, and Yun]{duchi2018minimax}
John Duchi, Feng Ruan, and Chulhee Yun.
\newblock Minimax bounds on stochastic batched convex optimization.
\newblock In Sébastien Bubeck, Vianney Perchet, and Philippe Rigollet,
  editors, \emph{Proceedings of the 31st Conference On Learning Theory},
  volume~75 of \emph{Proceedings of Machine Learning Research}, pages
  3065--3162. PMLR, 06--09 Jul 2018.

\bibitem[Esfandiari et~al.(2019)Esfandiari, Karbasi, Mehrabian, and
  Mirrokni]{esfandiari2019batched}
Hossein Esfandiari, Amin Karbasi, Abbas Mehrabian, and Vahab Mirrokni.
\newblock Regret bounds for batched bandits.
\newblock \emph{arXiv preprint arXiv:1910.04959}, 2019.

\bibitem[Gao et~al.(2019)Gao, Han, Ren, and Zhou]{gao2019batched}
Zijun Gao, Yanjun Han, Zhimei Ren, and Zhengqing Zhou.
\newblock Batched multi-armed bandits problem.
\newblock In H.~Wallach, H.~Larochelle, A.~Beygelzimer, F.~d\textquotesingle
  Alch\'{e}-Buc, E.~Fox, and R.~Garnett, editors, \emph{Advances in Neural
  Information Processing Systems}, volume~32, pages 503--513. Curran
  Associates, Inc., 2019.

\bibitem[Guo and Brunskill(2015)]{guo2015concurrent}
Zhaohan Guo and Emma Brunskill.
\newblock Concurrent pac rl.
\newblock In \emph{Proceedings of the Twenty-Ninth AAAI Conference on
  Artificial Intelligence}, AAAI'15, page 2624–2630. AAAI Press, 2015.
\newblock ISBN 0262511290.

\bibitem[Han et~al.(2020)Han, Zhou, Zhou, Blanchet, Glynn, and
  Ye]{han2020sequential}
Yanjun Han, Zhengqing Zhou, Zhengyuan Zhou, Jose Blanchet, Peter~W Glynn, and
  Yinyu Ye.
\newblock Sequential batch learning in finite-action linear contextual bandits.
\newblock \emph{arXiv preprint arXiv:2004.06321}, 2020.

\bibitem[Hillel et~al.(2013)Hillel, Karnin, Koren, Lempel, and
  Somekh]{hillel2013distributed}
Eshcar Hillel, Zohar~S Karnin, Tomer Koren, Ronny Lempel, and Oren Somekh.
\newblock Distributed exploration in multi-armed bandits.
\newblock In C.~J.~C. Burges, L.~Bottou, M.~Welling, Z.~Ghahramani, and K.~Q.
  Weinberger, editors, \emph{Advances in Neural Information Processing
  Systems}, volume~26, pages 854--862. Curran Associates, Inc., 2013.

\bibitem[Hoeffding(1963)]{hoeffding1963probability}
Wassily Hoeffding.
\newblock Probability inequalities for sums of bounded random variables.
\newblock \emph{Journal of the American Statistical Association}, 58\penalty0
  (301):\penalty0 13--30, 1963.

\bibitem[Jun et~al.(2016)Jun, Jamieson, Nowak, and Zhu]{jun2016top}
Kwang-Sung Jun, Kevin Jamieson, Robert Nowak, and Xiaojin Zhu.
\newblock Top arm identification in multi-armed bandits with batch arm pulls.
\newblock In Arthur Gretton and Christian~C. Robert, editors, \emph{Proceedings
  of the 19th International Conference on Artificial Intelligence and
  Statistics}, volume~51 of \emph{Proceedings of Machine Learning Research},
  pages 139--148, Cadiz, Spain, 09--11 May 2016. PMLR.

\bibitem[{Karpov} et~al.(2020){Karpov}, {Zhang}, and
  {Zhou}]{karpov2020collaborative}
Nikolai {Karpov}, Qin {Zhang}, and Yuan {Zhou}.
\newblock Collaborative top distribution identifications with limited
  interaction (extended abstract).
\newblock In \emph{2020 IEEE 61st Annual Symposium on Foundations of Computer
  Science (FOCS)}, pages 160--171, 2020.
\newblock \doi{10.1109/FOCS46700.2020.00024}.

\bibitem[Kiefer and Wolfowitz(1960)]{kiefer1960equivalence}
J.~Kiefer and J.~Wolfowitz.
\newblock The equivalence of two extremum problems.
\newblock \emph{Canadian Journal of Mathematics}, 12:\penalty0 363–366, 1960.
\newblock \doi{10.4153/CJM-1960-030-4}.

\bibitem[Lattimore and Szepesvári(2020)]{lattimore2020bandit}
Tor Lattimore and Csaba Szepesvári.
\newblock \emph{Bandit Algorithms}.
\newblock Cambridge University Press, 2020.
\newblock \doi{10.1017/9781108571401}.

\bibitem[Li et~al.(2019)Li, Wang, and Zhou]{li2019nearly}
Yingkai Li, Yining Wang, and Yuan Zhou.
\newblock Nearly minimax-optimal regret for linearly parameterized bandits.
\newblock In Alina Beygelzimer and Daniel Hsu, editors, \emph{Proceedings of
  the Thirty-Second Conference on Learning Theory}, volume~99 of
  \emph{Proceedings of Machine Learning Research}, pages 2173--2174, Phoenix,
  USA, 25--28 Jun 2019. PMLR.

\bibitem[Madan et~al.(2019)Madan, Singh, Tantipongpipat, and
  Xie]{madan2019combinatorial}
Vivek Madan, Mohit Singh, Uthaipon Tantipongpipat, and Weijun Xie.
\newblock Combinatorial algorithms for optimal design.
\newblock In Alina Beygelzimer and Daniel Hsu, editors, \emph{Proceedings of
  the Thirty-Second Conference on Learning Theory}, volume~99 of
  \emph{Proceedings of Machine Learning Research}, pages 2210--2258, Phoenix,
  USA, 25--28 Jun 2019. PMLR.

\bibitem[Nikolov et~al.()Nikolov, Singh, and
  Tantipongpipat]{nikolov2019proportional}
Aleksandar Nikolov, Mohit Singh, and Uthaipon~Tao Tantipongpipat.
\newblock \emph{Proportional Volume Sampling and Approximation Algorithms for
  $A$-Optimal Design}, pages 1369--1386.
\newblock \doi{10.1137/1.9781611975482.84}.

\bibitem[Perchet et~al.(2016)Perchet, Rigollet, Chassang, and
  Snowberg]{perchet2016batched}
Vianney Perchet, Philippe Rigollet, Sylvain Chassang, and Erik Snowberg.
\newblock {Batched bandit problems}.
\newblock \emph{The Annals of Statistics}, 44\penalty0 (2):\penalty0 660 --
  681, 2016.
\newblock \doi{10.1214/15-AOS1381}.

\bibitem[Pukelsheim(2006)]{pukelsheim2006optimal}
Friedrich Pukelsheim.
\newblock \emph{Optimal Design of Experiments}.
\newblock Society for Industrial and Applied Mathematics, 2006.
\newblock \doi{10.1137/1.9780898719109}.

\bibitem[Rusmevichientong and Tsitsiklis(2010)]{rusmevichientong2010linearly}
Paat Rusmevichientong and John~N. Tsitsiklis.
\newblock Linearly parameterized bandits.
\newblock \emph{Mathematics of Operations Research}, 35\penalty0 (2):\penalty0
  395--411, 2010.
\newblock \doi{10.1287/moor.1100.0446}.

\bibitem[Simchi-Levi and Xu(2019)]{simchi2019phase}
David Simchi-Levi and Yunzong Xu.
\newblock Phase transitions and cyclic phenomena in bandits with switching
  constraints.
\newblock In H.~Wallach, H.~Larochelle, A.~Beygelzimer, F.~d\textquotesingle
  Alch\'{e}-Buc, E.~Fox, and R.~Garnett, editors, \emph{Advances in Neural
  Information Processing Systems}, volume~32, pages 7523--7532. Curran
  Associates, Inc., 2019.

\bibitem[Simchi-Levi and Xu(2020)]{simchi2020bypassing}
David Simchi-Levi and Yunzong Xu.
\newblock Bypassing the monster: A faster and simpler optimal algorithm for
  contextual bandits under realizability.
\newblock \emph{Available at SSRN}, 2020.

\bibitem[Singh and Xie(2018)]{singh2018approximate}
Mohit Singh and Weijun Xie.
\newblock Approximate positive correlated distributions and approximation
  algorithms for d-optimal design.
\newblock In \emph{Proceedings of the Twenty-Ninth Annual ACM-SIAM Symposium on
  Discrete Algorithms}, SODA '18, page 2240–2255, USA, 2018. Society for
  Industrial and Applied Mathematics.
\newblock ISBN 9781611975031.

\bibitem[Soare et~al.(2014)Soare, Lazaric, and Munos]{soare2014best}
Marta Soare, Alessandro Lazaric, and Remi Munos.
\newblock Best-arm identification in linear bandits.
\newblock In Z.~Ghahramani, M.~Welling, C.~Cortes, N.~Lawrence, and K.~Q.
  Weinberger, editors, \emph{Advances in Neural Information Processing
  Systems}, volume~27, pages 828--836. Curran Associates, Inc., 2014.

\bibitem[Summa et~al.(2015)Summa, Eisenbrand, Faenza, and
  Moldenhauer]{summa2014largest}
Marco~Di Summa, Friedrich Eisenbrand, Yuri Faenza, and Carsten Moldenhauer.
\newblock On largest volume simplices and sub-determinants.
\newblock In \emph{Proceedings of the Twenty-Sixth Annual ACM-SIAM Symposium on
  Discrete Algorithms}, SODA '15, page 315–323, USA, 2015. Society for
  Industrial and Applied Mathematics.

\bibitem[Tao et~al.(2018)Tao, Blanco, and Zhou]{tao2018best}
Chao Tao, Sa{\'u}l Blanco, and Yuan Zhou.
\newblock Best arm identification in linear bandits with linear dimension
  dependency.
\newblock In Jennifer Dy and Andreas Krause, editors, \emph{Proceedings of the
  35th International Conference on Machine Learning}, volume~80 of
  \emph{Proceedings of Machine Learning Research}, pages 4877--4886,
  Stockholmsmässan, Stockholm Sweden, 10--15 Jul 2018. PMLR.

\bibitem[Tao et~al.(2019)Tao, Zhang, and Zhou]{tao2019collaborative}
Chao Tao, Qin Zhang, and Yuan Zhou.
\newblock Collaborative learning with limited interaction: Tight bounds for
  distributed exploration in multi-armed bandits.
\newblock In \emph{2019 IEEE 60th Annual Symposium on Foundations of Computer
  Science (FOCS)}, pages 126--146. IEEE, 2019.

\bibitem[Tropp(2012)]{tropp2012user}
Joel~A Tropp.
\newblock User-friendly tail bounds for sums of random matrices.
\newblock \emph{Foundations of computational mathematics}, 12\penalty0
  (4):\penalty0 389--434, 2012.

\bibitem[Vershynin(2018)]{vershynin2018high}
Roman Vershynin.
\newblock \emph{High-dimensional probability: An introduction with applications
  in data science}, volume~47.
\newblock Cambridge university press, 2018.

\bibitem[Wang et~al.(2017)Wang, Yu, and Singh]{wang2017computationally}
Yining Wang, Adams~Wei Yu, and Aarti Singh.
\newblock On computationally tractable selection of experiments in
  measurement-constrained regression models.
\newblock \emph{Journal of Machine Learning Research}, 18\penalty0
  (143):\penalty0 1--41, 2017.

\bibitem[Welch(1982)]{welch1982algorithmic}
William~J. Welch.
\newblock Algorithmic complexity: three np-hard problems in computational
  statistics.
\newblock \emph{Journal of Statistical Computation and Simulation}, 15\penalty0
  (1):\penalty0 17--25, 1982.
\newblock \doi{10.1080/00949658208810560}.

\bibitem[Xu et~al.(2018)Xu, Honda, and Sugiyama]{xu2018fully}
Liyuan Xu, Junya Honda, and Masashi Sugiyama.
\newblock A fully adaptive algorithm for pure exploration in linear bandits.
\newblock In Amos Storkey and Fernando Perez-Cruz, editors, \emph{Proceedings
  of the Twenty-First International Conference on Artificial Intelligence and
  Statistics}, volume~84 of \emph{Proceedings of Machine Learning Research},
  pages 843--851. PMLR, 09--11 Apr 2018.

\bibitem[Zhang et~al.(2020)Zhang, Zhou, and Ji]{zhang2020almost}
Zihan Zhang, Yuan Zhou, and Xiangyang Ji.
\newblock Almost optimal model-free reinforcement learning via
  reference-advantage decomposition.
\newblock In H.~Larochelle, M.~Ranzato, R.~Hadsell, M.~F. Balcan, and H.~Lin,
  editors, \emph{Advances in Neural Information Processing Systems}, volume~33,
  pages 15198--15207. Curran Associates, Inc., 2020.

\end{thebibliography}
\bibliographystyle{plainnat}


\appendix

\section{Technical Lemmas}

\subsection{Concentration Inequalities}\label{app:concentration}

\begin{lem}[\citet{hoeffding1963probability}] \label{lem:hoeffding} Let $X_1, \dots, X_n \in [0, R]$ be independent bounded random variables. Let $\overline{X} = \frac{1}{n}\sum_{i = 1}^n X_i$ be their average. Then 
\begin{align*}
    \Pr[\abs{\overline{X} - \E \overline{X}} \ge \delta] \le 2\exp(- \frac{2 n \delta^2}{R^2}).
\end{align*}
\end{lem}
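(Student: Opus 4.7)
The plan is to derive this via the standard Chernoff-Cramér argument, since the statement is the classical Hoeffding inequality for bounded independent random variables. I will prove the one-sided tail bound $\Pr[\overline{X} - \E\overline{X} \geq \delta] \leq \exp(-2n\delta^2/R^2)$ and obtain the two-sided bound by applying the same argument to $-X_i$ and taking a union bound.

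First I would recenter: let $Y_i = X_i - \E X_i$, so $Y_i$ has mean zero and lies in an interval of length at most $R$ (specifically $Y_i \in [-\E X_i, R - \E X_i]$). For any $s > 0$, Markov's inequality applied to $\exp(s \sum_i Y_i)$ gives
\[
\Pr\!\left[\sum_{i=1}^n Y_i \geq n\delta\right] \leq e^{-sn\delta} \prod_{i=1}^n \E\bigl[e^{sY_i}\bigr],
\]
where the product form uses independence of the $Y_i$. The core technical ingredient is then Hoeffding's lemma: for any zero-mean random variable $Y$ supported in an interval of length $L$, one has $\E[e^{sY}] \leq \exp(s^2 L^2 / 8)$. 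I would prove this in the standard way by convexity, writing $e^{sy}$ as a convex combination of the endpoint values (since $e^{s\cdot}$ is convex), taking expectations, and then bounding the resulting log-moment-generating function via a second-order Taylor expansion of $\psi(u) = -pu + \ln(1-p+pe^u)$ around $u = 0$, using that $\psi''(u) \leq 1/4$ uniformly.

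Applying Hoeffding's lemma with $L = R$ to each $Y_i$ gives $\prod_i \E[e^{sY_i}] \leq \exp(n s^2 R^2 / 8)$, so
\[
\Pr\!\left[\overline{X} - \E\overline{X} \geq \delta\right] \leq \exp\!\left(-sn\delta + \tfrac{1}{8} n s^2 R^2\right).
\]
Optimizing the right-hand side over $s > 0$ yields $s^* = 4\delta/R^2$, and plugging back gives the bound $\exp(-2n\delta^2/R^2)$. The symmetric tail bound on $\E\overline{X} - \overline{X}$ follows by replacing each $X_i$ with $R - X_i$, which is again supported in $[0,R]$, and a union bound over the two tails produces the factor of $2$ in the statement.

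I do not expect any real obstacle here; the only mildly delicate step is the proof of Hoeffding's lemma itself (bounding $\psi''(u) \leq 1/4$), which is a routine calculus exercise. Since this is a well-known result, I would most likely cite \citet{hoeffding1963probability} directly rather than reproduce the full derivation, and include the one-paragraph sketch above only if the application downstream benefits from an explicit statement of the constants.
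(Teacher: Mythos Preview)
Your proof sketch is correct and is the standard Chernoff--Cram\'er derivation of Hoeffding's inequality. The paper itself does not prove this lemma at all; it simply states it with a citation to \citet{hoeffding1963probability}, exactly as you anticipate doing in your final paragraph.
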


\begin{lem}[\citet{tropp2012user}, Theorem 1.1] \label{lem:mchernoff} Let $\mX_1, \dots, \mX_n$ be a sequence of independent  positive semi-definite random matrices in dimension $d$ such that $\norm{\mX_i} \le R$ almost surely (where $\norm{\cdot}$ denotes the operator norm). Let $\mX = \sum_{i = 1}^n \mX_i$ be their sum. Let $\mu_{\min} = \lambda_{\min}(\E \mX)$ and $\mu_{\max} = \lambda_{\max}(\E \mX)$. Then we have
\begin{align*}
    \Pr[\lambda_{\min}(\mX) \le (1 - \delta) \mu_{\min}] \le d \left( \frac{e^{-\delta}}{(1 - \delta)^{1 - \delta}}\right)^{\mu_{\min} / R}, &\qquad \mathrm{when}~\delta \in [0, 1], \\ 
    \Pr[\lambda_{\max}(\mX) \ge (1 + \delta) \mu_{\max}] \le d \left( \frac{e^{\delta}}{(1 + \delta)^{1 + \delta}}\right)^{\mu_{\max} / R}, &\qquad \mathrm{when}~\delta \ge 0.
\end{align*}
\end{lem}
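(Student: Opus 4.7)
The plan is to use the matrix Laplace transform method combined with Lieb's concavity theorem, which is the canonical route to matrix Chernoff inequalities. The starting point is the observation that for any random Hermitian matrix $\mY$ and any $\theta > 0$, one has $\Pr[\lambda_{\max}(\mY) \ge t] \le e^{-\theta t} \E \Tr \exp(\theta \mY)$; this is simply Markov's inequality applied to the scalar $e^{\theta \lambda_{\max}(\mY)} \le \Tr \exp(\theta \mY)$. I would apply this with $\mY = \mX = \sum_i \mX_i$ and then the main task is to bound the moment generating function $\E \Tr \exp(\theta \mX)$ of a sum of independent matrices.

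For this bound, the plan is to invoke Lieb's theorem (concavity of $\mH \mapsto \Tr \exp(\mC + \log \mH)$ over positive definite $\mH$). Combined with Jensen's inequality and the tower property, Lieb's theorem upgrades independence into the subadditivity
\[
\E \Tr \exp\!\left(\theta \sum_{i} \mX_i\right) \le \Tr \exp\!\left(\sum_i \log \E \exp(\theta \mX_i)\right),
\]
reducing the problem to estimating each $\E \exp(\theta \mX_i)$. Since $\vzero \preccurlyeq \mX_i \preccurlyeq R \mI$ almost surely, the scalar inequality $e^{\theta x} \le 1 + (e^{\theta R}-1) x / R$ on $[0,R]$ lifts operator-monotonically to $\E \exp(\theta \mX_i) \preccurlyeq \mI + (e^{\theta R}-1) \E \mX_i / R \preccurlyeq \exp((e^{\theta R}-1) \E \mX_i / R)$. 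Plugging back in gives
\[
\E \Tr \exp(\theta \mX) \le d \exp\!\left(\tfrac{e^{\theta R}-1}{R} \cdot \mu_{\max}\right),
\]
and choosing $\theta = R^{-1}\log(1+\delta)$ in $\Pr[\lambda_{\max}(\mX) \ge (1+\delta)\mu_{\max}] \le e^{-\theta (1+\delta)\mu_{\max}} \cdot d \exp((e^{\theta R}-1)\mu_{\max}/R)$ yields the stated upper-tail bound.

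For the lower tail on $\lambda_{\min}$, the plan is entirely symmetric: start from $\Pr[\lambda_{\min}(\mX) \le t] \le e^{\theta t} \E \Tr \exp(-\theta \mX)$ for $\theta > 0$, use the scalar bound $e^{-\theta x} \le 1 - (1-e^{-\theta R}) x / R$ on $[0,R]$ to produce $\E \exp(-\theta \mX_i) \preccurlyeq \exp(-(1-e^{-\theta R}) \E \mX_i / R)$, again apply Lieb to take the sum outside, and optimize via $\theta = -R^{-1}\log(1-\delta)$.

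I expect the main obstacle to be Lieb's concavity theorem itself, which is nontrivial but a classical result of matrix analysis; once it is accepted, the remainder is a mechanical operator-monotone lift of the standard scalar Chernoff derivation. Since the statement is quoted verbatim from \citet{tropp2012user}, the cleanest presentation is simply to cite that work rather than reproduce the proof.
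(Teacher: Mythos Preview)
The paper does not prove this lemma at all; it simply states it with a citation to \citet{tropp2012user}, Theorem 1.1, and uses it as a black box. Your sketch of the matrix Laplace transform plus Lieb's concavity route is precisely Tropp's proof, so your proposal is correct and consistent with how the paper treats the result; your closing remark to just cite the source is exactly what the paper does.
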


\begin{lem} \label{lem:covcon} Suppose $\vx_1, \dots, \vx_{n} \sim \gD$ are {\it i.i.d.}~drawn from a distribution $\gD$ and $\vx_i^\top \vx_i \leq 1$ almost surely. Let $\lambda = \lambda_{\min}(\E_{\vx \sim \gD}{\vx \vx^\top}) > 0$ be the smallest eigenvalue of the co-variance matrix. We have that 
\begin{align}
\Pr[\frac{1}{n} \sum_{i = 1}^n \vx_i \vx_i^\top \succcurlyeq  \frac{1}{2} \E_{\vx \sim \gD} \vx \vx^\top] \ge 1 - d \exp(-\frac{\lambda n}{8}). 
\end{align}
\end{lem}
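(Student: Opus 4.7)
}
The plan is to reduce to the isotropic case by whitening and then apply the matrix Chernoff bound (\autoref{lem:mchernoff}). Let $\mSigma \defeq \E_{\vx \sim \gD} \vx \vx^\top$, which is positive definite since $\lambda = \lambda_{\min}(\mSigma) > 0$. Define the whitened samples $\vy_i \defeq \mSigma^{-1/2} \vx_i$ and the PSD random matrices $\mX_i \defeq \vy_i \vy_i^\top$. Since the claim to be proved is equivalent, after left/right-multiplying by $\mSigma^{-1/2}$, to $\frac{1}{n} \sum_i \vy_i \vy_i^\top \succcurlyeq \frac{1}{2} \mI$, it suffices to lower bound $\lambda_{\min}\!\left(\sum_i \mX_i\right)$.

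Next I would check the hypotheses of \autoref{lem:mchernoff}. The $\mX_i$ are i.i.d.\ PSD matrices with $\E \mX_i = \mSigma^{-1/2} \mSigma \mSigma^{-1/2} = \mI$, so $\mu_{\min}(\sum_i \E \mX_i) = n$. For the almost-sure norm bound, $\|\mX_i\| = \|\vy_i\|^2 = \vx_i^\top \mSigma^{-1} \vx_i \le \|\vx_i\|^2 \cdot \|\mSigma^{-1}\| \le 1/\lambda$, so we may take $R = 1/\lambda$.

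Applying the lower-tail bound in \autoref{lem:mchernoff} with $\delta = 1/2$ yields
\begin{align*}
\Pr\!\left[\lambda_{\min}\!\left(\textstyle\sum_i \mX_i\right) \le \tfrac{n}{2}\right] \le d \left(\frac{e^{-1/2}}{(1/2)^{1/2}}\right)^{\!n\lambda} = d \exp\!\left(-\tfrac{n\lambda}{2}(1 - \ln 2)\right) \le d \exp\!\left(-\tfrac{n\lambda}{8}\right),
\end{align*}
where the last inequality uses $\tfrac{1}{2}(1-\ln 2) > \tfrac{1}{8}$. On the complementary event, $\sum_i \vy_i \vy_i^\top \succcurlyeq (n/2)\mI$, which after conjugating by $\mSigma^{1/2}$ gives $\sum_i \vx_i \vx_i^\top \succcurlyeq (n/2)\mSigma$, i.e., the desired matrix inequality.

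There is no serious obstacle here: the only delicate step is observing that the almost-sure operator-norm bound on $\vy_i \vy_i^\top$ scales as $1/\lambda$ (rather than $1$), which is precisely what produces the $\lambda n$ in the exponent of the tail bound. The constant $1/8$ is slack and is chosen for clean presentation; any constant less than $(1-\ln 2)/2$ would work.
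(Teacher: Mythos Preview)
Your proof is correct and follows essentially the same approach as the paper: whiten by $\mSigma^{-1/2}$, observe that the whitened rank-one matrices $\vy_i\vy_i^\top$ have expectation $\mI$ and almost-sure operator norm at most $\lambda^{-1}$, and apply the lower-tail matrix Chernoff bound (\autoref{lem:mchernoff}) with $\delta=1/2$. You are in fact slightly more explicit than the paper in verifying the numerical constant $(1-\ln 2)/2 > 1/8$.
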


\begin{proof} 
Let $\mSigma = \E_{\vx \sim \gD}{\vx \vx^\top}$ and $\vy_i = \mSigma^{-1/2} \vx_i$ for all $i \in [n]$. Note that $\norm{\vy_i \vy_i^\top} \leq \lambda^{-1}$ almost surely, and that $\E_{\vy_i} \vy_i \vy_i^\top = \mI$.  Therefore, by \autoref{lem:mchernoff}, we have that
\begin{align*}
1 - d\exp(-\frac{\lambda n}{8}) &\leq \Pr[\frac{1}{n}\sum_{i=1}^{n} \vy_i\vy_i^\top \succcurlyeq \frac{1}{2} \mI]  = \Pr[\frac{1}{n}\sum_{i=1}^{n} \vy_i\vy_i^\top \succcurlyeq \frac{1}{2} \E \mSigma^{-1/2} \vx\vx^\top \mSigma^{-1/2}] \\
&= \Pr[\frac{1}{n} \sum_{i = 1}^n \vx_i \vx_i^\top \succcurlyeq  \frac{1}{2} \E_{\vx \sim \gD} \vx \vx^\top]. \qedhere
\end{align*}
\end{proof}

\begin{lem} \label{lem:conconcut}  Suppose $\vx_1, \dots, \vx_{n} \sim \gD$ are {\it i.i.d.}~drawn from a distribution $\gD$ and $\vx_i^\top \vx_i \leq 1$ almost surely. For any cutoff level $\lambda > 0$, with probability at least $(1 - 2d \exp(- \frac{\lambda n}{8}))$, we have that 
\[
3 \lambda \mI + \frac{1}{n} \sum_{i = 1}^n \vx_i \vx_i^\top \succcurlyeq \frac{1}{8}\E_{\vx \sim \gD} \vx \vx^\top.
\]
\end{lem}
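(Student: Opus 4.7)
The plan is to decompose $\sR^d$ into a ``signal'' subspace $V$ spanned by the eigenvectors of $\mSigma \defeq \E_{\vx\sim\gD}\vx\vx^\top$ with eigenvalue $\ge \lambda$ and the complementary ``noise'' subspace $W = V^\perp$, with corresponding orthogonal projections $P, Q$. This split is natural because on $V$ the smallest eigenvalue of $\mSigma$ restricted to $V$ is at least $\lambda$, which is exactly the condition needed to invoke the multiplicative concentration in \autoref{lem:covcon}; on $W$ the true covariance satisfies $Q\mSigma Q \preccurlyeq \lambda \mI$ already, so the corresponding empirical piece can be absorbed by the $3\lambda \mI$ regularizer.

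First I would apply \autoref{lem:covcon} inside the subspace $V$ to the projected samples $\vy_i = P\vx_i$: their norms are still at most $1$ and their covariance in $V$ has smallest eigenvalue $\ge \lambda$, so with probability $\ge 1 - d\exp(-\lambda n/8)$ we have $\tfrac{1}{n}\sum (P\vx_i)(P\vx_i)^\top \succcurlyeq \tfrac12 P\mSigma P$ as operators on $V$. Next I would apply a matrix Bernstein upper-tail bound to the centered sum $\sum\bigl[(Q\vx_i)(Q\vx_i)^\top - Q\mSigma Q\bigr]$: since $\norm{(Q\vx_i)(Q\vx_i)^\top}\le 1$ and $\norm{\sum\E\mX_i^2}\le n\,\lambda_{\max}(Q\mSigma Q)\le n\lambda$, choosing the absolute threshold $t = n\lambda$ gives $\tfrac{1}{n}\sum(Q\vx_i)(Q\vx_i)^\top \preccurlyeq 2\lambda\mI$ with probability $\ge 1 - d\exp(-3n\lambda/8) \ge 1 - d\exp(-n\lambda/8)$ (this is vacuously fine if $\mSigma_W = 0$).

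The combination step is elementary arithmetic applied direction by direction. For any unit $\vu = \vu_V + \vu_W$, orthogonality of $V$ and $W$ gives $\vu^\top\vx_i = \vu_V^\top(P\vx_i) + \vu_W^\top(Q\vx_i)$, and the scalar inequality $(a+b)^2 \ge \tfrac12 a^2 - b^2$ yields, after averaging and substituting the two concentration bounds,
\[
\vu^\top \widehat\mSigma \vu \;\ge\; \tfrac14 \vu_V^\top \mSigma_V \vu_V \;-\; 2\lambda,
\]
where $\widehat\mSigma = \tfrac1n\sum\vx_i\vx_i^\top$ and $\mSigma_V = P\mSigma P$. Combined with $\vu^\top\mSigma\vu \le \vu_V^\top \mSigma_V \vu_V + \lambda$, the target becomes $\tfrac14\vu_V^\top\mSigma_V\vu_V + \lambda \ge \tfrac18\vu_V^\top\mSigma_V\vu_V + \tfrac\lambda 8$, with $\tfrac18\vu_V^\top\mSigma_V\vu_V + \tfrac{7\lambda}{8}$ of slack. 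Ranging over unit $\vu$ via the variational characterization of $\succcurlyeq$ produces the desired matrix inequality, and a union bound over the two concentration events gives the claimed success probability $\ge 1 - 2d\exp(-\lambda n/8)$.

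The main obstacle I expect is the cross-term bookkeeping in the combination step. The matrix $\vx_i\vx_i^\top$ has nontrivial $V$–$W$ off-diagonal blocks, so $\widehat\mSigma$ does not split as $P\widehat\mSigma P + Q\widehat\mSigma Q$, and the tempting claim $\widehat\mSigma \succcurlyeq P\widehat\mSigma P$ is false for general PSD matrices (an easy rank-one counterexample: $\vx = \ve_1 + \ve_2$ and $P = \ve_1\ve_1^\top$ yield $(1,-1)^\top(\vx\vx^\top - P\vx\vx^\top P)(1,-1) > 0$). Sidestepping this matrix-level issue by working entirely at the level of scalar quadratic forms via $(a+b)^2 \ge \tfrac12 a^2 - b^2$ is what makes the argument go through, and it also has the pleasant side effect that the Bernstein bound only needs an upper bound on $\lambda_{\max}(\mSigma_W)$ rather than any lower bound, so degenerate cases do not need separate treatment.
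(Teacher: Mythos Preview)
Your proposal is correct and follows essentially the same route as the paper: both split along the eigenspaces of $\mSigma$ at threshold $\lambda$, invoke \autoref{lem:covcon} on the large-eigenvalue block, use a matrix upper-tail bound (the paper uses the Chernoff form of \autoref{lem:mchernoff}, you use Bernstein) on the small-eigenvalue block, and then absorb the cross terms. Your scalar inequality $(a+b)^2 \ge \tfrac12 a^2 - b^2$ is exactly the directional version of the paper's matrix identity $\vx\vx^\top = \tfrac12 \mP_+\vx\vx^\top\mP_+ + \bigl(\tfrac{1}{\sqrt2}\mP_+\vx + \sqrt2\,\mP_-\vx\bigr)\bigl(\cdot\bigr)^\top - \mP_-\vx\vx^\top\mP_-$, so the two arguments are really the same completion-of-squares written at different levels.
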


\begin{proof}
Suppose $\E_{\vx \sim \gD} \vx \vx^\top  = \sum_{i=1}^d \lambda_i \vv_i \vv_i^\top$ where $\{\vv_i\}_{i=1}^d$ is a set of orthonormal basis. Let $\mP_{+} = \sum_{i=1}^d \vv_i \vv_i^\top \ind[\lambda_i \geq\lambda]$ and $\mP_{-} = \sum_{i=1}^d \vv_i \vv_i^\top \ind[\lambda_i <\lambda]$, so that $\mI = \mP_+ \mP_-$. Observe that the eigenvalues of $\E_{\vx \sim \gD} \mP_{+} \vx \vx^\top \mP_+^\top$ are greater or equal to $\lambda$ when restricted to the space spanned by the $\mP_+$. Therefore, by \autoref{lem:covcon}, we have that with probability at least $(1 - d \exp(-\lambda n/8))$,
\begin{align}
    \frac{1}{n} \sum_{i=1}^n (\mP_+\vx_i) (\mP_+\vx_i)^\top \succcurlyeq \frac{1}{2} \E_{\vx \sim \gD} \mP_{+} \vx \vx^\top \mP_+^\top. \label{eq:lem-conconcut-1}
\end{align}
Note that
\begin{align}
&\quad \frac{1}{n} \sum_{i = 1}^n \vx_i \vx_i^\top = \frac{1}{n} (\mP_+  \mP_-) \sum_{i = 1}^n \vx_i \vx_i^\top (\mP_+  \mP_-)^\top \notag \\
&= \frac{1}{2n}\sum_{i=1}^n \mP_{+} \vx_i \vx_i^\top \mP_+^\top + \frac{1}{n} \sum_{i=1}^n \left(\frac{1}{2} \mP_{+} \vx_i \vx_i^\top \mP_+^\top + \mP_{+} \vx_i \vx_i^\top \mP_-^\top + \mP_{-} \vx_i \vx_i^\top \mP_+^\top + 2 \mP_{-} \vx_i \vx_i^\top \mP_-^\top\right) \nonumber \\
& \qquad - \frac{1}{n} \mP_{-} \vx_i \vx_i^\top \mP_-^\top, \label{eq:lem-conconcut-2}
\end{align}
where the first term is $\succcurlyeq \frac{1}{4} \E_{\vx \sim \gD} \mP_{+} \vx \vx^\top \mP_+^\top$ by \eqref{eq:lem-conconcut-1}, the second term is a sum of positive semi-definite matrices, and for the third term, by \autoref{lem:mchernoff}, with probability is at least $(1 - d \exp(-\lambda n /3)$, we have that
\begin{align*}
    \frac{1}{n} \sum_{i=1}^n \mP_{-} \vx_i \vx_i^\top \mP_-^\top \preccurlyeq 2 \lambda \mI. 
\end{align*}
Therefore, continuing with \eqref{eq:lem-conconcut-2}, and collecting probabilities, we have that with probability at least $(1 - 2d \exp(-\lambda n /8))$,
\begin{align*}
     \frac{1}{n} \sum_{i = 1}^n \vx_i \vx_i^\top& \succcurlyeq \frac{1}{4} \E_{\vx \sim \gD} \mP_{+} \vx \vx^\top \mP_+^\top - 2 \lambda \mI \\
     &= \frac{1}{8} \E_{\vx \sim \gD}  \vx \vx^\top + \frac{1}{8} \E_{\vx \sim \gD}\left(\mP_{+} \vx \vx^\top \mP_+^\top - \mP_{+} \vx \vx^\top \mP_-^\top -  \mP_{-} \vx \vx^\top \mP_+^\top + \mP_{-} \vx \vx^\top \mP_-^\top \right) \nonumber \\
     & \qquad \qquad - \frac{1}{4} \E_{\vx \sim \gD}  \mP_{-} \vx \vx^\top \mP_-^\top - 2 \lambda \mI \\
     &\succcurlyeq \frac{1}{8} \E_{\vx \sim \gD}  \vx \vx^\top - 3\lambda \mI,
\end{align*}
proving the lemma.
\end{proof}

\begin{lem}[\citet{vershynin2018high}, Theorem 5.4.1] \label{lem:mbernstein} Let $\mX_1, \dots, \mX_n \in \sR^{d \times d}$ be independent symmetric random matrices, such that $\E \mX_i = \vzero$ and $\norm{\mX_i} \le R$ almost surely. Then 
\begin{align*}
    \Pr[\norm{\sum_{i = 1}^n \mX_i} \ge \delta] \le 2 n \exp(-\frac{\delta^2 / 2}{\sigma^2 + n R /3 }), \qquad \text{where} \quad \sigma^2 = \norm{\sum_{i = 1}^n \E \mX_i^2}. 
\end{align*}
\end{lem}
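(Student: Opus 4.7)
This is the classical matrix Bernstein inequality, so I would follow Tropp's matrix Laplace transform framework. First, I would reduce the operator norm bound to a one-sided bound on the maximum eigenvalue: since $\norm{\mS} = \max\{\lambda_{\max}(\mS), \lambda_{\max}(-\mS)\}$ for symmetric $\mS$, it suffices to establish
\[
\Pr\left[\lambda_{\max}\left(\sum_{i=1}^n \mX_i\right) \ge \delta\right] \le d \exp\left(-\frac{\delta^2/2}{\sigma^2 + R\delta/3}\right),
\]
and apply the same reasoning to $\{-\mX_i\}$, paying a factor of $2$ via a union bound. (The factor $2n$ in the statement matches Vershynin's notation where $n$ is the matrix dimension; in this paper's notation it should be read as $2d$, coming from the trace bound below.)

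The core is the matrix Chernoff trick: for any $\theta > 0$,
\[
\Pr\left[\lambda_{\max}\left(\sum_i \mX_i\right) \ge \delta\right] \le e^{-\theta \delta}\cdot \E\,\mathrm{tr}\,\exp\left(\theta \sum_i \mX_i\right),
\]
using $e^{\theta \lambda_{\max}(\mS)} = \lambda_{\max}(e^{\theta \mS}) \le \mathrm{tr}(e^{\theta \mS})$. The independence of $\{\mX_i\}$ does not let us split the matrix exponential directly, so I would invoke Lieb's concavity theorem (in the form of Tropp's subadditivity lemma for matrix cumulants) to obtain
\[
\E\,\mathrm{tr}\,\exp\left(\theta \sum_i \mX_i\right) \le \mathrm{tr}\,\exp\left(\sum_i \log \E\, e^{\theta \mX_i}\right).
\]

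Next I would bound each matrix cumulant in the Loewner order. Starting from the scalar inequality $e^x \le 1 + x + g(\theta) x^2$ valid for $|x| \le \theta R$, where $g(\theta) = (e^{\theta R} - 1 - \theta R)/R^2$, and lifting it to matrices via the spectral theorem yields $e^{\theta \mX_i} \preccurlyeq \mI + \theta \mX_i + g(\theta) \mX_i^2$. Taking expectations and using $\E \mX_i = \vzero$ gives $\E e^{\theta \mX_i} \preccurlyeq \mI + g(\theta) \E \mX_i^2 \preccurlyeq \exp(g(\theta)\E\mX_i^2)$, hence $\log \E e^{\theta \mX_i} \preccurlyeq g(\theta) \E \mX_i^2$. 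Summing and using the operator-norm definition of $\sigma^2$, together with the trace bound $\mathrm{tr}\,\exp(\mM) \le d \cdot \exp(\lambda_{\max}(\mM))$, gives $\mathrm{tr}\,\exp(\sum_i \log \E e^{\theta \mX_i}) \le d\,\exp(g(\theta)\sigma^2)$. Combining, the tail is at most $d \exp(-\theta \delta + g(\theta) \sigma^2)$, and I would optimize over $\theta$ using the elementary estimate $g(\theta) \le \theta^2/(2(1 - \theta R/3))$ with the choice $\theta = \delta/(\sigma^2 + R\delta/3)$.

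The main obstacle is Lieb's concavity (or equivalently Tropp's master subadditivity inequality), which is a genuinely deep tool that I would cite rather than reprove; everything else reduces to standard scalar Chernoff calculations transported across the spectral theorem. Since this lemma is stated as a black-box citation to Vershynin's textbook, I expect the paper simply references Theorem 5.4.1 of \citep{vershynin2018high} without redoing the derivation.
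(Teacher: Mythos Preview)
Your expectation is correct: the paper does not prove this lemma at all; it is simply stated as a citation to Theorem~5.4.1 of \citep{vershynin2018high} with no accompanying argument. Your sketch via Tropp's matrix Laplace transform method and Lieb's concavity is the standard derivation and would reproduce the result, and your observation that the prefactor should read $2d$ (the matrix dimension) rather than $2n$ (the number of summands) is also correct---indeed, when the paper applies this lemma in the proof of \autoref{lem:ourmhoeffding}, it silently writes the bound with the factor $2d$, confirming that the $2n$ in the lemma statement is a transcription slip from Vershynin's convention where $n$ denotes dimension.
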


\begin{lem} \label{lem:ourmhoeffding} Let $\mX_1, \dots, \mX_n \in \sR^{d \times d}$ be a sequence of i.i.d.\ positive semi-definite random matrices such that $\norm{\mX_i} \le R$ almost surely. Let $\overline{\mX} = \frac{1}{n}\sum_{i = 1}^n \mX_i$ be their average. Then we have
\begin{align*}
    \Pr[\norm{\overline{\mX} - \E \overline{\mX}} > \delta] \le 2 d \exp(- \frac{n\delta^2 }{8 R^2  + 4\delta R / 3}) .
\end{align*}
\end{lem}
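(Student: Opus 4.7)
The target is a standard matrix-Bernstein-type tail bound for an empirical average, so the natural approach is to directly invoke \autoref{lem:mbernstein} on the centered summands. Set $\mY_i \defeq \mX_i - \E \mX_i$. Then $\{\mY_i\}_{i=1}^n$ are i.i.d., mean-zero, and symmetric (since each $\mX_i$ is PSD, hence symmetric, and the expectation is taken entrywise). Writing $\overline{\mX} - \E \overline{\mX} = \tfrac{1}{n}\sum_{i=1}^n \mY_i$, the event $\{\norm{\overline{\mX} - \E \overline{\mX}} > \delta\}$ is the same as $\{\norm{\sum_i \mY_i} > n \delta\}$, and this is exactly the form to which \autoref{lem:mbernstein} applies.

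The plan then reduces to verifying the two hypotheses of \autoref{lem:mbernstein} with explicit constants. First, a triangle inequality gives $\norm{\mY_i} \le \norm{\mX_i} + \norm{\E \mX_i} \le 2R$ almost surely, which supplies the almost-sure spectral-norm bound. Second, we need to control $\sigma^2 = \norm{\sum_{i=1}^n \E \mY_i^2} = n \norm{\E \mY_1^2}$. Expanding, $\E \mY_1^2 = \E \mX_1^2 - (\E \mX_1)^2 \preccurlyeq \E \mX_1^2$; and here is the one step that needs any real thought — we exploit PSD-ness. Since $\mX_1$ is PSD with $\mX_1 \preccurlyeq R \mI$, we have $\mX_1^{1/2} \mX_1 \mX_1^{1/2} \preccurlyeq R \mX_1$, i.e.\ $\mX_1^2 \preccurlyeq R \mX_1$. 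Taking expectations yields $\E \mX_1^2 \preccurlyeq R \cdot \E \mX_1 \preccurlyeq R^2 \mI$, and consequently $\sigma^2 \le n R^2$.

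Finally, I plug these two inputs into \autoref{lem:mbernstein} at deviation $t = n\delta$:
\begin{align*}
    \Pr\!\left[\,\norm{\textstyle\sum_{i=1}^n \mY_i} \ge n\delta\,\right]
    \;\le\; 2d\,\exp\!\left(-\,\frac{(n\delta)^2/2}{n R^2 + (2R)(n\delta)/3}\right)
    \;=\; 2d\,\exp\!\left(-\,\frac{n \delta^2}{2 R^2 + 4 R \delta / 3}\right),
\end{align*}
which, after absorbing a factor of $4$ in the $R^2$-term (a slack that vanishes by using the centered bound more carefully, or simply by the monotonicity of the exponential in the denominator), yields the stated inequality with denominator $8R^2 + 4R\delta/3$. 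Dividing both sides of the event by $n$ recovers $\overline{\mX} - \E \overline{\mX}$ on the left-hand side.

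\paragraph{Expected obstacle.} There is no serious obstacle here; the one substantive ingredient is the inequality $\mX^2 \preccurlyeq R \mX$ for PSD $\mX$ with $\norm{\mX} \le R$, which is what converts the almost-sure bound into a matching variance bound and is responsible for the $R^2$ (rather than merely $R \cdot \norm{\E \mX}$) appearing in the denominator. The only bookkeeping care needed is in tracking the factors of $2$ that arise from centering ($\norm{\mY_i} \le 2R$ rather than $R$); these are what produce the $8 R^2$ and $4 R \delta /3$ constants in the final exponent rather than the cleaner $R^2$ and $R \delta / 3$ one would get from the uncentered matrix Bernstein bound.
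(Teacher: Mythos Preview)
Your approach is correct and essentially the same as the paper's: center the summands and apply the matrix Bernstein inequality (\autoref{lem:mbernstein}). The only difference is cosmetic normalization---the paper defines $\mY_i = (\mX_i - \E\overline{\mX})/n$ and applies Bernstein at deviation $\delta$, whereas you define $\mY_i = \mX_i - \E\mX_i$ and apply it at deviation $n\delta$; these are equivalent.

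One small point worth noting: your variance bound is actually \emph{sharper} than the paper's. The paper bounds $\norm{\E\mY_i^2}$ crudely by the almost-sure bound $\norm{\mY_i}^2 \le (2R/n)^2$, giving $\sigma^2 \le 4R^2/n$ and hence the stated denominator $8R^2 + 4\delta R/3$. You instead exploit the PSD structure via $\mX_1^2 \preccurlyeq R\,\mX_1$ to obtain $\sigma^2 \le nR^2$, which yields the strictly smaller denominator $2R^2 + 4\delta R/3$. So your ``absorbing a factor of $4$'' step is genuinely a weakening to match the lemma as stated, not a fudge; your argument in fact proves a slightly stronger inequality than the paper records.
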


\begin{proof} Define $\mY_i = \frac{\mX_i - \E \overline{\mX}}{n}$. Note that $\norm{\mX_i}, \norm{\overline{\mX}} \le R$ almost surely, so $\norm{\mY_i} \le 2 R / n$ almost surely. Furthermore, we have
\begin{align*}
    \sigma^{2} = \norm{\sum_{i = 1}^n \E \mY_i^2} \le n \cdot  \frac{4 R^2}{n^2} = \frac{4 R^2}{n}. 
\end{align*}
By \autoref{lem:mbernstein}, we have 
\begin{align*}
    \Pr[\norm{\overline{\mX} - \E\overline{\mX}} \ge \delta] &= \Pr[\norm{\sum_{i = 1}^n \mY_i} \ge \delta] \\ 
    &\le 2 d \exp(- \frac{\delta^2 / 2}{\sigma^2 + 2\delta R / (3n)}) \le 2 d \exp(- \frac{n\delta^2 }{8 R^2  + 4\delta R / 3}).
\end{align*}
\end{proof}

\begin{lem}[\citet{abbasi2011improved}, Theorems 1 and 2] \label{lem:abbasi-ball} Let $\{\gF_{i}\}_{i = 0}^\infty$ be a filtration.  Let $\{\vx_{i}\}_{i = 1}^\infty$ be an $\sR^d$-valued stochastic process such that $\vx_i$ is $\gF_{i-1}$-measurable and $\norm{\vx_i} \le 1$ almost surely. Let $\{\epsilon_i\}_{i = 1}^\infty$ be a real-valued stochastic process such that $\varepsilon_i$ is $\gF_{i}$-measurable and is sub-Gaussian with variance proxy $1$ when conditioned on $\gF_{i - 1}$. Fix $\vtheta \in \sR^d$ such that $\norm{\vtheta} \le 1$. Let $\mLambda_n = \lambda \mI + \sum_{i = 1}^n \vx_i \vx_i^\top, y_i = \vx_i^\top \vtheta + \varepsilon_i$, and $\hat \vtheta_n = \mLambda_n^{-1} \sum_{i = 1}^n y_i \vx_i$. For every $\delta > 0$, we have that 
\begin{align}
    \Pr[\forall n \ge 0: \norm{\hat \vtheta_n - \vtheta}_{\mLambda_n} \le \sqrt{\lambda} + \sqrt{d \ln(\frac{1 + n  / \lambda}{ \delta})} ] \ge 1 - \delta, \label{eq:abbasi-event}
\end{align}
where we define $\norm{\vx}_{\mLambda} \defeq \sqrt{\vx^\top \mLambda \vx}$.
Furthermore, when the event specified in \eqref{eq:abbasi-event} holds, we have for every $n \ge 0$ and any vector $\vx \in \sR^d$ that 
\begin{align}
    \abs{\vx^\top (\hat \vtheta_n - \vtheta)} \le \left(\sqrt{\lambda} + \sqrt{d \ln(\frac{1 + n  / \lambda}{ \delta})}\right) \sqrt{\vx^\top \mLambda_n^{-1} \vx}. \label{eq:abbasi-event-2}
\end{align}
\end{lem}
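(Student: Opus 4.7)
The plan is to establish the uniform-in-$n$ self-normalized confidence ellipsoid in \eqref{eq:abbasi-event} by the classical \emph{method of mixtures}, and then to deduce the pointwise prediction bound \eqref{eq:abbasi-event-2} as a one-line consequence of Cauchy--Schwarz. The central object will be a family of exponential supermartingales indexed by a parameter $\vmu \in \sR^d$, which I average against a Gaussian prior to produce a single non-negative supermartingale whose tail controls the self-normalized quantity of interest.

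Concretely, set $S_n \defeq \sum_{i=1}^n \varepsilon_i \vx_i$ and $V_n \defeq \sum_{i=1}^n \vx_i \vx_i^\top$, and for each $\vmu \in \sR^d$ define
\[
M_n^{\vmu} \defeq \exp\!\Big(\vmu^\top S_n - \tfrac12\,\vmu^\top V_n \vmu\Big).
\]
Using the conditional $1$-sub-Gaussianity of $\varepsilon_n$ and the $\gF_{n-1}$-measurability of $\vx_n$, I would verify $\E[M_n^{\vmu}\mid \gF_{n-1}] \le M_{n-1}^{\vmu}$, so $\{M_n^{\vmu}\}$ is a non-negative supermartingale started at $1$. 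Integrating against the centered Gaussian prior $\pi = \gN(\vzero,\lambda^{-1}\mI)$ and exchanging the prior integral with the conditional expectation via Fubini yields a mixed supermartingale $\overline{M}_n \defeq \int M_n^{\vmu}\,\mathrm d\pi(\vmu)$ with $\overline{M}_0 = 1$. Completing the square in $\vmu$ evaluates the resulting Gaussian integral in closed form:
\[
\overline{M}_n \;=\; \Big(\tfrac{\det(\lambda \mI)}{\det \mLambda_n}\Big)^{1/2} \exp\!\Big(\tfrac12\, S_n^\top \mLambda_n^{-1} S_n\Big).
\]
Applying Ville's maximal inequality to $\overline{M}_n$ then gives $\Pr[\sup_{n\ge 0} \overline{M}_n \ge 1/\delta]\le \delta$, and taking logarithms combined with $\ln\det(\mLambda_n/\lambda)\le d\ln(1+n/\lambda)$ (AM--GM on the eigenvalues, together with $\norm{\vx_i}\le 1$) yields, uniformly in $n$ and with probability $\ge 1-\delta$, the self-normalized tail bound $\norm{S_n}_{\mLambda_n^{-1}}^2 \le 2\ln(1/\delta) + d\ln(1+n/\lambda)$.

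From here, \eqref{eq:abbasi-event} follows from the identity $\hat\vtheta_n - \vtheta = \mLambda_n^{-1}(S_n - \lambda \vtheta)$ (coming from $\sum_i y_i \vx_i = V_n \vtheta + S_n$ and the definition of $\hat\vtheta_n$): the triangle inequality in the $\mLambda_n$-norm, together with $\norm{\vtheta}\le 1$ and $\mLambda_n^{-1}\preccurlyeq \lambda^{-1}\mI$, bounds the regularization contribution $\lambda\,\norm{\vtheta}_{\mLambda_n^{-1}}$ by $\sqrt{\lambda}$, and $\sqrt{a+b}\le \sqrt a+\sqrt b$ reassembles the stated radius. The pointwise bound \eqref{eq:abbasi-event-2} is then immediate from Cauchy--Schwarz in the $\mLambda_n$-inner product: $\abs{\vx^\top(\hat\vtheta_n - \vtheta)} \le \sqrt{\vx^\top \mLambda_n^{-1}\vx}\cdot \norm{\hat\vtheta_n - \vtheta}_{\mLambda_n}$.

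The hard part is not conceptual but careful bookkeeping. The Fubini step must be justified so that the prior integral exchanges with the conditional expectation given $\gF_{n-1}$, and the Gaussian integration has to be executed so that the determinant ratio $\det(\lambda\mI)/\det\mLambda_n$ appears cleanly---the prior covariance $\lambda^{-1}\mI$ is chosen precisely so that $\mLambda_n = \lambda\mI + V_n$ emerges after completing the square. The payoff of going through a mixed supermartingale plus Ville's inequality, rather than a fixed-$n$ Markov bound followed by a union bound over $n$, is \emph{anytime} validity with no extra $\log n$ factor; the supermartingale structure of $\overline{M}_n$ is exactly what delivers this for free.
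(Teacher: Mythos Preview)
Your proposal is correct and is precisely the method-of-mixtures argument of \citet{abbasi2011improved}. The paper itself does not supply a proof of this lemma---it simply cites Theorems~1 and~2 of \citet{abbasi2011improved}---so there is nothing to compare against beyond noting that your write-up faithfully reproduces that reference's proof.
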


\subsection{Tools for Matrix Operations}

\begin{lem}[\citet{abbasi2011improved}, Lemma 12] \label{lem:comp} Given two  positive semi-definite matrices $\mA$ and $\mB$. Suppose that $\mA \succcurlyeq \mB$. Then we have that  
\begin{align*}
\sup_{\vx \ne \vzero} \frac{\vx^\top \mA \vx}{\vx^\top \mB \vx}  \le \frac{\det \mA}{\det \mB}.
\end{align*}
\end{lem}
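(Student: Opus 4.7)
My plan is to reduce the inequality to a statement about the eigenvalues of a single positive semi-definite matrix via simultaneous diagonalization. First I would assume $\mB \succ \vzero$ (the singular case follows by adding $\varepsilon \mI$ to $\mB$ and letting $\varepsilon \to 0^+$, since both sides of the claimed inequality are continuous in $\mB$ on the positive definite cone; if $\mB$ has a non-trivial kernel and $\mA$ does not agree with $\mB$ on that kernel, the ratio and $\det \mA / \det \mB$ are both understood as $+\infty$ and the bound is vacuous).

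Assuming $\mB \succ \vzero$, I would set $\mC \defeq \mB^{-1/2} \mA \mB^{-1/2}$ and perform the change of variables $\vy = \mB^{1/2}\vx$, which gives
\[
\sup_{\vx \ne \vzero} \frac{\vx^\top \mA \vx}{\vx^\top \mB \vx} = \sup_{\vy \ne \vzero} \frac{\vy^\top \mC \vy}{\vy^\top \vy} = \lambda_{\max}(\mC).
\]
By the assumption $\mA \succcurlyeq \mB$, conjugating by $\mB^{-1/2}$ gives $\mC \succcurlyeq \mI$, so every eigenvalue $\lambda_i(\mC)$ satisfies $\lambda_i(\mC) \geq 1$.

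The final step is the key (and easy) observation that when all $\lambda_i \geq 1$, the maximum is dominated by the product:
\[
\lambda_{\max}(\mC) \leq \prod_{i=1}^d \lambda_i(\mC) = \det(\mC) = \frac{\det \mA}{\det \mB},
\]
where the last equality uses the multiplicativity of determinants on $\mB^{-1/2} \mA \mB^{-1/2}$. Combining with the displayed supremum identity yields the claim. There is essentially no obstacle here; the only mild subtlety is the boundary case $\det \mB = 0$, which I would dispatch with the continuity/limiting argument above.
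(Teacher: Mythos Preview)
Your proof is correct. Note, however, that the paper does not supply its own proof of this lemma: it is quoted verbatim as Lemma~12 of \citet{abbasi2011improved} and used as a black box throughout (e.g., in \eqref{eq:thm-ap-4500}, \eqref{eq:core-7200}, and \eqref{eq:suplinucb-reg-1100}). Your argument via $\mC = \mB^{-1/2}\mA\mB^{-1/2}$, the identification of the supremum with $\lambda_{\max}(\mC)$, and the bound $\lambda_{\max}(\mC) \le \prod_i \lambda_i(\mC) = \det\mC$ using $\mC \succcurlyeq \mI$ is exactly the standard proof of this fact and matches the original argument in \citet{abbasi2011improved}.

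One minor remark on your handling of the singular case: the limiting argument ``add $\varepsilon\mI$ to $\mB$'' is slightly off as stated, because $\mA \succcurlyeq \mB$ does not imply $\mA \succcurlyeq \mB + \varepsilon\mI$, so the inductive hypothesis would not apply to the perturbed pair. But your parenthetical already contains the correct resolution: when $\det\mB = 0$ the right-hand side is $+\infty$ and the inequality is vacuous, so one may simply assume $\mB \succ \vzero$ from the outset. In every invocation in this paper, $\mB$ is of the form $\lambda\mI + (\text{PSD})$ with $\lambda > 0$, so the boundary case never arises.
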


\begin{lem} \label{lem:unif} Let $\mA \succcurlyeq \vzero$ be a positive semi-definite matrix. Suppose we are given a vector $\vx \in \sR^d$ such that $\vx^\top \mA^{-1} \vx \le z$, then we have $z \mA \succcurlyeq \vx \vx^\top$.
\end{lem}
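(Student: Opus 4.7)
The plan is to reduce the claim to a one-line application of the Cauchy--Schwarz inequality in the inner product induced by $\mA$. To show $z\mA \succcurlyeq \vx\vx^\top$, it suffices to prove that for every $\vy \in \sR^d$,
\[
(\vy^\top \vx)^2 \le z \cdot (\vy^\top \mA \vy).
\]

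First I would dispose of the case where $\mA$ is positive definite (so $\mA^{-1}$ literally exists). Writing $\vy^\top \vx = (\mA^{1/2}\vy)^\top (\mA^{-1/2} \vx)$ and applying the standard Cauchy--Schwarz inequality in $\sR^d$ gives
\[
(\vy^\top \vx)^2 \le \|\mA^{1/2}\vy\|^2 \cdot \|\mA^{-1/2}\vx\|^2 = (\vy^\top \mA \vy)(\vx^\top \mA^{-1}\vx) \le z \cdot (\vy^\top \mA \vy),
\]
where the last inequality uses the hypothesis $\vx^\top \mA^{-1}\vx \le z$. Since this holds for every $\vy$, the matrix $z\mA - \vx\vx^\top$ is positive semi-definite.

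For the general case $\mA \succcurlyeq \vzero$, the hypothesis $\vx^\top \mA^{-1}\vx \le z$ (interpreted via the Moore--Penrose pseudoinverse, else the left-hand side is $+\infty$) forces $\vx$ to lie in the range of $\mA$. The cleanest way to conclude is a perturbation argument: apply the positive-definite case to $\mA_\epsilon \defeq \mA + \epsilon \mI$ for $\epsilon > 0$, which yields $z_\epsilon \mA_\epsilon \succcurlyeq \vx\vx^\top$ with $z_\epsilon \defeq \vx^\top \mA_\epsilon^{-1}\vx$; since $z_\epsilon \nearrow \vx^\top \mA^{-1}\vx \le z$ as $\epsilon \to 0^+$ (by monotone convergence on the eigenvalues of $\mA$ in the range of $\mA$, with $\vx$ in that range), passing to the limit gives $z\mA \succcurlyeq \vx\vx^\top$.

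There is no real obstacle here; the only mildly delicate point is confirming the pseudoinverse interpretation when $\mA$ is singular, which the perturbation argument bypasses entirely. The proof is essentially a single line once the correct inner product is chosen.
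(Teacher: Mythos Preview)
Your proof is correct and follows essentially the same approach as the paper: both arguments reduce to the Cauchy--Schwarz inequality $(\vy^\top \vx)^2 \le (\vy^\top \mA \vy)(\vx^\top \mA^{-1}\vx)$, with the paper first diagonalizing $\mA$ and writing Cauchy--Schwarz in coordinates, while you use $\mA^{1/2}$ directly. Your additional handling of the singular case via perturbation is more careful than the paper, which tacitly assumes invertibility.
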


\begin{proof} Without loss of generality, assume that $\mA = \mathrm{diag}(\lambda_1, \dots, \lambda_d)$ is diagonal. Let $\vx = (x_1, \dots, x_d)^\top \in \sB^d$. For any vector $\vy = (y_1, \dots, y_d)^\top$, we have that
\begin{align*}
   \vy^\top (z \mA) \vy \ge (\vy^\top \mA \vy) (\vx^\top \mA^{-1} \vx^\top ) = \left( \sum_{i = 1}^d \lambda_i y_i^2 \right) \left( \sum_{i = 1}^d \lambda_i^{-1} x_i^2 \right) \ge \left(\sum_{i = 1}^d x_i y_i \right)^2 
   = \vy^\top (\vx \vx^\top) \vy,
\end{align*}
where the last inequality is by Cauchy-Schwarz.
\end{proof}


\begin{lem}[The matrix covering number] \label{lem:matcover} The covering number of the matrix set 
\begin{align*}
    \rmM =\{\mA \in \sR^{d \times d} \mid R_1 \mI \preccurlyeq \mA \preccurlyeq R_2 \mI\}
\end{align*}
is bounded by 
\begin{align*}
    \log \gN(\rmM, \varepsilon) \le O(d^2 \log(\max\{\abs{R_1}, \abs{R_2}\}d/\varepsilon)). 
\end{align*}
\end{lem}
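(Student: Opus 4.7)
The plan is to reduce this to a standard volume/covering argument for a Euclidean ball, handled carefully because we are covering in the operator norm but parameterizing in the entries (equivalently, the Frobenius norm).

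First, I would observe that every $\mA \in \rmM$ is symmetric (since $R_1\mI \preccurlyeq \mA \preccurlyeq R_2 \mI$), so $\rmM$ embeds into the $\tfrac{d(d+1)}{2}$-dimensional real vector space $\gS_d$ of symmetric $d\times d$ matrices. On that space, both the operator norm $\|\cdot\|$ and the Frobenius norm $\|\cdot\|_F$ are norms, and for every $\mA \in \gS_d$ we have the standard comparison $\|\mA\| \le \|\mA\|_F \le \sqrt{d}\, \|\mA\|$. In particular, setting $R \defeq \max\{|R_1|,|R_2|\}$, every $\mA \in \rmM$ satisfies $\|\mA\| \le R$, and hence $\|\mA\|_F \le \sqrt{d}\,R$. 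Thus $\rmM$ is contained in the Frobenius ball $B_F(\sqrt{d}\,R) \subset \gS_d$ of radius $\sqrt{d}\,R$ centered at $\vzero$.

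Next, I would invoke the classical volumetric bound: for any $n$-dimensional normed space, the covering number of a norm-ball of radius $\rho$ at scale $\varepsilon$ is at most $(3\rho/\varepsilon)^n$. Applied to $\gS_d$ with the Frobenius norm, $\rho = \sqrt{d}\,R$, and $n = d(d+1)/2 \le d^2$, this yields an $\varepsilon$-net $\gV \subset \gS_d$ of $B_F(\sqrt{d}\,R)$ in the Frobenius norm with
\begin{align*}
    |\gV| \;\le\; \left(\frac{3\sqrt{d}\,R}{\varepsilon}\right)^{d(d+1)/2}.
\end{align*}

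Finally, since $\|\mA - \mB\| \le \|\mA - \mB\|_F$, an $\varepsilon$-net of $\rmM$ in the Frobenius norm is also an $\varepsilon$-net of $\rmM$ in the operator norm; intersecting the Frobenius-net with (a small thickening of) $\rmM$ and projecting if desired yields a valid operator-norm covering of $\rmM$ of the same cardinality. Taking logarithms gives
\begin{align*}
    \log \gN(\rmM, \varepsilon) \;\le\; \frac{d(d+1)}{2}\, \log\!\left(\frac{3\sqrt{d}\,R}{\varepsilon}\right) \;\le\; O\!\left(d^2 \log\!\left(\frac{R\, d}{\varepsilon}\right)\right),
\end{align*}
which is the claimed bound.

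There is no real obstacle here; the only mildly delicate point is making sure one covers in the correct norm while using the right dimension count. I chose to pay the $\sqrt{d}$ conversion factor from operator to Frobenius norm (which shows up inside the log as the $d$ factor of the final bound) rather than trying a direct entrywise grid, since passing through Frobenius cleanly produces a covering in the desired operator norm via the comparison $\|\cdot\| \le \|\cdot\|_F$. If one wanted a tighter constant in front of $d^2$, one could instead use an optimal operator-norm covering bound for symmetric matrices, but the above already matches the $O(d^2\log(Rd/\varepsilon))$ target in the statement.
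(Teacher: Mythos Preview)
Your proof is correct. The paper takes a slightly different but equally standard route: it builds an explicit entrywise grid $\widetilde{\mathfrak{M}} = \{\mA : \mA_{ij} \in [-R,R] \cap \varepsilon/(10d^2)\cdot\sZ\}$ (with $R=\max\{|R_1|,|R_2|\}$), projects onto $\rmM$, and counts grid points to obtain the same $O(d^2\log(Rd/\varepsilon))$ bound. You instead embed $\rmM$ into the Frobenius ball of radius $\sqrt{d}\,R$ in the $\tfrac{d(d+1)}{2}$-dimensional space of symmetric matrices, invoke the abstract volumetric covering bound $(3\rho/\varepsilon)^n$, and pass back to operator norm via $\|\cdot\|\le\|\cdot\|_F$. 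Both arguments are really the same dimension count in disguise (you even anticipated the grid alternative in your closing remarks); your version is arguably cleaner because it exploits symmetry to use only $d(d+1)/2$ parameters, while the paper's grid uses all $d^2$ entries, but this difference is absorbed by the $O(\cdot)$.
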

\begin{proof}
Let $\widetilde{\mathfrak{M}} = \{\mA : \mA_{ij} \in [-\max\{R_1, R_2\}, \max\{\abs{R_1}, \abs{R_2}\}] \cap \{k \epsilon/(10d^2) : k \in \sZ\}\}$, and let $\mathfrak{M}$ be the projection of $\widetilde{\mathfrak{M}}$ onto $\rmM$. One can show that $\mathfrak{M}$ is an $\varepsilon$-cover of $\rmM$, and $\log \abs{\mathfrak{M}} \leq O(d^2 \log (\max\{\abs{R_1}, \abs{R_2}\}d/\epsilon))$.
\end{proof}

\begin{lem}[Lipschitzness of matrix inverse] \label{lem:matinvlip} For any two positive semi-definite matrices $\mA, \mB \succcurlyeq \lambda \mI$, we have that 
\begin{align*}
    \norm{\mA^{-1} - \mB^{-1}} \le \lambda^{-2} \norm{\mA - \mB} \qquad \mathrm{and} \qquad  \norm{\mA^{-1/2} - \mB^{-1/2}} \le \lambda^{-3/2} \norm{\mA - \mB}.
\end{align*}
\end{lem}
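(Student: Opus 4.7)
}

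The plan is to handle the two inequalities separately, first the easier inverse bound and then the inverse-square-root bound via an integral representation.

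For the first inequality, I would start from the resolvent-style identity $\mA^{-1} - \mB^{-1} = \mA^{-1}(\mB - \mA)\mB^{-1}$, which is obtained by left-multiplying by $\mA^{-1}$ and right-multiplying by $\mB^{-1}$ on both sides of the trivial identity $\mB - \mA = \mB - \mA$. Taking operator norms and applying submultiplicativity gives $\norm{\mA^{-1} - \mB^{-1}} \le \norm{\mA^{-1}}\,\norm{\mA - \mB}\,\norm{\mB^{-1}}$. Since $\mA, \mB \succcurlyeq \lambda \mI$, the eigenvalues of $\mA^{-1}$ and $\mB^{-1}$ are bounded above by $\lambda^{-1}$, so $\norm{\mA^{-1}}, \norm{\mB^{-1}} \le \lambda^{-1}$, which closes the first bound.

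For the second inequality, the main tool I would use is the classical integral representation
\begin{align*}
\mA^{-1/2} = \frac{1}{\pi} \int_0^\infty (t\mI + \mA)^{-1}\,\frac{dt}{\sqrt{t}},
\end{align*}
valid for $\mA \succ 0$ (this follows from the scalar identity $a^{-1/2} = \frac{1}{\pi}\int_0^\infty (t+a)^{-1} t^{-1/2} dt$ via the functional calculus). Applying this to both $\mA$ and $\mB$, subtracting, and using the same resolvent identity as above pointwise in $t$, I get
\begin{align*}
\mA^{-1/2} - \mB^{-1/2} = \frac{1}{\pi}\int_0^\infty (t\mI + \mA)^{-1}(\mB - \mA)(t\mI + \mB)^{-1}\,\frac{dt}{\sqrt{t}}.
\end{align*}
Taking norms inside the integral, using $\norm{(t\mI + \mA)^{-1}}, \norm{(t\mI + \mB)^{-1}} \le (t+\lambda)^{-1}$, reduces the problem to evaluating the scalar integral $\int_0^\infty \frac{dt}{\sqrt{t}(t+\lambda)^2}$. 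A substitution $t = \lambda u^2$ turns this into $\frac{2}{\lambda^{3/2}} \int_0^\infty \frac{du}{(u^2+1)^2} = \frac{2}{\lambda^{3/2}} \cdot \frac{\pi}{4} = \frac{\pi}{2\lambda^{3/2}}$, which after multiplication by $1/\pi$ yields the claimed $\lambda^{-3/2}$ bound (in fact with constant $1/2$).

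The only real obstacle is justifying and then cleanly exploiting the integral representation; once that is in hand, the proof is mechanical. I would briefly note that convergence of the integral at $t=0$ uses $\mA \succcurlyeq \lambda \mI > 0$ and at $t = \infty$ uses the $1/(t+\lambda)^2$ decay, and that the diagonalization of $\mA$ and $\mB$ commutes with the integral so the identity can be verified eigenvalue by eigenvalue via the scalar calculation above. Everything else (the resolvent identity, the operator norm inequalities, and the scalar integral) is routine.
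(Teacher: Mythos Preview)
Your proof is correct. For the first inequality you use exactly the same resolvent identity $\mA^{-1}-\mB^{-1}=\mA^{-1}(\mB-\mA)\mB^{-1}$ as the paper. For the second inequality the routes diverge: the paper first applies the already-established bound to $\mA^{1/2},\mB^{1/2}\succcurlyeq\lambda^{1/2}\mI$ to obtain $\norm{\mA^{-1/2}-\mB^{-1/2}}\le\lambda^{-1}\norm{\mA^{1/2}-\mB^{1/2}}$, and then bounds $\norm{\mA^{1/2}-\mB^{1/2}}$ by a direct scalar estimate on quadratic forms $\vx^\top(\mA^{1/2}-\mB^{1/2})\vx$. Your integral representation $\mA^{-1/2}=\tfrac{1}{\pi}\int_0^\infty (t\mI+\mA)^{-1}t^{-1/2}\,dt$ is the more standard and more robust device: it handles $\mA^{-1/2}$ in one shot, works verbatim for any operator-monotone power via its Cauchy representation, and yields the sharper constant $1/2$. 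It also avoids a delicate point in the paper's written argument, which invokes $\vx^\top\mA^{1/2}\vx=\sqrt{\vx^\top\mA\vx}$ for an arbitrary unit vector $\vx$; that identity holds only when $\vx$ is an eigenvector of $\mA$, so your approach is in fact the cleaner one here.
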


\begin{proof} Note that $\norm{\mA^{-1}}, \norm{\mB^{-1}} \le \lambda^{-1}$.  We have that
\begin{align*}
    \norm{\mA^{-1} - \mB^{-1}} &\le \norm{\mA^{-1}} \norm{\mI - \mA \mB^{-1}} 
    \le \lambda^{-1} \norm{\mI - (\mA - \mB + \mB) \mB^{-1}} \\
    &\le \lambda^{-1} \norm{\mB^{-1}} \norm{\mA - \mB} 
    \le \lambda^{-2} \norm{\mA - \mB}. 
\end{align*}
It remains to show that $\norm{\mA^{-1/2} - \mB^{-1/2}} \le \lambda^{-3/2} \norm{\mA - \mB}$. Since $\mA^{-1/2}, \mB^{-1/2} \succcurlyeq  \lambda^{1/2} \mI$, we have that
\begin{align}
    \norm{\mA^{-1/2} - \mB^{-1/2}} \le \lambda^{-1} \norm{\mA^{1/2} - \mB^{1/2}}. \label{eq:matinvlip-1}
\end{align}
To complete, we assume that $\norm{\mA - \mB} \le \varepsilon (<\lambda^{-1})$. For any unit vector $\vx \in \sS^{d-1}$, we have 
\begin{align*}
\vx^\top (\mA^{1/2} - \mB^{1/2}) \vx &= \sqrt{\vx^\top \mA \vx} - \sqrt{\vx^\top \mB \vx} \\ 
&= \sqrt{\vx^\top \mB \vx + \vx^\top (\mA - \mB) \vx} - \sqrt{\vx^\top \mB \vx} \\ 
&\le \sqrt{\vx^\top \mB \vx + \varepsilon} - \sqrt{\vx^\top \mB \vx} \\ 
&\le \varepsilon / (2 \sqrt{\vx^\top \mB \vx}) \\ 
&\le \varepsilon / (2 \sqrt \lambda).
\end{align*}
By swapping $\mA$ and $\mB$, we can show 
\begin{align*}
    \vx^\top (\mB^{1/2} - \mA^{1/2}) \vx \le \varepsilon / (2 \sqrt \lambda).
\end{align*}
Therefore, we have 
\begin{align*}
    \abs{\vx^\top (\mA^{1/2} - \mB^{1/2}) \vx} \le \varepsilon / (2 \sqrt \lambda),
\end{align*}
which implies $\norm{\mA^{1/2} - \mB^{1/2}} \le \varepsilon / (2 \sqrt \lambda) \le \norm{\mA - \mB} / (2 \sqrt \lambda)$ by the definition of the matrix norm. We conclude with \eqref{eq:matinvlip-1}. 
\end{proof}

\subsection{The Generalized Elliptical Potential Lemma}

Below we prove a generalized version of the elliptical potential lemma. Compared to the usual version in literature (e.g., \citep{abbasi2011improved}), our versions works for positive semi-definite matrices $\mX_1, \dots, \mX_n$ with traces upper bounded by $1$ instead of just rank-$1$ positive semi-definite matrices. However, we also need the extra assumption that $\Tr(\mX_i  \mV_0^{-1}) \le 1$ for all $i \in [n]$.

\begin{lem}[Generalized Elliptical Potential Lemma] \label{lem:ellip} Suppose we are given a sequence of positive semi-definite matrices  $\mX_1, \dots, \mX_n$ such that $\Tr(\mX_i) \le 1$ for every $i \in [n]$. Let $\mLambda_0$ be a positive semi-definite matrix and let $\mLambda_i = \mLambda_{i - 1} + \mX_i$ for $i \in [n]$. When $\Tr(\mX_i  \mLambda_0^{-1}) \le 1$ for $i \in [n]$, we have 
\begin{align*}
\sum_{i = 1}^n \Tr(\mX_i \mLambda_{i-1}^{-1}) \le 2 \ln \frac{\det \mLambda_n}{\det \mLambda_0}.
\end{align*}
\end{lem}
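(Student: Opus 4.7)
The plan is to prove the generalized elliptical potential lemma by mimicking the standard proof for rank-1 updates, but replacing the Matrix Determinant Lemma (for rank-1 updates) with the identity $\det(\mLambda_{i-1} + \mX_i) = \det(\mLambda_{i-1}) \det(\mI + \mLambda_{i-1}^{-1/2} \mX_i \mLambda_{i-1}^{-1/2})$. Define $\mY_i \defeq \mLambda_{i-1}^{-1/2} \mX_i \mLambda_{i-1}^{-1/2}$; this matrix is PSD, with the same eigenvalues as $\mX_i \mLambda_{i-1}^{-1}$ (via similarity), so $\Tr(\mY_i) = \Tr(\mX_i \mLambda_{i-1}^{-1})$.

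First I would verify that every eigenvalue of $\mY_i$ lies in $[0,1]$. Since $\mLambda_{i-1} \succcurlyeq \mLambda_0$, we have $\mLambda_{i-1}^{-1} \preccurlyeq \mLambda_0^{-1}$, so
\[
\Tr(\mY_i) = \Tr(\mX_i \mLambda_{i-1}^{-1}) \leq \Tr(\mX_i \mLambda_0^{-1}) \leq 1
\]
by hypothesis. Because $\mY_i$ is PSD, each individual eigenvalue $\mu_{ij}$ is bounded by the trace, hence $\mu_{ij} \in [0,1]$.

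Next I would apply the elementary inequality $\ln(1+x) \geq x/2$ for $x \in [0,1]$ eigenvalue-by-eigenvalue:
\[
\ln \det(\mI + \mY_i) = \sum_j \ln(1 + \mu_{ij}) \geq \frac{1}{2}\sum_j \mu_{ij} = \frac{1}{2}\Tr(\mY_i) = \frac{1}{2}\Tr(\mX_i \mLambda_{i-1}^{-1}).
\]
Telescoping the determinant identity $\ln\det \mLambda_i - \ln\det \mLambda_{i-1} = \ln\det(\mI + \mY_i)$ over $i = 1, \dots, n$ and combining with the bound above yields the desired inequality.

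The only mildly delicate point is justifying $\mu_{ij} \in [0,1]$, which is where the auxiliary hypothesis $\Tr(\mX_i \mLambda_0^{-1}) \leq 1$ is used (the hypothesis $\Tr(\mX_i) \leq 1$ alone is not enough, since $\mLambda_{i-1}^{-1}$ can have large operator norm when $\mLambda_0$ is not bounded below by $\mI$). Everything else is a straightforward telescoping calculation, so I do not anticipate a substantive obstacle.
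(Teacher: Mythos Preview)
Your proposal is correct and follows essentially the same approach as the paper: both factor $\det \mLambda_i = \det(\mLambda_{i-1})\det(\mI + \mY_i)$ with $\mY_i = \mLambda_{i-1}^{-1/2}\mX_i\mLambda_{i-1}^{-1/2}$, use the hypothesis $\Tr(\mX_i\mLambda_0^{-1})\le 1$ to ensure $\Tr(\mY_i)\le 1$, apply the elementary bound $x \le 2\ln(1+x)$ on $[0,1]$, and telescope. The only cosmetic difference is that the paper first uses $\det(\mI+\mY_i)\ge 1+\Tr(\mY_i)$ and then applies $x\le 2\ln(1+x)$ to the trace, whereas you apply $\ln(1+\mu)\ge \mu/2$ eigenvalue-by-eigenvalue; these are equivalent routes to the same inequality.
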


\begin{proof} 
Note that 
\begin{align*}
    \mLambda_i = \mLambda_{i - 1} + \mX_i = \mLambda_{i - 1}^{1/2}(\mI + \mLambda_{i - 1}^{-1/2} \mX_i \mLambda_{i-1}^{-1/2}) \mLambda_{i - 1}^{1/2},
\end{align*}
so we have 
\begin{align*}
    \det \mLambda_i &= \det(\mLambda_{i - 1}) \times \det(\mI + \mLambda_{i - 1}^{-1/2} \mX_i \mLambda_{i-1}^{-1/2})  \\
    &\ge \det(\mLambda_{i - 1}) \times \left(1 + \Tr(\mLambda_{i - 1}^{-1/2} \mX_i \mLambda_{i-1}^{-1/2}) \right) \\ 
    &= \det(\mLambda_{i - 1}) \times \left(1 + \Tr(\mLambda_{i - 1}^{-1} \mX_i) \right),
\end{align*}
where the inequality follows from that 
\begin{align*}
\det(\mI + \mA) = \prod_{j = 1}^d (1 + \lambda_j) \ge 1 + \sum_{j = 1}^d \lambda_j = 1 + \Tr(\mA), 
\end{align*}
where $\lambda_j \ge 0$ is the $j$-th eigenvalue of $\mA = \mLambda_{i - 1}^{-1/2} \mX_i \mLambda_{i-1}^{-1/2}$. Together with the fact that $x \le 2 \ln(1 + x)$ for $x \in [0, 1]$, we have 
\[
   \sum_{i = 1}^n \Tr(\mX_i \mLambda_{i - 1}^{-1}) \le \sum_{i = 1}^n 2\ln(1 + \Tr(\mX_i \mLambda_{i - 1}^{-1})) \le 2 \sum_{i = 1}^n \ln \frac{\det \mLambda_i}{\det \mLambda_{i - 1}} = 2 \ln \frac{\det \mLambda_n}{\det \mLambda_0}. \qedhere
\]
\end{proof}

\subsection{Pinsker's Inequality}
\begin{lem}\label{lem:pinsker}
If $P$ and $Q$ are two probability distributions on a measurable space $(X, \Sigma)$, then for any event $A \in \Sigma$, it holds that
\[
\left| P(A) - Q(A) \right| \leq \sqrt{\frac{1}{2} \mathrm{KL}(P \| Q)},
\]
where 
\[ 
\mathrm{KL}(P \Vert Q) = \int_X \left(\ln \dv{P}{Q}\right) \dd{P}
\]
is the Kullback--Leibler divergence.
\end{lem}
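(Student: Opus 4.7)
The plan is to reduce Pinsker's inequality to the two-point (binary) case via the data processing inequality for KL divergence, and then settle the binary case by a short calculus argument.

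First, I would observe that for the fixed event $A \in \Sigma$, the indicator map $\omega \mapsto \mathbbm{1}[\omega \in A]$ is a deterministic (measurable) coarsening that pushes $P$ forward to $\mathrm{Ber}(p)$ and $Q$ forward to $\mathrm{Ber}(q)$, where $p \defeq P(A)$ and $q \defeq Q(A)$. Since KL divergence is non-increasing under such coarsening (this is a one-line consequence of the log-sum inequality applied to the two-cell partition $\{A, A^{c}\}$), we get
\[
\mathrm{KL}(P \Vert Q) \;\geq\; \mathrm{KL}(\mathrm{Ber}(p) \Vert \mathrm{Ber}(q)) \;=\; p \ln\tfrac{p}{q} + (1-p) \ln\tfrac{1-p}{1-q}.
\]
Since the left side of Pinsker's inequality is exactly $|p-q|$, the proof reduces to showing
\[
|p - q| \;\leq\; \sqrt{\tfrac{1}{2}\bigl(p \ln(p/q) + (1-p)\ln((1-p)/(1-q))\bigr)} \qquad \text{for all } p, q \in [0,1].
\]

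Second, I would verify this binary inequality by showing that the function
\[
g(q) \;\defeq\; p\ln\tfrac{p}{q} + (1-p)\ln\tfrac{1-p}{1-q} - 2(p-q)^{2}
\]
is non-negative on $(0,1)$ for each fixed $p \in [0,1]$. A direct computation gives $g(p) = 0$ and $g'(q) = -\tfrac{p}{q} + \tfrac{1-p}{1-q} + 4(p-q) = \tfrac{q-p}{q(1-q)} + 4(p-q)$, which also vanishes at $q = p$. For $g''(q)$, one obtains $\tfrac{p}{q^{2}} + \tfrac{1-p}{(1-q)^{2}} - 4$, and using $q(1-q) \leq 1/4$ together with the AM-GM style bound $\tfrac{p}{q^{2}} + \tfrac{1-p}{(1-q)^{2}} \geq \tfrac{1}{q(1-q)} \geq 4$ (which follows because the minimum of $a \mapsto \tfrac{p}{a^{2}} + \tfrac{1-p}{(1-a)^{2}}$ on $[0,1]$ is attained above $\tfrac{1}{q(1-q)}$ when evaluated at the right point), we get $g''(q) \geq 0$. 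Thus $g$ is convex and minimised at $q = p$, so $g \geq 0$ everywhere. Degenerate cases $p \in \{0,1\}$ or $q \in \{0,1\}$ are handled by continuity of both sides (with the usual convention $0 \ln 0 = 0$).

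Combining the two steps yields $|P(A) - Q(A)| = |p - q| \leq \sqrt{\tfrac{1}{2}\mathrm{KL}(\mathrm{Ber}(p)\Vert \mathrm{Ber}(q))} \leq \sqrt{\tfrac{1}{2}\mathrm{KL}(P \Vert Q)}$, completing the proof. The main obstacle is the calculus step: one must be careful in verifying the sign of $g''$ uniformly in $p$, especially near the boundary where $q(1-q)$ is small; an alternative that avoids this subtlety is to rescale and write the binary KL as $\int_{q}^{p}\tfrac{t - q}{t(1-t)} dt$ and then pair this identity with Cauchy–Schwarz against $\int_{q}^{p} dt = p-q$, bounding $t(1-t) \leq 1/4$ to extract the factor of $2$. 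Either route reduces the problem to a one-variable estimate and yields the stated constant $1/2$.
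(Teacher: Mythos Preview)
The paper does not prove Pinsker's inequality; it simply states it as a classical lemma in the appendix and invokes it in \autoref{sec:lb-2-d}. So there is no ``paper's proof'' to compare against, and your reduction-to-the-binary-case strategy is the standard one.

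That said, there is a genuine gap in your calculus step. The claim that $g''(q) = \tfrac{p}{q^{2}} + \tfrac{1-p}{(1-q)^{2}} - 4 \geq 0$ for all $p,q \in (0,1)$ is false: take $p = 0$ and $q = 0.1$, which gives $g''(q) = \tfrac{1}{0.81} - 4 < 0$. In particular the inequality $\tfrac{p}{q^{2}} + \tfrac{1-p}{(1-q)^{2}} \geq \tfrac{1}{q(1-q)}$ that you invoke does not hold in general (same counterexample: left side $\approx 1.23$, right side $\approx 11.1$). So $g$ is \emph{not} globally convex, and the second-derivative argument collapses.

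The fix is already sitting in your first-derivative computation. You have
\[
g'(q) \;=\; \frac{q-p}{q(1-q)} + 4(p-q) \;=\; (q-p)\Bigl(\frac{1}{q(1-q)} - 4\Bigr),
\]
and since $q(1-q) \leq 1/4$ the bracket is non-negative. Hence $g'(q) \leq 0$ for $q < p$ and $g'(q) \geq 0$ for $q > p$, so $g$ is minimised at $q = p$ where $g(p) = 0$. This is the one-line argument that replaces your convexity claim; the rest of your proof (data-processing reduction, boundary cases) is fine. Your alternative integral route also works once you correct the integrand to $\tfrac{p-t}{t(1-t)}$ (so that the primitive is $\mathrm{KL}(\mathrm{Ber}(p)\Vert\mathrm{Ber}(\cdot))$ as a function of the second argument) and simply bound $t(1-t) \leq 1/4$ pointwise rather than appealing to Cauchy--Schwarz.
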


\section{Omitted Algorithms, Lemmas and Proofs in \autoref{sec:blinucb}}

\subsection{Full Description of \BatchedLinUCBKW} \label{app:blinucbkw}

The algorithm is presented in \autoref{algo:linucbkw}.

\begin{algorithm}[t]
\caption{\BatchedLinUCBKW}
\label{algo:linucbkw}
$M = \lceil \log \log T \rceil, \alpha \gets 10\sqrt{ \ln \frac{2dKT}{\delta}}$, $\gT = \{\gT_1, \gT_2, \dots, \gT_M\}, \gT_0 = 0, \gT_M = T, \forall i \in [M - 1]: \gT_i = T^{1 - 2^{-i}}$\;
\For{$k \gets 1, 2, \dots, M$}{
 $\lambda \gets 16 \ln (2dT/\delta), \mLambda_{k} \gets \lambda \mI, \vxi_{k} \gets \vzero$\;
\For{$t \gets\gT_{k - 1} + 1, \gT_{k - 1} + 2, \dots, \gT_k$}{
$A_{t}^{(0)} \gets [K], \hat r_{ti}^{(0)} \gets 0, \omega_{ti}^{(0)} \gets 1$\;
\For(\Comment*[f]{Eliminate}){$\kappa \gets 1, 2, \dots, k - 1$ \label{line:batchedlinucbkw-6}}{
$\forall i \in A_{t}^{(\kappa-1)} : \hat r_{ti}^{(\kappa)} \gets  \vx_{t i}^\top \hat \vtheta_{\kappa}, \omega_{t i}^{(\kappa)} \gets \alpha \sqrt{\vx_{t i}^\top \mLambda_{\kappa}^{-1} \vx_{t i}}$\;
$ A_{t }^{(\kappa)} \gets \{i \in  A_{t }^{(\kappa-1)} \mid \hat r_{ti}^{(\kappa)} +   \omega_{ti}^{(\kappa)} \ge \hat r_{tj}^{(\kappa)} -  \omega_{tj}^{(\kappa)},  \forall j \in  A_{t }^{(\kappa-1)}\}$\; \label{line:batchedlinucbkw-8}
}
$A_t \gets A_{t}^{(k - 1)}$\; 
Select $i_t$ such that $\vx_{t,i_t} \sim \gopt(\{\vx_{t,i} : i \in A_t\})$, play arm $i_t$, and receive reward $r_t$\;\label{line:batchedlinucbkw-10}
    $\vx_t \gets \vx_{t, i_t}, \mLambda_{k} \gets \mLambda_{k} + \vx_{t} \vx_{t}^\top, \vxi_{k} \gets \vxi_{k} + r_t \vx_t$\;
}
$\hat \vtheta_{k} \gets \mLambda_{k}^{-1} \vxi_k$\;
}
\end{algorithm}

\subsection{Proof of \autoref{lem:conf} (Analysis of Linear Regression)} \label{app:analysis-lr}

\autoref{lem:conf} can be proved by a straightforward union bound over all stages and candidate arms, and the application of the following lemma.
\begin{lem}\label{lem:analysis-lr}
Given $\vtheta, \vx_1, \vx_2, \dots, \vx_n \in \R^d$ such that $\norm{\vtheta} \leq 1$, for all $i \in [n]$, let $y_i = \vx_i^\top \vtheta + \epsilon_i$ where $\epsilon_i$ is an independent sub-Gaussian random variable with variance proxy $1$. Let $\mLambda = \lambda \mI + \sum_{i=1}^n \vx_i \vx_i^\top$, and $\hat\vtheta = \mLambda^{-1} \sum_{i=1}^n y_i \vx_i$. For any $\vx \in \R^d$ and any $\gamma > 0$, we have that
\begin{align*}
    \Pr[\abs{\vx^\top (\vtheta - \hat\vtheta)} > (\gamma + \sqrt{\lambda}) \sqrt{\vx^\top \Lambda^{-1} \vx}] \leq 2\exp(-\gamma^2/2).
\end{align*}
\end{lem}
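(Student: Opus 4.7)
The plan is to start with the standard decomposition of $\hat\vtheta - \vtheta$ into a regularization-bias term and a noise term, and handle each separately.

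First, I would expand $\hat\vtheta = \mLambda^{-1}\sum_i y_i \vx_i = \mLambda^{-1}(\sum_i \vx_i\vx_i^\top)\vtheta + \mLambda^{-1}\sum_i \epsilon_i \vx_i$. Using $\sum_i \vx_i\vx_i^\top = \mLambda - \lambda\mI$, this simplifies to $\hat\vtheta - \vtheta = -\lambda \mLambda^{-1}\vtheta + \mLambda^{-1}\sum_i \epsilon_i \vx_i$. Applying $\vx^\top$ on the left gives
\[
\vx^\top(\vtheta - \hat\vtheta) = \underbrace{\lambda \vx^\top \mLambda^{-1}\vtheta}_{\text{bias}} \; - \; \underbrace{\vx^\top \mLambda^{-1}\sum_i \epsilon_i \vx_i}_{\text{noise}}.
\]

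For the bias term, I would apply Cauchy–Schwarz in the $\mLambda^{-1}$-inner-product: $|\lambda \vx^\top \mLambda^{-1}\vtheta| \le \lambda \sqrt{\vx^\top \mLambda^{-1}\vx}\cdot \sqrt{\vtheta^\top \mLambda^{-1}\vtheta}$. Since $\mLambda \succcurlyeq \lambda \mI$, we have $\mLambda^{-1} \preccurlyeq \lambda^{-1}\mI$, giving $\vtheta^\top \mLambda^{-1}\vtheta \le \|\vtheta\|^2/\lambda \le 1/\lambda$. So the bias contributes at most $\sqrt{\lambda}\sqrt{\vx^\top \mLambda^{-1}\vx}$.

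For the noise term, I would set $a_i \defeq \vx^\top \mLambda^{-1}\vx_i$ so that the term equals $\sum_i \epsilon_i a_i$. Since the $\epsilon_i$ are independent sub-Gaussians with variance proxy $1$, this linear combination is sub-Gaussian with variance proxy $\sum_i a_i^2 = \vx^\top \mLambda^{-1}(\sum_i \vx_i\vx_i^\top)\mLambda^{-1}\vx = \vx^\top \mLambda^{-1}(\mLambda - \lambda\mI)\mLambda^{-1}\vx \le \vx^\top \mLambda^{-1}\vx$. The standard sub-Gaussian tail bound then yields $\Pr\bigl[|\sum_i \epsilon_i a_i| > \gamma\sqrt{\vx^\top \mLambda^{-1}\vx}\bigr] \le 2\exp(-\gamma^2/2)$.

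Combining the two bounds by the triangle inequality on the same high-probability event yields the claim. There is no real obstacle here; the only subtlety worth mentioning is that the $a_i$ are \emph{deterministic} functions of the contexts (which are fixed in the lemma statement), so the sub-Gaussian concentration applies directly to $\sum_i \epsilon_i a_i$ without any martingale machinery — this is exactly why the lemma is used in \autoref{algo:linucb} via a union bound over fixed $(\vx,\vx_1,\dots,\vx_n)$ rather than the self-normalized martingale version of \autoref{lem:abbasi-ball} needed in \autoref{algo:suplinucb}.
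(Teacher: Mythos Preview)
Your proposal is correct and follows essentially the same approach as the paper: decompose $\vx^\top(\vtheta-\hat\vtheta)$ into the regularization bias $\lambda\vx^\top\mLambda^{-1}\vtheta$ and the noise $\sum_i\epsilon_i(\vx^\top\mLambda^{-1}\vx_i)$, bound the former deterministically by $\sqrt{\lambda}\sqrt{\vx^\top\mLambda^{-1}\vx}$ and the latter via the sub-Gaussian tail with variance proxy at most $\vx^\top\mLambda^{-1}\vx$. The only cosmetic difference is that you apply Cauchy--Schwarz in the $\mLambda^{-1}$-inner-product for the bias, whereas the paper applies ordinary Cauchy--Schwarz to get $\lambda\sqrt{\vx^\top\mLambda^{-2}\vx}$ and then uses $\mLambda^{-2}\preccurlyeq\lambda^{-1}\mLambda^{-1}$; both routes yield the same bound.
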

\begin{proof}
Note that 
\begin{align}
& \abs{\vx^\top (\vtheta - \hat\vtheta)} =\abs{ \vx^\top \left(\mLambda^{-1} \sum_{i=1}^n \vx_i (\vx_i^\top \vtheta + \epsilon_i)  - \vtheta\right)} \notag \\
& \qquad = \abs{\vx^\top \left(\mLambda^{-1} \sum_{i=1}^n \vx_i \epsilon_i  + \mLambda^{-1}(\mLambda - \lambda \mI) \vtheta - \vtheta\right)} = \abs{\vx^\top \mLambda^{-1} \left(\sum_{i=1}^n \vx_i \epsilon_i  - \lambda \vtheta\right)} \notag \\
& \qquad \leq \lambda \abs{\vx^\top \mLambda^{-1} \vtheta} +   \abs{\sum_{i=1}^n \vx^\top \mLambda^{-1}\vx_i \epsilon_i}. \label{eq:lemma-lr-0}
\end{align}
For the first term, since $\|\vtheta\| \leq 1$ and $\mLambda \succcurlyeq \lambda \mI$, we have that 
\begin{align}
  \lambda \abs{\vx^\top \mLambda^{-1} \vtheta} \leq \lambda  \sqrt{\vx^\top \mLambda^{-2} \vx} \leq \sqrt{ \lambda \vx^\top \mLambda^{-1} \vx}. \label{eq:lemma-lr-part1}
\end{align}
For the second term, since $\sum_{i=1}^n \vx^\top \mLambda^{-1}\vx_i \epsilon_i$ is independent sub-Gaussian with variance proxy
\begin{align*}
   \vx^\top \mLambda^{-1} \left( \sum_{i=1}^{n}  \vx_i \vx_i^\top \right) \mLambda^{-1} \vx \leq \vx^\top \mLambda^{-1} \vx,
\end{align*}
by sub-Gaussian concentration inequalities, we have
\begin{align}
\Pr[\abs{\sum_{i=1}^n \vx^\top \mLambda^{-1}\vx_i \epsilon_i} >  \gamma \sqrt{\vx^\top \mLambda^{-1} \vx}] \leq 2 \exp(-\gamma^2/2). \label{eq:lemma-lr-part2}
\end{align}
Combining \eqref{eq:lemma-lr-0}, \eqref{eq:lemma-lr-part1}, and \eqref{eq:lemma-lr-part2}, we prove the lemma.
\end{proof}

\end{document}